\documentclass[a4paper, 12pt]{article}
\usepackage{amsmath, amssymb, amsfonts}  % Math stuff
\usepackage{cases}
\usepackage{fullpage}
\usepackage{graphicx}    

\usepackage{bm}
\usepackage{bbm}
\usepackage{dsfont}
\usepackage{xcolor}
\usepackage{enumerate}
\usepackage{booktabs}
\usepackage[mathscr]{eucal}
\usepackage{subfigure}
\usepackage{caption}
\usepackage{subcaption}
\usepackage{theorem}
\usepackage{longtable}
\usepackage{pifont}  % for \ding{51} (✓) and \ding{55} (✗)

\usepackage{amsmath, amssymb, amsfonts}  % Math stuff

\usepackage[mathscr]{eucal}
\usepackage{subfigure}
\usepackage{theorem}

\usepackage{algorithm}

\newtheorem{theorem}{Theorem}

\newtheorem{proposition}{Proposition}
\newtheorem{lemma}{Lemma}

\newtheorem{assumption}{Assumption}

{\theorembodyfont{\upshape}
\newtheorem{remark}{Remark}

}

\def\qed{\hfill \vrule height 5pt width 5pt depth 0pt \medskip}
\newcommand{\proof}{\noindent {\bf Proof. }}

\def\diag{{\rm diag}}
\def\sign{{\rm sgn}}

\def\rank{{\rm rank}}

\newcommand{\bp}{\bm{p}}
\newcommand{\bq}{\bm{q}}
\newcommand{\br}{\bm{r}}
\newcommand{\bs}{\bm{s}}
\newcommand{\bu}{\bm{u}}
\newcommand{\bv}{\bm{v}}
\newcommand{\bx}{\bm{x}}
\newcommand{\by}{\bm{y}}
\newcommand{\bw}{\bm{w}}
\newcommand{\bz}{\bm{z}}

\newcommand{\bK}{{F_K}}
\newcommand{\bQ}{{F_Q}}
\newcommand{\bV}{{F_V}}

\newcommand{\beq}{\begin{equation}}
\newcommand{\eeq}{\end{equation}}
\newcommand{\beqa}{\begin{eqnarray}}
\newcommand{\eeqa}{\end{eqnarray}}
\newcommand{\beqan}{\begin{eqnarray*}}
\newcommand{\eeqan}{\end{eqnarray*}}

\newcommand{\pde}[2]{ \frac{\partial #1}{\partial #2} }

\newcommand{\bite}{\begin{itemize}}
\newcommand{\eite}{\end{itemize}}
\newcommand{\benu}{\begin{enumerate}}
\newcommand{\eenu}{\end{enumerate}}

\definecolor{darkpastelgreen}{rgb}{0.01, 0.75, 0.24}

\begin{document}

%  \title{\LARGE \bf Transformer's Self Attention as Multiagent Dynamics on the Sphere}
\title{\LARGE \bf Analogies between Transformer Layers and Power Method}
%\title{\LARGE \bf Self Attention Dynamics on the Sphere: stability analysis}

\author{
Chenglong Li and Claudio Altafini
% \thanks{
% C. Altafini is with the Division of Automatic Control, Dept. of Electrical Engineering,
% Link\"oping University, SE-58183, Link\"oping, Sweden.
% email: {\tt\small claudio.altafini@liu.se}.
% Work supported in part by the Swedish Research Council (grant n. 2024-04772)
% and by the ELLIIT framework program at Link\"oping University.
% }
}
%% 

% $\mathbbm{1} $ $ \mathds{1}$

\maketitle
%\thispagestyle{empty}
%\pagestyle{empty}
%

%%%%%%%%%%%%%%%%%%%%%%%%%%%%%%%%%%%%%%%%%%%%%%%%%%%%%%%%%%%%%%%%%%%%%%%%%%%%%%%%
\begin{abstract}
In the paper we show that there is an analogy between the operations occurring in a layer of a transformer (projections and layer normalizations, disregarding the feedforward neural network) and a step in the power method. Coherently with this analogy, we show that passing through a layer the tokens tend to be tilted towards the principal eigenvector of a matrix which is the product of the output and value weight matrices of that layer. 
In the special case of a transformer with shared weights (i.e., in which all layers  have identical weights) then the alignment with this principal eigenvector is particularly evident empirically, and can also be shown analytically. 
The analogy also suggests a method to steer the output of the transformer towards an arbitrary desired direction in token space.
\end{abstract}

\section{Introduction}

The power method is a simple and efficient numerical method to compute the principal eigenvector of a matrix, say $F$. It works by iteratively multiplying a vector $ \bx(t)$ with $F$ and then normalizing the resulting new vector: $ \bx(t+1) = \frac{F\bx(t)}{\| F\bx (t) \|} $, with $ t$ a step counter (i.e., time). If $F$ has a strictly dominant real eigenvalue, say $ \lambda_1 $, then the process initialized in a random vector generically converges to the associated eigenvector, say  $ \bv_1$, i.e.,  $ \bx(t) \to \bv_1 $ as the number of steps $ t$ grows. 

In this paper we show that the operations occurring in a transformer have analogies with the power method, provided we think of a transformer as the unfolding in time of a discrete-time dynamical system in which time is the layer index. 
Such interpretation is by now well-known \cite{dutta2021redesigning,sander2022sinkformers,lu2019understanding} and has been exploited in several papers \cite{abellaconsensus,geshkovski2023emergence,geshkovski2024dynamic,geshkovski2025mathematical,wu2024role}.
Most of the literature viewing a transformer as a dynamical system focuses on the self-attention mechanism only, viewed as a continuous-time time-invariant system, often limited to query, key and value matrices having a special form, and even more often dealing with a single head. 
The asymptotic behavior of such dynamical system is investigated for instance in \cite{abella2024asymptotic,abellaconsensus,altafini2025multistability,geshkovski2025mathematical}, where it is shown that the generic behavior of a transformer (in the infinite depth limit) is to converge to a consensus point, in which all tokens become equal. 
This phenomenon is predicted by the mathematical models and it is also known experimentally under the names of  oversmoothing, rank collapse, or token-uniformity \cite{dong2021attention,dovonon2024setting,noci2022signal,nguyen2023mitigating,scholkemper2024residual,shi2022revisiting,zhai2023stabilizing}.

If consensus (i.e., token oversmoothing in depth) has been confirmed by several empirical and theoretical studies, much less is known regarding what is the vector to which all tokens converge to, and what interpretation does this vector have in practical transformers. 
In was shown in \cite{altafini2025multistability} that for the continuous-time single-head self-attention model, all eigenvectors of the value matrix are equilibria of the system, and that most frequently convergence occurs towards the principal eigenvector of such matrix. 

In this paper these properties are investigated in discrete-time and for multihead systems. We obtain complete analytical characterizations of the convergence to the principal eigenvector for special cases (full attention, symmetric value matrices) and numerical results for more general cases (causal attention, non-symmetric weight matrices). 
More importantly, we show that for off-the-shelf transformers like GPT2 \cite{gpt2}, GPTNeo \cite{gptneo}, BERT \cite{bert} and ALBERT (``A Lite BERT'', a variant of BERT with shared weights across all its layers \cite{lan2019albert}), the theoretical results have predictive power and the aforementioned analogy with the power method makes sense, provided we use the $L_2 $ norm and disregard the feedforward neural network (FFN) downstream the attention block in each layer. 

In particular, a transformer like ALBERT, in which the query, key, value and output weight matrices are identical in all layers, corresponds to a time-invariant discrete-time dynamical system like the one we study analytically, and, in fact, we confirm empirically that (if we use the $L_2 $ norm and disregard the FFN) the consensus value is the one predicted by the model: all tokens converge to the principal eigenvector $ \bv_1 $ associated to a ``layer matrix'' $F$ corresponding to the product of value and output weight matrices. 
For such ``universal transformer'', each layer can be seen as a ``perturbed'' projection of the tokens on the eigenspaces of a matrix like $F$, followed by a layer normalization, where the perturbation is given by the attention matrix. 
When the tokens converge to consensus, then a full attention matrix becomes the uniform matrix, and the aforementioned perturbation vanishes as the tokens are mapped through the successive identical layers of the transformer. 
The projection action along $ \bv_1 $ becomes more and more evident as the number of layers increases. 
For such class of transformers, the analogy with the power method is particularly evident.
% This will be shown below for the ALBERT transformer.

When the weight matrices differ from layer to layer, as in most commonly used transformers, then the dynamical system becomes time-varying (or layer-varying) and the projection effect is less evident, but the basic principle remains. Each layer can still be seen as one step of the power method, but since the associated matrix $F$ changes from layer to layer, i.e., $ F=F_t $ with $ t$ the layer index, perfect alignment along the corresponding dominant eigenvector $ \bv_{1,t}$ is not achieved, because also $ \bv_{1,t}$ changes from layer to layer. 
Nevertheless what can be shown is that if we look at statistics over many tokens (in order to factor out the ``perturbation'' due to the attention coefficients) and disregard the FFN, then with the $L_2$ normalization at each layer the projection of the tokens along the current $ \bv_{1,t}$ typically increases as the tokens pass through the $ t$-th layer, i.e., in probability the correlation between tokens and $ \bv_{1,t}$ before the $t$-th layer is typically lower than the same correlation after the layer.
At each layer, the matrix $ F_t$ for which this occurs is a product between the output weight matrix and the concatenation of the value weight matrices of the various heads. 
This increase in alignment is precisely the spirit of the power method: the dominant direction is amplified with respect to all other directions. 
We show that this is occurring in nearly all layers of real transformers like GPT2, GPTNeo and BERT, even though the alignment with this dominant direction remains typically weak, because the layer matrices of these transformers have very small spectral gaps between principal and second eigenvalue.

The insight we acquire by the analogy with the power method suggests however a simple method for obtaining a very strong alignment with a desired eigenspace, and thereby for steering the tokens towards an arbitrary direction in token space. Given a transformer with layer-varying weights, it is enough to modify the last layer by adding a properly chosen rank-1 perturbation to $ F_t $. If this perturbation is sufficiently large, it creates a spectral gap in the modified layer matrix, and the tokens align themselves with the new principal eigenvector induced by this rank-1 perturbation. This eigenvector can be chosen by the designer, meaning that the tokens can be steered essentially in any arbitrary direction in the space of token vector representations.

Steering the output of a large language model toward a desired token alignment can be done at various levels: by intervening on the prompts \cite{shin2020autoprompt,NEURIPS2023_91edff07,schulhoff2024prompt}, at the embedding level (e.g. ``soft-prompting'' \cite{lester2021powerscaleparameterefficientprompt}), at the activation level \cite{dathathri2019plug,turner2023steering} or on the weights. The latter approach, which is closest in spirit to our work, can be attained e.g. by fine-tuning, weight editing, or iterative refinement via self-feedback \cite{hu2021loralowrankadaptationlarge,ilharco2022editing,zelikman2022starbootstrappingreasoningreasoning}. None of these methods, however, specifies how to mechanistically modify the layers of a transformer in order to achieve a desired alignment, as we do here. Our methodology suggests an alternative principled way to steer the output of a large language model.

\section{Model formulation}
\label{sec:modelformulation}
Following e.g. \cite{abella2024asymptotic,geshkovski2025mathematical}, we aim at representing a transformer as a discrete-time dynamical system with time given by the layer index $ t= 1, \ldots, T$, where $ T$ is the number of layers. 
Using a column vector representation for the tokens, $ \bx_i (t) \in \mathbb{R}^d$, $ i=1, \ldots, n $, the update law for each token on a layer of a multihead transformer, including the skip connection and layer normalization, and excluding the FFN can be compactly written as:
\begin{equation}
\label{eq:update-multihead1}
\bx_i(t+1) = \text{LayerNorm}\left( \bx_i (t) +F_{O,t} \sum_{h=1}^H \bar{F}_{V,t}^{(h)}  \sum_{j=1}^n A_{t, ij}^{(h)}(\bx(t)) \bx_j (t) \right), \quad \begin{array}{l} i=1, \ldots, n, \\ t=1, \ldots, T,\end{array}
\end{equation}
where $ A_t^{(h)}(\bx(t))$ is the attention matrix of each head $ h = 1, \ldots, H$, (a function of all tokens $ \bx(t)$ and of the query and key weight matrices) and the matrices $ F_{O,t} $ and $  \bar{F}_{V,t}^{(h)} $ are obtained from the learned output weight matrices and value weight matrices of each head.
In particular, if $ W_{O,t} $ and $ W_{V,t}^{(h)}$ are the learned output and value weight matrices in the standard transformer formulation \cite{vaswani2017attention}, then 
\[
F_{O,t} = (W_{O,t})^T, \qquad 
 \bar{F}_{V,t}^{(1)} = \begin{bmatrix} (W_{V,t}^{(1)})^T \\ 0 \\ \vdots \\ 0  \end{bmatrix}, \ldots,  \bar{F}_{V,t}^{(H)} = \begin{bmatrix} 0 \\ 0 \\ \vdots \\ (W_{V,t}^{(H)})^T   \end{bmatrix},
 \]
where the transpose originates from the column-token convention we use. See a detailed formulation in Section~\ref{sec:standardW} of the Supplementary Materials.

If we disregard attention (which varies from token to token) the matrix that lumps together the contribution of the $H$ heads occurring in the transformer layer is the following ``layer matrix''
\[
F_t = F_{O,t} F_{V,t} \quad \text{with} \quad  F_{V,t} =  \sum_{h=1}^H \bar{F}_{V,t}^{(h)} .
\] 
Let $ \lambda_{1,t}, \ldots, \lambda_{d,t} $ be the eigenvalues of $ F_t $, ordered as $ {\rm Re}(\lambda_{1,t}) \geq \ldots\geq  {\rm Re}(\lambda_{d,t}) $, and let $ \bv_{1,t}, \ldots, \bv_{d,t} $ be the associated eigenvectors, normalized s.t. $ \| \bv_{j,t}\|=1$.
 Then $ \bv_{1,t}  $ is the principal eigenvector of $ F_t$ (i.e., the right eigenvector associated to $ \lambda_{1,t}$). 

The dynamical system \eqref{eq:update-multihead1} is time-varying, i.e., $ F_t $ changes from layer to layer. 
When we disregard the FFN part as we do in \eqref{eq:update-multihead1}, it is known from the literature that the model \eqref{eq:update-multihead1} asymptotically (i.e., when $ T\to \infty$) tends to a consensus equilibrium point, in which all tokens are equal \cite{abella2024asymptotic,altafini2025multistability,geshkovski2025mathematical}.

\subsection{Shared weight transformers: Alignment with principal eigenvector}
\label{sec:shared-param}

If in a transformer all weight matrices are identical across layers, i.e., in \eqref{eq:update-multihead1} $ \forall \, t_1, t_2 \in \{ 1, \ldots, T\}$
\beq
 \bar{F}_{V,t_1}^{(h)} =\bar{F}_{V,t_2}^{(h)} , \quad  h\in \{1, \ldots, H \} , \quad \text{and} \;\; F_{O,t_1} =F_{O,t_2} ,
\label{eq:weight-shared1}
\eeq
then we can drop the index $ t$ in all the matrices in \eqref{eq:weight-shared1} and  the dynamical system~\eqref{eq:update-multihead1} becomes autonomous (i.e., time-invariant).

Consider $ F= F_O F_V= F_O \sum_{h=1}^H \bar{F}_{V}^{(h)}  $ and  denote $ \lambda_1, \ldots, \lambda_d $ its eigenvalues, ordered as $ {\rm Re}(\lambda_1) \geq \ldots\geq  {\rm Re}(\lambda_d) $, with $ \bv_1, \ldots, \bv_d $ the associated eigenvectors.
Several cases are possible, depending on the principal eigenvalue $ \lambda_1 $:
\begin{description}
\item[Case 1:] $ \lambda_1 $ is real and strictly dominant: $ \lambda_1 >  {\rm Re}(\lambda_i)  $, $ i=2, \ldots, d$;
\item[Case 2:]  $ \lambda_1 $ is complex and strictly dominant: $ {\rm Re}(\lambda_1) >  {\rm Re}(\lambda_i)  $, $ i=3, \ldots, d$;
\item[Case 3:]  $ \lambda_1 $ (real or complex) is not strictly dominant.
\end{description}

\subsubsection{Case 1: real strictly dominant principal eigenvalue}

For the case of $ \text{LayerNorm}(\cdot) $ which is the $ L_2 $ norm, the system~\eqref{eq:update-multihead1} can be reformulated as 
\begin{equation}
\label{eq:update-multihead-auton}
\bx_i(t+1) = \frac{\bx_i (t) +\eta F_{O} \sum_{h=1}^H \bar{F}_{V}^{(h)}  \sum_j A_{ij}^{(h)}(\bx(t)) \bx_j (t)  }{\| \bx_i (t) +\eta F_{O} \sum_{h=1}^H \bar{F}_{V}^{(h)}  \sum_j A_{ ij}^{(h)} (\bx(t)) \bx_j  (t) \|}, \qquad \begin{array}{l} i=1, \ldots, n, \\ t=1, \ldots, T\end{array}
\end{equation}
where we have added an extra parameter $ \eta$ (``step size''), which balances the importance of the self-attention part against the skip connection \cite{noci2022signal}.  
Under the special case of $ F $ which is symmetric, the asymptotic behavior of the system~\eqref{eq:update-multihead-auton} is described in the following theorem.

\begin{theorem}
\label{thm:symm-case1}
Consider the system~\eqref{eq:update-multihead-auton}. 
Assume the attention matrices $ A^{(h)}$, $ h=1, \ldots, H$, are full and the matrix $ F = F_O \sum_{h=1}^H \bar{F}_{V}^{(h)} $ is symmetric with $ \lambda_1 > \lambda_i$, $ i=2,\ldots, d$, $ \lambda_1 >0$. Assume further that  $ \eta<  \frac{2}{ |\lambda_1 + \lambda_d |}  $ whenever $ - \lambda_d > \lambda_1 $. Then the consensus point $ \bx_i =\bv_1 $ $\forall \, i=1, \ldots, n$ (or $ \bx_i =-\bv_1 $ $\forall \, i=1, \ldots, n$) is a locally asymptotically stable equilibrium point for the system~\eqref{eq:update-multihead-auton} when $ T\to \infty$. 
\end{theorem}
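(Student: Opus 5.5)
Our plan is to linearize the map \eqref{eq:update-multihead-auton} at the consensus point and show that the Jacobian, restricted to the tangent space of the product of spheres $(\mathbb{S}^{d-1})^n$, has spectral radius strictly less than one.

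\textbf{Step 1: the consensus point is an equilibrium.} Take $\bx_i=\bv_1$ for all $i$. Each attention matrix is row stochastic and full, so $\sum_j A^{(h)}_{ij}\bx_j=\bv_1$. Hence the numerator equals $\bv_1+\eta F\bv_1=(1+\eta\lambda_1)\bv_1$. Since $\lambda_1>0$ and $\eta>0$, the factor $1+\eta\lambda_1$ is positive, so normalization returns $\bv_1$. The same argument applies to $-\bv_1$.

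\textbf{Step 2: linearization.} Write $\bx_i=\bv_1+\bz_i$ with $\bz_i\perp\bv_1$ to first order. At consensus the attention coefficients $A^{(h)}_{ij}(\bx)$ are smooth functions of the tokens; their first-order variation multiplies $\sum_j \delta A_{ij}\,\bv_1$, and this term vanishes because $\sum_j \delta A_{ij}=0$ (rows keep summing to one). For the same reason, when all heads share the value $A^{(h)}_{ij}(\bv_1,\dots,\bv_1)$, which is $1/n$ for full softmax attention at consensus, the attention drops out of the first-order term. We therefore expect the perturbation of the numerator to be $\bz_i+\eta F_O\sum_h\bar F_V^{(h)}\sum_j A^{(h)}_{ij}\bz_j$. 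Differentiating $\bu\mapsto \bu/\|\bu\|$ at $\bu=(1+\eta\lambda_1)\bv_1$ gives $\frac{1}{1+\eta\lambda_1}(I-\bv_1\bv_1^T)$. Here the symmetry of $F$ is what lets us decompose $\bv_1^\perp$ into the eigenvectors $\bv_2,\dots,\bv_d$.

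\textbf{Step 3: spectral radius of the linearized map.} This is the main obstacle, because of the head-dependent matrices $\bar F_V^{(h)}$ combined with the attention weights $A^{(h)}$. I would first use the fact that at consensus every $A^{(h)}$ equals $\frac1n\mathbf 1\mathbf 1^T$, so the coupling term reduces to $\eta F\bar{\bz}$ with $\bar\bz=\frac1n\sum_j\bz_j$. The linearization then splits into two parts.
\begin{itemize}
\item The mean mode evolves as $\bar\bz\mapsto \frac{1}{1+\eta\lambda_1}(I+\eta F)\bar\bz$ on $\bv_1^\perp$. Its eigenvalues are $\frac{1+\eta\lambda_k}{1+\eta\lambda_1}$ for $k\ge2$.
\item The disagreement modes evolve as $\bz_i-\bar\bz\mapsto\frac{1}{1+\eta\lambda_1}(\bz_i-\bar\bz)$. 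Their eigenvalues are $\frac{1}{1+\eta\lambda_1}<1$.
\end{itemize}
For the mean mode we need $|1+\eta\lambda_k|<1+\eta\lambda_1$. The upper side holds because $\lambda_k<\lambda_1$. The lower side, $1+\eta\lambda_k>-(1+\eta\lambda_1)$, is automatic when $-\lambda_d\le\lambda_1$. Otherwise it becomes $\eta(-\lambda_d-\lambda_1)<2$, which is exactly the hypothesis $\eta<2/|\lambda_1+\lambda_d|$.

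\textbf{Step 4: conclusion.} The linearization has spectral radius below one on the tangent space. The standard discrete-time linearization (Lyapunov indirect method) theorem then gives local asymptotic stability as $T\to\infty$.

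If the first-order attention terms prove not to vanish in full generality, I would check them explicitly. They appear only multiplied by $\bv_1$ through a zero-row-sum matrix, and the projection $I-\bv_1\bv_1^T$ annihilates that contribution.
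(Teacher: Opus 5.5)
Your proposal is correct and follows essentially the same route as the paper. The paper linearizes at consensus and uses that full attention is uniform there, so the Jacobian reduces to that of the multiagent Oja system with matrix $F$; it then reads off the intrinsic eigenvalues $\frac{1+\eta\lambda_k}{1+\eta\lambda_1}$ (your mean mode, the paper's second class) and $\frac{1}{1+\eta\lambda_1}$ (your disagreement modes, the paper's third class), with the step-size bound covering the case $-\lambda_d>\lambda_1$. Your version is somewhat cleaner: you work directly on the tangent space, and you make explicit, via the zero row sums of $\partial A^{(h)}$ for each head, why the attention derivatives drop out at consensus, which the paper only asserts.
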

The proof of this theorem is in Section~\ref{sec:ext-multi-head} in the Supplementary Materials. It is a consequence of Theorem~\ref{thm:stab-main-self} (also in the Supplementary Materials). 

What the theorem says is that when sufficiently many identical layers are available (i.e., sufficiently many time steps in the dynamical system~\eqref{eq:update-multihead-auton}), then all tokens align themselves with the principal eigenvalue of $ F$, $ \bv_1 $. 
This is similar to the behavior occurring in the power method. 
The reason for this form of convergence is that when the token $ \bx_i $ approaches a consensus point, each (full) attention matrix approaches a uniform matrix, $ A_{ij}^{(h)}(\bx) \to \frac{1}{n}$, and can be taken out of the summation in \eqref{eq:update-multihead-auton} \cite{noci2022signal}. What is left is a special case of a dynamical system which in Section~\ref{sec:single-head-analysis} of the Supplementary Materials we call the discrete-time multigent Oja flow \cite{oja1985stochastic}, whose continuous-time single-agent version is well-known for discovering the principal eigenvector of a matrix \cite{yoshizawa2001convergence}, just like the power method does in discrete-time. The matrix in question is our $F$.

Roughly speaking, the behavior of \eqref{eq:update-multihead-auton} can be described as follows. If we split a token $\bx_i(t) $ into components associated to the eigenvectors of $F$ (the spectral decomposition can be done rigorously only if $F$ has real eigenvalues, but let us not focus on this detail here), then at each step of \eqref{eq:update-multihead-auton} (disregarding the residual connection and the attention coefficients), the component along $ \bv_j$ acquires a scalar amplification/attenuation factor equal to the corresponding $ \lambda_j$. Hence if $ \lambda_1 > \lambda_j$, $j=2, \ldots, n$, the component along $ \bv_1 $ gets amplified the most in comparison with the other components. 
Since this projection is followed by a normalization, the resulting normalized vector $ \bx_i(t+1) $ becomes more aligned with $ \bv_1$ than $ \bx_i(t)$, because the projection dominates with respect to the remaining eigenvectors of $ F$. 
The projection + normalization mechanism is what occurs in a power method.

In the single head case discussed in the Supplementary Materials, other consensus-like equilibria, denoted bipartite consensus points, could in principle be locally asymptotically stable, corresponding to the remaining eigenvectors of the matrix $F$, $ \bv_2, \ldots, \bv_d$, see Lemma~\ref{lem:self-equil1} in the Supplementary Materials. The name bipartite consensus is due to the splitting of the $n$ tokens into two groups, the first group converging to $ \bv_k$, the other to the antipodal point $ - \bv_k$, with $ k=2, \ldots, d$ \cite{altafini2025multistability}. Also more exotic types of multi-partite clustered equilibria are theoretically possible for the single-head model, see Lemma~\ref{lem:self-equil1}. 
In the simulations we carried out for the multi-head model \eqref{eq:update-multihead-auton} we could not observe any of these extra locally asymptotically stable equilibria (see Section~\ref{sec:extra-examples} in the Supplementary Materials), nor were they visible in the experiments with the ALBERT transformer described below. 

While we could prove convergence to $ \bv_1 $ only locally, in the full attention case and for a symmetric $ F$ (and for $ \eta $ small enough), convergence seems to occur under more general conditions (for causal attention, for any $F$ with a strictly dominant eigenvalue, for any $ \eta $, and for nearly any initial token vector). 
We show below that numerically this is the case for the ALBERT transformer.

\subsubsection{Case 2: complex conjugate principal eigenvector}
When the principal eigenvalue is $ \lambda_{1,2} = \alpha \pm i \beta $, and $ {\rm Re}(\lambda_{1,2}) >  {\rm Re}(\lambda_i)  $, $ i=3, \ldots, d$, then the power method fails, because the vector $ \bx $ keeps rotating in the 2D subspace generated by the complex conjugate eigenvector pair $ \bv_1 $ and $ \bv_2 $. The same behavior occurs in \eqref{eq:update-multihead-auton} when $ T \to \infty $: the $ \bx_i $ keep rotating in the 2D subspace formed by $ {\rm Re} (\bv_1 ) $ and $ {\rm Im}(\bv_1) $. 
Denoting $ P = [{\rm Re} (\bv_1 ) \; {\rm Im}(\bv_1) ]$, the projection onto $ {\rm span}(P) $ is  $ \bx_{i,{\rm proj}} = P P^\dagger \bx_i $. 
The quantity that we need to monitor is then the correlation between $ \bx_i $ and its projection $ \bx_{i,{\rm proj}}$. 
This situation actually happens for some of the variants of ALBERT analyzed below.

\subsubsection{Case 3: no strictly dominant eigenvalue}
This case, due to $ \lambda_1 $ having multiplicity $ > 1 $, is not generic, hence it almost never occurs in real learned weight matrices. However, what can happen (and indeed we encounter in some variants of the ALBERT transformer) is $ \lambda_2 \simeq \lambda_1 $. The consequence is that convergence of the iterate \eqref{eq:update-multihead-auton} may occur very slowly for some initial conditions $ \bx_i(0)$, as the two eigenspaces $ {\rm span} (\bv_1)$ and $ {\rm span} (\bv_2)$ are difficult to separate (a large $ T $ is required). 
The quantity to monitor in this case is the correlation between $ \bx_i $ and its projection $ \bx_{i,{\rm proj}}= P P^\dagger \bx_i $ with $ P = [ \bv_1 \; \bv_2]$. 

\subsection{Transformers with layer-varying weights: Alignment with the principal eigenvector at the current step}
\label{sec:varying-param}
In nearly all transformers, the (query, key, value and output) weight matrices change from layer to layer. 
In these cases, we need to consider the time-varying dynamical system \eqref{eq:update-multihead1} for which the ``layer matrix'' $ F_t $ changes from layer to layer. 
Since also the associated principal eigenvalue and eigenvector pair $ \lambda_{1,t} $ and $ \bv_{1,t}$ change from layer to layer, we cannot expect an incrementally growing alignment of the tokens with $ \bv_{1,t}$ as $ t$ grows. 
Nevertheless, we can check if at a step $ t$ the tokens still tend to align themselves with the direction of $ \bv_{1,t}$ when they pass through the $t$-th layer of the transformer. 
Because of the attention coefficients, we cannot expect this to happen to all tokens at all layers, but for each layer $t$ we can compute the average correlation increment 
\beq
\psi(\bv_{1,t}) = \mathbb{E}\left( \left|\mathrm{corr} (\bx_i{(t+1)},\bv_{1,t})\right|-\left|\mathrm{corr}(\bx_i{(t)},\bv_{1,t})\right| \right) 
\label{eq:corr-increm1}
\eeq
and the probability of increment 
\beq
\rho(\bv_{1,t})= \mathbb{P}\!\left(\left|\mathrm{corr}(\bx_i{(t+1)},\bv_{1,t})\right|>\left|\mathrm{corr}(\bx_i{(t)},\bv_{1,t})\right|\right)
\label{eq:prob-increm1}
\eeq
over a sufficiently large number of token vectors.
Values $\rho(\bv_{1,t}) > 0.5$ and $\psi (\bv_{1,t})>0$ imply that the tokens tend to become more aligned with $\bv_{1,t} $ when passing through layer $t$, indicating that the layer behaves analogously to a single step of the power method applied to $F_t$.

When the principal eigenvalue is complex as in Case~2 (and possibly in Case~3), then instead of \eqref{eq:corr-increm1} and \eqref{eq:prob-increm1} for each token $ \bx_i$ we must compute
\begin{align*}
\psi(\bx_{i,{\rm proj}}) & = \mathbb{E}\left( \left| \mathrm{corr}(\bx_i{(t+1)},\bx_{i,{\rm proj}}(t+1))\right|-\left|\mathrm{corr}(\bx_i{(t)},\bx_{i,{\rm proj}} (t))\right| \right) 
%\label{eq:corr-increm2} 
\\
\rho(\bx_{i,{\rm proj}}) & = \mathbb{P}\!\left(\left|\mathrm{corr}(\bx_i{(t+1)},\bx_{i,{\rm proj}}(t+1))\right|>\left|\mathrm{corr}(\bx_i{(t)},\bx_{i,{\rm proj}}(t))\right|\right)
%\label{eq:prob-increm2}
\end{align*}
Hence we denote
\beq
\psi_t = \begin{cases} 
\psi(\bv_{1,t}) & \text{if $ \bv_{1,t} $ real} \\
\psi(\bx_{i,{\rm proj}}) & \text{if $ \bv_{1,t} $ complex} 
\end{cases}
\quad  \text{and} \quad 
\rho_t = \begin{cases} 
\rho(\bv_{1,t}) & \text{if $ \bv_{1,t} $ real} \\
\rho(\bx_{i,{\rm proj}}) & \text{if $ \bv_{1,t} $ complex.} 
\end{cases}
\label{eq:psi_t}
\eeq
We also denote 
\beq
\label{eq:gamma_t}
\gamma_t = \begin{cases}
\mathbb{E}\left( \left| \mathrm{corr}(\bx_i{(t+1)}, \bv_{1,t}) \right|  \right) & \text{ if $ \bv_{1,t} $ real} \\
\mathbb{E}\left( \left| \mathrm{corr}(\bx_i{(t+1)},\bx_{i,{\rm proj}}(t+1))\right| \right)  & \text{if $ \bv_{1,t} $ complex} 
\end{cases}
\eeq
the average correlation between the tokens and the principal eigenvector at the output of each layer. 

\subsection{Imposing an alignment in a transformer with layer-varying weights}
\label{sec:steering-token}

For transformers with layer-varying weights, the alignment effect with the principal eigenvector we obtain (i.e., $ \gamma_t $) is often rather limited. 
Furthermore also the total alignment with the principal eigenvector of the overall layer matrix product $ \Pi_{t=1}^T F_t $ is minimal, if not insignificant (see experiments below).

It is however possible to steer the tokens towards a desired alignment by modifying only the last layer $ t= T$ of the transformer. 
Recall that the layer matrix $ F_T $ is given by the product $ F_T =F_{O,T} F_{V,T}$ with $ F_{V,T} =\sum_{h=1}^H \bar{F}_{V,T}^{(h)}$ the concatenation of the value matrices of the $H$ heads.
Assume that $ F_{V,T} $ is invertible (generically this is true; if not, pseudoinversion or least-squares regression can be used instead). Then replacing the output matrix $ F_{O,T}$ with $ F_{O,T, {\rm mod}} = F_{O,T} + \sigma \bw_r \bw_\ell^T F_{V,T}^{-1}$ with $ \sigma \gg |\lambda_{1,T} | $ ($ \lambda_{1,T}$ is the principal eigenvalue of the layer matrix $ F_{T}$), the new layer matrix is $ F_{T,{\rm mod}} = F_T + \sigma \bw_r \bw_\ell^T$ where the rank-1 addition can be designed at will. 
If the spectral gap between $ \sigma $ and $ \lambda_{1,T} $ is sufficiently large, then the rank-1 term $ \sigma \bw_r \bw_\ell^T$ determines the new principal eigenpair of $ F_{T,{\rm mod}}$: the principal eigenvalue is (approximately) $ \sigma$ and the corresponding principal eigenvector (approximately) $ \bw_r$.
When in $ F_{T,{\rm mod}}$ the spectral gap between $ \lambda_{1,T,{\rm mod}}\approx \sigma $ and $ \lambda_{2,T,{\rm mod}}$ is of at least an order of magnitude, this novel principal direction dominates the projection and leads to a very high correlation between $ \bv_{1,T,{\rm mod}} (\approx \bw_r )$ and the tokens.
Since $ \bw_r $ can be chosen freely, the rank-1 modification provides a way to steer the tokens towards a desired direction.

Notice that if $ \bw_r $ and $\bw_\ell $ correspond to right and left eigenvector of an existing eigenvalue of $F_T $, say $ \lambda_j$, then the spectrum of $ F_{T,mod}$ is unchanged except for $ \lambda_{T,j} $ which becomes $ \lambda_{T,j} + \sigma $.

\section{Experiments}
In this section, we perform experiments with both layer-invariant (ALBERT) and layer-varying (BERT, GPT2, and GPTNeo) real transformers. See Supplementary Materials~\ref{sec:supply_experimental_details} for details.

\subsection{Application to ALBERT}
ALBERT \cite{lan2019albert} is a transformer in which all weights are identical across layers, i.e., \eqref{eq:weight-shared1} holds. The 4 versions of ALBERT we consider in this study are described in Table~\ref{tab:model_comparison} in the Supplementary Materials.
If we disregard the FFN part of a layer and use $ L_2 $ normalization, then the dynamical model \eqref{eq:update-multihead-auton} captures the transformation of the tokens through the layers. 
Each of the four transformers quickly converges to consensus, see Fig.~\ref{fig:Albert2} in the Supplementary Materials. 

When we apply the considerations in the previous sections to these 4 ALBERT transformers, we notice the following 
\bite
\item ALBERT-base: The distribution of the eigenvalues of $ F$ is shown in Fig.~\ref{fig:Albert1}(a). The principal eigenvalue $ \lambda_1  = 3.8624,$ is real, positive, simple and s.t. $ \lambda_1 > |\lambda_i| $ $ \forall \, i$, hence technically we are in Case~1.
However, $ \lambda_2 =  3.7738 $ is very close in magnitude to $ \lambda_1 $, and furthermore, the correlation between the two associated eigenvectors is extremely high: $ {\rm corr} (\bv_1, \bv_2 ) =0.9799$, meaning that the two eigenspaces $ {\rm span}\{\bv_1 \} $ and $ {\rm span}\{\bv_2 \} $ are essentially collinear. This means that {\em de facto} we are in Case~3. In fact, on the time horizon given by the number of layers $ T=12$, we often see that $ \bx_i(T) \in {\rm span} \{\bv_1, \bv_2 \} $ rather than being strictly aligned with $ \bv_1 $. 
%Consensus is always achieved in this time horizon, see Fig.~XX.
The alignment of $ \bx_i (T)$ with $ \bv_1 $ over $ 10^4$ different initial conditions $ \bx_i(0)$ is shown in Fig~\ref{fig:Albert1}(b). Both alignment and anti-alignment (i.e., alignment with $ -\bv_1$) seem to be overrepresented in the distribution, but not by much. If instead we look at $ \bx_{i,{\rm proj}} = P P^\dagger \bx_i $ where $ P = [ \bv_1 \, \bv_2]$, then a high correlation between $ \bx_i $ and this 2D projection $ \bx_{i,{\rm proj}} $ is clearly dominating the histogram. If instead of stopping at $ T=12$ we artificially prolong the sequence of steps in \eqref{eq:update-multihead-auton} to $ T=60$, then alignment with $ \bv_1 $ is typically obtained, see Fig.~\ref{fig:Albert4} in the Supplementary Materials for one example. 

\vspace{1cm}

% \begin{figure*}[ht]
%      \centering    
%       \subfigure[]{
%         \includegraphics[trim=0.1cm 0cm 0cm 0cm, clip=true,width=0.31\textwidth]{figures/fig_albert_base_v2_eigF.eps}} 
%       \subfigure[]{
%       \begin{minipage}{0.66\textwidth}
%       %\vspace{-3.5cm} 
%         \includegraphics[trim=0.5cm 6cm 1.5cm 0cm, clip=true,width=0.5\textwidth]{figures/fig_albert_base_v2_corr3_multiple.eps}
%         \includegraphics[trim=0.5cm 6cm 1.5cm 0cm, clip=true,width=0.5\textwidth]{figures/fig_albert_large_v2_corr3_multiple.eps}\\
%         \includegraphics[trim=0.5cm 6cm 1.5cm 0cm, clip=true,width=0.5\textwidth]{figures/fig_albert_xlarge_v2_corr3_multiple.eps}
%         \includegraphics[trim=0.5cm 6cm 1.5cm 0cm, clip=true,width=0.5\textwidth]{figures/fig_albert_xxlarge_v2_corr3_multiple.eps}
%         \end{minipage}}
%         \caption{Eigenvalues and token alignment for the four ALBERT transformers. (a): The histograms show the distribution of the real part of the eigenvalues of $F$ in each of the four ALBERT transformers. In the first two $ \lambda_1 $ is real, in the remaining two $ \lambda_1 $ is complex. (b): The histograms show the alignment of the tokens at the end of a transformer, $ \bx_i(T)$, with the principal eigenvector $ \bv_1 $ and with the 2D subspace determined by $ \bv_1 $ and $ \bv_2 $ for $ \lambda_1 $ real, and by real and imaginary part of $ \bv_1$ when $ \lambda_1 $ is complex. }
%         \label{fig:Albert1}
% \end{figure*}
\begin{figure*}[ht]
     \centering    
      \subfigure[]{
        \includegraphics[trim=0.1cm 0cm 0cm 0cm, clip=true,width=0.31\textwidth]{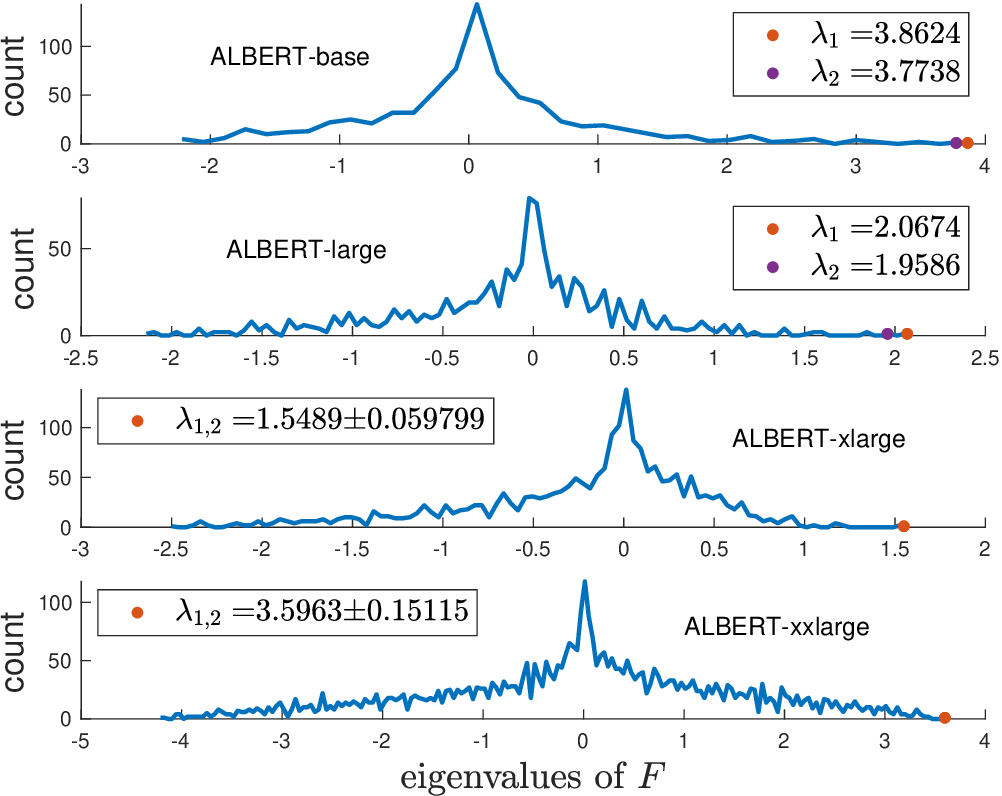}} 
      \subfigure[]{
      \begin{minipage}[b]{0.66\textwidth}
        \includegraphics[trim=0.5cm 6cm 1.5cm 0cm, clip=true,width=0.48\textwidth]{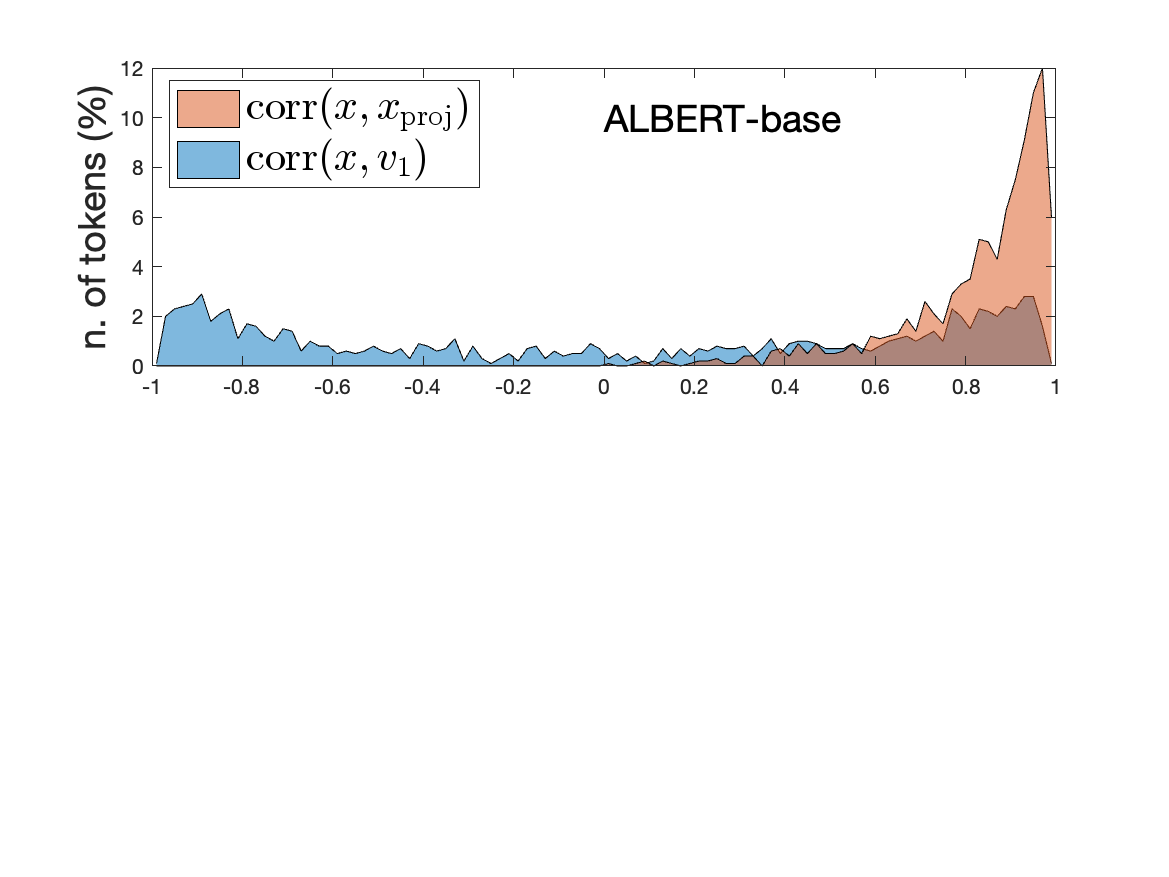}
        \includegraphics[trim=0.5cm 6cm 1.5cm 0cm, clip=true,width=0.48\textwidth]{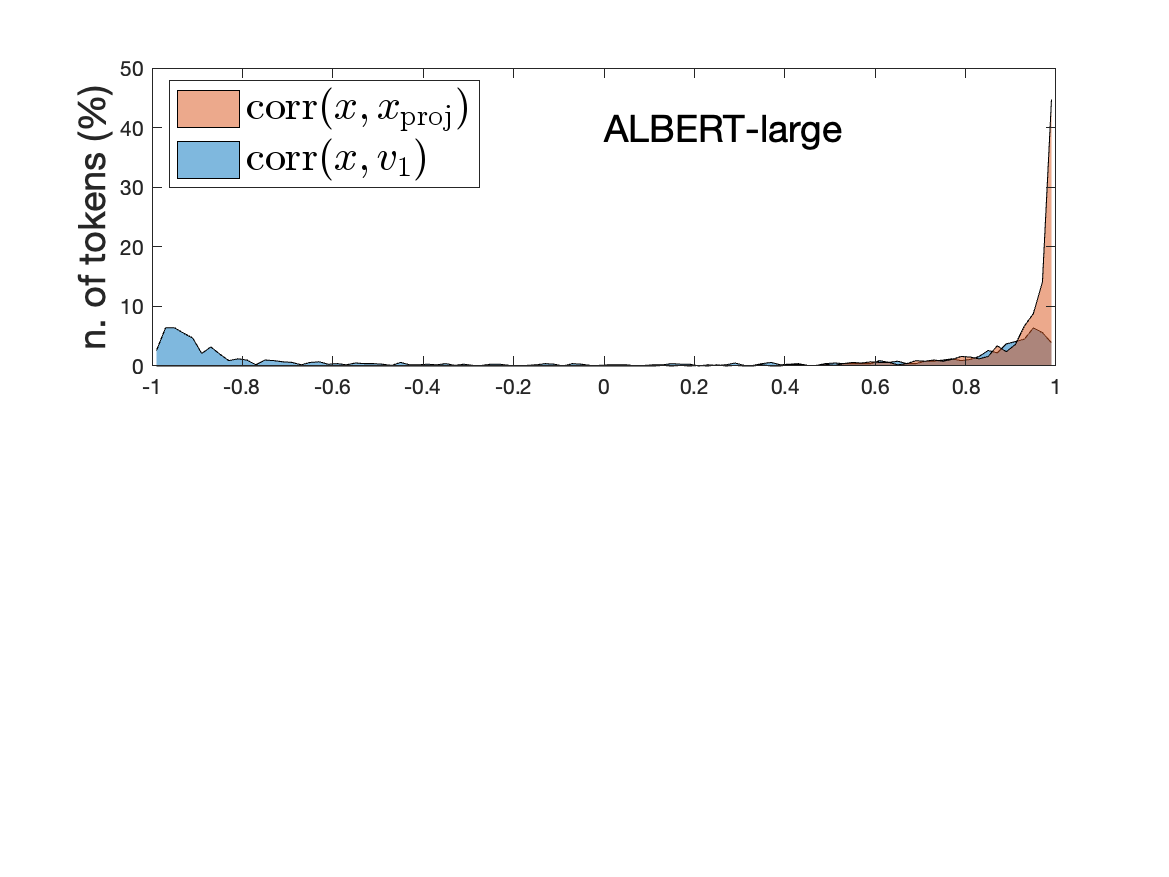}\\
        \includegraphics[trim=0.5cm 6cm 1.5cm 0cm, clip=true,width=0.48\textwidth]{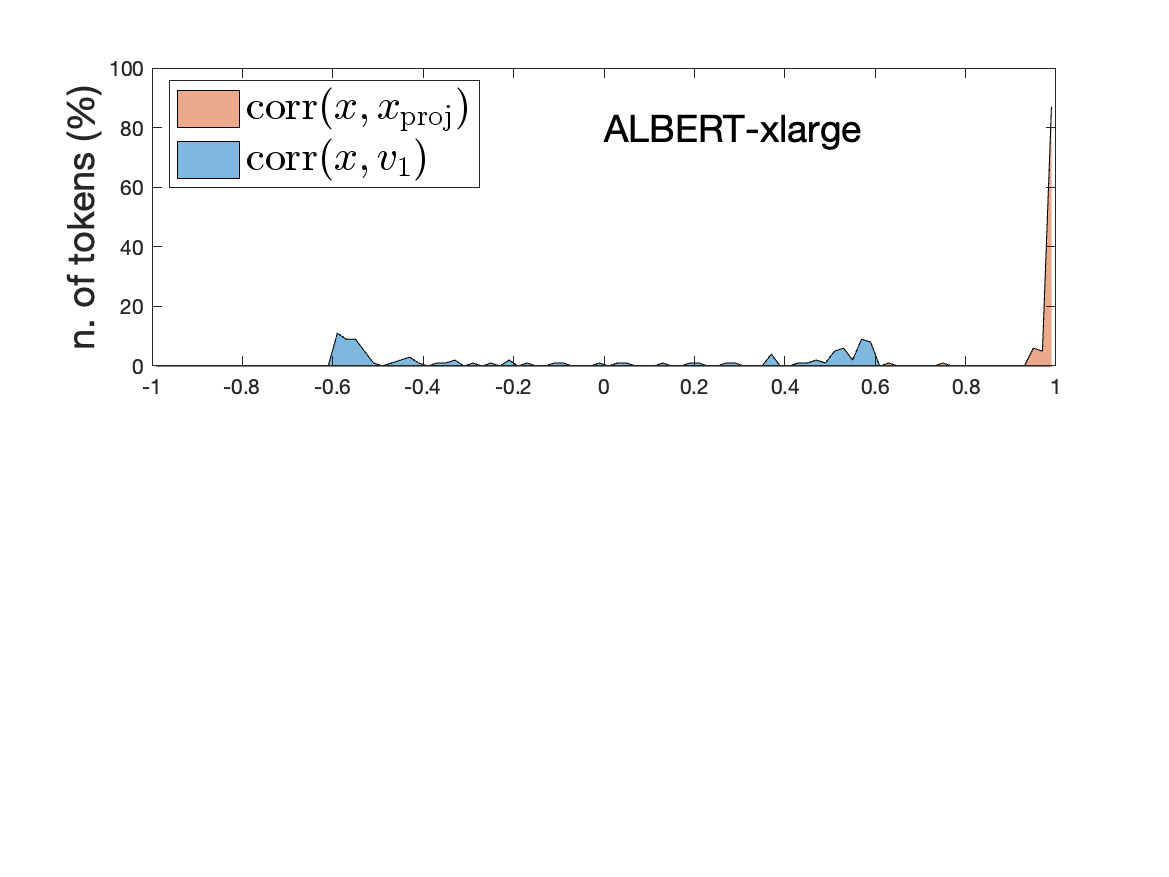}
        \includegraphics[trim=0.5cm 6cm 1.5cm 0cm, clip=true,width=0.48\textwidth]{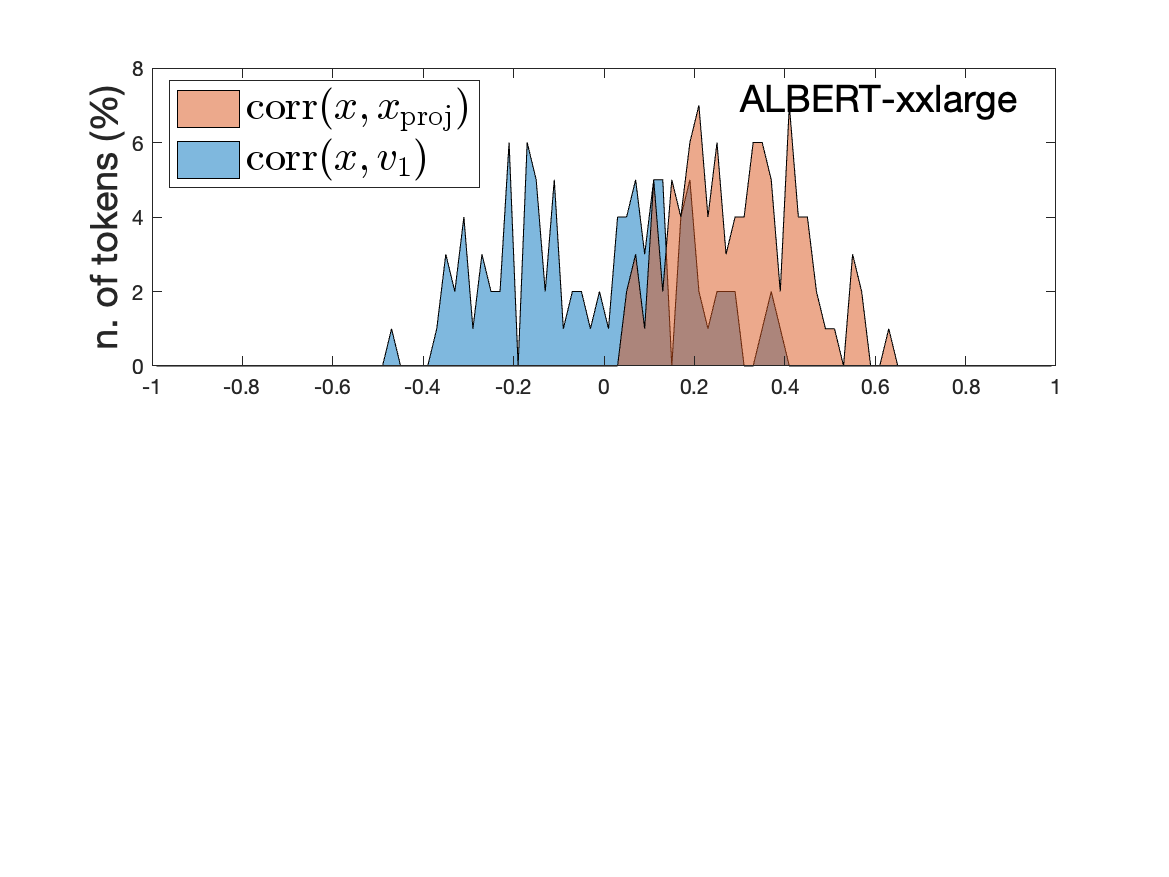}
        \end{minipage}}
        \caption{Eigenvalues and token alignment for the four ALBERT transformers. (a): The histograms show the distribution of the real part of the eigenvalues of $F$ in each of the four ALBERT transformers. In the first two $ \lambda_1 $ is real, in the remaining two $ \lambda_1 $ is complex. (b): The histograms show the alignment of the tokens at the end of a transformer, $ \bx_i(T)$, with the principal eigenvector $ \bv_1 $ and with the 2D subspace determined by $ \bv_1 $ and $ \bv_2 $ for $ \lambda_1 $ real, and by real and imaginary part of $ \bv_1$ when $ \lambda_1 $ is complex. }
        \label{fig:Albert1}
\end{figure*}

\item ALBERT-large: The principal eigenvalue $ \lambda_1  = 2.0674$ is real, positive, simple (even though not equal to the spectral radius of $F$: $ \lambda_1 < |\lambda_d| $, where $ \lambda_d = -2.1455  $ is the most negative eigenvalue).
Also in this case, $ \lambda_1 \approx \lambda_2 = 1.9586 $, and the correlation $ {\rm corr} (\bv_1, \bv_2 ) =0.8943$, meaning that the two eigenspaces $ {\rm span}\{\bv_1 \} $ and $ {\rm span}\{\bv_2 \} $ are almost collinear. Also here in the $ T=24 $ time horizon given by the number of layers sometimes $ \bx(T) \in {\rm span} \{\bv_1, \bv_2 \} $ rather than being strictly aligned with $ \bv_1 $, even though the distribution over $10^4$ initial $ \bx_i(0)$ is much more bimodal than for ALBERT-base, see Fig.~\ref{fig:Albert1}(b), because ALBERT-large is twice deeper.

\item  ALBERT-xlarge: The principal eigenvalue is complex, $ \lambda_1 = 1.5489 \pm 0.0598i $, and therefore it is expected that $ \bx_i $ keeps rotating in the 2D subspace generated by the complex conjugate eigenvector pair $ \bv_1 $ and $ \bv_2 $. In fact, as shown in Fig.~\ref{fig:Albert1}(b), $ \bx_i $ nearly always belongs to $ {\rm span}(P)$, where $  P =[{\rm Re} (\bv_1 ) \; {\rm Im}(\bv_1) ] $ even when the correlation between $ \bx_i $ and $ \bv_1 $ is low. 
%Notice further that $ | \lambda_1| < -\lambda_d $ where the most negative eigenvalue is $ \lambda_d =-2.5169$. 

\item ALBERT-xxlarge: the principal eigenvalue is complex, $ \lambda_1 = 3.5963 \pm 0.1512i$, so also this transformer is in Case~2 and it is expected that $ \bx_i$ converges to a limit cycle in the 2D subspace generated by the complex conjugate eigenvector pair $ \bv_1 $ and $ \bv_2 $.
However, for a transformer of this size ($ d=  4096$), $ T=12 $ layers are not really enough to achieve alignment with $ {\rm span}({\rm Re} (\bv_1 ),\; {\rm Im}(\bv_1)) $, see Fig.~\ref{fig:Albert1}(b). 
Also in this case, if we artificially prolong the transformer by adding extra identical layers the alignment improves, see Fig.~\ref{fig:Albert4} in the Supplementary Materials. 

\eite

For the 4 versions of ALBERT, we could never observe the system \eqref{eq:update-multihead-auton} reaching a bipartite consensus equilibrium, neither aligned with $ \bv_1 $ nor with any other eigenvector of $F$. 
This could be simply due to the fact that the domain of attraction of the bipartite equilibria is very small, hence never encountered by our initial conditions $ \bx_i(0)$. Another possible reason is that the concatenated structure of the multihead never renders the bipartite equilibria locally asymptotically stable, unlike in the single head case (see Theorem~\ref{thm:stab-main-self} in the Supplementary Materials).

\subsection{Application to transformers with layer-varying weights}
When we apply our considerations to transformers like BERT, GPT2 and GPTNeo in which the weight matrices change from layer to layer, then only the level of alignment with the current $ \bv_{1,t} $ can be checked, and it is not expected to increase over the layers, as $ \bv_{1, t-1} $ and $ \bv_{1,t}$ are typically significantly different in direction. 
Denote $ \psi_t^{\rm mean} = \sum_{k=1}^d \psi(\bv_k)/d $ the mean of the increment \eqref{eq:corr-increm1} over all eigenvectors of $ F_t $ (with obvious corrections for the case of complex eigenvalues, in the spirit of \eqref{eq:psi_t}). Similarly, denote $ \rho_t^{\rm mean} = \sum_{k=1}^d \rho(\bv_k)/d $ the average probability of increment over all eigendirections, and $\gamma_t^{\rm mean} = \sum_{k=1}^d \gamma(\bv_k)/d$ the mean of the correlation over all eigenvectors of $F_t$.
In Fig.~\ref{fig:psi_rho_selected} we compare $ \gamma_t $ with $ \gamma_t^{\rm mean} $, $ \psi_t $ with $ \psi_t^{\rm mean} $, and $ \rho_t $ with $ \rho_t^{\rm mean} $ for BERT-base, GPT2-medium and GPTNeo-2.7B (other 6 transformer models in the BERT, GPT2 and GPTNeo families are shown in Fig.~\ref{fig:psi_single_all} in the Supplementary Materials). As can be seen in the middle panels, $ \psi_t $ is almost always positive and larger than $ \psi_t^{\rm mean} $. Therefore $ \rho_t > 0.5 $ for almost all layers as shown in the lower panels of Fig.~\ref{fig:psi_rho_selected}.

\begin{figure}[!htbp]
  \centering
  % Row 1: psi_combined 
      \subfigure[]{\includegraphics[width=0.32\textwidth]{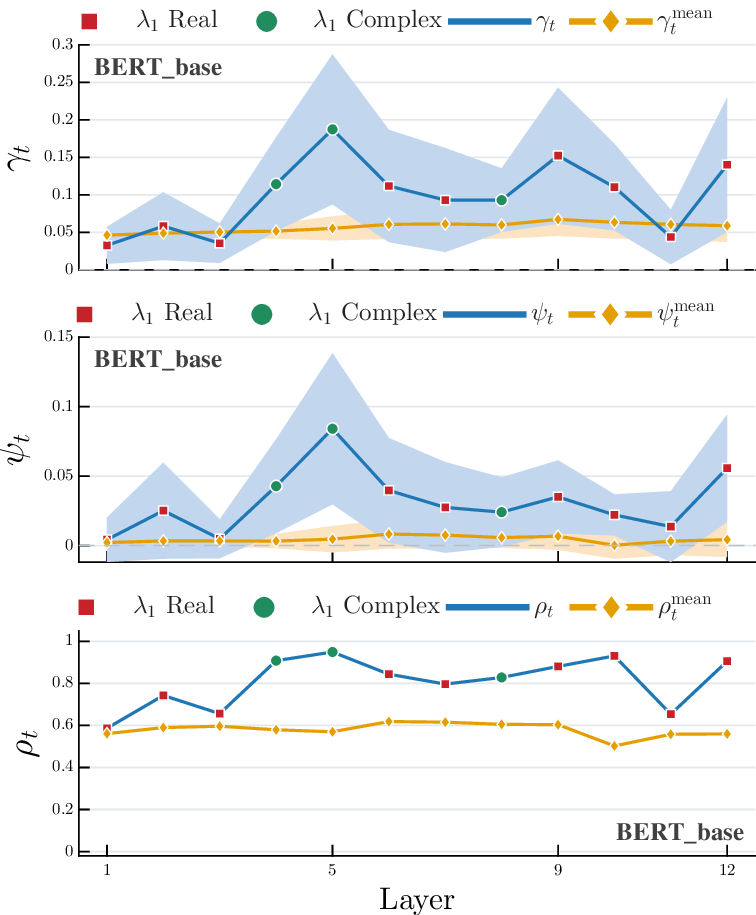}}
        \subfigure[]{\includegraphics[width=0.32\textwidth]{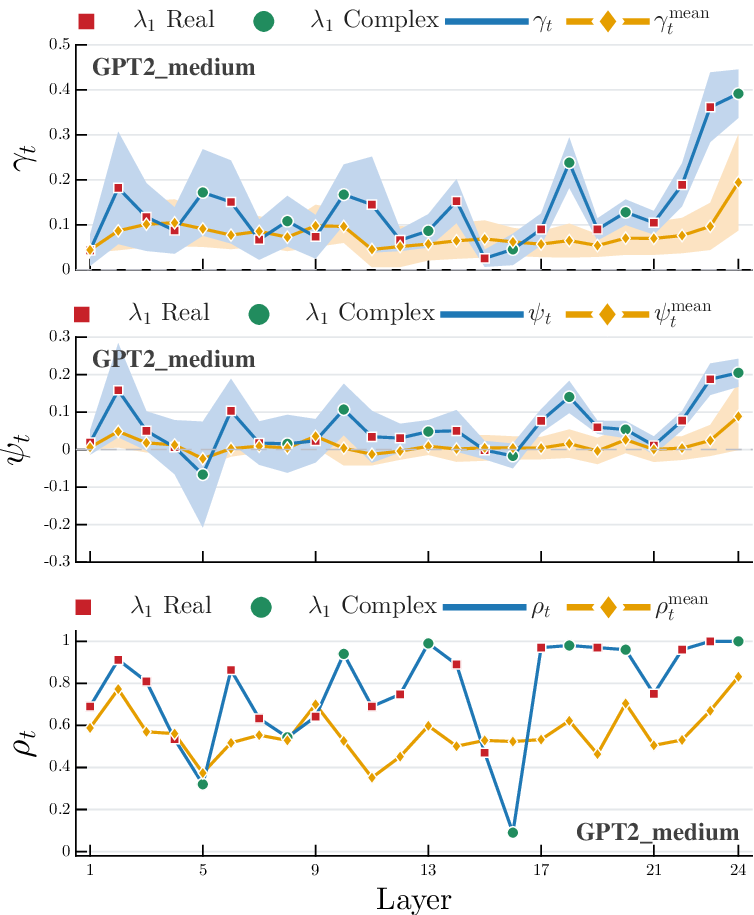}}        
      \subfigure[]{ \includegraphics[width=0.32\textwidth]{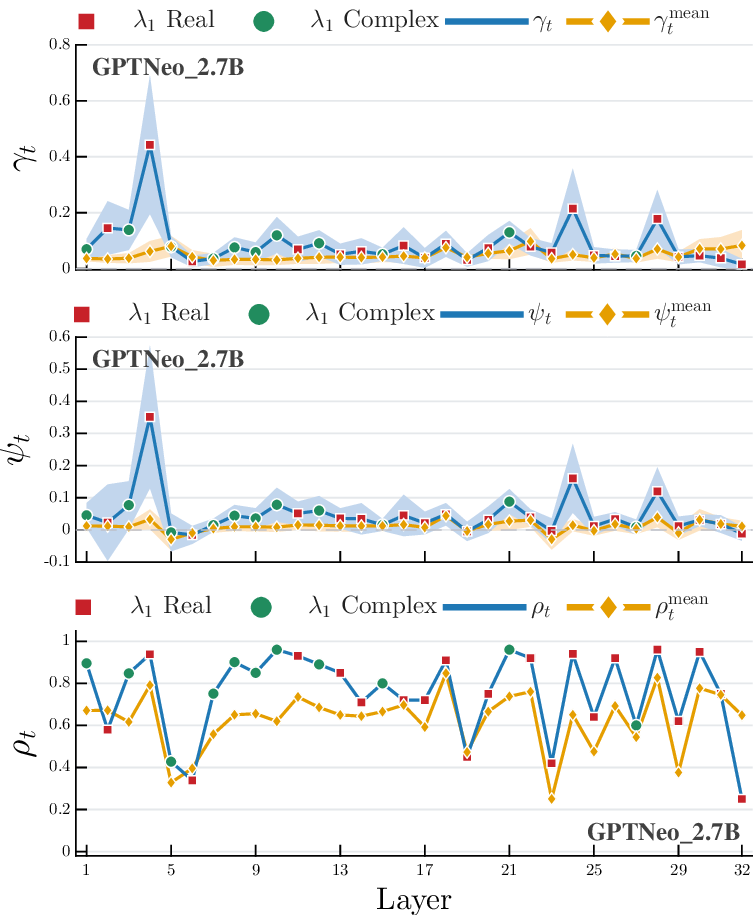}}
  % \caption{Values of $\psi_t$  and $ \rho_t $ across layers for BERT-base, and GPT2-medium and GPTNeo-2.7B. Top: $\psi_t$ and $ \psi_t^{\rm mean}$ and associated standard deviations over $ 10^4 $ tokens are shown; Bottom: corresponding probabilities $\rho_t$ and $ \rho_t^{\rm mean}$. See Fig.~\ref{fig:psi_single_all} in the Supplementary Material for the remaining models.}
  \caption{Values of $\gamma_t$ (top row), $\psi_t$ (middle row) and $\rho_t$ (bottom row) across layers for BERT-base, GPT2-medium and GPTNeo-2.7B. Top: $\gamma_t$ and $\gamma_t^{\rm mean}$ and associated standard deviations over $10^4$ tokens are shown; Middle: $\psi_t$ and $\psi_t^{\rm mean}$ and associated standard deviations over $10^4$ tokens; Bottom: corresponding probabilities $\rho_t$ and $\rho_t^{\rm mean}$. See Fig.~\ref{fig:psi_single_all} in the Supplementary Material for the remaining models.}
  \label{fig:psi_rho_selected}
\end{figure}

While an increment in correlation is visible, and confirms the idea that the principal eigenvector carry information on the inner behavior of a transformer, the absolute values of correlation $ \gamma_t$ remain low for all layer-varying transformer families, and do not grow with the layer index (nor with the depth of the transformer), see top row in Fig.~\ref{fig:psi_rho_selected}.
This is also due to the fact that the spectral gap in the layers of a transformer (i.e., the difference $ \lambda_{1,t} - \lambda_{2,t}$, with obvious corrections when the eigenvalues are complex) is normally minimal, see Fig.~\ref{fig:all_models_re_lambda} in the Supplementary Materials. 

For layer-varying transformers, one can wonder if a principal direction of alignment emerges as the tokens pass through the layers. 
Since in our framework what decided the behavior of a layer is the layer matrix $ F_t$, one can ask if the cumulative product of such matrices $ \Pi_{\tau=1}^t F_\tau $ has some sort of convergence property, i.e., if conditions in the style of Oseledets multiplicative ergodic theorem for random matrices may lead to the identification of Lyapunov exponents and associated directions in token space. 
As shown in Fig.~\ref{fig:pi_f_all} in the Supplementary Materials, this does not seem to be the case for our transformers. 
Call $ \bv_{1,t}^{\rm cum} $ the principal eigenvector of the cumulative product $F_t^{\rm cum} = \Pi_{\tau=1}^t F_\tau $ as $ t$ varies from $ 1$ to $T$. 
The absolute correlation  $ \left|\mathrm{corr}(\bx_i{(t+1)},\bv_{1,t}^{\rm cum} )\right|$ shows no clear trend in the majority of models when compared with the average correlation $\sum_{j=1}^d \left|\mathrm{corr}(\bx_i{(\tau+1)},\bv_{j,t}^{\rm cum} )\right| / d $.

\subsection{Steering the tokens towards a desired direction}
\label{sec:steer-tokens}

If we apply the rank-1 perturbation described in Section~\ref{sec:steering-token} to the final layer of a layer-varying transformer, then it is easy to achieve a high correlation $ \gamma_T = \left|\mathrm{corr} (\bx_i{(T)},\bv_{1,T,{\rm mod} })\right| $ at the end of a transformer. This, combined with the consensus argument, implies that all tokens get similarly projected onto the subspace $ {\rm span}(\bv_{1,T,{\rm mod}}) \approx {\rm span}(\bw_r) $ which can be chosen by the designer. Such a correlation is shown in Fig.~\ref{fig:modified_fo_some} for BERT-base, GPT2-medium and GPTNeo-2.7B (and in Fig.~\ref{fig:modified_fo_all} in the Supplementary Materials for the other 6 transformer models).
In these examples, the spectral gap $ \lambda_{1,T,{\rm mod}}- {\rm Re}(\lambda_{2,T,{\rm mod}}) $ is on the order  of $10 - 10^3$, depending on the model, and allows us to achieve a near-perfect alignment with $ \bv_{1,T,{\rm mod}}$.
% \rednote{Chenglong: I would prefer a higher order (can we say order 50? or ``order'' is always multiple of 10 [i.e. here 100]) but yielding a c  orrelation of 1 exactly..}

% \rednote{Chenglong: Different models need different gap sizes to achieve a perfect alignment with $\bv_{1,T,mod}$. For example, BERT-base gets a correlation of 0.98 with a gap of 100 (order 10). GPT2-xl reaches about 0.98 at a gap of 300 (order 30). But for GPTNeo-125M, we need a much larger gap of $10^4$(order $10^3$) to get roughly 0.94. Would it be reasonable to say: In these examples the spectral ratio $ \lambda_{1,T,{\rm mod}} /  {\rm Re}(\lambda_{2,T,{\rm mod}}) $ is of order $10^2$ and allows to achieve a perfect alignment with $ \bv_{1,T,{\rm mod}}$.}

% \bluenote{CA: We can write like this, but the spectral gap is the difference, not the ratio, so it would be better a statement in that direction. We can stick to the original sentence (and plot) you produced last week. We can write 
% "In these examples the spectral gap $ \lambda_{1,T,{\rm mod}}- {\rm Re}(\lambda_{2,T,{\rm mod}}) $ is of order $10\div10^3$, depending on the model, and allows...."
% }

\begin{figure}[!htbp]
  \centering
  \includegraphics[width=0.32\textwidth]{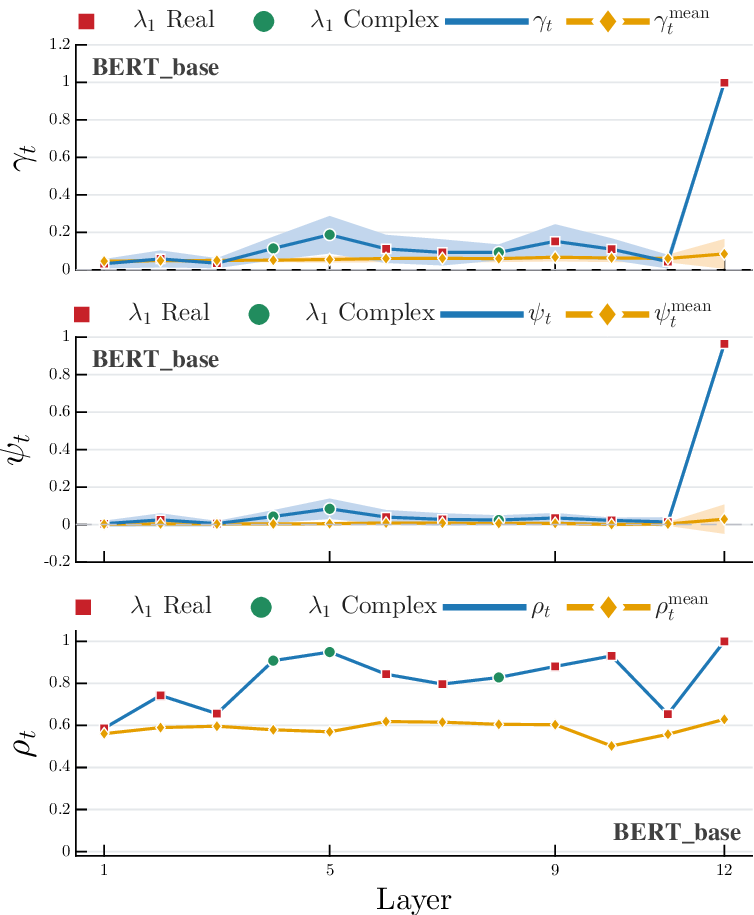}\hfill
  \includegraphics[width=0.32\textwidth]{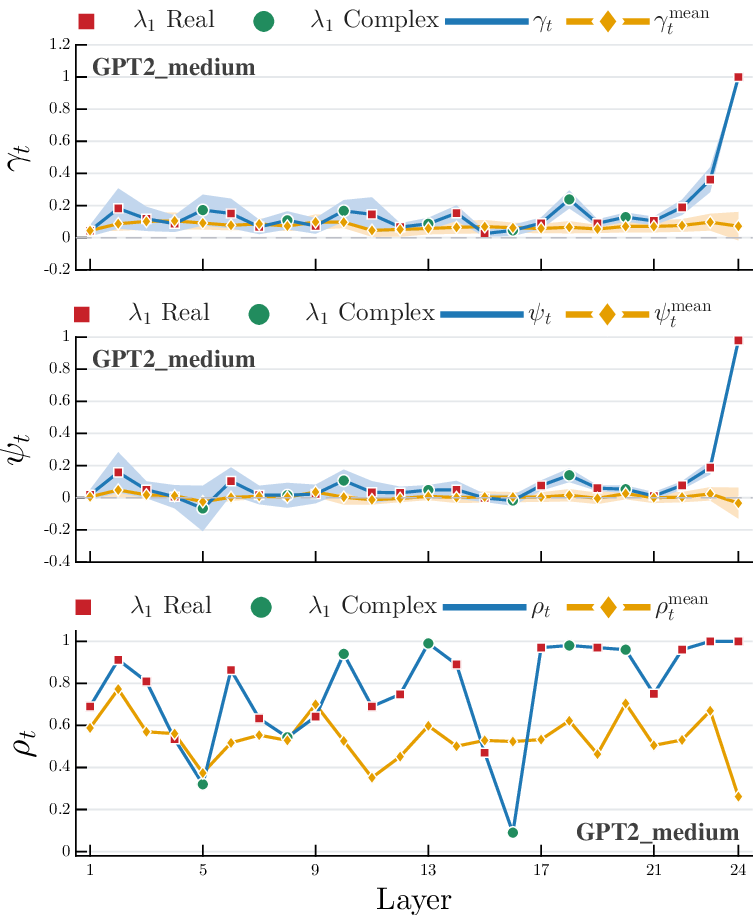}\hfill
  \includegraphics[width=0.32\textwidth]{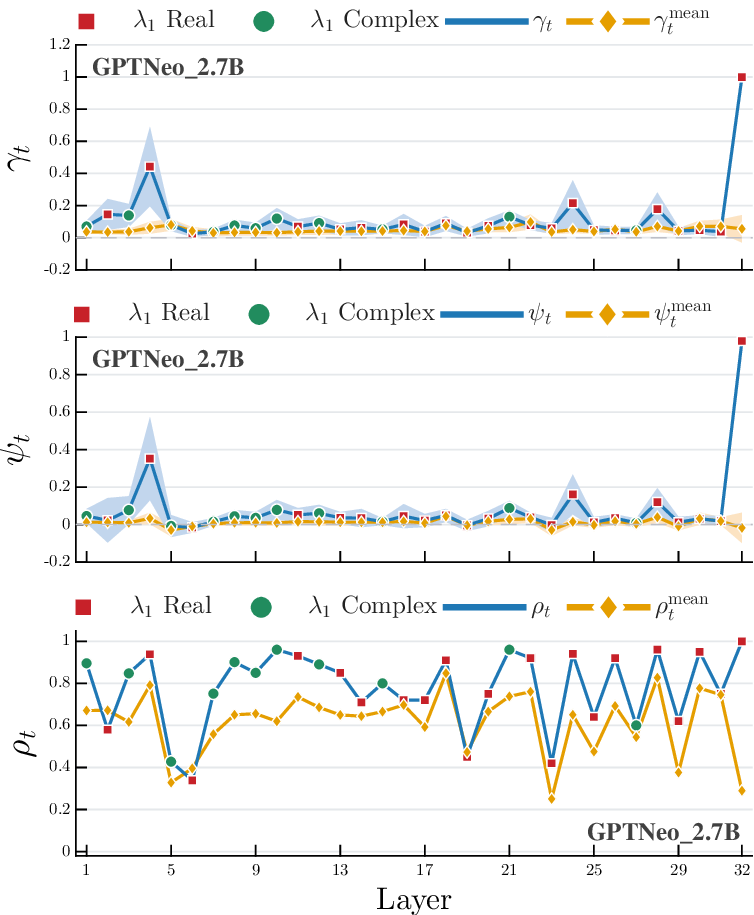}

  \caption{Values of $ \gamma_t $ (top row), $\psi_t$ (middle row) and $ \rho_t $ (bottom row) across layers for BERT-base, and GPT2-medium and GPTNeo-2.7B, with the last layer output matrix $ F_{O,T,{\rm mod}}$ modified as described in Section~\ref{sec:steer-tokens}. See Fig.~\ref{fig:modified_fo_all} in the Supplementary Materials for the other 6 models. Perfect alignment with $ \bv_{1,T,{\rm mod}}$ is always achieved.}
  \label{fig:modified_fo_some}
\end{figure}

\section{Discussion}

The main contribution of this paper is to show that in a transformer the behavior of each layer (in which we ignore the effect of the FFN) has some resemblance with a step of the power method algorithm, notably in the tendency of token vectors to tilt towards the principal eigenvector of what we call the layer matrix $ F_t$. 
This behavior is clearly visible in a ``universal'' transformer with all identical weight matrices across all layers, because in this case the principal eigenvector stays constant across layers and the steps in the power method accumulate, hence convergence is macroscopically observable. 
In a transformer with layer-varying weight matrices, instead, the principal eigenvector keeps changing from layer to layer, hence no accumulation of power method steps can occur. 
Nevertheless, each layer still represents a projection step as confirmed by the experimental data. 
What matters in the amplitude of this projection is the spectral gap between the first and second eigenvalue of the layer matrix, and this is sometimes visible in the data: for instance one of the layers achieving the largest $ \gamma_t $ is in GPTNeo-2.7B, at $ t=4$, see Fig.~\ref{fig:psi_rho_selected} right panel, which happens to correspond to one of the few cases in which the spectral gap in $ F_t $ is sizable ($ \lambda_{1,4} -\lambda_{2,4}>5$, see Fig.~\ref{fig:all_models_re_lambda} in the Supplementary Materials).

The other main contribution of the paper builds on this observation.
Since the spectral gap in $ F_t $ matters in the projection, if we impose a sufficiently large spectral gap in the final layer of a transformer then we can get a strong projection  along the corresponding eigenvector. 
We show above that this is achievable in a constructive way: a user-specified rank-1 modification in the output matrix of the final layer allows to choose the direction of the projection in an arbitrary way. If only a mild steering towards a desired direction is needed, this can be accomplished by choosing a smaller $ \sigma$. 

% In practice, rather than a strong projection and complete alignment, one may wish to implement a mild steering towards a desirable direction. 
In a transformer, token oversmoothing in depth (i.e., consensus) is an undesired effect. 
Consensus can be slowed down by the skip connection (for us, by modulating $ \eta$ \cite{dong2021attention,noci2022signal}) but in a model like \eqref{eq:update-multihead-auton} it cannot be eliminated, because it is an asymptotic property (infinite depth).
%and the step size $ \eta $ is constant \rednote{[\cite{noci2022signal} claims that there are time-varying $ \eta$ for which consensus can be avoided: check this]. Chenglong:Noci's $\eta$ is not time-varying but depth-dependent, with $\eta = 1/\sqrt{L}$ at every layer. This holds only at initialization; once training begins, $\eta$ will inevitably change.}
%In particular, in a model like \eqref{eq:update-multihead1}, modulating the importance of self-attention and skip connection through $ \eta$ does not suppress consensus but renders the time-scale of convergence too slow for the number of layers available in a real transformer. 
%We can observe this fact quite well in ALBERT-xxlarge, cf Fig.~\ref{fig:Albert3} and Fig.~\ref{fig:Albert4} in the Supplementary Materials.
Apart from using skip connections, methods to tame oversmoothing include modifying the attention matrix \cite{dovonon2024setting}, for instance scaling the softmax operation inside the attention block with a scalar parameter (inverse temperature) \cite{geshkovski2025mathematical,giorlandino2025two} or removing the principal (uniform) mode \cite{saada2024mind}.
The idea of \cite{saada2024mind} of suppressing the principal mode in the attention matrix relies on the same principle as our token steering method: exploit the presence of a significant spectral gap between the principal eigenvalue and the rest of the spectrum. 
The aim and its implementation are however completely different: we aim at creating a spectral gap in the value-output projection map of the last layer of the transformer $ F_t $, while \cite{saada2024mind} aims at removing the uniform behavior inside the attention matrix.

In a model like \eqref{eq:update-multihead1} or \eqref{eq:update-multihead-auton}, the $L_2$ layer normalization operation performed at every layer effectively renders the manifold in which each token is evolving a sphere $ \mathbb{S}^{d-1} \subset \mathbb{R}^{d}$, see Section~\ref{sec:single-head-analysis} in the Supplementary Materials. 
The intersection of a 1-dimensional eigenspace $ {\rm span}(\bv_1 )$ with $ \mathbb{S}^{d-1}$ consists of two points $ \bv_1 $ (normalized to $ \| \bv_1 \| =1 $) and $ - \bv_1 $. 
As can be seen e.g. in Fig.~\ref{fig:Albert1}(b), the alignment of the tokens can be towards $ \bv_1 $ or towards $ - \bv_1 $, and the absolute value in the correlations of formulas \eqref{eq:corr-increm1}-\eqref{eq:gamma_t} is meant to account for both cases. 
% it is a priori difficult to predict which of the two points in the unit sphere an initial condition will converge to. 
% Consequently, when computing correlations between tokens and principal eigenvectors (formulas \eqref{eq:corr-increm1}-\eqref{eq:gamma_t}) we need to use the absolute values, to take into account both cases.
As mentioned in Section~\ref{sec:shared-param}, in the time-invariant case these two consensus points appear to be the only attractors of the dynamics of \eqref{eq:update-multihead-auton}, in spite of the presence of other equilibria, mentioned in Lemma~\ref{lem:self-equil1} in the Supplementary Materials.

\section{Limitations}
\label{sec:limitations}
In the model \eqref{eq:update-multihead1} (or \eqref{eq:update-multihead-auton}), the FFN inside each layer is ignored. 
Furthermore, the input of the model is a vector of tokens, which we choose randomly. 
However, in a real transformer, the model receives as input a prompt, which passes through an embedding layer, upstream of the layers considered here. 
If we repeat the analysis while including both embedding and FFN networks and using the original (affine) $ {\rm LayerNorm}(\cdot)$ normalization, then we obtain the results of Fig.~\ref{fig:albert_real_llm_corr} for ALBERT and Fig.~\ref{fig:psi_bert_gpt} for BERT, GPT2 and GPTNeo. 
%\rednote{[Make a figure for ALBERT similar to Fig. 1(b). CA: No need}.
All $ \gamma_t$, $ \psi_t $ and $ \rho_t $ are degraded, in both layer-invariant and layer-varying transformers.
In particular, while the consensus among the tokens is preserved \cite{abella2024asymptotic}, the direction of alignment is modified. This is obviously expected, and it is due to the FFNs. 
The FFN is a nonlinear static map $ \phi_t \, : \, \mathbb{R}^d \to \mathbb{R}^d $ which ``deforms'' the token space $ \mathbb{R}^d $ at each layer. 
Since $ \phi_t$ is known (its weights are learned quantities for us), the deformation can however be precompensated: the FFN $ \phi_t $ is typically a 2-layer fully connected map with an activation function (GELU or SiLU for our transformers) in the hidden layer, which has dimension larger than $ d$, meaning that we can in general compute a (possibly non-unique) numerical approximation of its inverse. Therefore, if we aim to steer the tokens towards a desired direction $ \bw_{\rm des} $ in the image of $ \phi$ as in Section~\ref{sec:steer-tokens}, we can alter the principal eigenvector of the last layer opportunely through rank-1 maps, similar to what was already considered, but imposing $ \bw_r = \phi_T^{-1}(\bw_{\rm des})$. 

\bibliographystyle{abbrv}
\bibliography{references}

@article{giorlandino2025two,
  title={Two failure modes of deep transformers and how to avoid them: a unified theory of signal propagation at initialisation},
  author={Giorlandino, Alessio and Goldt, Sebastian},
  journal={arXiv preprint arXiv:2505.24333},
  year={2025}
}

@article{saada2024mind,
  title={Mind the gap: a spectral analysis of rank collapse and signal propagation in attention layers},
  author={Saada, Thiziri Nait and Naderi, Alireza and Tanner, Jared},
  journal={arXiv preprint arXiv:2410.07799},
  year={2024}
}

@inproceedings{NEURIPS2023_91edff07,
 author = {Madaan, Aman and Tandon, Niket and Gupta, Prakhar and Hallinan, Skyler and Gao, Luyu and Wiegreffe, Sarah and Alon, Uri and Dziri, Nouha and Prabhumoye, Shrimai and Yang, Yiming and Gupta, Shashank and Majumder, Bodhisattwa Prasad and Hermann, Katherine and Welleck, Sean and Yazdanbakhsh, Amir and Clark, Peter},
 booktitle = {Advances in Neural Information Processing Systems},
 editor = {A. Oh and T. Naumann and A. Globerson and K. Saenko and M. Hardt and S. Levine},
 pages = {46534--46594},
 publisher = {Curran Associates, Inc.},
 title = {Self-Refine: Iterative Refinement with Self-Feedback},
 url = {https://proceedings.neurips.cc/paper_files/paper/2023/file/91edff07232fb1b55a505a9e9f6c0ff3-Paper-Conference.pdf},
 volume = {36},
 year = {2023}
}

@article{ilharco2022editing,
  title={Editing models with task arithmetic},
  author={Ilharco, Gabriel and Ribeiro, Marco Tulio and Wortsman, Mitchell and Gururangan, Suchin and Schmidt, Ludwig and Hajishirzi, Hannaneh and Farhadi, Ali},
  journal={arXiv preprint arXiv:2212.04089},
  year={2022}
}

@article{dathathri2019plug,
  title={Plug and play language models: A simple approach to controlled text generation},
  author={Dathathri, Sumanth and Madotto, Andrea and Lan, Janice and Hung, Jane and Frank, Eric and Molino, Piero and Yosinski, Jason and Liu, Rosanne},
  journal={arXiv preprint arXiv:1912.02164},
  year={2019}
}

@misc{lester2021powerscaleparameterefficientprompt,
      title={The Power of Scale for Parameter-Efficient Prompt Tuning}, 
      author={Brian Lester and Rami Al-Rfou and Noah Constant},
      year={2021},
      eprint={2104.08691},
      archivePrefix={arXiv},
      primaryClass={cs.CL},
      url={https://arxiv.org/abs/2104.08691}, 
}

@misc{hu2021loralowrankadaptationlarge,
      title={LoRA: Low-Rank Adaptation of Large Language Models}, 
      author={Edward J. Hu and Yelong Shen and Phillip Wallis and Zeyuan Allen-Zhu and Yuanzhi Li and Shean Wang and Lu Wang and Weizhu Chen},
      year={2021},
      eprint={2106.09685},
      archivePrefix={arXiv},
      primaryClass={cs.CL},
      url={https://arxiv.org/abs/2106.09685}, 
}

@misc{turner2023steering,
      title={Steering Language Models With Activation Engineering}, 
      author={Alexander Matt Turner and Lisa Thiergart and Gavin Leech and David Udell and Juan J. Vazquez and Ulisse Mini and Monte MacDiarmid},
      year={2024},
      eprint={2308.10248},
      archivePrefix={arXiv},
      primaryClass={cs.CL},
      url={https://arxiv.org/abs/2308.10248}, 
}

@article{schulhoff2024prompt,
  title={The prompt report: A systematic survey of prompt engineering techniques},
  author={Schulhoff, Sander and Ilie, Michael and Balepur, Nishant and Kahadze, Konstantine and Liu, Amanda and Si, Chenglei and Li, Yinheng and Gupta, Aayush and Han, HyoJung and Schulhoff, Sevien and others},
  journal={arXiv preprint arXiv:2406.06608},
  year={2024}
}

@inproceedings{shin2020autoprompt,
  title={Autoprompt: Eliciting knowledge from language models with automatically generated prompts},
  author={Shin, Taylor and Razeghi, Yasaman and Logan IV, Robert L and Wallace, Eric and Singh, Sameer},
  booktitle={Proceedings of the 2020 conference on empirical methods in natural language processing (EMNLP)},
  pages={4222--4235},
  year={2020}
}

@misc{zelikman2022starbootstrappingreasoningreasoning,
      title={STaR: Bootstrapping Reasoning With Reasoning}, 
      author={Eric Zelikman and Yuhuai Wu and Jesse Mu and Noah D. Goodman},
      year={2022},
      eprint={2203.14465},
      archivePrefix={arXiv},
      primaryClass={cs.LG},
      url={https://arxiv.org/abs/2203.14465}, 
}

@article{altafini2025multistability,
      title={Multistability of Self-Attention Dynamics in Transformers}, 
      author={Claudio Altafini},
      year={2025},
      journal={arXiv:2511.11553},
      url={https://arxiv.org/abs/2511.11553}, 
}

@article{wu2024role,
  title={On the role of attention masks and layernorm in transformers},
  author={Wu, Xinyi and Ajorlou, Amir and Wang, Yifei and Jegelka, Stefanie and Jadbabaie, Ali},
  journal={Advances in Neural Information Processing Systems},
  volume={37},
  pages={14774--14809},
  year={2024}
}

@article{dutta2021redesigning,
  title={Redesigning the transformer architecture with insights from multi-particle dynamical systems},
  author={Dutta, Subhabrata and Gautam, Tanya and Chakrabarti, Soumen and Chakraborty, Tanmoy},
  journal={Advances in Neural Information Processing Systems},
  volume={34},
  pages={5531--5544},
  year={2021}
}

@article{lu2019understanding,
  title={Understanding and improving transformer from a multi-particle dynamic system point of view},
  author={Lu, Yiping and Li, Zhuohan and He, Di and Sun, Zhiqing and Dong, Bin and Qin, Tao and Wang, Liwei and Liu, Tie-Yan},
  journal={arXiv preprint arXiv:1906.02762},
  year={2019}
}

@inproceedings{sander2022sinkformers,
  title={Sinkformers: Transformers with doubly stochastic attention},
  author={Sander, Michael E and Ablin, Pierre and Blondel, Mathieu and Peyr{\'e}, Gabriel},
  booktitle={International Conference on Artificial Intelligence and Statistics},
  pages={3515--3530},
  year={2022},
  organization={PMLR}
}

@article{abella2024asymptotic,
  title={The asymptotic behavior of attention in transformers},
  author={Abella, {\'A}lvaro Rodr{\'\i}guez and Silvestre, Jo{\~a}o Pedro and Tabuada, Paulo},
  journal={arXiv preprint arXiv:2412.02682},
  year={2024}
}

@inproceedings{abellaconsensus,
  title={Consensus Is All You Get: The Role of Attention in Transformers},
  author={Abella, {\'A}lvaro Rodr{\'\i}guez and Silvestre, Jo{\~a}o Pedro and Tabuada, Paulo},
  booktitle={Forty-second International Conference on Machine Learning},
  year={2025}
}

@article{vaswani2017attention,
  title={Attention is all you need},
  author={Vaswani, Ashish and Shazeer, Noam and Parmar, Niki and Uszkoreit, Jakob and Jones, Llion and Gomez, Aidan N and Kaiser, {\L}ukasz and Polosukhin, Illia},
  journal={Advances in neural information processing systems},
  volume={30},
  year={2017}
}

@article{scholkemper2024residual,
  title={Residual connections and normalization can provably prevent oversmoothing in gnns},
  author={Scholkemper, Michael and Wu, Xinyi and Jadbabaie, Ali and Schaub, Michael T},
  journal={arXiv preprint arXiv:2406.02997},
  year={2024}
}

@article{dovonon2024setting,
  title={Setting the record straight on transformer oversmoothing},
  author={Dovonon, Gb{\`e}tondji JS and Bronstein, Michael M and Kusner, Matt J},
  journal={arXiv preprint arXiv:2401.04301},
  year={2024}
}

@article{nguyen2023mitigating,
  title={Mitigating over-smoothing in transformers via regularized nonlocal functionals},
  author={Nguyen, Tam and Nguyen, Tan and Baraniuk, Richard},
  journal={Advances in Neural Information Processing Systems},
  volume={36},
  pages={80233--80256},
  year={2023}
}

@inproceedings{zhai2023stabilizing,
  title={Stabilizing transformer training by preventing attention entropy collapse},
  author={Zhai, Shuangfei and Likhomanenko, Tatiana and Littwin, Etai and Busbridge, Dan and Ramapuram, Jason and Zhang, Yizhe and Gu, Jiatao and Susskind, Joshua M},
  booktitle={International Conference on Machine Learning},
  pages={40770--40803},
  year={2023},
  organization={PMLR}
}

@article{shi2022revisiting,
  title={Revisiting over-smoothing in bert from the perspective of graph},
  author={Shi, Han and Gao, Jiahui and Xu, Hang and Liang, Xiaodan and Li, Zhenguo and Kong, Lingpeng and Lee, Stephen and Kwok, James T},
  journal={arXiv preprint arXiv:2202.08625},
  year={2022}
}

@article{noci2022signal,
  title={Signal propagation in transformers: Theoretical perspectives and the role of rank collapse},
  author={Noci, Lorenzo and Anagnostidis, Sotiris and Biggio, Luca and Orvieto, Antonio and Singh, Sidak Pal and Lucchi, Aurelien},
  journal={Advances in Neural Information Processing Systems},
  volume={35},
  pages={27198--27211},
  year={2022}
}

@inproceedings{dong2021attention,
  title={Attention is not all you need: Pure attention loses rank doubly exponentially with depth},
  author={Dong, Yihe and Cordonnier, Jean-Baptiste and Loukas, Andreas},
  booktitle={International conference on machine learning},
  pages={2793--2803},
  year={2021},
  organization={PMLR}
}

@article{oja1985stochastic,
  title={On stochastic approximation of the eigenvectors and eigenvalues of the expectation of a random matrix},
  author={Oja, Erkki and Karhunen, Juha},
  journal={Journal of mathematical analysis and applications},
  volume={106},
  number={1},
  pages={69--84},
  year={1985},
  publisher={Elsevier}
}

@article{yoshizawa2001convergence,
  title={Convergence analysis for principal component flows},
  author={Yoshizawa, Shintaro and Helmke, Uwe and Starkov, Konstantin},
 journal={ International Journal of Applied Mathematics and Computer Science},
  year={2001},
  volume={11},
  number={1},
  pages={223--236},
  publisher={Zielona G{\'o}ra: Uniwersytet Zielonog{\'o}rski}
}

@article{geshkovski2024dynamic,
  title={Dynamic metastability in the self-attention model},
  author={Geshkovski, Borjan and Koubbi, Hugo and Polyanskiy, Yury and Rigollet, Philippe},
  journal={arXiv preprint arXiv:2410.06833},
  year={2024}
}

@article{geshkovski2023emergence,
  title={The emergence of clusters in self-attention dynamics},
  author={Geshkovski, Borjan and Letrouit, Cyril and Polyanskiy, Yury and Rigollet, Philippe},
  journal={Advances in Neural Information Processing Systems},
  volume={36},
  pages={57026--57037},
  year={2023}
}

@article{geshkovski2025mathematical,
  title={A mathematical perspective on transformers},
  author={Geshkovski, Borjan and Letrouit, Cyril and Polyanskiy, Yury and Rigollet, Philippe},
  journal={Bulletin of the American Mathematical Society},
  volume={62},
  number={3},
  pages={427--479},
  year={2025}
}

@misc{lan2019albert,
      title={ALBERT: A Lite BERT for Self-supervised Learning of Language Representations}, 
      author={Zhenzhong Lan and Mingda Chen and Sebastian Goodman and Kevin Gimpel and Piyush Sharma and Radu Soricut},
      year={2020},
      eprint={1909.11942},
      archivePrefix={arXiv},
      primaryClass={cs.CL},
      url={https://arxiv.org/abs/1909.11942}, 
}

@misc{bert,
      title={BERT: Pre-training of Deep Bidirectional Transformers for Language Understanding}, 
      author={Jacob Devlin and Ming-Wei Chang and Kenton Lee and Kristina Toutanova},
      year={2019},
      eprint={1810.04805},
      archivePrefix={arXiv},
      primaryClass={cs.CL},
      url={https://arxiv.org/abs/1810.04805}, 
}

@inproceedings{gpt2,
  title={Language Models are Unsupervised Multitask Learners},
  author={Alec Radford and Jeff Wu and Rewon Child and David Luan and Dario Amodei and Ilya Sutskever},
  year={2019},
  url={https://api.semanticscholar.org/CorpusID:160025533}
}

@article{gptneo,
  title={The Pile: An 800GB Dataset of Diverse Text for Language Modeling},
  author={Gao, Leo and Biderman, Stella and Black, Sid and Golding, Laurence and Hoppe, Travis and Foster, Charles and Phang, Jason and He, Horace and Thite, Anish and Nabeshima, Noa and others},
  journal={arXiv preprint arXiv:2101.00027},
  year={2020}
}

@article{Likhosherstov2021OnTE,
  title={On the Expressive Power of Self-Attention Matrices},
  author={Valerii Likhosherstov and Krzysztof Choromanski and Adrian Weller},
  journal={ArXiv},
  year={2021},
  volume={abs/2106.03764},
  url={https://api.semanticscholar.org/CorpusID:235359134}
}

@misc{ustaomeroglu2025theoreticalstudyhyperselfattention,
      title={A Theoretical Study of (Hyper) Self-Attention through the Lens of Interactions: Representation, Training, Generalization}, 
      author={Muhammed Ustaomeroglu and Guannan Qu},
      year={2025},
      eprint={2506.06179},
      archivePrefix={arXiv},
      primaryClass={cs.LG},
      url={https://arxiv.org/abs/2506.06179}, 
}

@misc{chowdhury2022learningtransformerkernel,
      title={On Learning the Transformer Kernel}, 
      author={Sankalan Pal Chowdhury and Adamos Solomou and Avinava Dubey and Mrinmaya Sachan},
      year={2022},
      eprint={2110.08323},
      archivePrefix={arXiv},
      primaryClass={cs.LG},
      url={https://arxiv.org/abs/2110.08323}, 
}

\clearpage

\appendix 
\onecolumn

\begin{center}
    {\bf \LARGE Supplementary Materials }
\end{center}

\section{Relation to the Standard Transformer Formulation}
\label{sec:standardW}
The main paper adopts a column-token convention and works with a composite matrix (``layer matrix'') $F_t = F_{O,t} F_{V,t}$ that differs in notation from the standard per-head description of transformers. In this section, we make the correspondence between the two conventions explicit.

We adopt the column-token convention: a sequence of $n$ tokens at layer $t$ is represented as $\bm{X}_{t} = [\bx_1(t), \ldots,\bx_n(t)] \in \mathbb{R}^{d\times n} $, where each $\bx_i(t) \in \mathbb{R}^{d}$ is a column vector. Under this convention all matrices are the transposes of the learned parameter matrices commonly used in the ML literature \cite{vaswani2017attention, Likhosherstov2021OnTE, chowdhury2022learningtransformerkernel, ustaomeroglu2025theoreticalstudyhyperselfattention}. We denote them by the letter $F$ to distinguish them from the standard parameter matrices $W$. Explicitly,
\begin{equation*}
  F_{Q,t} = (W_{Q,t})^{\top}, \quad
  F_{K,t} = (W_{K,t})^{\top}, \quad
  F_{V,t} = (W_{V,t})^{\top}, \quad
  F_{O,t} = (W_{O,t})^{\top}.
\end{equation*}

A layer with $H$ heads and head dimension $d_k = d/H$ is parameterized by the per-head query, key, and value projections $F_{Q,t}^{(h)}, F_{K,t}^{(h)},F_{V,t}^{(h)} \in \mathbb{R}^{d_k \times d}$ and the output projection $F_{O,t} \in \mathbb{R}^{d\times d}$, which correspond to the ML convention parameter matrices via $F_{Q,t}^{(h)} = (W_{Q,t}^{(h)})^{\top}, F_{K,t}^{(h)} = (W_{K,t}^{(h)})^{\top},F_{V,t}^{(h)} = (W_{V,t}^{(h)})^{\top}$ and $F_{O,t} = (W_{O,t})^{\top} $. With the attention matrix
\begin{equation*}
    A_t^{(h)} = \text{softmax} \left( \frac{(F_{Q,t}^{(h)} \bm{X}_t) ^{\top} (F_{K,t}^{(h)}\bm{X}_t)}{\sqrt{d_k} }\right) \in \mathbb{R}^{n \times n},
\end{equation*}
the resulting multi-head attention output is
\begin{equation*}
    \text{MHA}_t (\bm{X_t}) = F_{O,t} \sum_{h=1}^H \bar{F}_{V,t}^{(h)} \bm{X}_t (A_t^{(h)})^{\top}. 
\end{equation*}

The main paper analyzes the spectral properties of a single composite matrix $F_t \in \mathbb{R}^{d \times d}$, obtained by stacking the per-head value projections into a single operator: let $\bar{F}_{V,t} ^{(h)} \in \mathbb{R}^{d\times d}$ be the zero-padded embedding of $F_{V,t}^{(h)}$ into the rows corresponding to head $h$. Then 
\begin{equation*}
    F_t = F_{O,t}F_{V,t} \quad \text{with} \quad  F_{V,t} = \sum_{h=1}^H \bar{F}_{V,t}^{(h)}.
\end{equation*}

This is precisely the matrix $F_t$ defined in the main paper, whose spectral properties (eigenvalues, eigenvectors, etc.) are the object of the analysis in Section~\ref{sec:modelformulation} of the main paper.

\clearpage
\section{Experimental Details for the Transformer Dynamics}
\label{sec:supply_experimental_details}
In the experiments considered in this paper, we do not aim to reproduce exactly in full detail the computational blocks of ALBERT, BERT, GPT2, and GPTNeo. Rather, our goal is to construct a controlled experimental setting containing most of the features of real transformer and yet that is coherent with the discrete-time dynamical system \eqref{eq:update-multihead-auton}. In this model, the update law for each token in a multi-head transformer layer includes the attention block, the skip connection, and a normalization step, while disregarding the FFN. More precisely, at layer $t$, each token is first updated by adding to its current state the multi-head attention output, obtained through the value projections and the output projection, lumped into the ``layer matrix'' $ F_t $ mentioned in the paper. The resulting vectors are then normalized. In our experiments, the original LayerNorm operation is replaced by $L_2$ normalization, so that the layer update directly reflects the projection + normalization mechanism underlying the analogy with the power method. The step size is always unitary: $ \eta =1 $.

More specifically, for ALBERT, BERT, GPT2, and GPTNeo, we disregard the embedding layer and use more than $10^4$ randomly initialized token vectors in the corresponding model dimension $d$ as initial conditions. The initial tokens are drawn from a normal distribution. This avoids the effects of the embedding layer, and allows us to focus on the evolution of the hidden states through the layers. In all four model families, we also disregard the FFN sublayer and its residual branch. This yields the desired experiment setting, in which the dynamics retain the (projection + normalization) components most directly related to the layer matrix $F_t = F_{O,t}F_{V,t}$ and to the power method analogy.

For ALBERT and BERT, the attention sublayer is followed by a skip connection and LayerNorm. In our experiments, we keep the attention update and its skip connection, and replace this LayerNorm with $L_2$ normalization. The FFN sublayer, together with its residual branch, is omitted. 
%This gives a layer update consistent with the projection + normalization mechanism underlying the analogy with the power method.

For GPT2 and GPTNeo, the original transformer blocks adopt a pre-norm architecture. In the experimental setting considered here, we retain the attention sublayer and its skip connection, and disregard the FFN sublayer together with its residual branch. The normalization located before the FFN sublayer, namely the normalization applied to the state after the attention residual update, is used as the normalization step of the dynamics. This normalization is then replaced by an $L_2$ normalization. After this simplification, each simulated GPT2/GPTNeo layer first updates the token vectors through the attention sublayer and the skip connection, and then normalizes the updated states. 
%Hence, the resulting layer update is consistent with model behavior underlying the power method analogy studied in this paper.

In summary, although ALBERT, BERT, GPT2, and GPTNeo differ in their original architectural details, our experiments reduce all of them to the same basic layer update: an attention based value and output update with a skip connection, followed by $L_2$ normalization. This makes the experiments comparable across model families and isolates the geometric effect associated with the ``layer matrix'' $F_t$.

For completeness, in Fig.~\ref{fig:albert_real_llm_corr} and Fig.~\ref{fig:psi_bert_gpt} we show the quantities considered in the paper, $ \gamma_t$, $ \psi_t$ and $ \rho_t $, also for the complete transformers in their original architecture, including embedding, FFN sublayers, and true (affine) LayerNorm.

\clearpage

\clearpage
\section{Supplementary figures and tables}

\begin{table}[!htbp]
\centering
\small
\caption{Summary of the architecture of the transformers analyzed in this paper.
$d$ is the model dimension (i.e., $ \bx_i \in \mathbb{R}^d $), $H$ the number of attention heads, and $T$ the
number of layers.}
\label{tab:model_comparison}
\begin{tabular}{@{}llcccccc@{}}
\toprule
\textbf{Model} & \textbf{Architecture} & \textbf{Attention}
  & \textbf{Shared} & $d$ & $H$ & $T$ \\
\midrule
\multicolumn{7}{@{}l}{\textit{Shared-weight transformers (autonomous dynamics)}} \\
\midrule
ALBERT-base    & Encoder & Full   & \ding{51} & 768  & 12 & 12 \\
ALBERT-large   & Encoder & Full   & \ding{51} & 1024 & 16 & 24 \\
ALBERT-xlarge  & Encoder & Full   & \ding{51} & 2048 & 16 & 24 \\
ALBERT-xxlarge & Encoder & Full   & \ding{51} & 4096 & 64 & 12 \\
\midrule
\multicolumn{7}{@{}l}{\textit{Layer-varying transformers (non-autonomous dynamics)}} \\
\midrule
BERT-base      & Encoder & Full   & \ding{55} & 768  & 12 & 12 \\
BERT-large     & Encoder & Full   & \ding{55} & 1024 & 16 & 24 \\
GPT2 small    & Decoder & Causal & \ding{55} & 768  & 12 & 12 \\
GPT2 medium   & Decoder & Causal & \ding{55} & 1024 & 16 & 24 \\
GPT2 large    & Decoder & Causal & \ding{55} & 1280 & 20 & 36 \\
GPT2 xlarge       & Decoder & Causal & \ding{55} & 1600 & 25 & 48 \\
GPTNeo 125M   & Decoder & Causal\textsuperscript{$\dagger$} & \ding{55} & 768  & 12 & 12 \\
GPTNeo 1.3B   & Decoder & Causal\textsuperscript{$\dagger$} & \ding{55} & 2048 & 16 & 24 \\
GPTNeo 2.7B   & Decoder & Causal\textsuperscript{$\dagger$} & \ding{55} & 2560 & 20 & 32 \\
\bottomrule
\end{tabular}
%\vspace{2pt}
\begin{minipage}{\textwidth}
\footnotesize
All models use (variants of) LayerNorm: ALBERT and BERT use post-norm, GPT-2 and GPTNeo use pre-norm. \textsuperscript{$\dagger$}GPT-Neo alternates global causal attention with local (windowed) causal attention across layers.
\end{minipage}
\end{table}

\begin{figure*}[ht]
     \centering   
      \subfigure[ALBERT-base]{
        \includegraphics[trim=0cm 7cm 0cm 0cm, clip=true,width=0.4\textwidth]{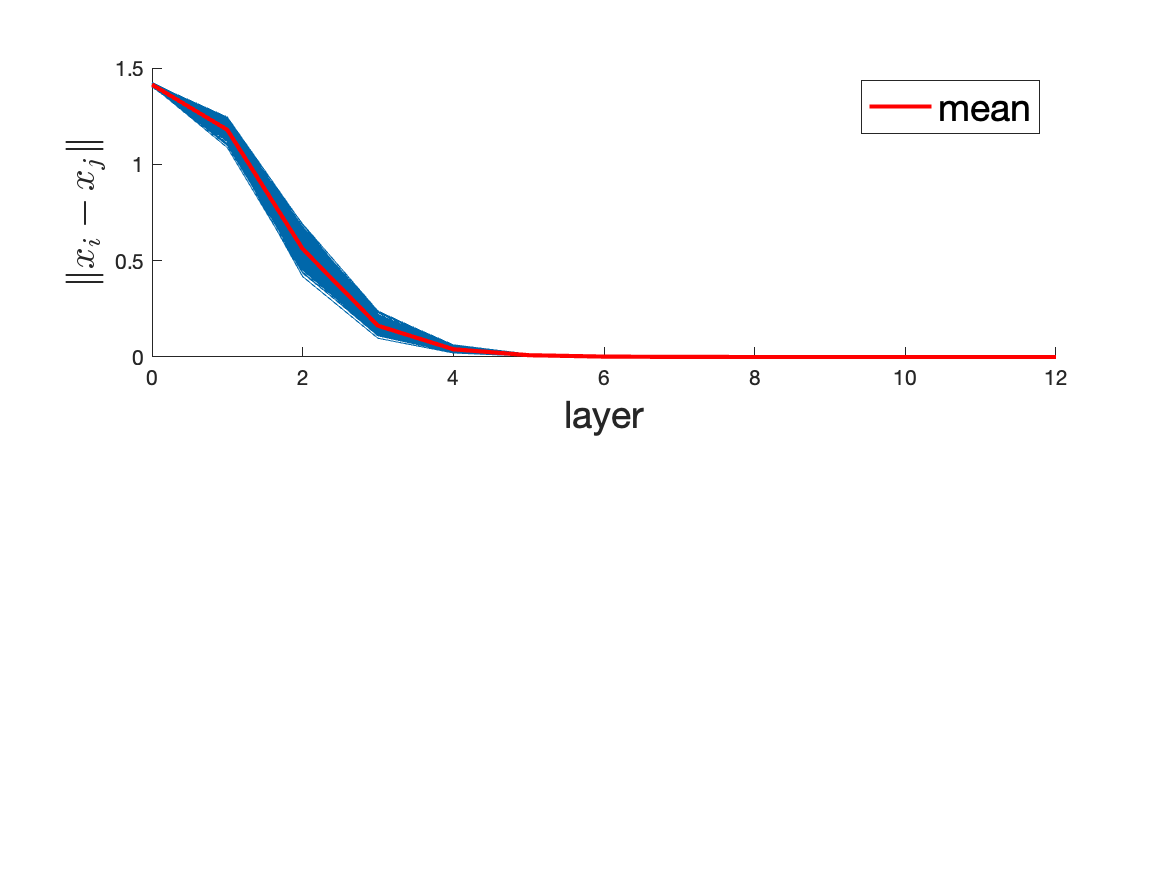}}
      \subfigure[ALBERT-large]{
        \includegraphics[trim=0cm 7cm 0cm 0cm, clip=true,width=0.4\textwidth]{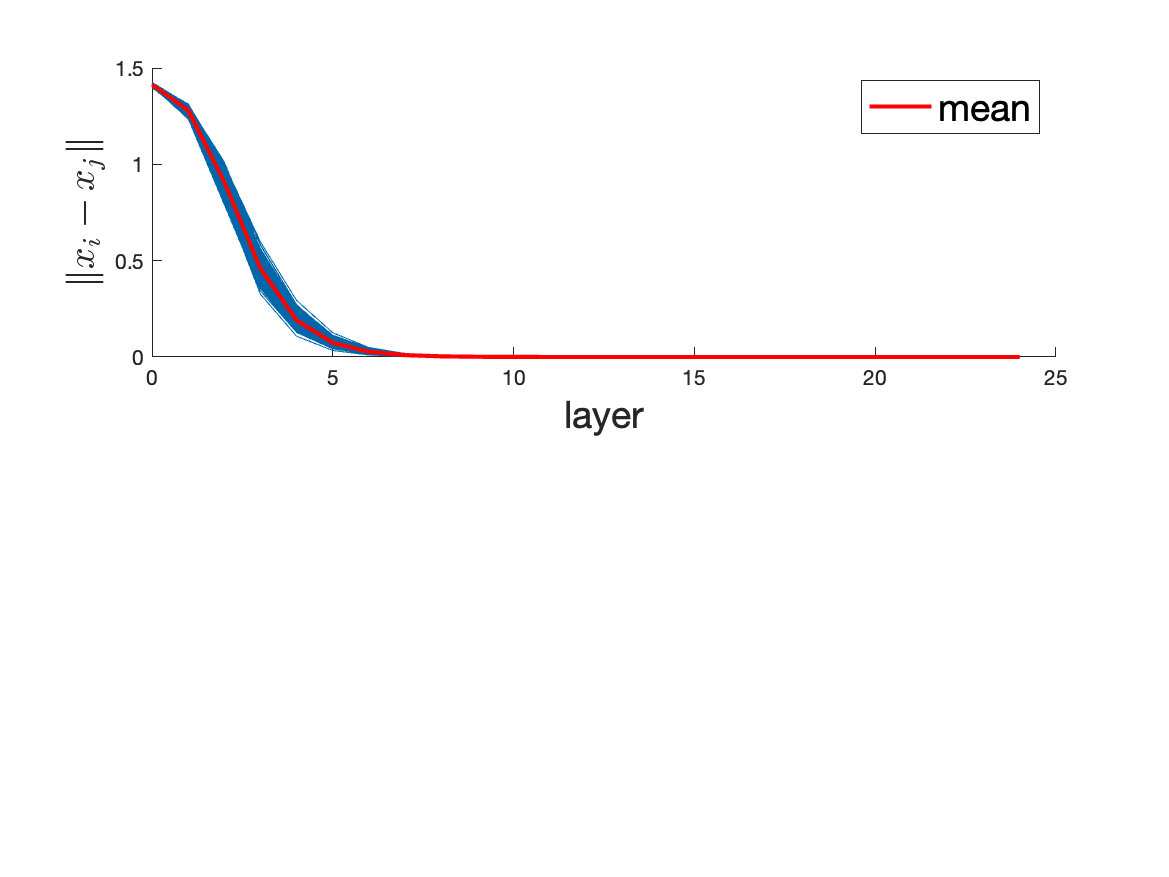}}
      \subfigure[ALBERT-xlarge]{
        \includegraphics[trim=0cm 0cm 0cm 0cm, clip=true,width=0.4\textwidth]{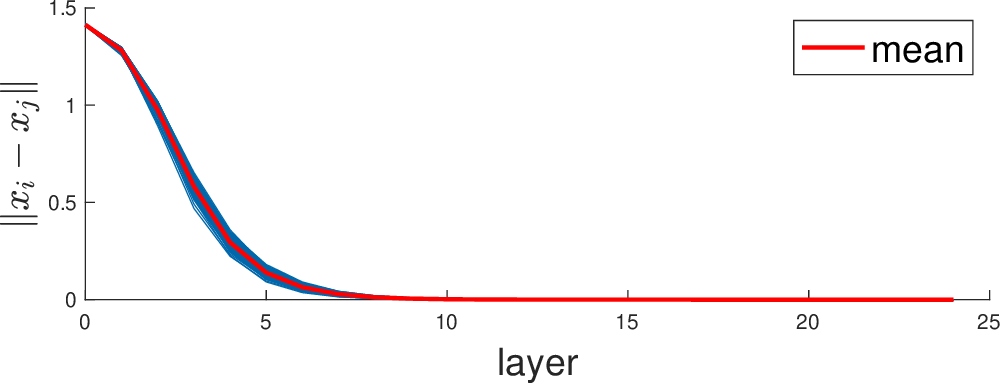}}
      \subfigure[ALBERT-xxlarge]{
        \includegraphics[trim=0cm 0cm 0cm 0cm, clip=true,width=0.4\textwidth]{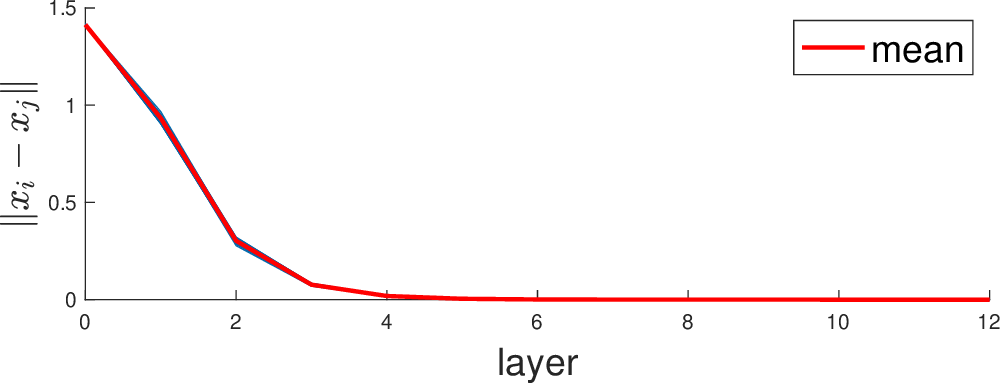}}
        \caption{Intertoken distance in the four ALBERT transformers, computed as $ \| \bx_i (t) - \bx_j(t)\|$. In each of them convergence to 0 (i.e., consensus among the tokens) occurs very fast.}
        \label{fig:Albert2}
\end{figure*}

\begin{figure*}[ht]
     \centering   
      \subfigure[ALBERT-base]{
        \includegraphics[trim=0cm 0cm 0cm 0cm, clip=true,width=0.4\textwidth]{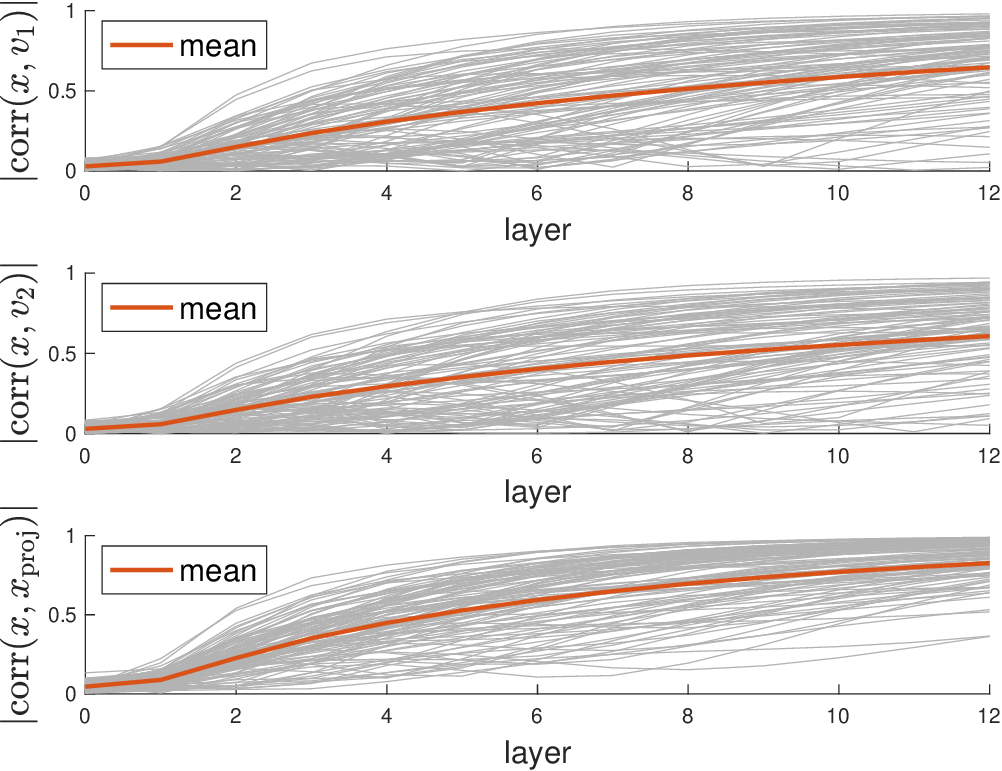}}
      \subfigure[ALBERT-large]{
        \includegraphics[trim=0cm 0cm 0cm 0cm, clip=true,width=0.4\textwidth]{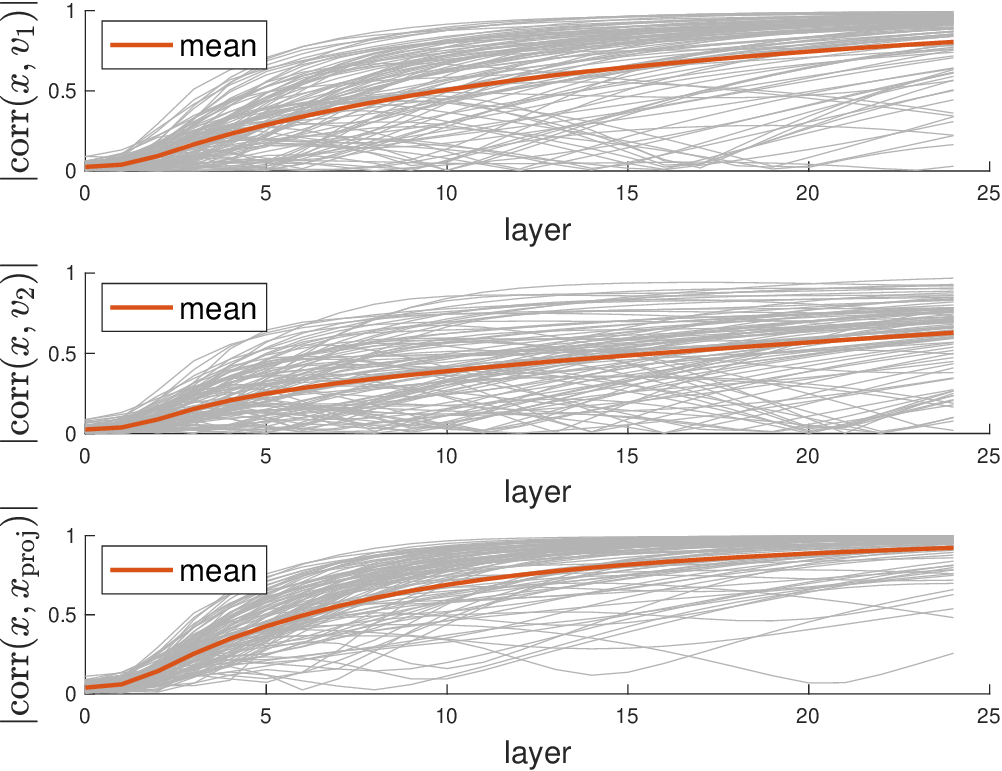}}
      \subfigure[ALBERT-xlarge]{
        \includegraphics[trim=0cm 0cm 0cm 0cm, clip=true,width=0.4\textwidth]{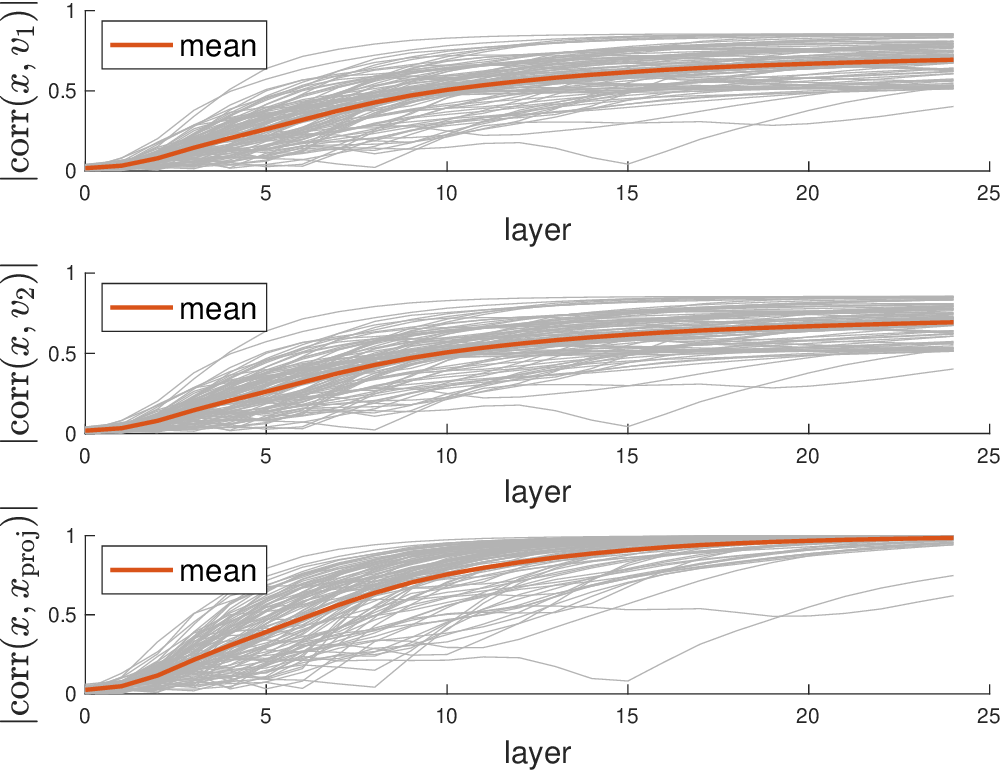}}
      \subfigure[ALBERT-xxlarge]{
        \includegraphics[trim=0cm 0cm 0cm 0cm, clip=true,width=0.4\textwidth]{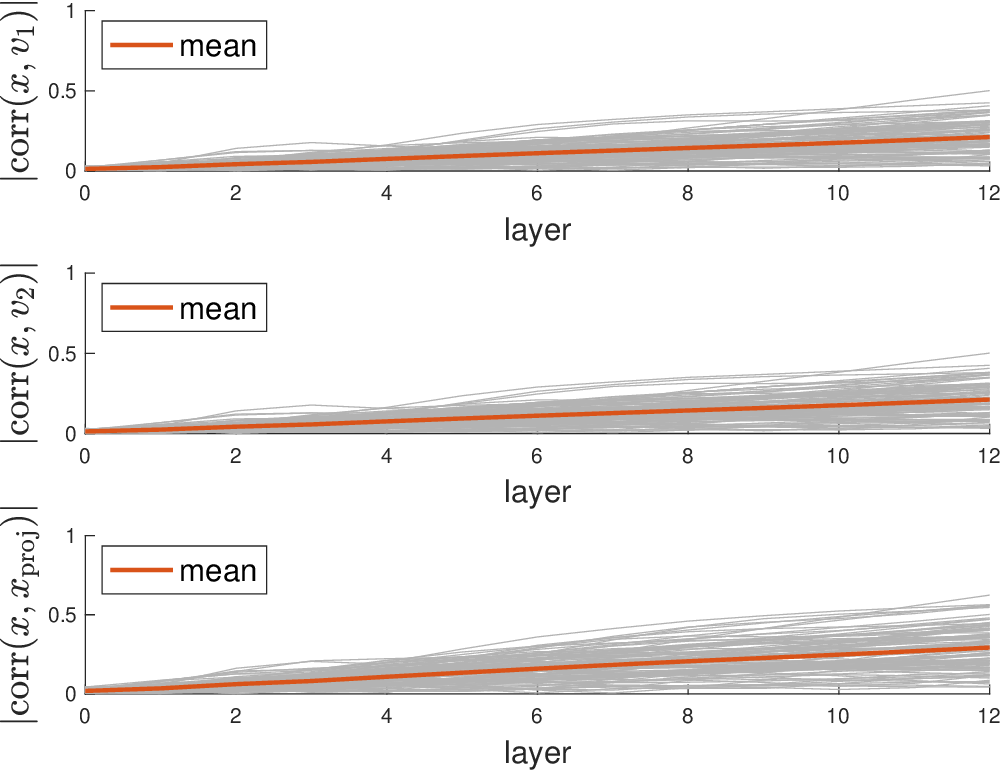}}
        \caption{Correlation $ \gamma $ between tokens and principal eigenvectors in the four ALBERT transformers. In each figure, 100 prompts of 10 token each are shown in gray. The red line is the average. The top panel shows $ |{\rm corr}(\bx_i, \bv_1)|$, the second $ |{\rm corr}(\bx_i, \bv_2)|$ and the third $ |{\rm corr}(\bx_i, \bx_{i,proj})|$.}
        \label{fig:Albert3}
\end{figure*}

% \begin{figure*}[ht]
%      \centering   
%       \subfigure[ALBERT-base]{
%         \includegraphics[trim=0cm 0cm 0cm 0cm, clip=true,width=0.4\textwidth]{Figures/fig_albert_base_v2_corr2_multiple.eps}}
%       \subfigure[ALBERT-large]{
%         \includegraphics[trim=0cm 0cm 0cm 0cm, clip=true,width=0.4\textwidth]{Figures/fig_albert_large_v2_corr2_multiple.eps}}
%       \subfigure[ALBERT-xlarge]{
%         \includegraphics[trim=0cm 0cm 0cm 0cm, clip=true,width=0.4\textwidth]{Figures/fig_albert_xlarge_v2_corr2_multiple.eps}}
%       \subfigure[ALBERT-xxlarge]{
%         \includegraphics[trim=0cm 0cm 0cm 0cm, clip=true,width=0.4\textwidth]{Figures/fig_albert_xxlarge_v2_corr2_multiple.eps}}
%         \caption{\rednote{[Not sure it is needed]} SUPPLEMENTARY Same as Fig.~\ref{fig:Albert3} but without the absolute values in the correlations.}
%         \label{fig:Albert3b}
% \end{figure*}

\begin{figure}[ht]
     \centering    
           \subfigure[ALBERT-base]{
  \includegraphics[trim=0cm 7cm 0cm 0cm, clip=true,width=0.4\textwidth]{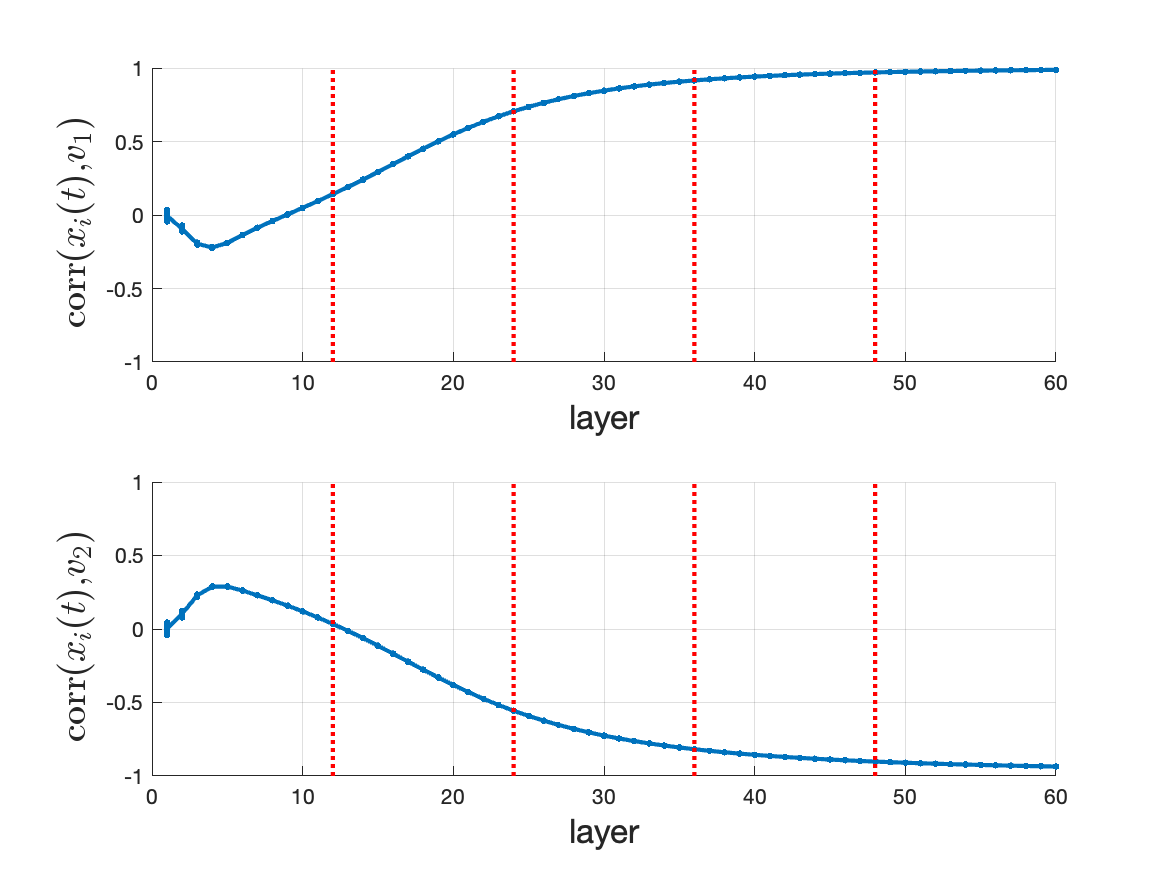}}  
  \subfigure[ALBERT-xxlarge]{
        \includegraphics[trim=0cm 0cm 0cm 0cm, clip=true,width=0.4\textwidth]{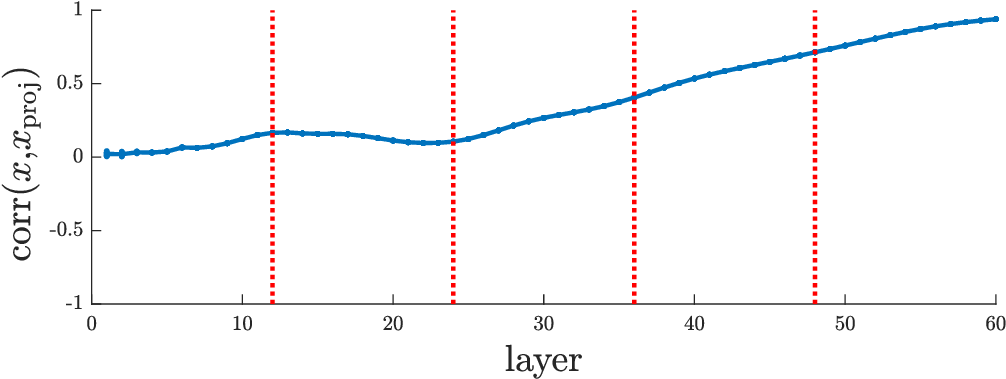}}
        \caption{Artificially prolonging the transformers with extra identical layers, the correlation with $ \bv_1 $ improves. Examples of trajectories from ALBERT-base and ALBERT-xxlarge. }
        \label{fig:Albert4}
\end{figure}

\begin{figure}[!htbp]
    \centering
    \includegraphics[width=\textwidth]{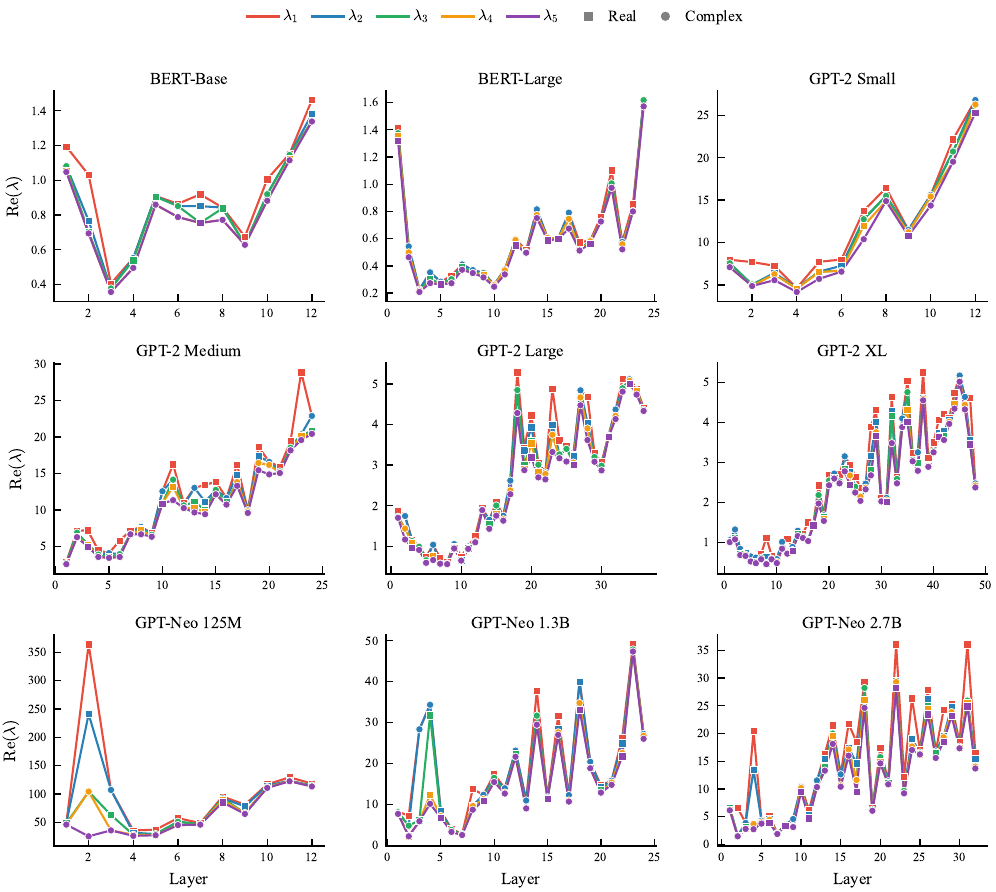}
    \caption{The real part ${\rm Re}(\lambda_{i,t})$, $ i=1, \ldots, 5$, and $ t=1, \ldots, T$, of the first 5 eigenvalues of the layer matrices $ F_t $ for the BERT, GPT2, GPTNeo models investigated in the paper. In the vast majority of cases, the spectral gap $ \lambda_{1,t} - {\rm Re}(\lambda_{2,t})$ (or $ {\rm Re}(\lambda_{1, t}) - {\rm Re}(\lambda_{3,t})$ if $ \lambda_{1,t}$ is complex) is small, meaning that the differences in amplification when the tokens pass through a layer are limited. }
    \label{fig:all_models_re_lambda}
\end{figure}

\newpage
\begin{figure}[htbp]
    \centering
    % Row 1: BERT
        % \includegraphics[width=0.32\textwidth]{psi_figures/BERT_BASE_psi_single.eps}
        \includegraphics[width=0.32\textwidth]{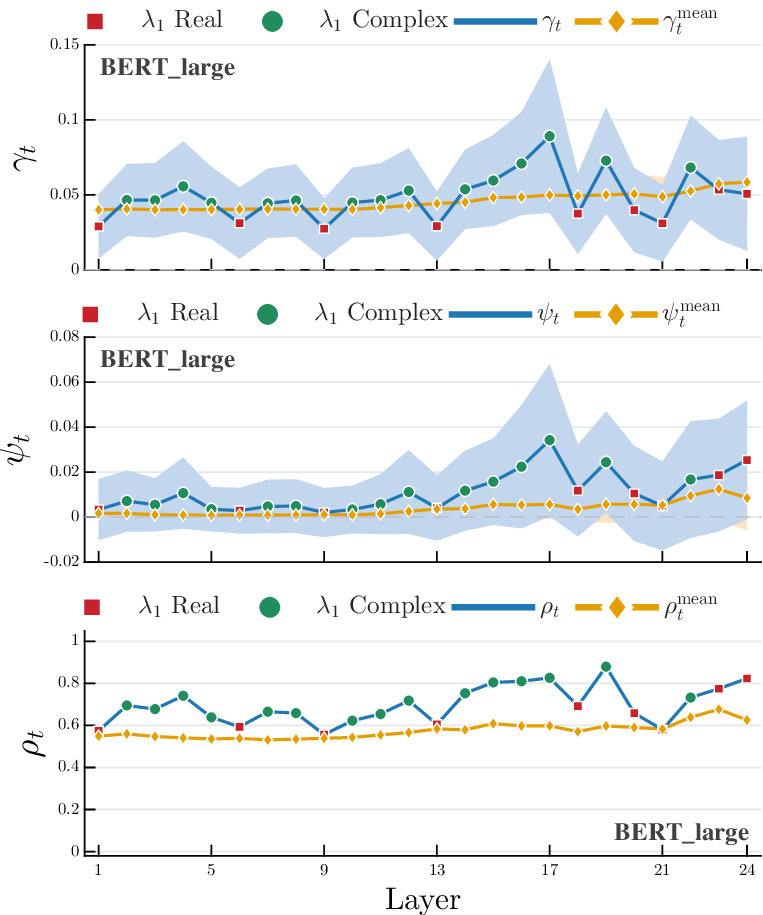}
        \includegraphics[width=0.32\textwidth]{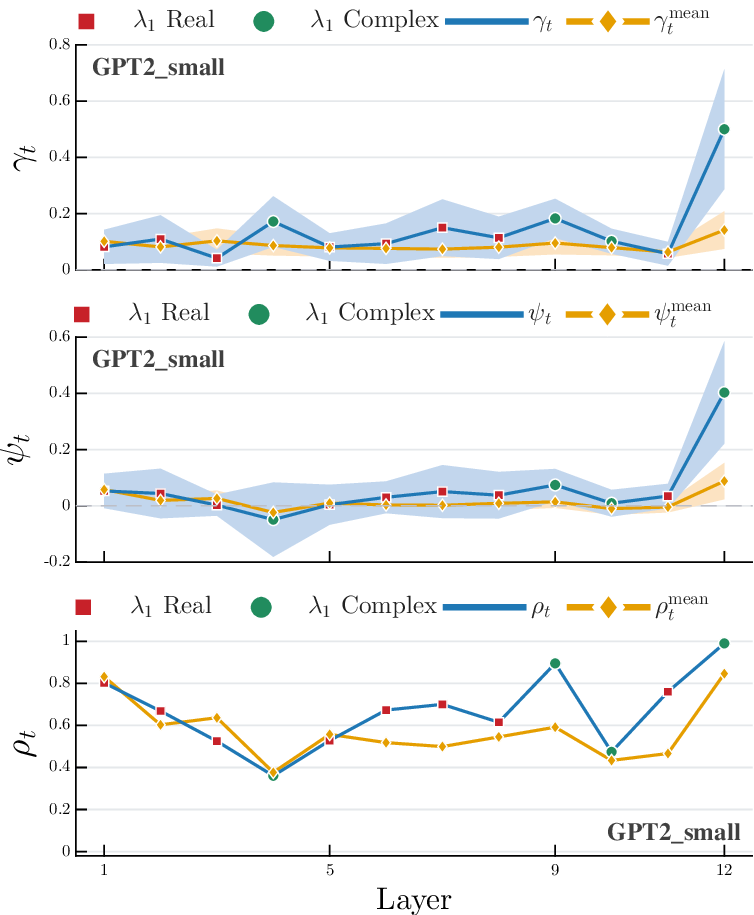}
 %   \vspace{0.5em}
    % Row 2: GPT-2
        % \includegraphics[width=0.32\textwidth]{psi_figures/GPT2_MEDIUM_psi_single.eps}
        \includegraphics[width=0.32\textwidth]{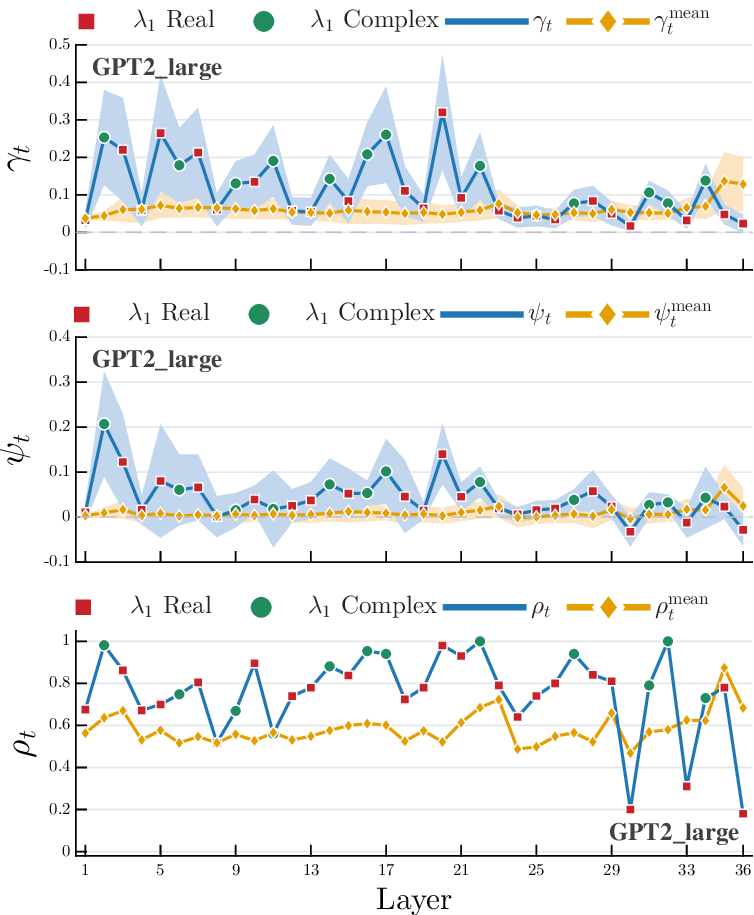}
        \vspace{0.5cm}
        
        \includegraphics[width=0.32\textwidth]{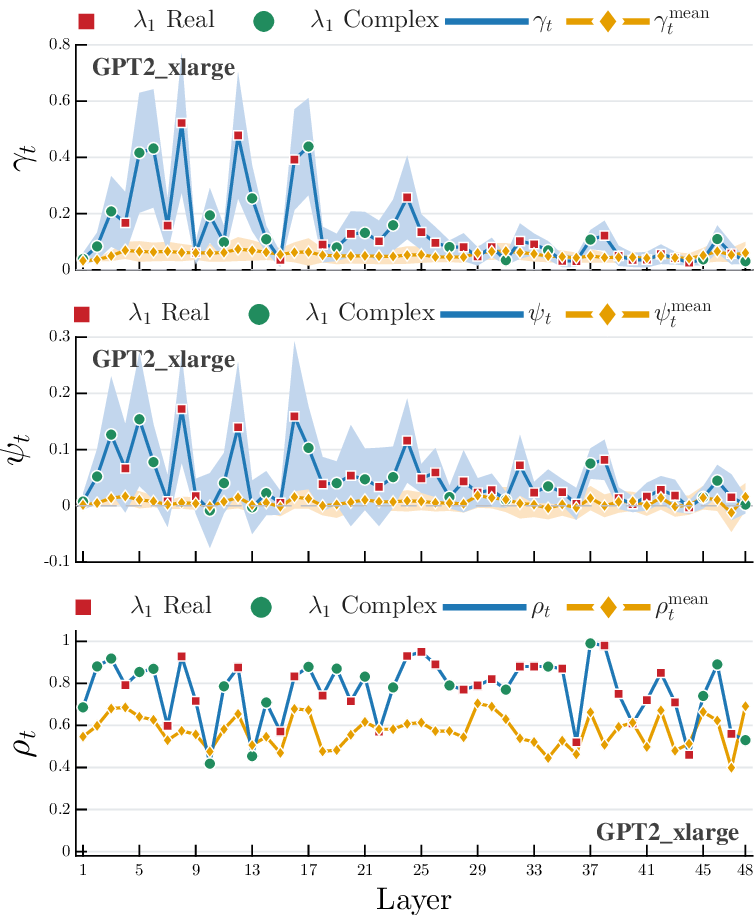}
  %  \vspace{0.5em}
    % Row 3: GPT-Neo
        \includegraphics[width=0.32\textwidth]{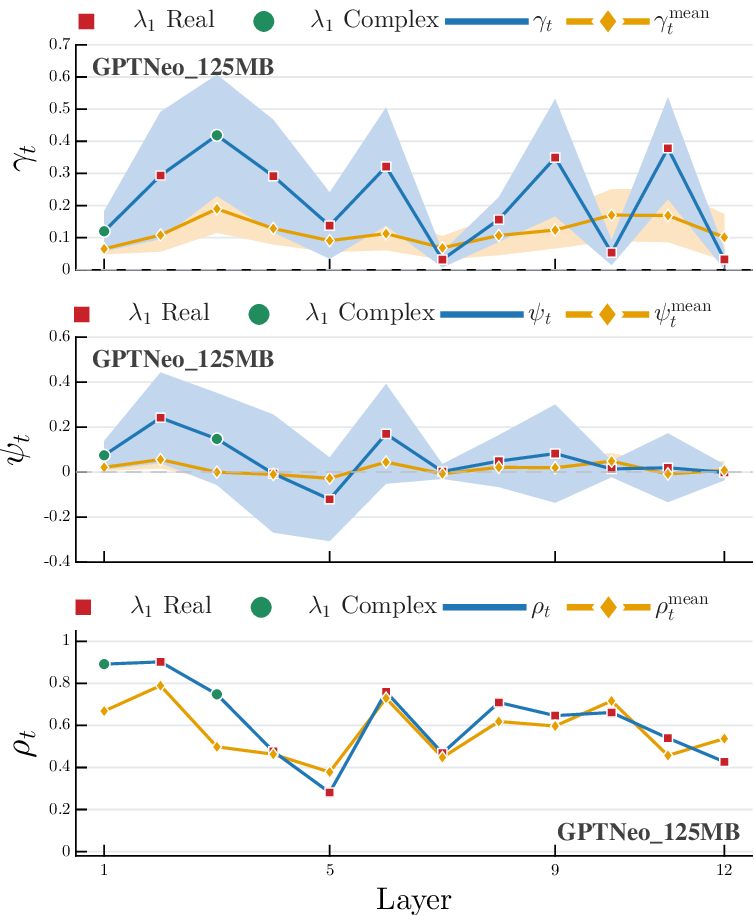}
        \includegraphics[width=0.32\textwidth]{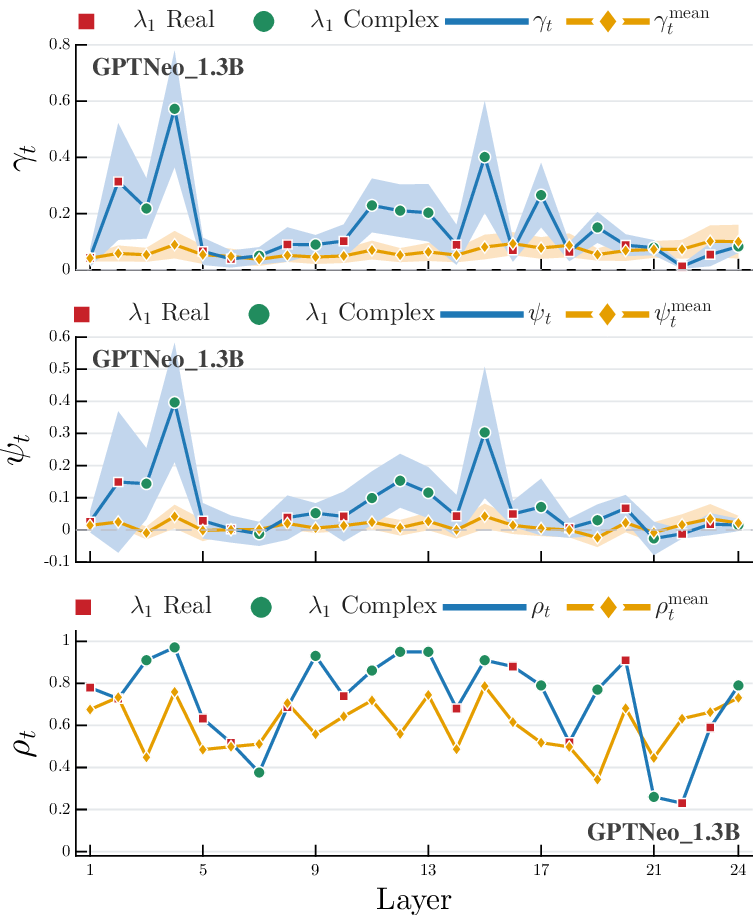}
    \caption{Values of $\gamma_t$ (top row), $\psi_t$ (middle row) and $\rho_t$ (bottom row) across layers for  BERT-large, GPT2 (Small, Large, XL), and GPTNeo (125M, 1.3B). Top: $\gamma_t$ and $\gamma_t^{\rm mean}$ and associated standard deviations over $10^4$ tokens are shown; Middle: $\psi_t$ and $\psi_t^{\rm mean}$ and associated standard deviations over $10^4$ tokens; Bottom: corresponding probabilities $\rho_t$ and $\rho_t^{\rm mean}$.
    See Fig.~\ref{fig:psi_rho_selected} of the paper for the remaining 3 models.}
    \label{fig:psi_single_all}
\end{figure}

% \newpage
% \begin{figure}[!htbp]
%     \centering
%     \begin{subfigure}[b]{0.32\textwidth}
%         \centering
%         \includegraphics[width=\textwidth]{psi_figures/BERT_BASE_F_lambda1.eps}
%     \end{subfigure}
%     \hfill
%     \begin{subfigure}[b]{0.32\textwidth}
%         \centering
%         \includegraphics[width=\textwidth]{psi_figures/BERT_LARGE_F_lambda1.eps}
%     \end{subfigure}
%     \hfill
%     \begin{subfigure}[b]{0.32\textwidth}
%         \centering
%         \includegraphics[width=\textwidth]{psi_figures/GPT2_BASE_F_lambda1.eps}
%     \end{subfigure}
%     \caption{Correlation between output tokens and the dominant eigenvector of $\Pi_{t=1}^TF_t$ across layers. }
%     \label{fig:FC_lambda1}
% \end{figure}
\newpage

\begin{figure}[htbp]
    \centering
    \includegraphics[width=0.32\textwidth]{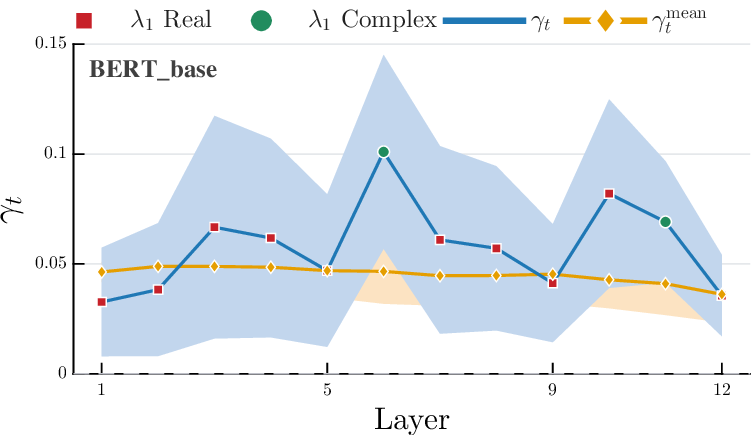}\hfill
    \includegraphics[width=0.32\textwidth]{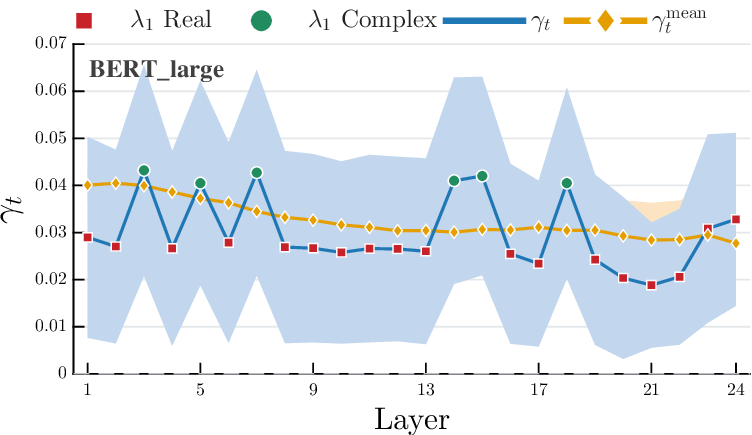}\hfill
    \includegraphics[width=0.32\textwidth]{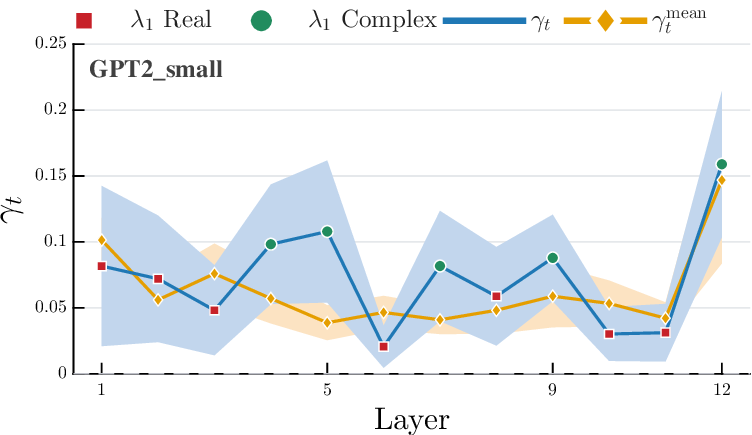}
    \vspace{0.5em}
    \includegraphics[width=0.32\textwidth]{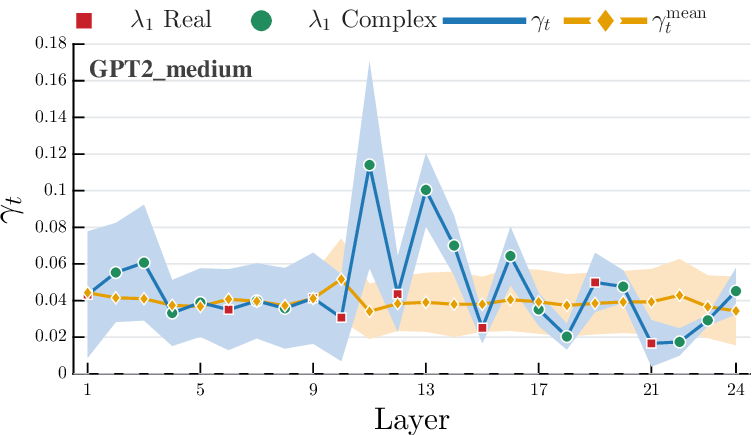}\hfill
    \includegraphics[width=0.32\textwidth]{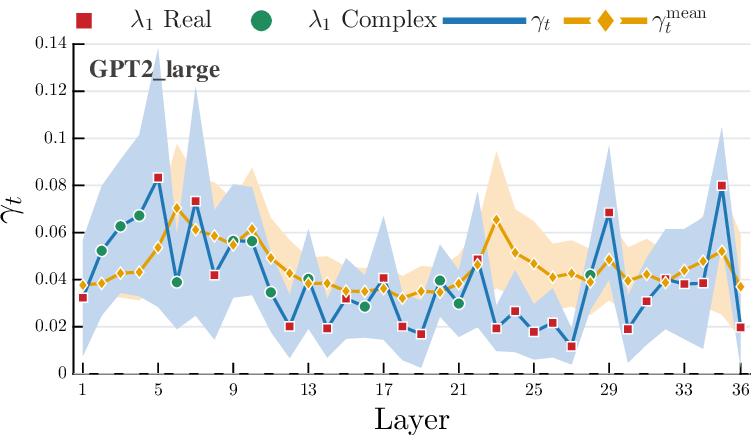}\hfill
    \includegraphics[width=0.32\textwidth]{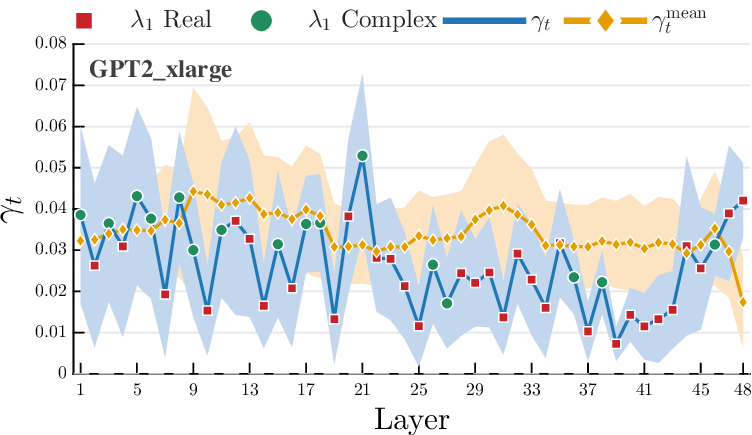}
    \vspace{0.5em}
    \includegraphics[width=0.32\textwidth]{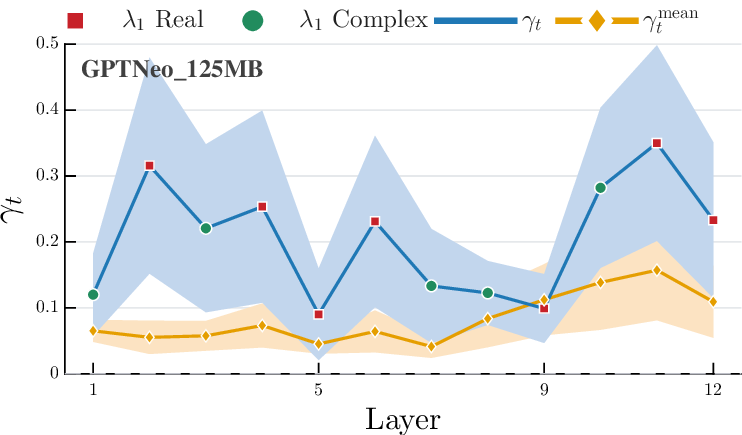}\hfill
    \includegraphics[width=0.32\textwidth]{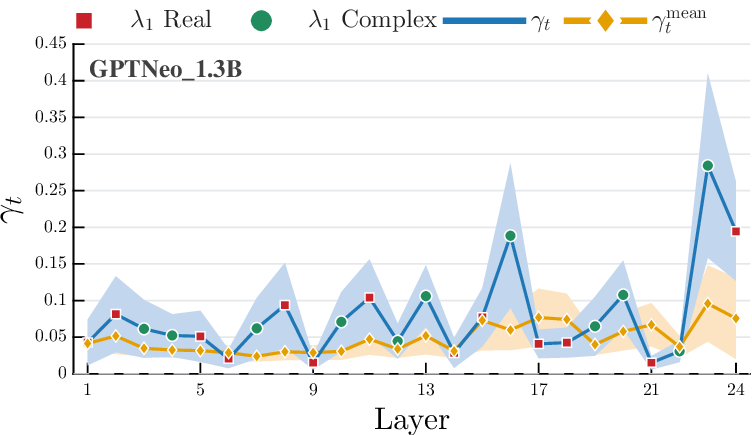}\hfill
    \includegraphics[width=0.32\textwidth]{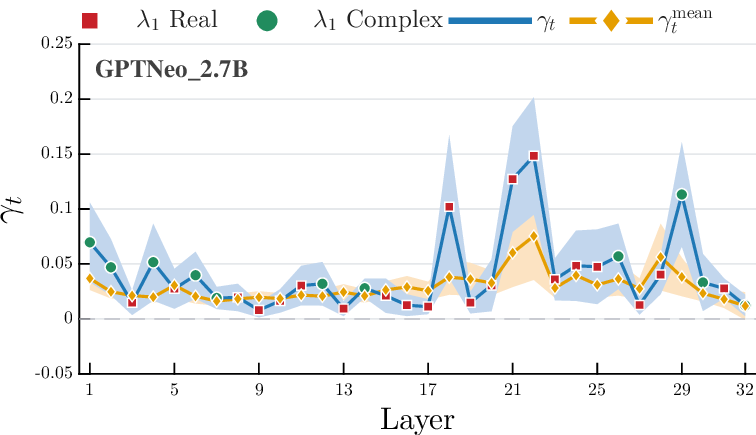}
    \caption{Absolute correlation between the principal eigenvector $ \bv_{1,t}^{\rm cum} $ of the cumulative product $F_t^{\rm cum} = \Pi_{\tau=1}^t F_\tau $ as $ t$ varies from $ 1$ to $T$. The quantity shown is $\gamma_t = \left|\mathrm{corr}(\bx_i{(t+1)},\bv_{1,t}^{\rm cum})\right| $. Except for some models, no clear trend emerges when compared with $\gamma_t^{\rm mean}  = \sum_{j=1}^d  \left|\mathrm{corr}(\bx_i{(t+1)},\bv_{j,t}^{\rm cum})\right| / d $. No growth over $t$ is observed.}
    \label{fig:pi_f_all}
\end{figure}

% \begin{figure}[!htbp]
% \centering
% \includegraphics[width=0.45\textwidth]{Real_LLM_Figures/ALBERT_BASE_psi_real_LLM.eps}
% \includegraphics[width=0.45\textwidth]{Real_LLM_Figures/ALBERT_LARGE_psi_real_LLM.eps}

% \includegraphics[width=0.45\textwidth]{Real_LLM_Figures/ALBERT_XLARGE_psi_real_LLM.eps}
% \includegraphics[width=0.45\textwidth]{Real_LLM_Figures/ALBERT_XXLARGE_psi_real_LLM.eps}
% \caption{ \rednote{[Probably remove]}(everything included)The $\psi$ metric across layers for ALBERT models() (Base, Large, XLarge, XXLarge).}
% \label{fig:psi_albert}
% \end{figure}

\begin{figure}[!htbp]
  \centering
  \includegraphics[width=0.32\textwidth]{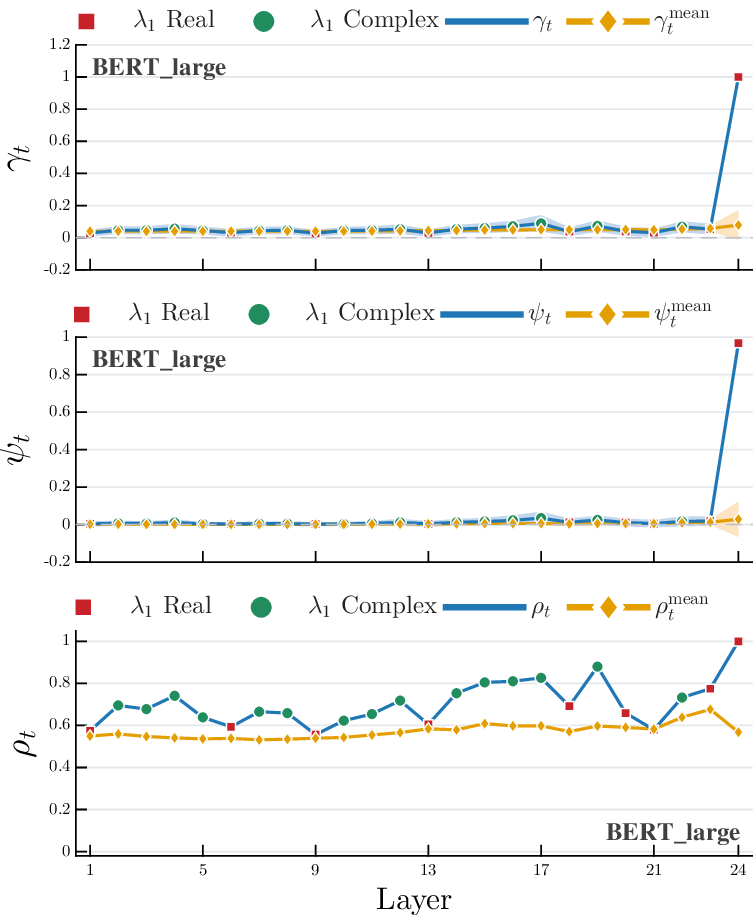}\hfill
  \includegraphics[width=0.32\textwidth]{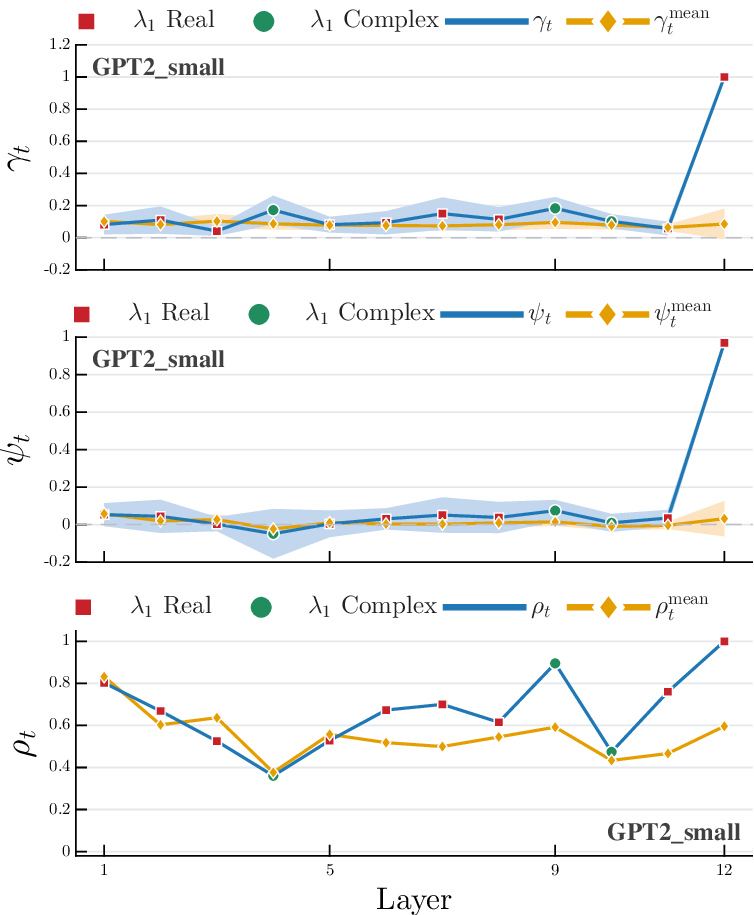}\hfill 
  \includegraphics[width=0.32\textwidth]{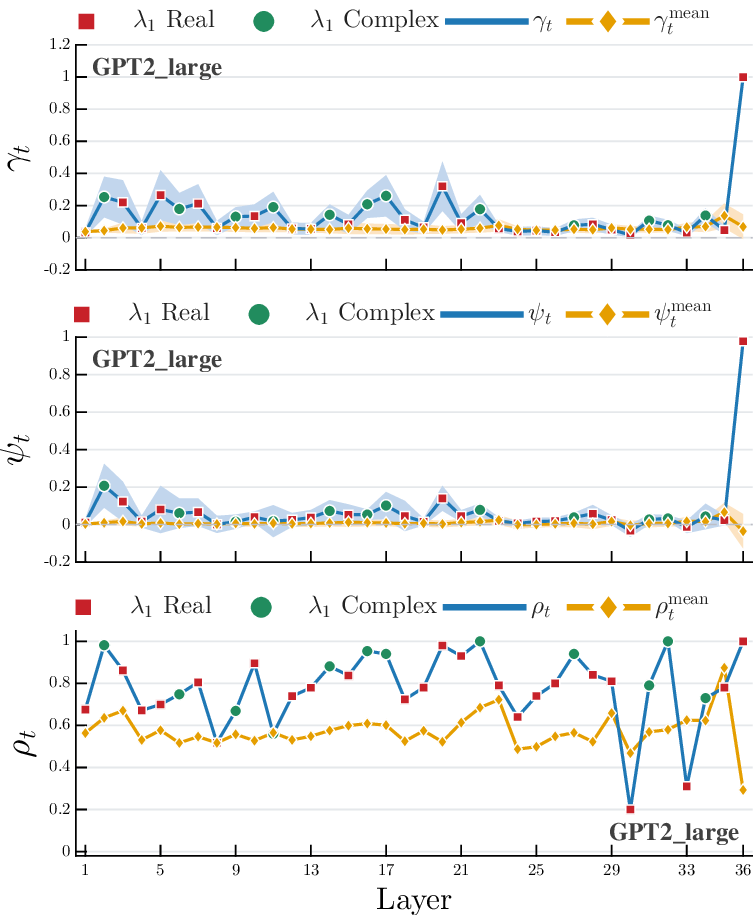}\hfill
  \includegraphics[width=0.32\textwidth]{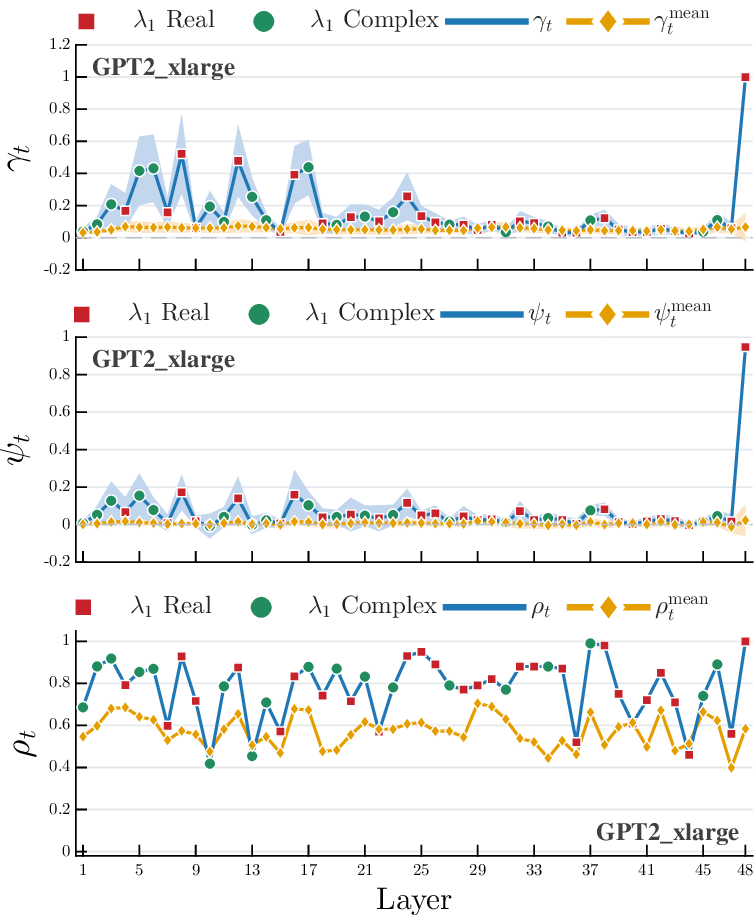} \hfill 
  \includegraphics[width=0.32\textwidth]{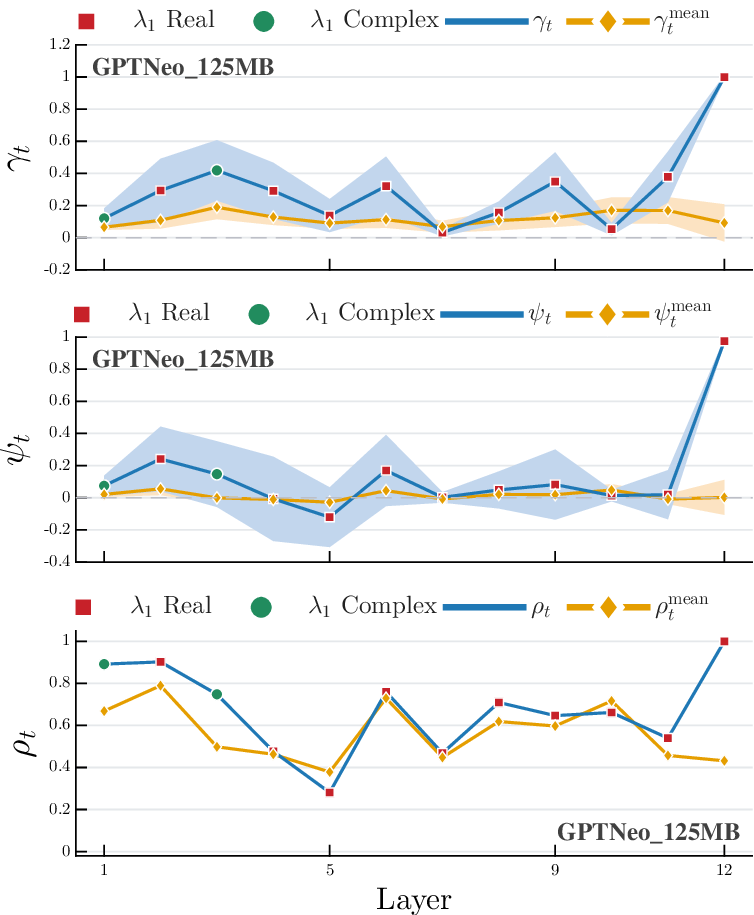}\hfill
  \includegraphics[width=0.32\textwidth]{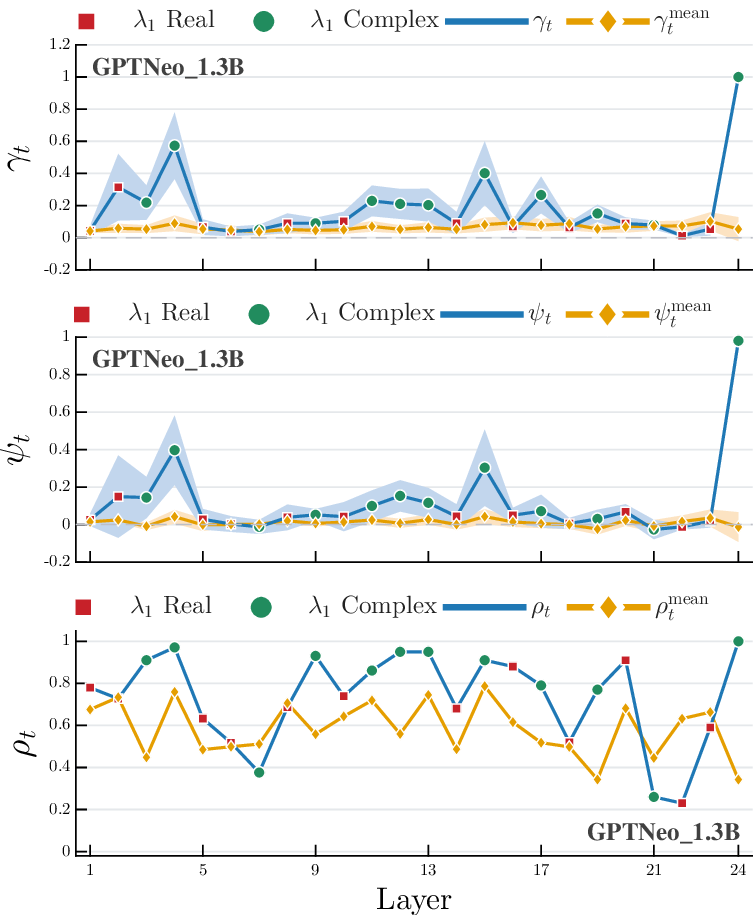}\hfill

  \caption{Values of $ \gamma_t $ (top row), $\psi_t$ (middle row) and $ \rho_t $ (bottom row) across layers for BERT-large, GPT2 (Small, Large, XL), and GPTNeo (125M, 1.3B), with the last layer output matrix $ F_{O,T,{\rm mod}}$ modified as described in Section~\ref{sec:steer-tokens}. See Fig.~\ref{fig:modified_fo_some} in the paper for the other 3 models. Perfect alignment with $ \bv_{1,T,{\rm mod}}$ is  achieved in all transformers.}
  \label{fig:modified_fo_all}
\end{figure}

% (everything included)
\begin{figure}[!htbp]
    \centering
    %ALBERT
    \includegraphics[width=0.48\textwidth]{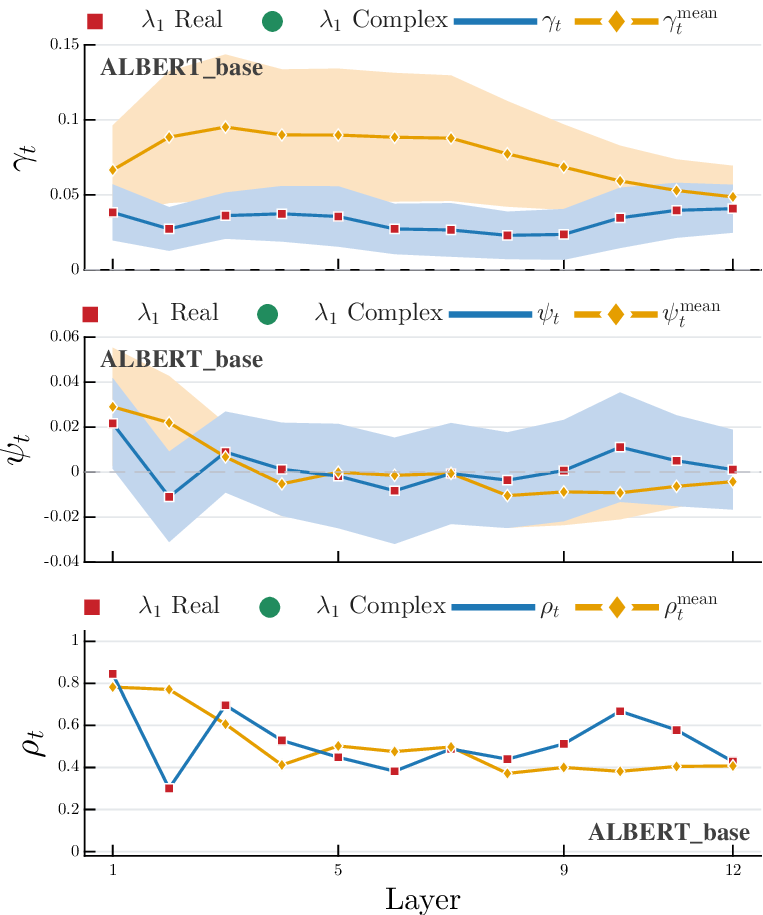}\hfill
    \includegraphics[width=0.48\textwidth]{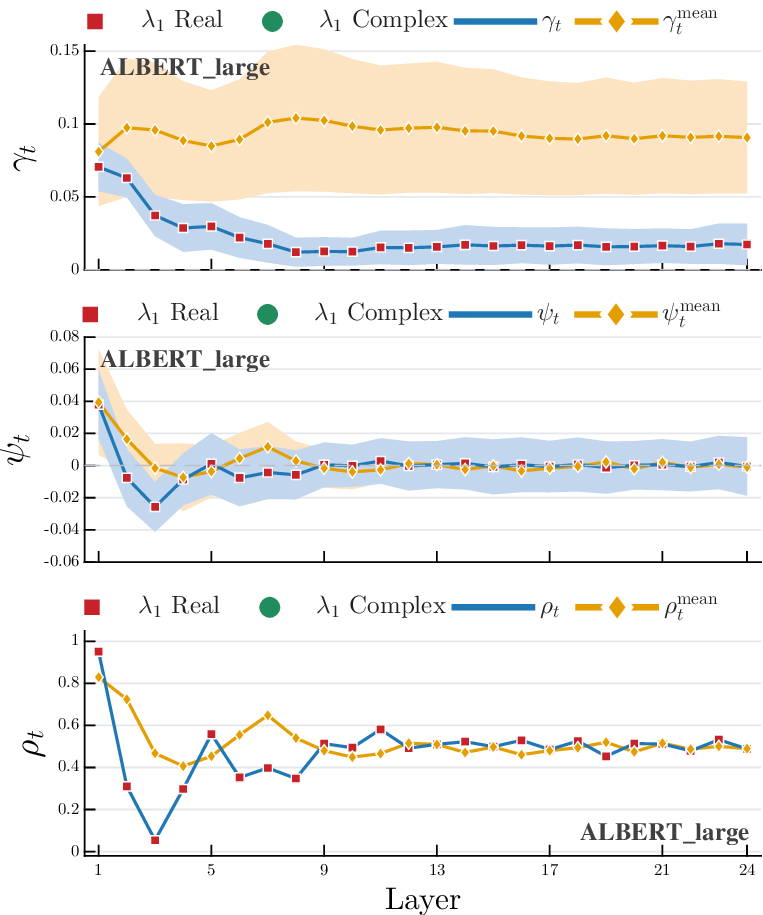}
    %\vspace{0.5em}
    \includegraphics[width=0.48\textwidth]{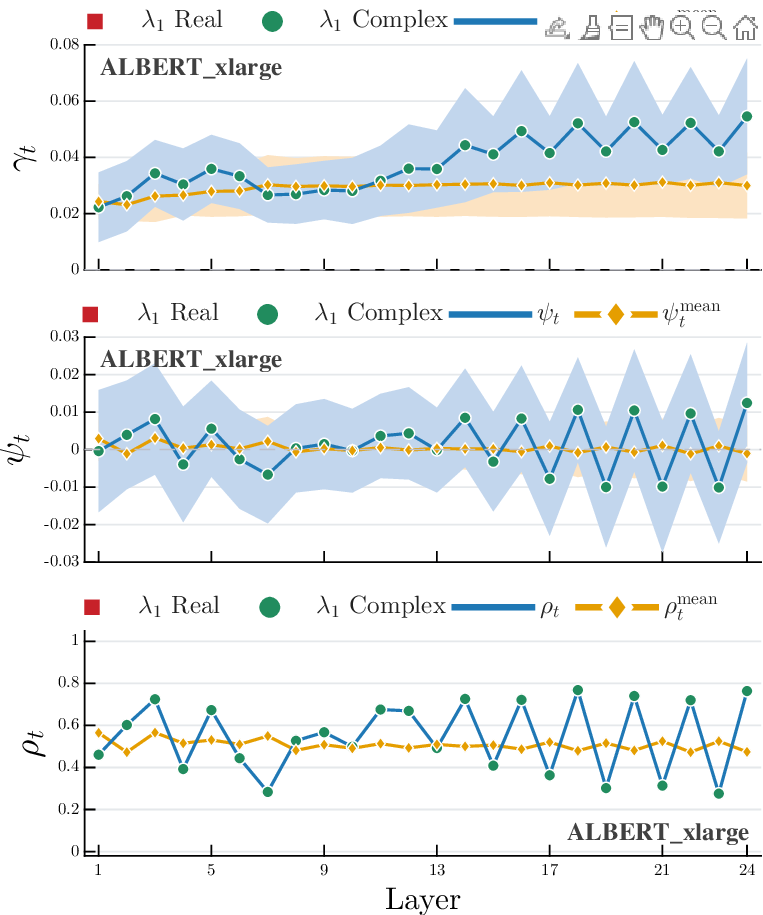}\hfill
    \includegraphics[width=0.48\textwidth]{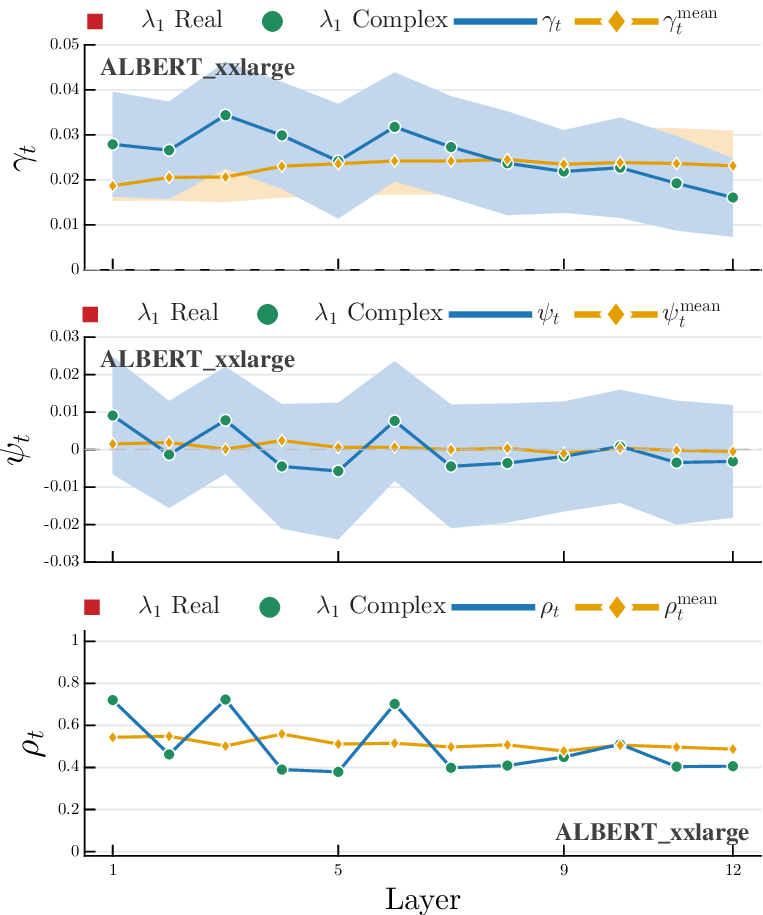}
    
    \caption{Values of $\gamma_t$ (top row), $\psi_t$ (middle row) and $ \rho_t $ (bottom row) across layers for ALBERT (Base, Large, XLarge, XXLarge), when the embedding layer and the FFN at each layer are included, and the original (affine) $ {\rm LayerNorm}(\cdot)$ normalization is used.}
    \label{fig:albert_real_llm_corr}
\end{figure}

% (everything included)
\begin{figure}[!htbp]
\centering
    \centering
    % BERT
    \includegraphics[width=0.32\textwidth]{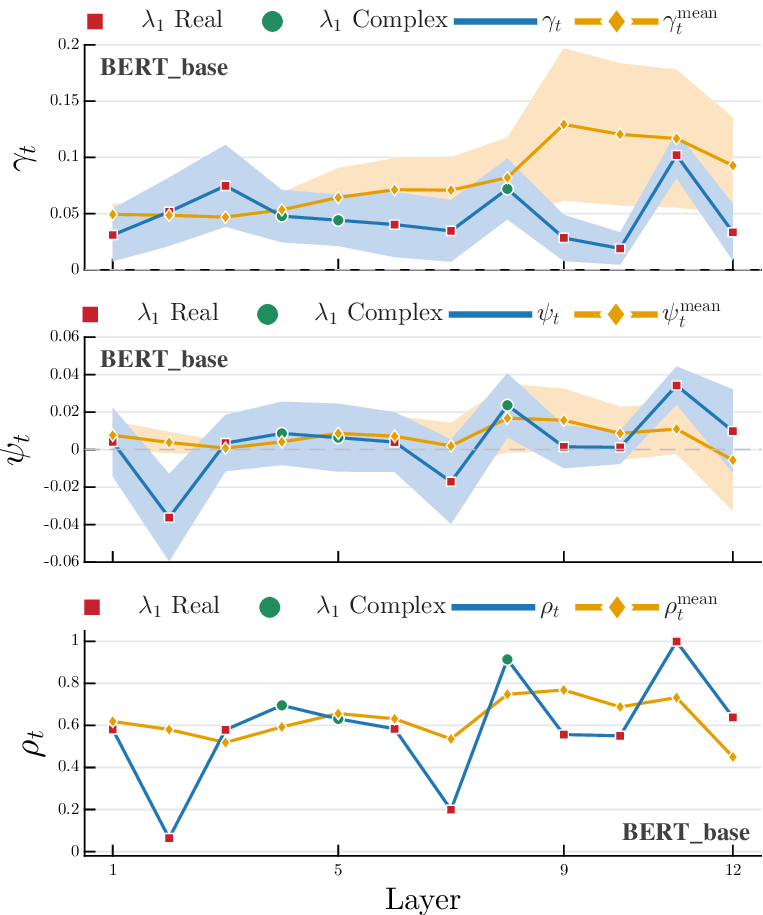}
    \hfill
    \includegraphics[width=0.32\textwidth]{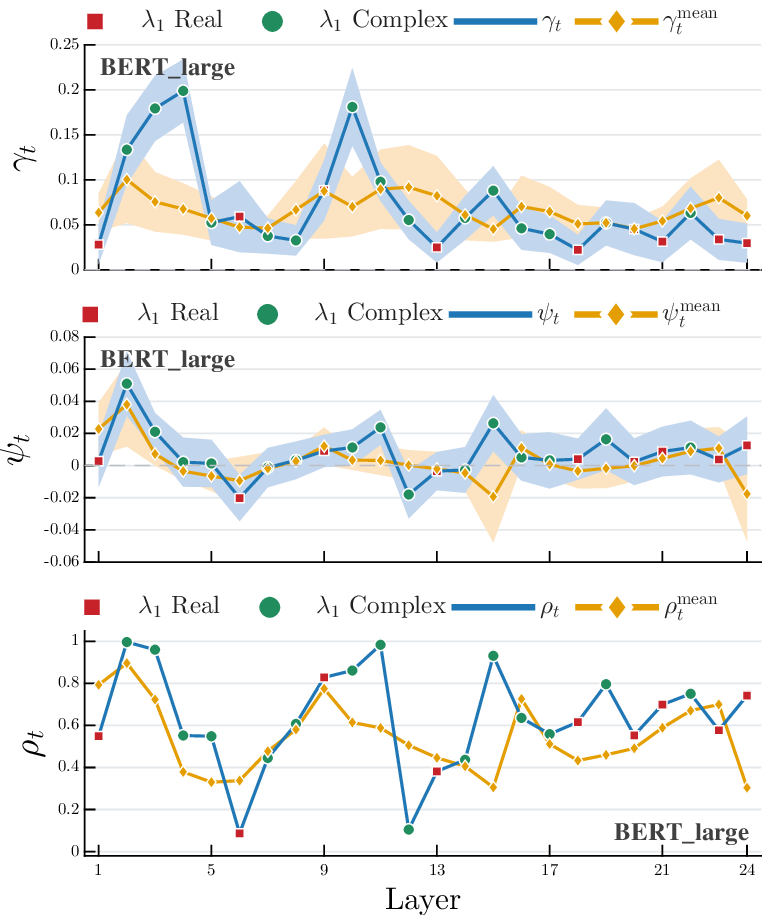}
    \hfill
    \includegraphics[width=0.32\textwidth]{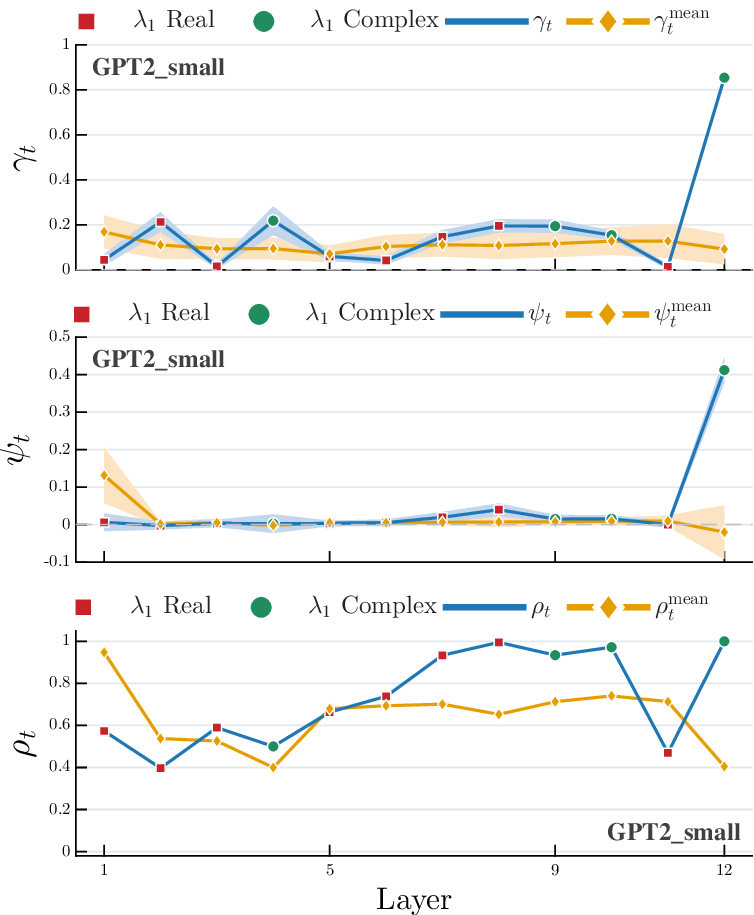}
    
    %\vspace{0.5em}
    %GPT2
    \includegraphics[width=0.32\textwidth]{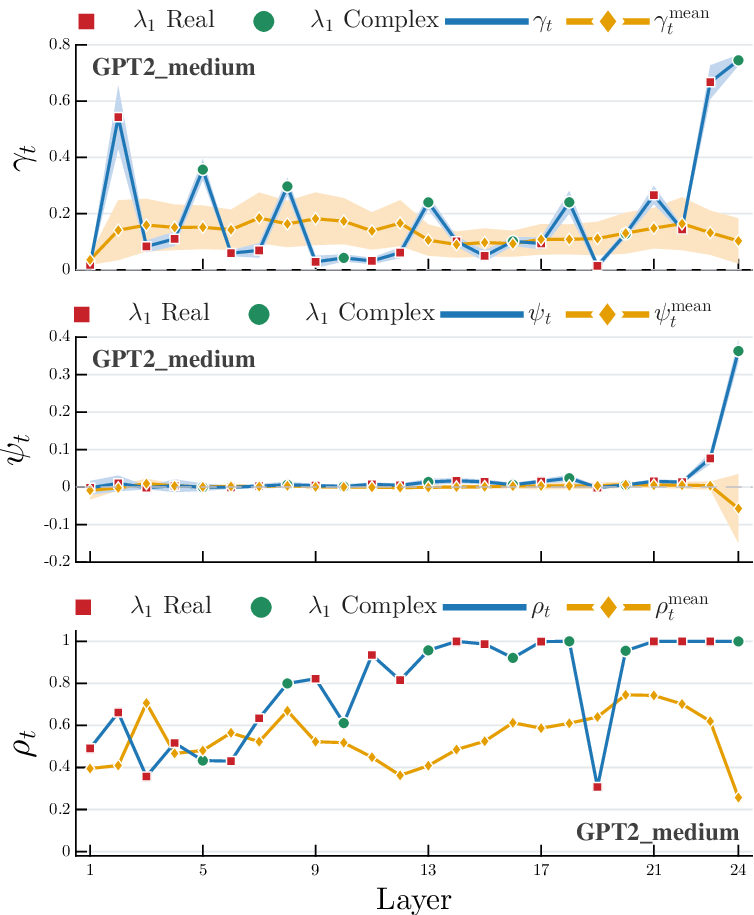}
    \hfill
    \includegraphics[width=0.32\textwidth]{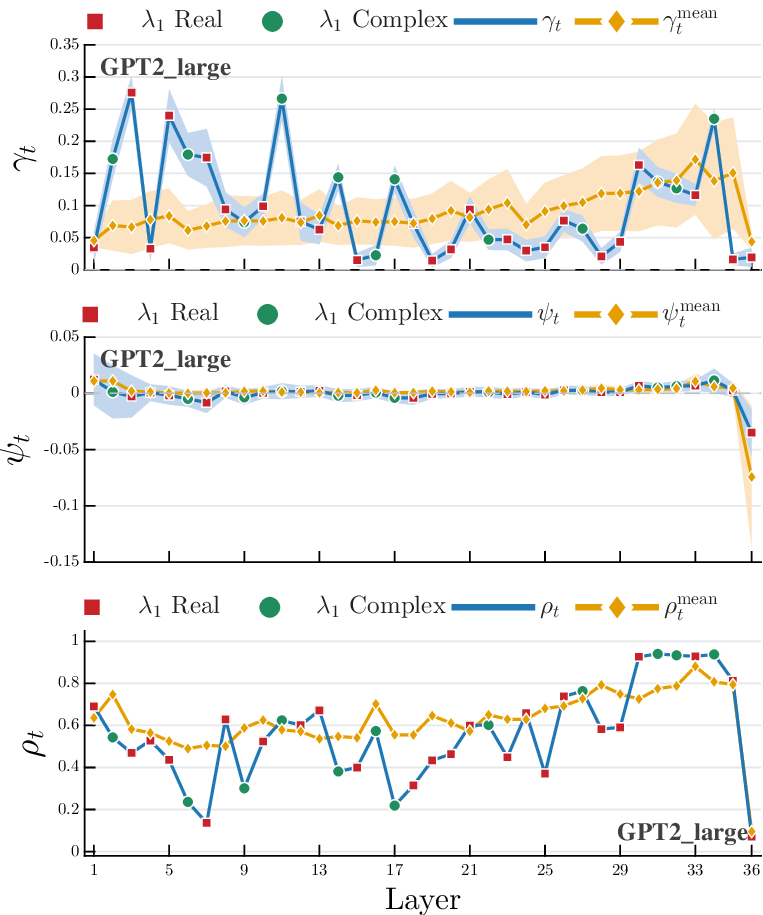}
    \hfill
    \includegraphics[width=0.32\textwidth]{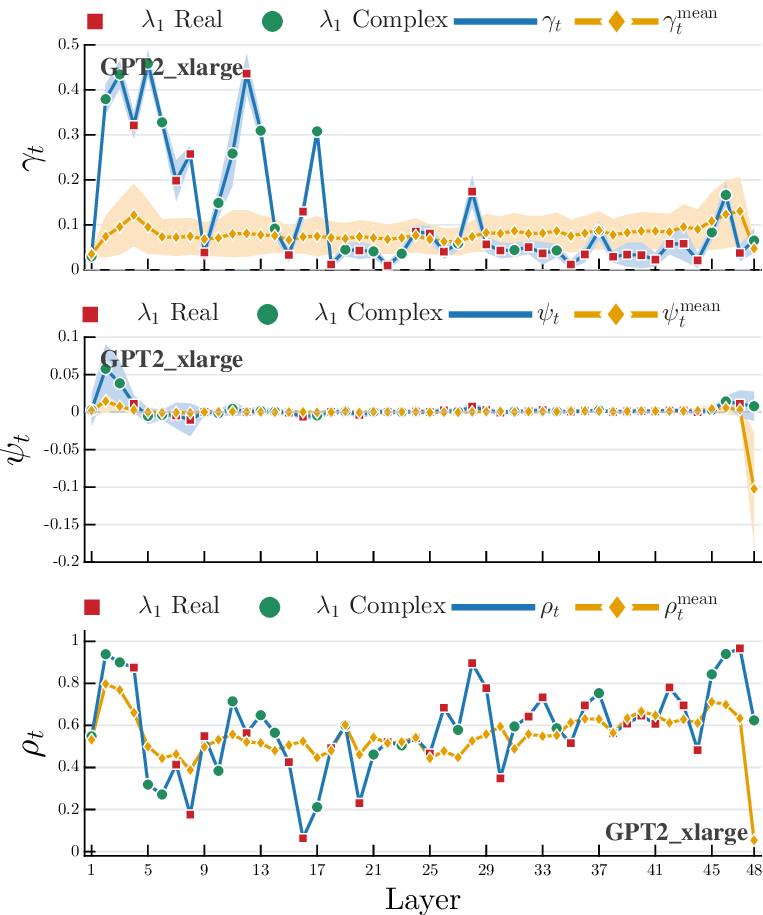}
    
   % \vspace{0.5em}
    %GPTNeo
    \includegraphics[width=0.32\textwidth]{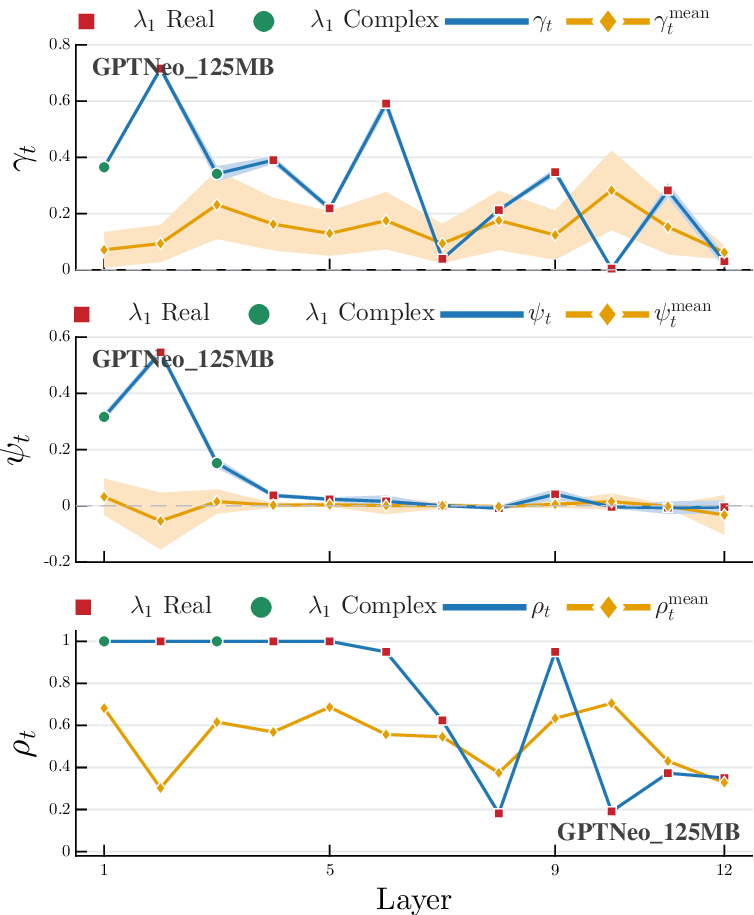}
    \hfill
    \includegraphics[width=0.32\textwidth]{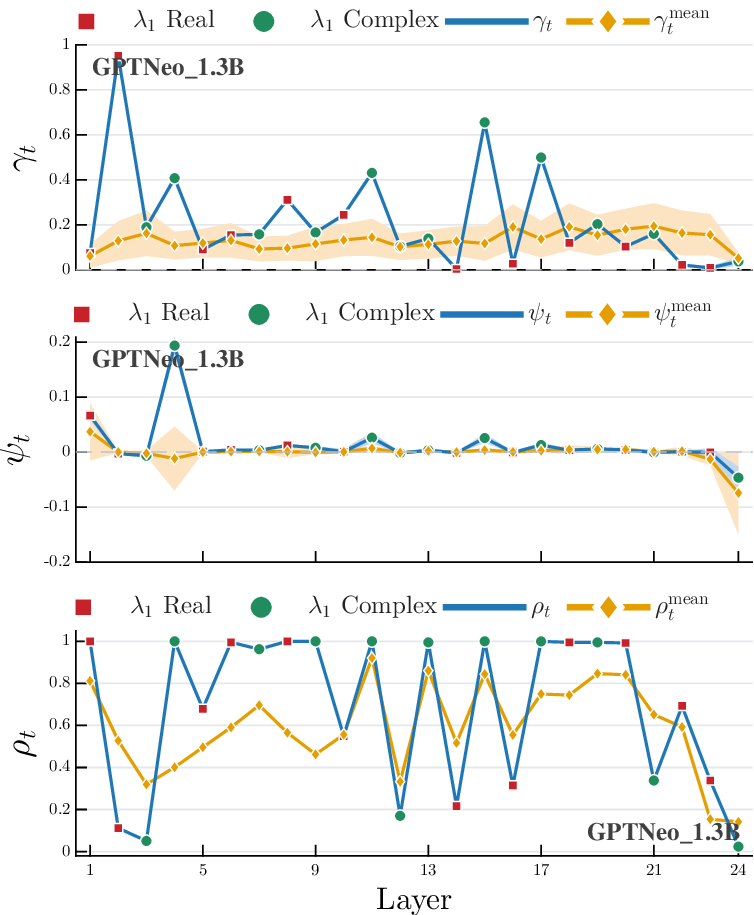}
    \hfill
    \includegraphics[width=0.32\textwidth]{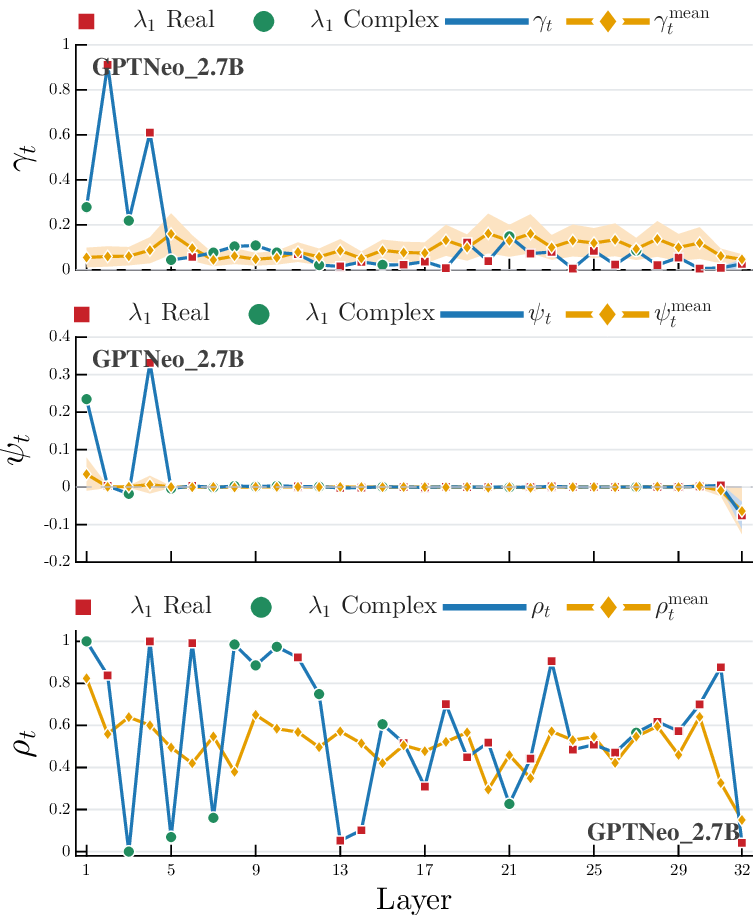}
    
    \caption{Values of $\gamma_t$ (top row), $\psi_t$ (middle row) and $ \rho_t$ (bottom row) across layers for BERT (Base, Large), GPT2 (Small, Medium, Large, XL), and GPTNeo (125M, 1.3B, 2.7B), when the embedding layer and the FFN at each layer are included, and the original (affine) $ {\rm LayerNorm}(\cdot)$ normalization is used.}
\label{fig:psi_bert_gpt}
\end{figure}

\clearpage

\section{A rigorous analysis of the single-head self-attention dynamics}
\label{sec:single-head-analysis}

In this section we consider the equivalent of \eqref{eq:update-multihead-auton} but for a single-head. Under certain specific conditions (symmetric value matrix, characterized by a strictly dominant eigenvalue) we can obtain a fairly complete analytical description of its asymptotic behavior, namely classify its equilibria, and infer their local stability. In particular, we show that the landscape of equilibria resembles that of the corresponding continuous-time system investigated in e.g. \cite{altafini2025multistability}. In the discrete-time case discussed here, the analysis is significantly more complicated even though it is based on the same tools (Jacobian linearization). Such analysis is the main contribution of this section. 
In order to carry it out, it is useful to analyze first a simpler model which we call the multiagent discrete-time Oja system \cite{yoshizawa2001convergence}, and which corresponds to the single-head self-attention dynamics but without the attention matrix. 

\subsection{Model formulation}

Consider $n$ tokens represented as unit length column vectors $ \bx_i \in \mathbb{S}^{d-1} \subset \mathbb{R}^d$, $ i=1, \ldots, n $. 
Denote $ \bQ,\bK\in \mathbb{R}^{m\times d}$ the query and key matrices (for simplicity and without loss of generality we hereafter assume $ m=d$)  and $ \bV \in \mathbb{R}^{d \times d }$ the value matrix. 
Because of our representation of tokens as column vectors, these matrices correspond to the transposes of the weight matrices normally considered in the ML literature \cite{vaswani2017attention, Likhosherstov2021OnTE, chowdhury2022learningtransformerkernel, ustaomeroglu2025theoreticalstudyhyperselfattention}, see Section~\ref{sec:shared-param} of the paper or Section~\ref{sec:standardW} of these Supplementary Materials for the details. 
Since we are dealing with a single head, an explicit output matrix ($ F_O $ in \eqref{eq:update-multihead1} in the main paper) is not required, and is therefore omitted.

The following discrete-time difference equation encodes a single-head self-attention dynamics, including a skip connection and an $ L_2 $ layer normalization,
\begin{equation}
\label{eq:update}
\bx_i^+ = \frac{\bx_i +  \eta \bV \sum_{j=1}^n A_{ij}(\bx)  \bx_j }{\| \bx_i + \eta \bV\sum_{j=1}^n A_{ij}(\bx)  \bx_j \|},
\end{equation}
where $\eta>0$ is a step-size, $ A_{ij}(\bx) = \frac{e^{\langle \bQ \bx_i, \bK \bx_j \rangle } }{\sum_{\ell=1}^n e^{\langle \bQ \bx_i, \bK \bx_\ell \rangle } } $ is the attention that the token $ \bx_i $ gives to the token $ \bx_j$, computed through a softmax function ($ \bx $ is the stack of $ \bx_1, \ldots, \bx_n $ vectors) and where the attention matrix $ A(\bx) $ is assumed full, i.e., each token attends to every other token.
In \eqref{eq:update} we are implicitly interpreting the tokens as layer-dependent state variables: $ \bx_i = \bx_i(t) $ and $ \bx_i^+ = \bx_i(t+1)$, for some layer index $ t$.
We let $\|\cdot \|$ denote the Euclidean vector norm and the operator norm for matrices.
For the $i$-th token, define the tangent projector on $ \mathbb{S}^{d-1}$ at $\bx_i$ by $ \Pi_{\bx_i} := I -\bx_i  \bx_i ^\top$, which satisfies $\Pi_{\bx_i} \bx_i =0$ and $\Pi_{\bx_i} ^\top=\Pi_{\bx_i} $.

We are interested in investigating the asymptotic behavior of \eqref{eq:update}, i.e., in computing the equilibria of \eqref{eq:update} and studying their stability properties. 
For that (as already mentioned), it is convenient to investigate first a simpler system, which, for analogy with the continuous-time case, we call the multiagent DT Oja system \cite{altafini2025multistability}.
This system has the same structure as \eqref{eq:update} but lacks the attention matrix.

\subsection{Multiagent discrete-time Oja system}
\label{sec:multi-oja}
When we consider uniform attention \cite{noci2022signal}, i.e., we take $ A(\bf{x})=\frac{1}{n} \mathds{1} \mathds{1}^T$, then instead of \eqref{eq:update} we obtain the simpler system
\begin{equation}
\label{eq:update-moja}
\bx_i^+ = \frac{\bx_i +  \frac{\eta}{n} \bV \sum_{j=1}^n   \bx_j }{\| \bx_i + \frac{\eta}{n} \bV\sum_{j=1}^n \bx_j \|},
\end{equation}
which is easier to study. 
Let $ \by= \bV \bm{m} ( \bx)  =  \frac{1}{n}\bV \sum_{j=1}^n  \bx_j \in \mathbb{R}^d$ be the total influence of all agents on agent $i$ (which is the same for all agents).
Splitting \eqref{eq:update-moja} into parts before/after the normalization we can also write:
\beq
\tilde{\bx}_i = \bx_i + \eta \by 
% \frac{\eta}{n} \bV \sum_{j=1}^n   \bx_j 
\quad \text{and} \quad \bx_i^+ = \frac{\tilde{\bx}_i}{\| \tilde{\bx}_i \|} .
\label{eq:update-moja2}
\eeq

We make the following assumption which holds throughout the rest of this section.

\begin{assumption} 
\label{ass:V-sym}
The value matrix $ \bV\in \mathbb{R}^{d\times d } $ is nonsingular and symmetric, of eigenvalues $ \lambda_1>  \lambda_2 \geq \ldots \geq \lambda_d $ with $ \lambda_1 >0 $ simple and positive.
\end{assumption}
Let $ \bv_1, \ldots, \bv_d $ be the associated eigenvectors, normalized s.t. $ \| \bv_k \|=1$.

\subsubsection{Equilibria}

Let us first give our definition of consensus and bipartite consensus equilibria, which we request to be collinear with the eigenvectors of $\bV$ (the reason will become clear later on when we study the full model \eqref{eq:update}). 
The points $ \bx_1, \ldots, \bx_n \in \mathbb{S}^{d-1} $ are said to be in a {\em consensus equilibrium} if $ \bx_i = \bx_j = \bv_k $ $ \forall \, i, j = 1, \ldots, n$ (or $ \bx_i = \bx_j = - \bv_k $ $ \forall \, i, j = 1, \ldots, n$) and for some $ k=1, \ldots, d$. 
They are said to be in a {\em bipartite consensus equilibrium} if $ \bx_i =  \bv_k  $ for $ i \in  \mathcal{V}_1 \subset \mathcal{V} $  and $ \bx_i =  - \bv_k  $ for $ i \in  \mathcal{V} \setminus \mathcal{V}_1  $, for some $ k=1, \ldots, d$.

\begin{lemma}
\label{lem:moja-equil1}
Let $ \bx = [ (\bx_1)^T \, \ldots  (\bx_1)^T ]^T$ be an equilibrium point of the system \eqref{eq:update-moja}. 
Then $ \bx $  belongs to one of the following classes:
\benu
\item consensus: $ \bx_i = \bv_k $ $ \forall \, i$, and $ k=1, \ldots, d$;
\item bipartite consensus: $ \bx_i = \pm \bv_k $ $ \forall \, i$ and $ k=1, \ldots, d$;
\item polygonal equilibria: $ \{ {\bx_i \in \mathbb{S}^{d-1}} \; \text{s.t.} \;  \bV \sum_{j=1}^n  \bx_j =0\}. $
\eenu
\end{lemma}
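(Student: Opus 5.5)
Since all agents in \eqref{eq:update-moja} receive the same input $\by = \bV \bm{m}(\bx)$, the fixed-point condition is easy to parametrize. At an equilibrium $\bx_i^+ = \bx_i$, so by \eqref{eq:update-moja2} we have $\bx_i + \eta \by = \|\tilde{\bx}_i\|\, \bx_i$. Setting $c_i := \|\tilde{\bx}_i\| - 1$, each equilibrium therefore satisfies
\[
\eta\, \by = c_i\, \bx_i, \qquad i=1,\ldots,n .
\]
I split into two cases according to whether $\by$ vanishes.

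If $\by = 0$, then $\bV \sum_j \bx_j = 0$. Conversely, any such configuration is a fixed point, because then $\tilde{\bx}_i = \bx_i$ is already of unit norm. This is exactly class 3, the polygonal equilibria. Since $\bV$ is nonsingular under Assumption~\ref{ass:V-sym}, this case is equivalently $\sum_j \bx_j = 0$.

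If $\by \neq 0$, then every $c_i \neq 0$, so every $\bx_i$ is collinear with $\by$. Because the $\bx_i$ are unit vectors, $\bx_i = s_i \bu$ with $\bu := \by / \|\by\|$ and $s_i \in \{+1,-1\}$. The mean becomes $\bm{m}(\bx) = \bar{s}\, \bu$ with $\bar{s} = \frac1n \sum_i s_i$, and $\bar{s} \neq 0$ because $\by \neq 0$. The definition $\by = \bV \bm{m}(\bx)$ then reads $\|\by\|\, \bu = \bar{s}\, \bV \bu$, i.e.
\[
\bV \bu = \frac{\|\by\|}{\bar{s}}\, \bu .
\]
So $\bu$ is a unit eigenvector of $\bV$. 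The step needing care is passing from this to $\bu = \pm \bv_k$, since a repeated eigenvalue among $\lambda_2, \ldots, \lambda_d$ would allow $\bu$ to be any unit vector in a higher-dimensional eigenspace. I handle this by taking the normalized eigenvectors $\bv_k$ to be chosen within the relevant eigenspace, or by noting that for simple eigenvalues $\bu = \pm \bv_k$ directly. Either way $\bx_i = \pm \bv_k$ for every $i$, with one common $k$.

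All signs equal gives consensus (class 1), and mixed signs give bipartite consensus (class 2). The remaining point is consistency: $\|\tilde{\bx}_i\| = |1 + c_i| > 0$ must hold for the normalization to be defined. Here $c_i = \eta\, s_i\, \bar{s}\, \lambda_k$ is obtained by projecting $\eta \by = c_i \bx_i$ onto $\bx_i$. I only need to note that this does not affect the classification, since the lemma lists necessary conditions, so I would just remark that $\tilde{\bx}_i = 0$ is excluded by well-posedness of \eqref{eq:update-moja}.
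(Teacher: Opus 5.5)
Your proof is correct and follows the paper's route. An equilibrium forces $\by$ to be either zero (the polygonal case) or collinear with every $\bx_i$, and then $\by = \bV\bm{m}(\bx)$ makes the common direction an eigenvector of $\bV$. Your version is slightly more careful than the paper's in two places: you handle the mixed-sign (bipartite) case explicitly through $\bar{s}\neq 0$, where the paper just calls it ``identical,'' and you correctly note that if some of $\lambda_2,\ldots,\lambda_d$ are repeated, the conclusion $\bx_i=\pm\bv_k$ holds only for a suitable choice of eigenbasis.
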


\proof
An equilibrium point $ \bx_i $ satisfies $ \bx_i^+ = \bx_i$ for all $i$, hence, from \eqref{eq:update-moja2}, $ \bx_i = \frac{\tilde{\bx}_i}{\| \tilde{\bx}_i \|}  $, which implies that $ \tilde{\bx}_i$ and $ \bx_i $ must be collinear, i.e., $ \tilde{\bx}_i = \varphi  \bx_i $ for some $ \varphi \in \mathbb{R}$.
If $ \by =   \frac{1}{n}\bV \sum_{j=1}^n  \bx_j$, then from \eqref{eq:update-moja2}, $ \tilde{\bx}_i = \bx_i + \eta \by  $ holds iff also $\by  $ and $ \bx_i $ are collinear or $ \by =0$.
This condition can be written equivalently using projections as 
 \[
\Pi_{\bx_i} \by =  (I - \bx_i (\bx_i)^T )\by   =0 .
 \]
 There are 3 cases in which this  can happen:
\benu
\item consensus: $ \by = \varphi_i \bx_i $, for some scalar $ \varphi_i >0$;
\item bipartite consensus: $ \by = -\varphi_i \bx_i $, for some scalar $\varphi_i >0$;
\item polygonal equilibria: $ \by =0$.
\eenu
In fact, in the first two cases, $ \Pi_{\bx_i} \bx_i = \pm  \varphi_i (\bx_i - \bx_i  (\bx_i)^T \bx_i ) = 0$, since $ (\bx_i)^T \bx_i =1$. The third case follows trivially.
To show that consensus must be an eigenvector of $\bV$, observe that at this equilibrium point we have  $ \by= \bV \bx_i $, since $ \bx_i= \bx_j $.
Combining with $ \by= \varphi_i \bx_i $, we have $  \bV \bx_i =\varphi_i \bx_i $, i.e., $ \bx_i $ is an eigenvector of $\bV$ and $ \varphi_i$ one of its eigenvalues. The argument for bipartite consensus is identical. 

\qed

\begin{remark}
For a given $ \bv_k$ there are $ 2^n $ possible consensus or bipartite consensus equilibria.
These are always paired by a global symmetry w.r.t. the origin, i.e., if $ \bx_i = \bv_k $ $ \forall\, i$  is a consensus equilibrium point, its antipodal point $ \bx_i = - \bv_k $ $ \forall\, i$ is also a consensus equilibrium point, and similarly for the bipartite consensus equilibria.
This is implicitly assumed from now on, but  not explicitly written, to avoid overloading the formulation of the results.
\end{remark}

\subsubsection{Stability}
The stability properties of the multiagent Oja system \eqref{eq:update-moja} are summarized in the following theorem. 

\begin{theorem}
\label{thm:moja-DT}
For the system \eqref{eq:update-moja}, under Assumption~\ref{ass:V-sym}, if 
$ \eta< \min \left( \frac{2}{ |\lambda_1 + \lambda_d |}, \, \frac{1}{2 \max \{  \lambda_1 , | \lambda_d | \} }  \right) $, 
the principal consensus equilibrium $ \bx_i=\bv_1 $ $\forall \, i$ is asymptotically stable, while all other consensus equilibria, all bipartite consensus equilibria (including those aligned with $ \bv_1$), and all polygonal equilibria are unstable.
The trajectories of all tokens $ \bx_i(t)$ converge to $ \bv_1 $ (or to $ - \bv_1 $) for almost all initial conditions $ \bx_i(0)$. 
\end{theorem}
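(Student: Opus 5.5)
The plan is to linearize the map \eqref{eq:update-moja} on the product of spheres $(\mathbb{S}^{d-1})^n$ at each equilibrium class of Lemma~\ref{lem:moja-equil1}. Local stability and instability are read off the spectrum of the Jacobian restricted to the tangent space. Convergence for almost all initial conditions then follows from a center/stable-manifold argument. Writing $\bm{m}=\frac1n\sum_j \bx_j$, the map is $\bx_i^+=(\bx_i+\eta \bV\bm{m})/\|\bx_i+\eta\bV\bm{m}\|$. The derivative of the normalization at $\tilde{\bx}_i$ is $\Pi_{\bx_i^+}/\|\tilde{\bx}_i\|$. Hence, for tangent perturbations $\bm{\xi}_i\in T_{\bx_i}\mathbb{S}^{d-1}$,
\[
\delta\bx_i^+ = \frac{1}{\|\tilde{\bx}_i\|}\,\Pi_{\bx_i}\Big(\bm{\xi}_i+\frac{\eta}{n}\bV\sum_j\bm{\xi}_j\Big),
\]
using that $\bx_i^+=\bx_i$ at an equilibrium.

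First I treat the consensus points $\bx_i=\bv_k$. Here $\|\tilde{\bx}_i\|=1+\eta\lambda_k$, which is positive because $\eta|\lambda_d|<1/2$. The tangent space is $\bv_k^\perp$, which is $\bV$-invariant by symmetry. I decompose the perturbation into its mean $\bar{\bm{\xi}}$ and disagreement components $\bm{\xi}_i-\bar{\bm{\xi}}$, then expand in the eigenbasis $\bv_\ell$, $\ell\neq k$. The disagreement modes have multiplier $1/(1+\eta\lambda_k)$. The mean mode along $\bv_\ell$ has multiplier $(1+\eta\lambda_\ell)/(1+\eta\lambda_k)$. For $k=1$ all of these have modulus less than $1$: $\lambda_1>0$ handles the first, and for the second $\lambda_\ell<\lambda_1$ together with $1+\eta\lambda_\ell>-(1+\eta\lambda_1)$, i.e.\ $\eta(\lambda_1+\lambda_\ell)>-2$, which is where $\eta<2/|\lambda_1+\lambda_d|$ enters. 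This gives asymptotic stability of $\bv_1$. For $k\ge2$, the mean mode along $\bv_1$ has multiplier $(1+\eta\lambda_1)/(1+\eta\lambda_k)>1$, so these points are unstable.

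For bipartite points $\bx_i=s_i\bv_k$ with $s_i=\pm1$ not all equal, let $p=\frac1n\sum_i s_i$. Then $\tilde{\bx}_i=s_i(1+\eta\lambda_k p s_i)\bv_k$, so the normalization factor differs between the two groups. Agents with $s_i p<0$ have $\|\tilde{\bx}_i\|=1-\eta\lambda_k|p|<1$ when $\lambda_k>0$; if $\lambda_k<0$ the other group plays this role. I would exhibit a perturbation supported on that group with zero sum, so that the coupling term vanishes, and get multiplier $1/\|\tilde{\bx}_i\|>1$. When $p=0$, or when such zero-sum perturbations are unavailable because the group is a single agent, I instead use a mode along $\bv_1$ in which the coupling dominates. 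Polygonal equilibria ($\bV\bm{m}=0$, hence $\bm{m}=0$) have $\|\tilde{\bx}_i\|=1$. The perturbation $\bm{\xi}_i=\Pi_{\bx_i}\bm{u}$ with a common $\bm{u}$ gives the linear map $\bm{\zeta}\mapsto \bm{\zeta}+\frac{\eta}{n}\sum_i\Pi_{\bx_i}\bV\bm{\zeta}$ on the induced mean, and I expect it to have an eigenvalue above $1$ because $\sum_i\Pi_{\bx_i}=nI-\sum_i\bx_i\bx_i^\top$ is positive semidefinite and $\lambda_1>0$.

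The global statement is the main obstacle. The plan is to check that the map is a local diffeomorphism: the Jacobian is invertible because $\eta\max(\lambda_1,|\lambda_d|)<1/2$ makes $I+\frac\eta n\bV\mathds{1}\mathds{1}^\top$ invertible. The center–stable manifold theorem for maps then says the set of points attracted to each unstable equilibrium has measure zero. I also need that every trajectory converges to some equilibrium. For this I would look for a Lyapunov-like function such as $\|\bV^{1/2}\bm{m}\|$ or the Rayleigh quotient $\bm{m}^\top\bV\bm{m}$, suitably shifted to be positive via $I+\eta\bV\succ0$. I would show it is nondecreasing along \eqref{eq:update-moja}, which is the discrete Oja ascent property that the step-size bound should guarantee, and then apply LaSalle's principle on the compact state space. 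Proving this monotonicity in discrete time is the step I expect to be hardest. If the monotonicity fails, I would instead argue separately that the disagreement contracts, reducing the dynamics to a single-agent power iteration with $I+\eta\bV$.
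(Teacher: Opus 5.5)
\textbf{Where your plan holds up.} Most of your local analysis is sound, and in places sharper than the paper's.
\begin{itemize}
\item The consensus multipliers $1/(1+\eta\lambda_k)$ and $(1+\eta\lambda_\ell)/(1+\eta\lambda_k)$ are exactly the intrinsic eigenvalues of Lemma~\ref{lem:moja-F-eigenvalues}.
\item Your zero-sum perturbation inside the group with $\|\tilde{\bx}_i\|=1-\eta|\lambda_k p|<1$ correctly uses the two different normalization factors. The proof of Lemma~\ref{lem:moja-bip-consensus-stab} treats them as one.
\item Your quotient map $\bm{\zeta}\mapsto(I+\frac{\eta}{n}P\bV)\bm{\zeta}$, with $P=\sum_i\Pi_{\bx_i}$, proves instability of polygonal points whenever $P\succ0$ (by Sylvester inertia), i.e.\ whenever the $\bx_i$ are not all collinear.
\end{itemize}

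\textbf{The step that fails.} The fallback ``use a mode along $\bv_1$'' cannot be used at bipartite points aligned with $\bv_1$ ($k=1$). There every tangent space is $\bv_1^\perp$, so no perturbation along $\bv_1$ is admissible. The paper has the same hole: for $k=1$ it projects $\bw_i\perp\bv_1$ onto $\bv_1$, and its polygonal lemma assumes $|\zeta^i_1|<1$.

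For $p>0$ this is repairable. Take $\bm{\xi}_i=c_i\bv_\ell$ with $\ell\ge2$, constant on each group. The resulting $2\times2$ matrix $A$ satisfies $\det(A-I)=\eta^2\lambda_1p^2(\lambda_\ell-\lambda_1)/\bigl((1+\eta\lambda_1p)(1-\eta\lambda_1p)\bigr)<0$. Hence $A$ has a real eigenvalue larger than $1$, whatever the group sizes.

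\textbf{The case $p=0$ is a counterexample to the statement.} At $\bx_i=s_i\bv_1$ with $\sum_is_i=0$, the tangent linearization on $(\bv_1^\perp)^n$ is $\bm{\xi}_i\mapsto\bm{\xi}_i+\frac{\eta}{n}\bV\sum_j\bm{\xi}_j$. It has eigenvalue $1$ on all zero-sum modes and $1+\eta\lambda_\ell$, $\ell\ge2$, on mean modes. If $\lambda_\ell<0$ for all $\ell\ge2$, there is no unstable direction at all.

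Concretely, take $n=d=2$, $\bV=\diag(1,-1)$ and $\eta=1/4$, which is admissible. Every antipodal pair $(\bm{u},-\bm{u})$ is an equilibrium. For $\bm{u}=(\cos\varphi,\sin\varphi)$ the transverse multiplier is $1+\eta(\lambda_1\sin^2\varphi+\lambda_2\cos^2\varphi)=1-\tfrac14\cos2\varphi$, which is $<1$ for $|\varphi|<\pi/4$. So the arc of equilibria through $(\bv_1,-\bv_1)$ is normally attracting. Consequently $(\bv_1,-\bv_1)$ is Lyapunov stable, and an open set of initial conditions converges to pairs $\pm\bm{u}$ with $\bm{u}\neq\pm\bv_1$.

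The same example shows two further failures:
\begin{itemize}
\item Your polygonal argument cannot handle collinear configurations: then $P=n\Pi_{\bm{u}}$, and $\bV$ can be negative definite on $\bm{u}^\perp$.
\item The ``almost all'' conclusion fails on an open set, so no center--stable-manifold argument can rescue it.
\end{itemize}

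\textbf{What you need to add.} An extra hypothesis is required, e.g.\ $n$ odd or $\lambda_2>0$. With $n$ odd, $\sum_j\bx_j=0$ rules out collinear configurations and forces $p\neq0$. Under either hypothesis your local analysis goes through.

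\textbf{Global part.} LaSalle only gives convergence to the equilibrium set, which contains continua of polygonal points. The stable-manifold argument for non-isolated fixed points needs convergence to a single point, e.g.\ via a \L{}ojasiewicz argument. Your alternative ``disagreement contracts'' is false, since polygonal equilibria keep their disagreement forever.
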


The theorem is proven through a series of lemmas. 
Let $ \delta_{ih} $ be a delta function ($ \delta_{ih}=1 $ if $ i=h$ and $ \delta_{ih}=0$ otherwise). 
Let  the right-hand side of \eqref{eq:update-moja}  be $ f_i(\bx)  =  \frac{\bx_i +  \frac{\eta}{n} \bV \sum_{j=1}^n   \bx_j }{\| \bx_i + \frac{\eta}{n} \bV\sum_{j=1}^n \bx_j \|}$, and denote $ f(\bx) =\begin{bmatrix} f_1(\bx)^T & \ldots & f_n(\bx)^T\end{bmatrix}^T $ the $ (nd) $-dimensional vector field associated to the stacked state vector $ \bx= \begin{bmatrix} \bx_1^T & \ldots & \bx_n ^T \end{bmatrix}^T$. 
The Jacobian $ J(\bx) = \pde{f(\bx)}{\bx} = \begin{bmatrix}\pde{f_i(\bx)}{\bx_h} \end{bmatrix} $ is composed of a diagonal part and a part that repeats itself on all $ (i, h)$ blocks.

\begin{lemma}
\label{lem:Jac-moja}
The Jacobian $ J(\bx) $ of the system \eqref{eq:update-moja} has the following blocks:
\beq
\begin{split}
\pde{f_i(\bx)}{\bx_h}  =& \frac{\frac{\eta}{n} \bV}{\| \bx_i +  \frac{\eta}{n} \bV \sum_j \bx_j \| } 
- \frac{ ( \bx_i + \frac{\eta}{n} \bV \sum_j \bx_j ) \frac{\eta}{n} \left( \bx_i^T \bV + \frac{\eta}{n} \left( \sum_j \bx_j^T \bV^T (I - \bx_i \bx_i^T )\bV \right) \right) }
{\| \bx_i +  \frac{\eta}{n} \bV \sum_j \bx_j \|^2 } \\
& + \left( \frac{I}{\| \bx_i +  \frac{\eta}{n} \bV \sum_j \bx_j \| }  
- \frac{ ( \bx_i + \frac{\eta}{n} \bV \sum_j \bx_j ) \frac{\eta}{n} \left( \sum_j \bx_j^T \bV^T (I -\frac{\eta}{n} \bx_i \sum_\ell \bx_\ell^T \bV ) \right)}
{\| \bx_i +  \frac{\eta}{n} \bV \sum_j \bx_j \|^2 }\right) \delta_{ih}
\end{split}
\label{eq:Jacob-moja}
\eeq
\end{lemma}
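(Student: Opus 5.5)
The plan is a direct chain-rule computation that exploits the structure already made explicit in \eqref{eq:update-moja2}. Write $f_i(\bx) = \tilde{\bx}_i/\|\tilde{\bx}_i\|$ with $\tilde{\bx}_i = \bx_i + \eta \by$ and $\by = \frac{1}{n}\bV\sum_j \bx_j$. The map $\bx \mapsto f_i(\bx)$ is then the composition of an affine map $\bx \mapsto \tilde{\bx}_i$ with the normalization map $N(\bu) = \bu/\|\bu\|$. The latter is smooth wherever $\bu \neq 0$, so I assume we stay away from points where $\tilde{\bx}_i = 0$. The two ingredients are therefore:
\begin{itemize}
\item the inner derivative $\pde{\tilde{\bx}_i}{\bx_h} = \frac{\eta}{n}\bV + I\,\delta_{ih}$;
\item the outer derivative $DN(\bu) = \frac{I}{\|\bu\|} - \frac{\bu\bu^T}{\|\bu\|^3}$, obtained from $\partial \|\bu\| / \partial \bu = \bu^T/\|\bu\|$.
\end{itemize}

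The first step is to multiply these, giving
\[
\pde{f_i}{\bx_h} = \frac{\frac{\eta}{n}\bV + I\delta_{ih}}{\|\tilde{\bx}_i\|} - \frac{\tilde{\bx}_i\,\tilde{\bx}_i^T\left(\frac{\eta}{n}\bV + I\delta_{ih}\right)}{\|\tilde{\bx}_i\|^3}.
\]
The second step is to split this into the part common to all blocks $(i,h)$, which comes from $\frac{\eta}{n}\bV$, and the diagonal part multiplied by $\delta_{ih}$, which comes from $I$. This already exhibits the block structure claimed before the lemma: one term repeated in every block plus a block-diagonal correction. The third step is to expand the row vectors $\tilde{\bx}_i^T\frac{\eta}{n}\bV$ and $\tilde{\bx}_i^T$ using $\tilde{\bx}_i = \bx_i + \frac{\eta}{n}\bV\sum_j\bx_j$. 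This produces the terms $\bx_i^T\bV$ and $\frac{\eta}{n}\sum_j \bx_j^T\bV^T\bV$ in the common part, and the analogous sums in the diagonal part.

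The final step is to rewrite these expressions in the grouping displayed in \eqref{eq:Jacob-moja}, absorbing one factor of $\|\tilde{\bx}_i\|$ so that the denominators read $\|\tilde{\bx}_i\|$ and $\|\tilde{\bx}_i\|^2$. The rewriting uses $\bx_i^T\bx_i = 1$, since the state lives on $(\mathbb{S}^{d-1})^n$. Where needed, it also uses the tangent projector $\Pi_{\bx_i} = I - \bx_i\bx_i^T$: only tangent perturbations are relevant, and the Jacobian is to be read as acting on $\Pi_{\bx_i}$-directions. This is how the factors $(I - \bx_i\bx_i^T)$ and $(I - \frac{\eta}{n}\bx_i\sum_\ell\bx_\ell^T\bV)$ should enter.

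I expect this last step, matching the algebraic arrangement of \eqref{eq:Jacob-moja}, to be the main obstacle, since it is pure bookkeeping with several cross terms. The calculus itself is routine. I would handle it by checking the identity term by term after multiplying both sides by $\|\tilde{\bx}_i\|^2$. I would also verify it at a consensus point $\bx_i = \bv_k$, where $\bV\bx_j = \lambda_k \bx_j$ simplifies every term; this serves as a sanity check on signs and on the placement of the normalization factors.
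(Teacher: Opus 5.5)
Your chain-rule computation is correct. It yields the exact Jacobian
\[
\pde{f_i}{\bx_h} = \frac{M_{ih}}{\|\tilde{\bx}_i\|} - \frac{\tilde{\bx}_i\tilde{\bx}_i^T M_{ih}}{\|\tilde{\bx}_i\|^3}, \qquad M_{ih} := \tfrac{\eta}{n}\bV + \delta_{ih} I .
\]
The step that fails is the last one. No bookkeeping will turn this into \eqref{eq:Jacob-moja}, because \eqref{eq:Jacob-moja} is not an exact identity. If you multiply both sides by $\|\tilde{\bx}_i\|^2$, as you plan, equality would require (using symmetry of $\bV$ and $\by=\frac{1}{n}\bV\sum_j\bx_j$)
\[
\frac{\tilde{\bx}_i^T M_{ih}}{\|\tilde{\bx}_i\|} = \frac{\eta}{n}\Bigl(\bx_i^T\bV + \eta\,\by^T(I-\bx_i\bx_i^T)\bV\Bigr) + \delta_{ih}\,\eta\,\by^T\bigl(I-\eta\,\bx_i\by^T\bigr).
\]
The left side carries the factor $(1+2\eta\,\bx_i^T\by+\eta^2\|\by\|^2)^{-1/2}$, while the right side is polynomial in $\eta$. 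The right side also lacks the row $\delta_{ih}\bx_i^T/\|\tilde{\bx}_i\|$ altogether; apart from that row, the two sides agree only modulo $O(\eta^3)$. The consensus check you propose would expose this. At $\bx_i=\bv_k$ the exact diagonal block is $(I-\bv_k\bv_k^T)/|1+\eta\lambda_k|$. The displayed formula instead gives $\bigl(\sign(1+\eta\lambda_k)I-\eta\lambda_k(1-\eta\lambda_k)\bv_k\bv_k^T\bigr)/(1+\eta\lambda_k)$, and the two differ along $\bv_k$.

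The missing idea is that the formula comes from a truncated series, and that is how the paper proves it. The paper writes $\|\tilde{\bx}_i\| = (1+2\eta\,\bx_i^T\by+\eta^2\|\by\|^2)^{1/2}$, so $\bx_i^T\bx_i=1$ is imposed \emph{before} differentiating. It then expands to second order in $\eta$ by the binomial series, differentiates the truncation, and inserts the result in the quotient rule. Consequently the factors $(I-\bx_i\bx_i^T)$ and $(I-\frac{\eta}{n}\bx_i\sum_\ell\bx_\ell^T\bV)$ do not come from a tangent projection, as you suggest. They come from $1/\|\tilde{\bx}_i\| = 1-\eta\,\bx_i^T\by+O(\eta^2)$ multiplying $\frac{\eta}{n}(\bx_i+\eta\by)^T\bV$ and $\eta\by^T$ respectively. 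The tangent-space reading accounts only for the absent $\delta_{ih}\bx_i^T$ row, which annihilates $T_{\bx_i}\mathbb{S}^{d-1}$. It does not remove the $O(\eta^3)$ discrepancy, which persists on tangent vectors at generic points.

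To repair your route, start from your exact formula and make three changes:
\begin{enumerate}
\item replace $1/\|\tilde{\bx}_i\|$ in the row $\tilde{\bx}_i^T M_{ih}/\|\tilde{\bx}_i\|$ by $1-\eta\,\bx_i^T\by$;
\item discard terms of order $\eta^3$;
\item drop $\delta_{ih}\bx_i^T/\|\tilde{\bx}_i\|$.
\end{enumerate}
This recovers \eqref{eq:Jacob-moja} as a second-order-in-$\eta$ approximation valid on tangent directions. That is all the paper's proof, with its ``$\approx$'', actually establishes. At a consensus point the exact and approximate Jacobians coincide on tangent vectors, so the stability conclusions drawn via Remark~\ref{rem:two-classes} are unaffected.
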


\proof
Let us first express the series expansion of the normed term:
\[
\begin{split}
\| \bx_i +  \frac{\eta}{n} \bV \sum_j \bx_j \| = & \left(  (  \bx_i +  \eta \by )^T (  \bx_i + \eta \by) \right)^{\frac{1}{2}}  
= \left( 1 + 2  \eta \bx_i ^T \by + \eta^2 \| \by \|^2   \right)^{\frac{1}{2}} \\
\approx & 1 + \frac{1}{2} \left( 2  \eta \bx_i ^T \by + \eta^2 \| \by \|^2  \right) - \frac{1}{2}  \eta^2 ( \bx_i^T \by ) ^2 + O(\eta^3)  \\
= & 1 + \frac{\eta}{n} \bx_i^T \bV \sum_j \bx_j + \frac{\eta^2}{2 n^2} \left( \sum_j \bx_j^T \bV^T \bV \sum_j \bx_j - ( \bx_i^T \bV \sum_j \bx_j)^2 \right)  + O(\eta^3) 
\end{split}
\]
where we have used the binomial series expansion ($ (1 + \xi)^{\frac{1}{2}}\approx 1 + \frac{\xi}{2} - \frac{ \xi^2}{8}  + O(\xi^3)  $), disregarding terms of order higher than 2 in $ \eta$.
Differentiating this term
\[
\begin{split}
\pde{\| \bx_i +  \frac{\eta}{n} \bV \sum_j \bx_j \|}{\bx_h } 
 = & \frac{\eta}{n} \bx_i^T \bV  + \frac{\eta^2}{n^2}  \sum_j \bx_j^T \bV^T ( I - \bx_i \bx_i^T ) \bV   \\
& + \left( \frac{\eta}{n} \sum_j \bx_j^T \bV^T - \frac{\eta^2}{n^2}  \sum_j \bx_j^T \bV^T \bx_i \sum_\ell \bx_\ell^T \bV \right) \delta_{ih}
\end{split}
\]
Inserting this expression into
\[
\begin{split}
\pde{f_i(\bx)}{\bx_h}  =& \frac{(I \delta_{ih} + \frac{\eta}{n} \bV)  \| \bx_i +  \frac{\eta}{n} \bV \sum_j \bx_j \| - (  \bx_i +  \frac{\eta}{n} \bV \sum_j \bx_j ) \pde{\| \bx_i +  \frac{\eta}{n} \bV \sum_j \bx_j \|}{\bx_h }  }
{\| \bx_i +  \frac{\eta}{n} \bV \sum_j \bx_j \|^2}
\end{split}
\]
the result follows.
\qed

\begin{lemma}
\label{lem:moja-F-eigenvalues}
For the system \eqref{eq:update-moja}, under Assumption~\ref{ass:V-sym}, at a consensus equilibrium $ \bx_i=\bv_k $ $\forall \, i$ the Jacobian $ J(\bx) $ has the following 3 classes of eigenvalues:
\benu
\item if $ 1+ \eta \lambda_k >0$, $ \frac{1-\eta \lambda_k + \eta^2 \lambda_k^2 }{1+ \eta \lambda_k } $ of multiplicity $ n$, while if $ 1+ \eta \lambda_k <0 $, $ \frac{-1-3 \eta \lambda_k + \eta^2 \lambda_k^2 }{1+ \eta \lambda_k } $ of multiplicity $ 1$ and $ \frac{-1-\eta \lambda_k + \eta^2 \lambda_k^2 }{1+ \eta \lambda_k } $ of multiplicity $ n-1$.
\item $ \frac{1+ \eta \lambda_h }{| 1+ \eta \lambda_k |} $ of multiplicity 1, $ h=1, \ldots, k-1, k+1, \ldots, d$. 
\item $  \frac{1 }{| 1+ \eta \lambda_k |} $ of multiplicity $ nd-d-n+1$.
\eenu
\end{lemma}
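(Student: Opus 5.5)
The plan is to evaluate the Jacobian blocks of Lemma~\ref{lem:Jac-moja} at the consensus point $\bx_i=\bv_k$ for all $i$. I will then show that $J$ has a Kronecker structure that reduces the $nd\times nd$ eigenvalue problem to two $d\times d$ problems, both of which are diagonal in the eigenbasis $\bv_1,\ldots,\bv_d$ of $\bV$.

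\textbf{Step 1 (evaluation at consensus).} At consensus, $\sum_j \bx_j = n\bv_k$ and $\bV\bv_k=\lambda_k\bv_k$. Hence
\[
\bx_i+\tfrac{\eta}{n}\bV\textstyle\sum_j\bx_j=(1+\eta\lambda_k)\bv_k ,
\]
whose norm is $|1+\eta\lambda_k|$. Substituting into \eqref{eq:Jacob-moja} and using $\bV=\bV^T$ and $(I-\bv_k\bv_k^T)\bV\bv_k=0$, every block becomes a combination of $I$, $\bV$ and $\bv_k\bv_k^T$. Concretely, I expect
\[
J = I_n\otimes D + \tfrac{1}{n}\mathds{1}\mathds{1}^T\otimes C .
\]
Here $D$ collects the $\delta_{ih}$ terms:
\[
D=\frac{I}{|1+\eta\lambda_k|}-\frac{(1+\eta\lambda_k)\,\eta\lambda_k(1-\eta\lambda_k)\,\bv_k\bv_k^T}{(1+\eta\lambda_k)^2} .
\]
$C$ collects the terms independent of $h$, namely $\eta\bV/|1+\eta\lambda_k|$ minus a multiple of $\bv_k\bv_k^T$; the second-order term there vanishes because of the projector. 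Since $I$, $\bV$ and $\bv_k\bv_k^T$ commute and share the eigenbasis $\{\bv_h\}$, both $D$ and $C$ are diagonal in that basis.

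\textbf{Step 2 (spectral decomposition).} The matrix $\frac1n\mathds{1}\mathds{1}^T$ has eigenvalue $1$ on $\mathds{1}$ and eigenvalue $0$ on the $(n-1)$-dimensional space $\mathds{1}^\perp$. Therefore the spectrum of $J$ consists of the eigenvalues of $D+C$, each counted once, together with the eigenvalues of $D$, each counted $n-1$ times. I then read off the entries direction by direction.

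For $h\neq k$, the $\bv_k\bv_k^T$ parts vanish. This gives $1/|1+\eta\lambda_k|$ from $D$, which yields class 3 with multiplicity $(n-1)(d-1)=nd-d-n+1$. It also gives $(1+\eta\lambda_h)/|1+\eta\lambda_k|$ from $D+C$, which yields class 2 with multiplicity $d-1$.

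In the direction $\bv_k$, one scalar comes from $D$ with multiplicity $n-1$ and one from $D+C$ with multiplicity $1$. This is class 1.

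\textbf{Step 3 (the $\bv_k$ direction and the sign split).} This step is the main obstacle: the careful scalar algebra along $\bv_k$. The coefficients mix $|1+\eta\lambda_k|$ in the first terms of \eqref{eq:Jacob-moja} with $(1+\eta\lambda_k)$ in the numerators, and this produces the case distinction. Write $|1+\eta\lambda_k|=s(1+\eta\lambda_k)$ with $s=\pm1$ and put everything over the common denominator $1+\eta\lambda_k$.

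For $s=+1$, I expect the $D$-scalar and the $(D+C)$-scalar to coincide and to equal $\frac{1-\eta\lambda_k+\eta^2\lambda_k^2}{1+\eta\lambda_k}$, which gives multiplicity $n$. For $s=-1$, the sign flip of the $I$ and $\bV$ terms relative to the $\bv_k\bv_k^T$ terms should separate the two scalars. The $(n-1)$-fold one should be $\frac{-1-\eta\lambda_k+\eta^2\lambda_k^2}{1+\eta\lambda_k}$, and the simple one should be $\frac{-1-3\eta\lambda_k+\eta^2\lambda_k^2}{1+\eta\lambda_k}$; the latter picks up the extra $-2\eta\lambda_k$ from the $\mathds{1}$-mode contribution of $C$.

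As a final check, the multiplicities sum to $n+(d-1)+(nd-d-n+1)=nd$.
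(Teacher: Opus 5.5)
Your proposal is correct and is essentially the paper's proof: evaluating \eqref{eq:Jacob-moja} at $\bx_i=\bv_k$ yields exactly the Kronecker form \eqref{eq:JacobF-moja}, and your scalars along $\bv_k$ and along $\bv_h$, $h\neq k$, reproduce all three classes, including the extra $-2\eta\lambda_k/(1+\eta\lambda_k)$ carried by the $\mathds{1}$-mode when $1+\eta\lambda_k<0$. The only difference is packaging: the paper exhibits eigenvectors class by class, while your orthogonal change of basis diagonalizing $\frac{1}{n}\mathds{1}\mathds{1}^T$ reduces $J$ to the block-diagonal matrix with blocks $D+C, D, \ldots, D$, which certifies completeness of the spectrum and the multiplicities in one step.
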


\proof
When computing the Jacobian \eqref{eq:Jacob-moja} at the consensus $ \bx_i=\bv_k $ $\forall \, i$, we get
\beq
 \begin{split}
& J(\bv_k)  = \left. \pde{f(\bx)}{\bx} \right|_{\bx_i = \bv_k} \\
 & = \mathds{1} \mathds{1}^T \otimes \left(  \frac{\sign(1+ \eta \lambda_k) \frac{\eta}{n} \bV - \frac{\eta}{n} \lambda_k \bv_k \bv_k^T }{1 + \eta \lambda_k }  \right) 
 + I \otimes \left(   \frac{\sign(1+ \eta \lambda_k)  I - \eta \lambda_k (1-\eta \lambda_k ) \bv_k \bv_k^T }{1 + \eta \lambda_k } \right).
 \end{split}
 \label{eq:JacobF-moja}
 \eeq
 where we have used $ \bv_k^T ( I - \bv_k \bv_k^T) =0$.
 The first term represents a factor present in all entries of $J(\bv_k) $, while the second one is present only on the diagonal.
 For $ J(\bv_k) $ there are 3 classes of eigenvectors:
 \benu
 \item The eigenvectors of the first class depend on the sign of $ 1+ \eta \lambda_k $. If $ 1+ \eta \lambda_k >0$, then choose $ \bp = [ 0 \ldots 0 \; \bv_k \; 0 \ldots 0 ] $ and compute the two components of $ J(\bv_k) $:
 \[
 \begin{cases}
&  \frac{\sign(1+ \eta \lambda_k) \frac{\eta}{n} \bV - \frac{\eta}{n} \lambda_k \bv_k \bv_k^T }{1 + \eta \lambda_k } \bv_k 
= \frac{\frac{\eta}{n} (1 -1)}{1+ \eta \lambda_k } \bv_k  =0 \\ 
& 
 \frac{\sign(1+ \eta \lambda_k)  I - \eta \lambda_k (1-\eta \lambda_k ) \bv_k \bv_k^T }{1 + \eta \lambda_k } \bv_k =
  \frac{ 1 - \eta \lambda_k ( 1- \eta \lambda_k ) }{1+ \eta \lambda_k } \bv_k
  \end{cases}
\]
meaning that 
\[
J(\bv_k ) \bp = \frac{ 1 - \eta \lambda_k  + \eta^2 \lambda_k^2  }{1+ \eta \lambda_k } \bp
\]
i.e., $ \frac{ 1 - \eta \lambda_k  + \eta^2 \lambda_k^2  }{1+ \eta \lambda_k }  $ is an eigenvalue of $ J(\bv_k)$ of multiplicity $ n$ (as there are $n$ possible slots for $ \bv_k $ in $ \bp$).
If instead $ 1+ \eta \lambda_k <0 $, then choose the following $ n$ eigenvectors
\[
\bp = \begin{cases} \bp_1 =  [\bv_k \; \bv_k \ldots \bv_k ]  \\
 \bp_\ell = [\phi_1^\ell \bv_k \;  \phi_2^\ell \bv_k \ldots \phi_n^\ell \bv_k ] \; \text{s.t.} \; \sum_i \phi_i^\ell =0 , \; \ell=2, \ldots, n
 \end{cases}
 \]
 In correspondence of $ \bp_1 $, one gets 
 \[
 \begin{cases}
&  \frac{\sign(1+ \eta \lambda_k) \frac{\eta}{n} \bV - \frac{\eta}{n} \lambda_k \bv_k \bv_k^T }{1 + \eta \lambda_k } \bv_k 
=\frac{\eta}{n}  \frac{ - 2 \lambda_k }{1+ \eta \lambda_k } \bv_k   \\ 
& 
 \frac{\sign(1+ \eta \lambda_k)  I - \eta \lambda_k (1-\eta \lambda_k ) \bv_k \bv_k^T }{1 + \eta \lambda_k } \bv_k =
  \frac{ -1 - \eta \lambda_k + \eta^2 \lambda_k^2 }{1+ \eta \lambda_k } \bv_k
  \end{cases}
\]
hence 
\[
J(\bv_k) \bp_1 = \eta  \frac{ - 2 \lambda_k }{1+ \eta \lambda_k } \bp_1 +  \frac{ -1 - \eta \lambda_k + \eta^2 \lambda_k^2  }{1+ \eta \lambda_k } \bp_1 = \frac{-1-3 \eta \lambda_k + \eta^2 \lambda_k^2 }{1+ \eta \lambda_k }  \bp_1 
\]
while in correspondence of $ \bp_\ell $ one gets
\[
J(\bv_k) \bp_\ell = \begin{bmatrix} \sum_i \phi_i^\ell \left( \frac{\eta}{n}   \frac{ - 2 \lambda_k }{1+ \eta \lambda_k } \right) \bv_k +  \frac{ -1 - \eta \lambda_k + \eta^2 \lambda_k^2  }{1+ \eta \lambda_k } \phi_1 \bv_k \\
\vdots \\
\sum_i \phi_i^\ell \left( \frac{\eta}{n}   \frac{ - 2 \lambda_k }{1+ \eta \lambda_k } \right) \bv_k +  \frac{ -1 - \eta \lambda_k + \eta^2 \lambda_k^2  }{1+ \eta \lambda_k } \phi_n \bv_k
\end{bmatrix} =  \frac{ -1 - \eta \lambda_k + \eta^2 \lambda_k^2  }{1+ \eta \lambda_k } \bp_\ell 
\]
i.e., $  \frac{ -1 - \eta \lambda_k + \eta^2 \lambda_k^2  }{1+ \eta \lambda_k } $ is an eigenvalue of multiplicity $ n-1$. 

\item For the second class of eigenvectors choosing $ \bq_h = [ \bv_h\, \ldots \, \bv_h] $ for $ h\neq k$, leads to 
\[
 \begin{cases}
&  \frac{\sign(1+ \eta \lambda_k) \frac{\eta}{n} \bV - \frac{\eta}{n} \lambda_k \bv_k \bv_k^T }{1 + \eta \lambda_k } \bv_h 
=\frac{\eta}{n}  \frac{  \lambda_h }{|1+ \eta \lambda_k |} \bv_h  \\ 
& 
 \frac{\sign(1+ \eta \lambda_k)  I - \eta \lambda_k (1-\eta \lambda_k ) \bv_k \bv_k^T }{1 + \eta \lambda_k } \bv_h =
  \frac{ 1  }{|1+ \eta \lambda_k |} \bv_h
  \end{cases}
\]
and hence to 
\[
J(\bv_k) \bq_h = \frac{1 + \eta \lambda_h }{|1+ \eta \lambda_k |} \bq_h 
\]
i.e., $ \frac{1 + \eta \lambda_h }{|1+ \eta \lambda_k |} $ is an eigenvalue of $ J(\bv_k)$. There are $ d-1 $ such eigenvalues. 
\item The remaining $ nd-d-n+1 $ eigenvectors are assembled by considering vectors $ \br =\begin{bmatrix} (\bz^1)^T & (\bz^2)^T & \ldots  & (\bz^n)^T \end{bmatrix}^T$ s.t. $ \bv_k^T \bz^i =0 $ for all $ i =1, \ldots, n $ and all $ h=1, \ldots, k-1, k+1, \ldots, d $ and $ \bq_h^T \br =0 $. Since the number of such constraints is $ d-1+n$, there exist $ nd- n-d+1 $ such vectors $ \bm{r}$, and they can always be chosen so that  $ \sum_{i=1}^n  \bz^i =0 $.
Computing,
\[
 \begin{cases}
&  \frac{\sign(1+ \eta \lambda_k) \frac{\eta}{n} \bV - \frac{\eta}{n} \lambda_k \bv_k \bv_k^T }{1 + \eta \lambda_k } \bz^i 
= \frac{\eta}{n} \frac{ \bV }{|1+ \eta \lambda_k |} \bz^i  \\ 
& 
 \frac{\sign(1+ \eta \lambda_k)  I - \eta \lambda_k (1-\eta \lambda_k ) \bv_k \bv_k^T }{1 + \eta \lambda_k } \bz^i =
  \frac{ 1  }{|1+ \eta \lambda_k |} \bz^i
  \end{cases}
\]
hence
\[
J(\bv_k) \br =   \frac{ 1  }{|1+ \eta \lambda_k |} \begin{bmatrix}  \frac{\eta}{n} \bV \sum_i \bz_i + \bz_1 \\
\vdots \\
 \frac{\eta}{n} \bV \sum_i \bz_i + \bz_n
\end{bmatrix} =   \frac{ 1  }{|1+ \eta \lambda_k |} \br
\]
\end{enumerate}
\qed

\begin{remark}
\label{rem:two-classes}
The discrete-time system \eqref{eq:update} can be seen as a map from the manifold $ (\mathbb{S}^{d-1})^n $ to itself. 
Hence its tangent space is $ T_{\bx}  (\mathbb{S}^{d-1})^n $, of dimension $ n (d-1)$, while the Jacobian $ J(\bx)$ computed above is in dimension $ nd $. 
Such Jacobian is what is sometimes called the ``extrinsic Jacobian'' living in the ambient manifold $ \mathbb{R}^{ nd \times nd}$ rather than in $ T_{\bx}  (\mathbb{S}^{d-1})^n $. 
What decides stability is however the ``intrinsic Jacobian'' living on  $ T_{\bx}  (\mathbb{S}^{d-1})^n $. For Lemma~\ref{lem:moja-F-eigenvalues}, if we look at the eigenvectors, we can easily deduce that the first class of eigenvectors (of cardinality $n$) does not belong to $ T_{\bx}  (\mathbb{S}^{d-1})^n $, while the other two classes are always in $ T_{\bx}  (\mathbb{S}^{d-1})^n $, hence it is only these last two classes that decide the stability character of the equilibria.
\end{remark}

\begin{lemma}
\label{lem:moja-consensus-stab}
For the system \eqref{eq:update-moja}, under Assumption~\ref{ass:V-sym}, and, if $ - \lambda_d > \lambda_1 $,  for $ \eta< \frac{2}{ |\lambda_1 + \lambda_d |} $, then the principal consensus equilibrium point, $ \bx_i=\bv_1 $ $\forall \, i$, is asymptotically stable.
For $ k=2, \ldots, d $, the consensus equilibria $ \bx_i=\bv_k $ $\forall \, i$ are all unstable. %  $ \forall \, \eta>0$.
\end{lemma}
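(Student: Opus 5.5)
We will argue by linearization, using the spectrum of the extrinsic Jacobian at consensus computed in Lemma~\ref{lem:moja-F-eigenvalues}. By Remark~\ref{rem:two-classes}, only the second and third classes of eigenvalues matter for stability, since their eigenvectors span the tangent space $T_{\bx}(\mathbb{S}^{d-1})^n$. The first class, with eigenvectors $[0\ldots\bv_k\ldots 0]$ or combinations of $\bv_k$ in each slot, is normal to the product of spheres and is discarded. The map \eqref{eq:update-moja} restricted to $(\mathbb{S}^{d-1})^n$ is smooth near the equilibrium. Hence local asymptotic stability follows if all intrinsic eigenvalues have modulus $<1$, and instability follows if at least one has modulus $>1$.

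\textbf{Principal consensus, $k=1$.} First we check that $1+\eta\lambda_1>0$, which holds since $\lambda_1>0$. The third-class eigenvalue is $1/(1+\eta\lambda_1)\in(0,1)$. The second-class eigenvalues are $\mu_h=(1+\eta\lambda_h)/(1+\eta\lambda_1)$ for $h=2,\ldots,d$. Because $\lambda_h<\lambda_1$, we get $\mu_h<1$, so we only need $\mu_h>-1$, that is, $2+\eta(\lambda_1+\lambda_h)>0$. The binding case is $h=d$. If $\lambda_1+\lambda_d\ge 0$, this holds for every $\eta>0$. If $\lambda_1+\lambda_d<0$, which is exactly the case $-\lambda_d>\lambda_1$, it is equivalent to $\eta<2/|\lambda_1+\lambda_d|$, the stated hypothesis. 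Thus every tangent eigenvalue lies in $(-1,1)$, and asymptotic stability follows from the discrete-time linearization principle on the manifold.

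\textbf{Consensus $k\ge 2$.} We exhibit an intrinsic eigenvalue of modulus $>1$, namely the second-class eigenvalue with $h=1$: $\mu_1=(1+\eta\lambda_1)/|1+\eta\lambda_k|$. We split by the sign of $1+\eta\lambda_k$. If $1+\eta\lambda_k>0$, then $\mu_1>1$ because $\lambda_1>\lambda_k$. If $1+\eta\lambda_k<0$, then $|1+\eta\lambda_k|<1$, so the third-class eigenvalue $1/|1+\eta\lambda_k|$ already exceeds $1$, whenever $nd-d-n+1>0$, i.e.\ $n\ge2,d\ge2$. If that class is empty, one falls back on $\mu_1>1/|1+\eta\lambda_k|>1$. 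The degenerate case $1+\eta\lambda_k=0$ cannot be an equilibrium computation, since the normalization is undefined there. In either case $k\ge2$ is unstable.

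\textbf{Main obstacle.} The step requiring the most care is justifying that stability is decided only by the tangent eigenvalues. This amounts to checking that the normal directions, the first class, are invariant, consistent with Remark~\ref{rem:two-classes}, and that the tangent eigenvectors $\bq_h$ and $\br$ indeed span $T_{\bx}(\mathbb{S}^{d-1})^n$. The count $(d-1)+(nd-d-n+1)=n(d-1)$ confirms this. For instability we invoke the standard result that a fixed point of a $C^1$ map with a Jacobian eigenvalue of modulus $>1$ is unstable.
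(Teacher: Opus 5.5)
\textbf{The $k=1$ part is fine; the $k\ge 2$ instability argument has a false step.} Your treatment of the principal consensus is correct and coincides with the paper's argument. The third-class eigenvalue is $1/(1+\eta\lambda_1)<1$, the second-class eigenvalues are automatically $<1$, and they are $>-1$ iff $\eta<2/|\lambda_1+\lambda_d|$ when $-\lambda_d>\lambda_1$.

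The problem is in the case $1+\eta\lambda_k<0$, where you assert $|1+\eta\lambda_k|<1$. That holds only when $-2<\eta\lambda_k<-1$. In general, $1+\eta\lambda_k<0$ allows $|1+\eta\lambda_k|$ to be arbitrarily large. It fails even inside the lemma's hypotheses. Take $\lambda_1=1$, $\lambda_d=-3$, $\eta=0.9<1=2/|\lambda_1+\lambda_d|$. Then $1+\eta\lambda_d=-1.7$, so the third-class eigenvalue at $\bv_d$ is $1/1.7<1$. Your fallback $\mu_1>1/|1+\eta\lambda_k|>1$ breaks for the same reason.

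\textbf{The step-size hypothesis is essential here.} Your $k\ge 2$ argument never uses it, yet the claim is false without it. Take $d=2$, eigenvalues $1$ and $-10$, and $\eta=1$, which violates $\eta<2/9$. The tangent eigenvalues at $\bv_d$ are $2/9$ and $1/9$, both inside the unit disk. So the consensus at $\bv_d$ is then asymptotically stable: the ``power method'' on $I+\eta F$ selects $\bv_d$ because $|1+\eta\lambda_d|>1+\eta\lambda_1$.

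\textbf{How to fix it.} In the case $1+\eta\lambda_k<0$, use $\mu_1$ as the paper does and invoke the step-size bound. Here $\mu_1=(1+\eta\lambda_1)/(-1-\eta\lambda_k)>1$ iff $\eta(\lambda_1+\lambda_k)>-2$.
\begin{itemize}
\item If $\lambda_1+\lambda_k\ge 0$, this is automatic.
\item Otherwise $-\lambda_d\ge-\lambda_k>\lambda_1$, so the hypothesis $\eta<2/|\lambda_1+\lambda_d|$ is in force. Then $\eta(\lambda_1+\lambda_k)\ge\eta(\lambda_1+\lambda_d)>-2$.
\end{itemize}
With this replacement the proof is complete. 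Your remark on the degenerate case $1+\eta\lambda_k=0$, where the map is undefined, is a point the paper omits.
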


\proof
Consider the  principal consensus equilibrium point $ \bv_1$ and the eigenvalues of $ J(\bv_1)$ computed in Lemma~\ref{lem:moja-F-eigenvalues}. 
Notice from Remark~\ref{rem:two-classes} that only the second and third class of eigenvalues matter for the stability.
Under Assumption~\ref{ass:V-sym}, $ 1 + \eta \lambda_1 >1$, hence the 3rd class of eigenvalues is inside the unit disk. When $ | \lambda_d | < \lambda_1 $, then also the second class is inside the disk for all $ h=2, \ldots, d$, regardless of the sign of $ \lambda_h$, while when $ - \lambda_d >\lambda_1 $ we need to have $  \frac{1+ \eta \lambda_d }{ 1+ \eta \lambda_1 } >-1 $, i.e., $ \eta <  \frac{-2}{\lambda_1 + \lambda_d } = \frac{2}{| \lambda_1 + \lambda_d |} $. 
% Concerning the first class of eigenvectors it must be $ -1 < \frac{1 -\eta \lambda_1 + \eta^2 \lambda_1^2 }{1+ \eta \lambda_1 } < 1 $, i.e.,  $ 1- \eta \lambda_k + \eta^2 \lambda_k^2 > -1 - \eta \lambda_k $ whch is always true and $ 1- \eta \lambda_k + \eta^2 \lambda_k^2 < 1+ \eta \lambda_1$, which is $\eta \lambda_1 ( \eta \lambda_1 -2 ) <0$,  i.e. $ \eta <  \frac{2}{\lambda_1 }$. 
% Summarizing, sufficient conditions for stability of the principal consensus equilibrium point are 
% \[
% \begin{cases}
% \eta< \min\left( \frac{2}{\lambda_1 } , \, \frac{-2}{\lambda_1 + \lambda_d } \right) &  \text{when }\;  \lambda_d +\lambda_1 <0 \\
% \eta< \frac{2}{\lambda_1 } & \text{when }\;  \lambda_d +\lambda_1 >0 
% \end{cases}
% \]
% For later reference, notice that when $ \lambda_d <0 $
% \beqa
% \frac{1}{\lambda_1 } & > \frac{1}{| \lambda_1 + \lambda_d|} \quad \Longleftrightarrow \quad \lambda_d < - 2 \lambda_1 \label{eq:ineq_lambda_1} \\ 
% \frac{1}{\lambda_1 } & < \frac{1}{| \lambda_1 + \lambda_d|} \quad \Longleftrightarrow \quad \lambda_d > - 2 \lambda_1 \label{eq:ineq_lambda_2}
% \eeqa

When $ k>1$ (consensus at non-principal eigenvectors), then one of the eigenvalues of $ J(\bv_k) $ is $ \frac{1 + \eta \lambda_1 }{|1+ \eta \lambda_k |}  $ which is always $ >1 $. This is obvious when $ \lambda_k \geq 0 $, as $ \lambda_k< \lambda_1$. What needs to be checked is the case of $ \lambda_k $ negative with $ - \lambda_k \gg \lambda_1 $. 

For this case we consider the two situations
\benu
\item If $ 1 + \eta \lambda_k >0 $, then we have $ \frac{1 + \eta \lambda_1 }{ 1+ \eta \lambda_k }>1$ because this is equivalent to $ 1 + \eta \lambda_1 > 1+ \eta \lambda_k $. 

\item If $ 1 + \eta \lambda_k <0 $, then we must show that it is $ - \frac{1 + \eta \lambda_1 }{ 1+ \eta \lambda_k }>1$, i.e., that $ 1 + \eta \lambda_1 > - 1 - \eta \lambda_k$, or $ \eta (\lambda_1 + \lambda_k) >-2$. As $ \lambda_1 + \lambda_k<0$, the expression holds if it holds for the largest possible value of $ \eta $.
% If \eqref{eq:ineq_lambda_1} holds, then we can take $ \eta = \frac{2}{| \lambda_1 + \lambda_d|}$ and we obtain the following inequality $ - \frac{2(\lambda_1 + \lambda_k)}{\lambda_1 + \lambda_d}>-2 $, or, equivalently, $ \lambda_1 + \lambda_k > \lambda_1 + \lambda_d$ which is always true. 
% When instead \eqref{eq:ineq_lambda_2} holds, then it must be $\frac{2}{\lambda_1} ( \lambda_1 + \lambda_k) > -2 $ or $ \lambda_k > -2 \lambda_1 $ which is always true since, from \eqref{eq:ineq_lambda_2}, $ -2 \lambda_1 < \lambda_d \leq \lambda_k $. 
Taking $ \eta = \frac{2}{| \lambda_1 + \lambda_d|}$, we obtain the following inequality $ - \frac{2(\lambda_1 + \lambda_k)}{\lambda_1 + \lambda_d}>-2 $, or, equivalently, $ \lambda_1 + \lambda_k > \lambda_1 + \lambda_d$ which is always true. 
\eenu
In both situations $ J(\bv_k) $ has an eigenvalue outside the unit disk. 
\qed

At a bipartite consensus point $ \bx_i = \pm \bv_k $, split the $n$ tokens into two sets $ \mathcal{V}_1 $ and $ \mathcal{V}_2 $, $ \mathcal{V}_1\cup \mathcal{V}_2 = \{ 1, \ldots, n \}$,  according to whether $ \bx_i =\bv_k $ or $ \bx_i =- \bv_k$ at equilibrium. Assume that $ n_1 = | \mathcal{V}_1|  $ tokens are equal to $ \bv_k $ and $ n_2 = | \mathcal{V}_2 | $ equal to $ -\bv_k$, with $ n_1 + n_2 =n$. 
Let $ \nu=\frac{n_1-n_2}{n}$, where we consider w.l.o.g. $ n_1 \geq n_2 $ (hence $ \nu\geq 0$). 

%---------- NONSENSE ????? ----------------
%
%\bite
%\item case $ i \in \mathcal{V}_1 $:
%\[
% \begin{split}
%J(\bv_k)   = & \mathds{1} \mathds{1}^T \otimes \left(  \frac{\sign(1+ \eta \nu \lambda_k) \frac{\eta}{n} \bV - \frac{\eta}{n} \lambda_k \bv_k \bv_k^T }{1 + \eta \nu \lambda_k }  \right) \\
%& - I \otimes \left(   \frac{\sign(1+ \eta \nu \lambda_k)  I - \eta \nu \lambda_k (1-\eta \nu \lambda_k ) \bv_k \bv_k^T }{1 + \eta \nu \lambda_k } \right).
% \end{split}
% \]
%\item case $ i \in \mathcal{V}_2 $:
%\[
% \begin{split}
%J(\bv_k)   = & \mathds{1} \mathds{1}^T \otimes \left(  \frac{\sign(-1+ \eta \nu \lambda_k) \frac{\eta}{n} \bV + \frac{\eta}{n} \lambda_k \bv_k \bv_k^T }{-1 + \eta \nu \lambda_k }  \right) \\
%& - I \otimes \left(   \frac{\sign(-1+ \eta \nu \lambda_k)  I - \eta \nu \lambda_k (1+\eta \nu \lambda_k ) \bv_k \bv_k^T }{-1 + \eta \nu \lambda_k } \right).
% \end{split}
% \]
% \eite
 
\begin{lemma}
\label{lem:moja-bip-consensus-stab}
For the system \eqref{eq:update-moja}, under Assumption~\ref{ass:V-sym}, the bipartite consensus equilibria, $ \bx_i=\pm \bv_k $ $\forall \, i$ are all unstable $ \forall \, k =1, \ldots, d$ and $ \forall \, \eta>0$.
\end{lemma}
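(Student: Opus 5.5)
The plan is to linearize \eqref{eq:update-moja} at a bipartite point and exhibit, in every configuration, a tangent eigenvalue of modulus larger than one. As in Lemma~\ref{lem:moja-F-eigenvalues} and Remark~\ref{rem:two-classes}, only the intrinsic Jacobian on $T_{\bx}(\mathbb{S}^{d-1})^n$ matters. The statement is understood for genuinely bipartite points, with $n_1 \geq n_2 \geq 1$ and $\nu = (n_1-n_2)/n \in [0,1)$; the case $n_2=0$ is Lemma~\ref{lem:moja-consensus-stab}.

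\textbf{Step 1: intrinsic Jacobian.} At the equilibrium, $\by = \frac{1}{n}\bV\sum_j \bx_j = \nu\lambda_k \bv_k$. Hence $\tilde\bx_i = (1+\eta\nu\lambda_k)\bv_k$ for $i\in\mathcal{V}_1$ and $\tilde\bx_i = -(1-\eta\nu\lambda_k)\bv_k$ for $i\in\mathcal{V}_2$. Take tangent perturbations $\delta\bx_i \perp \bv_k$ and use that the derivative of $\bz\mapsto \bz/\|\bz\|$ at $\tilde\bx_i$ is $\Pi_{\bv_k}/\|\tilde\bx_i\|$. This gives
\[
\delta\bx_i^+ = c_i\,\Pi_{\bv_k}\Big(\delta\bx_i + \tfrac{\eta}{n}\bV\sum_j \delta\bx_j\Big),
\]
with $c_i = c_1 := 1/|1+\eta\nu\lambda_k|$ on $\mathcal{V}_1$ and $c_i = c_2 := 1/|1-\eta\nu\lambda_k|$ on $\mathcal{V}_2$. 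Since $\bV$ is symmetric, $\Pi_{\bv_k}$ commutes with $\bV$ on $\bv_k^\perp$. The dynamics therefore decouples along each $\bv_h$, $h\neq k$.

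\textbf{Step 2: the two classes of eigenvectors.} For each $h\neq k$ there are two kinds of eigenvectors.
\begin{itemize}
\item \emph{Intra-group zero-sum modes:} $\delta\bx_i = a_i\bv_h$, with $a$ supported in one group and summing to zero there. The coupling term vanishes, so the eigenvalue is $c_1$ (requires $n_1\geq 2$) or $c_2$ (requires $n_2\geq 2$).
\item \emph{Group-constant modes:} $\delta\bx_i = \alpha\bv_h$ on $\mathcal{V}_1$ and $\beta\bv_h$ on $\mathcal{V}_2$. These are governed by the $2\times 2$ matrix
\[
M_h=\begin{bmatrix} c_1(1+\eta\frac{n_1}{n}\lambda_h) & c_1\eta\frac{n_2}{n}\lambda_h\\ c_2\eta\frac{n_1}{n}\lambda_h & c_2(1+\eta\frac{n_2}{n}\lambda_h)\end{bmatrix},
\]
whose determinant is $\det M_h = c_1c_2(1+\eta\lambda_h)$.
\end{itemize}

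\textbf{Step 3: case analysis.}
\begin{itemize}
\item If $\nu\lambda_k>0$ and $0<\eta\nu\lambda_k<2$, then $c_2>1$. With $n_2\geq2$ the intra-group mode already gives instability.
\item If $\nu\lambda_k<0$, then $c_1>1$ as long as $|1+\eta\nu\lambda_k|<1$, and $n_1\geq 2$ is then used symmetrically.
\item In the remaining cases (e.g. $n_2=1$, or $\nu=0$ where $c_1=c_2=1$, or $|\eta\nu\lambda_k|\geq 2$) I would use $M_h$ with $h=1$ when $k\neq 1$, or with the $h$ maximizing $|1+\eta\lambda_h|$ when $k=1$. The aim is to show $|\det M_h|>1$, or, failing that, that the spectral radius exceeds one via $\max|\mu|\geq \max(|{\rm tr}M_h|/2,\sqrt{|\det M_h|})$. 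For instance, at $\nu=0$ we get $\det M_1 = 1+\eta\lambda_1>1$, which settles it.
\end{itemize}
In general $c_1c_2 = 1/|1-\eta^2\nu^2\lambda_k^2|$. So when $\eta|\nu\lambda_k|<\sqrt2$ we have $c_1c_2\geq 1$, and $\det M_1 = c_1c_2(1+\eta\lambda_1)>1$ gives instability.

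\textbf{Main obstacle.} The hard part is the large-step regime $\eta|\nu\lambda_k|\geq\sqrt2$ with only one token in a group, where both normalizers can be large and the determinant bound fails. There I would first try the trace of $M_h$ with $h$ chosen so that $\lambda_h$ has sign opposite to $\lambda_k$ (or $\lambda_h = \lambda_d$). If needed, I would also try the intra-group mode of the larger group, whose factor $1/|1\pm\eta\nu\lambda_k|$ is then small. I expect that this regime may force an additional restriction on $\eta$ (like the one in Theorem~\ref{thm:moja-DT}). If the claim ``$\forall\,\eta>0$'' cannot be closed there, that is where I would state it.
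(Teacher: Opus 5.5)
\textbf{Steps 1--2 are fine; the obstacle you name is not real.} Your Steps 1 and 2 are correct. They are also more careful than the paper's own linearization, which expands to first order in $\eta$ and gives every token the normalizer $1+\eta\nu\lambda_k$ of $\mathcal{V}_1$.

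A bipartite configuration is a fixed point only if $1+\eta\nu\lambda_k>0$ and $1-\eta\nu\lambda_k>0$. Otherwise $\tilde{\bx}_i$ for one group points to the opposite pole, and that whole group flips in one step. So the regime $\eta|\nu\lambda_k|\geq 1$ contains no equilibria, and in particular neither does your ``main obstacle'' $\eta|\nu\lambda_k|\geq\sqrt{2}$. No extra restriction on $\eta$ is needed. The absolute values in $c_1,c_2$ can be dropped, and $c_1c_2=1/(1-\eta^2\nu^2\lambda_k^2)\geq 1$ always.

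Consequently, for every $k\neq 1$ your bound $\det M_1=c_1c_2(1+\eta\lambda_1)\geq 1+\eta\lambda_1>1$ finishes the proof at once, with no case split. This is the rigorous counterpart of the paper's argument, which tracks the $\bv_1$-components along the all-ones vector.

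\textbf{The gap is $k=1$, which Step 3 never closes.} There $h=1$ is not a tangent direction. If $\lambda_h<0$ for all $h\geq 2$, neither $\sqrt{|\det M_h|}$ nor $|\mathrm{tr}\,M_h|/2$ need exceed $1$. For example, take $n=3$, $n_1=2$, $\bV=\mathrm{diag}(3,-5)$, $\eta=0.1$. Then $\mathrm{tr}\,M_2\approx 1.53$, $\det M_2\approx 0.51$, $c_1<1$ and $n_2=1$, yet the spectral radius of $M_2$ is about $1.05$.

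The missing idea for $\nu>0$ is to evaluate the characteristic polynomial of $M_h$ at $\mu=1$. A direct computation gives
\[
1-\mathrm{tr}\,M_h+\det M_h=\frac{\eta^2\nu^2\lambda_k(\lambda_h-\lambda_k)}{1-\eta^2\nu^2\lambda_k^2},
\]
which is negative for $k=1$ and any $h\neq 1$. Hence $M_h$ has a real eigenvalue larger than $1$.

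For $\nu=0$ (i.e.\ $n_1=n_2$) this quantity vanishes. Then $M_h$ has eigenvalues $1$ and $1+\eta\lambda_h$, and the intra-group modes have eigenvalue $1$. If $\lambda_2<0$ and $\eta<2/|\lambda_d|$, the intrinsic spectral radius is exactly $1$, so no linearization can decide.

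\begin{itemize}
\item Such a point has $\sum_j\bx_j=0$ and lies on a continuum of polygonal equilibria $\bx_i=\pm\bm{u}$. It is therefore never asymptotically stable.
\item For $n=2$, $\bV=\mathrm{diag}(1,-1)$, $0<\eta<2$, it is actually Lyapunov stable: a normally attracting curve of fixed points with transverse multiplier $1-\eta$. So in that subcase the statement can only hold in the weak sense ``not asymptotically stable''.
\item For $n_1=n_2\geq 2$ instability does hold, but only via a nonlinear argument. Spreading two tokens of $\mathcal{V}_1$ symmetrically along some $\bv_h$ makes $\sum_j\bx_j$ point along $-\bv_1$ at second order, and the spread then grows at third order.
\end{itemize}

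Do not expect the paper's proof to help here. For $k=1$ it projects onto $\bv_1$ a perturbation that was required to satisfy $\bv_k^T\bw_i=0$. The eigenvalue $1+\eta(1-\nu)\lambda_1$ it exhibits therefore belongs to an extrinsic direction.
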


\proof
At a bipartite consensus equilibrium there exists at least onne index $ i \in \mathcal{V}_1 $. Consider the perturbation $ \bx_i = \bv_k + \bw_i  $ with $ \bv_k^T \bw_i =0$ $ \forall \, i$. For this perturbation, the unnormalized next iterate is 
\[
\tilde{\bx}_i = \bv_k+\bw_i + \frac{\eta}{n} \bV\sum_j (\bv_k + \bw_j) = ( 1+ \eta \nu \lambda_k ) \bv_k + \bw_i + \frac{\eta}{n} \bV \sum_j \bw_j 
\]
and its norm
\[
\| \tilde{\bx}_i \| = \left( \alpha^2\left( 1+ \frac{\| \bu_i \|^2 }{\alpha^2} \right) \right)^{\frac{1}{2}}
\]
where $ \alpha := 1 + \eta \nu \lambda_k $, $ \bu_i := \bw_i + \frac{\eta}{n} \bV \sum_j \bw_j $, and we have used $ \bv_k^T \bv_k =1 $ and $ \bv_k^T \bw_i =0 $.
Hence, using the binomial series expansion ($ (1 + \xi)^{-\frac{1}{2}}\approx 1 - \frac{\xi}{2}  + h.o.t. $), 
\[
\begin{split}
\bx_i^+ &= \frac{\tilde{\bx}_i}{\| \tilde{\bx}_i \|}
= \frac{ \alpha \bv_k + \bu_i }{  \alpha \left( 1+ \frac{\| \bu_i \|^2 }{\alpha^2} \right) ^{\frac{1}{2}} } 
\approx \bv_k + \frac{\bu_i}{\alpha} + h.o.t. \\
&= \bv_k + \frac{ \bw_i + \frac{\eta}{n} \bV \sum_j \bw_j }{ 1+ \eta \nu \lambda_k }  + h.o.t.
\approx  \bv_k + ( 1- \eta \nu \lambda_k ) ( \bw_i + \frac{\eta}{n} \bV \sum_j \bw_j ) 
\end{split}
\]
where we have used the expansion $ \frac{1}{1+ \eta \nu \lambda_k } \approx 1-  \eta \nu \lambda_k$. 
The linearized equation is then 
\[
\bw_i^+ =  ( 1- \eta \nu \lambda_k )  \bw_i + \frac{\eta}{n} \bV \sum_j \bw_j .
\]
Expanding all $ \bw_i $ in the orthonormal basis $ \bv_1, \ldots, \bv_d $, and denoting $ \bw_i = \sum_\ell \xi^i_\ell \bv_\ell $, 
\[
\bw_i^+ = \sum_\ell (\xi^i_\ell)^+ \bv_\ell =  ( 1- \eta \nu \lambda_k ) \sum_\ell \xi^i_\ell \bv_\ell + \frac{\eta}{n} \sum_{j, \ell} \lambda_\ell \xi^j_\ell  \bv_\ell .
\]
Projecting along $ \bv_1 $, 
\[
(\xi^i_1)^+ =  ( 1- \eta \nu \lambda_k ) \xi^i_1 +\frac{\eta}{n} \lambda_1 \sum_j \xi^j_1 .
\]
If $ \bm{\xi}_1 := [ \xi_1^1 \ldots \xi_1^n ]^T$, then these $ n$ difference equations can be written in vector form as
\[
\bm{\xi}_1^+ =  ( 1- \eta \nu \lambda_k ) \bm{\xi}_1 + \frac{\eta}{n} \lambda_1 \mathds{1} \mathds{1}^T \bm{\xi}_1 .
\]
We need to compute the eigenvalues of the $ n \times n$ matrix $ M(k,1) = ( 1- \eta \nu \lambda_k )I + \frac{\eta}{n} \lambda_1 \mathds{1} \mathds{1}^T $. 
One such eigenvalues is associated to the consensus vector $ \mathds{1} \in \mathbb{R}^d$:
\[
 M(k,1) \mathds{1} = ( 1- \eta \nu \lambda_k ) \mathds{1} +  \eta  \lambda_1 \mathds{1} = ( 1- \eta \nu \lambda_k +\eta \lambda_1 ) \mathds{1} .
 \]
 If $ k\neq 1 $ then either $ \lambda_k >0 $ and $ \lambda_k < \lambda_1$, or $ \lambda_k<0$. Since $ \nu<1$, in both cases $ 1- \eta \nu \lambda_k +\eta \lambda_1 >1 $ and the bipartite equilibrium point is unstable. 
 If $ k=1$, then $ 1+\eta(1-\nu) \lambda_1 >1$, i.e., also in this case we have found an eigenvalue outside the unit disk, and hence the equilibrium point is unstable. 
\qed

\begin{lemma}
\label{lem:moja-polygonal-stab}
For the system \eqref{eq:update-moja}, under Assumption~\ref{ass:V-sym}, all polygonal equilibria are unstable.
\end{lemma}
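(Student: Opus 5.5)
We prove instability by linearizing \eqref{eq:update-moja} at a polygonal equilibrium, following Lemma~\ref{lem:moja-bip-consensus-stab}, and exhibiting one tangent direction that is expanded. At a polygonal equilibrium $\by = \frac{1}{n}\bV\sum_j \bx_j = 0$. Since $\bV$ is nonsingular (Assumption~\ref{ass:V-sym}), this means $\sum_j \bx_j = 0$. Thus $\tilde{\bx}_i = \bx_i$ and $\|\tilde{\bx}_i\| = 1$ for every $i$, so each token is fixed and the normalization has unit denominator at the equilibrium.

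Perturb each token in its tangent space: $\bx_i \to \bx_i + \bw_i$ with $\bx_i^T\bw_i = 0$. Then
\[
\tilde{\bx}_i = \bx_i + \bw_i + \frac{\eta}{n}\bV \sum_j \bw_j ,
\]
with the $\bx_j$ terms cancelling because their sum is zero. To first order, normalization subtracts the radial component. The linearized intrinsic map is therefore
\[
\bw_i^+ = \Pi_{\bx_i}\Big(\bw_i + \frac{\eta}{n}\bV\sum_j \bw_j\Big), \qquad i=1,\ldots,n,
\]
which can also be read off \eqref{eq:Jacob-moja} by setting $\sum_j\bx_j=0$ and restricting to $T_{\bx}(\mathbb{S}^{d-1})^n$, as in Remark~\ref{rem:two-classes}.

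To show instability we look for a quadratic form that the linear map increases. Take the Rayleigh quotient $L(\bw) = \sum_i\|\bw_i\|^2$ together with the coupling term $\bs^T\bV\bs$, where $\bs = \sum_j\bw_j$. The plan is to find a single tangent perturbation $\bw$ with $\bs\neq 0$ and $\bs^T\bV\bs>0$, and then show that the linear map grows it.

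The natural candidate uses the principal eigenvector. Choose $\bw_i = \Pi_{\bx_i}\bv_1$ (suitably scaled). Then
\[
\bs = \sum_i \Pi_{\bx_i}\bv_1 = n\bv_1 - \sum_i \bx_i(\bx_i^T\bv_1) = \big(nI - \textstyle\sum_i \bx_i\bx_i^T\big)\bv_1 .
\]
The matrix $G = \sum_i\bx_i\bx_i^T$ has trace $n$ and satisfies $G\preceq nI$. Equality $G\bv_1 = n\bv_1$ holds only if every $\bx_i = \pm\bv_1$.

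Next compute $\langle \bw, \mathcal{J}\bw\rangle$, where $\mathcal{J}$ is the linear map above. Using $\Pi_{\bx_i}\bw_i = \bw_i$,
\[
\langle \bw, \mathcal{J}\bw\rangle = \sum_i\|\bw_i\|^2 + \frac{\eta}{n}\,\bs^T\bV\bs .
\]
Here we use that $\mathcal{J}$ equals $P(I+\frac{\eta}{n}(\mathds{1}\mathds{1}^T\otimes\bV))P$ on the tangent space, with $P = \diag(\Pi_{\bx_i})$. The matrix $\mathds{1}\mathds{1}^T\otimes\bV$ is symmetric, so $\mathcal{J}$ is symmetric on the tangent space.

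It then suffices to exhibit a tangent $\bw$ with $\bs^T\bV\bs>0$. By symmetry, the largest eigenvalue of $\mathcal{J}$ is at least $1+\frac{\eta}{n}\,\bs^T\bV\bs/\|\bw\|^2 > 1$, which gives instability. We need no step-size restriction beyond $\eta>0$.

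The main obstacle is guaranteeing $\bs^T\bV\bs>0$, since $\bV$ may be indefinite. First try $\bw = (\Pi_{\bx_i}\bv_1)_i$, which gives $\bs = (nI-G)\bv_1$. If $\bs$ is not collinear with $\bv_1$, the quadratic form can be negative. In that case we instead fix an index $i$ and perturb only that token, taking $\bw_i = \Pi_{\bx_i}\bv_1$ and $\bw_j = 0$ for $j \neq i$. Then $\bs = \Pi_{\bx_i}\bv_1$, and
\[
\bs^T\bV\bs = \lambda_1 - 2\lambda_1(\bx_i^T\bv_1)^2 + (\bx_i^T\bv_1)^2\,\bx_i^T\bV\bx_i ,
\]
and we choose $i$ to make this positive. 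If this fails, a cleaner fallback is to work in the subspace spanned by $\{\Pi_{\bx_i}\bv_1\}_i$ with free coefficients $c_i$. The aim is to use $\sum_i\bx_i=0$ to show that $\sum_i c_i\Pi_{\bx_i}\bv_1$ can be made close to a positive multiple of $\bv_1$. For instance, $c_i=1$ gives $(nI-G)\bv_1$, whose component along $\bv_1$ is $n-\bv_1^TG\bv_1 > 0$ unless all tokens are $\pm\bv_1$, which is incompatible with $\sum_i\bx_i = 0$ and $\by = 0$ only in degenerate bipartite cases already treated. Controlling the remaining components relative to $\lambda_1$ is where the real work lies.
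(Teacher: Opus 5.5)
Your reduction is correct and is a cleaner route than the paper's. At a polygonal equilibrium $\sum_j\bx_j=0$, the intrinsic linearization is the compression of the symmetric matrix $I+\frac{\eta}{n}\mathds{1}\mathds{1}^T\otimes\bV$ to $T_{\bx}(\mathbb{S}^{d-1})^n$, so it is self-adjoint there. It therefore has an eigenvalue larger than $1$ if and only if some tangent $\bw$ gives $\bs^T\bV\bs>0$, with your $\bs=\sum_j\bw_j$. But proving that such a $\bw$ exists is the entire content of the lemma, and the proposal stops exactly there. The uniform choice $\bw_i=\Pi_{\bx_i}\bv_1$, the single-token choice and the free-coefficient fallback are all left unverified. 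When the configuration contains two tokens with $\bx_a\neq\pm\bx_b$, the step is immediate and needs no tuning. Since $\bx_a^\perp+\bx_b^\perp=\mathbb{R}^d$, write $\bv_1=\bw_a+\bw_b$ with $\bw_a\perp\bx_a$ and $\bw_b\perp\bx_b$, and set all other $\bw_j=0$. Then $\bs=\bv_1$ and $\bs^T\bV\bs=\lambda_1>0$.

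The remaining configurations cannot be handled, and your remark that they are degenerate bipartite cases already treated is mistaken. Any configuration $\bx_i=\pm\bu$ with equally many plus and minus signs satisfies $\sum_i\bx_i=0$. So it is a polygonal equilibrium for every unit vector $\bu$, eigenvector or not, and every admissible $\bs$ lies in $\bu^\perp$. Assumption~\ref{ass:V-sym} allows $\bV$ to be negative definite on $\bu^\perp$, e.g.\ for $\bu$ near $\pm\bv_1$ whenever $\lambda_2<0$. In that case your own identity $\langle\bw,\mathcal{J}\bw\rangle=\|\bw\|^2+\frac{\eta}{n}\bs^T\bV\bs$ forces every tangent eigenvalue to be at most $1$.

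Concretely, take $d=n=2$, $\bV=\diag(1,-1)$ and $\bx_1=-\bx_2=\bu=(\cos\phi,\sin\phi)^T$ with $|\phi|<\pi/4$. The tangent linearization has eigenvalue $1$ along the circle of equilibria $\{(\bu,-\bu)\}$. Transversally it has eigenvalue $1-\eta\cos2\phi\in(-1,1)$ for $0<\eta<2$. So the circle is normally attracting there and these equilibria are Lyapunov stable. The lemma as stated is false for them, and no completion of your argument can cover them. What your argument does prove is instability whenever $\bs^T\bV\bs>0$ for some $\bs\in\sum_i\bx_i^\perp$. This includes every non-collinear configuration and, by interlacing, every collinear one when $\lambda_2>0$.

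Do not try to close the gap with the paper's own argument. It perturbs every token along $\bv_1$, which is not tangent at the $i$-th equilibrium token unless $\zeta_1^i=0$. It then reads an eigenvalue off the $\bv_1$-coordinates, which are not invariant under the linearization. In the example with $0<|\phi|<\pi/4$ it would give $1+\eta\sin^2\phi>1$, which is not in the true spectrum $\{1,\,1-\eta\cos2\phi\}$. Lemma~\ref{lem:moja-bip-consensus-stab} does not cover $\phi=0$ either: for $k=1$ its perturbation along $\bv_1$ is not tangent at $\pm\bv_1$.
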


\proof
At a polygonal equilibrium point $ \bx=\bs =[\bs_1 \, \dots \, \bs_n] $, consider the perturbation $ \bx_i = \bs_i + \bw_i  $ with $ \bs_i^T \bw_i =0$ $ \forall \, i$. For this perturbation, the unnormalized next iterate is 
\[
\tilde{\bx}_i = \bs_i+\bw_i + \frac{\eta}{n} \bV\sum_j (\bs_j + \bw_j) =\bs_i + \frac{\eta}{n} \underbrace{\bV\sum_j \bs_j}_{=0} + \bw_i + \frac{\eta}{n}  \bV \sum_j \bw_j 
\]
and its norm
\[
\| \tilde{\bx}_i \| = \left( 1+ \| \bu_i \|^2 +2  \frac{\eta}{n}  \bs_i^T \bV \sum_j \bw_j  \right)^{\frac{1}{2}}
\]
where $ \bu_i := \bw_i + \frac{\eta}{n} \bV \sum_j \bw_j $, and we have used $ \bs_i^T \bw_i =0 $.
Hence, using the binomial series expansion ($ (1 + \xi)^{-\frac{1}{2}}\approx 1 - \frac{\xi}{2}  + h.o.t. $), 
\[
\begin{split}
\bx_i^+ &= \frac{\tilde{\bx}_i}{\| \tilde{\bx}_i \|}
= \frac{  \bs_i + \bu_i }{ \left( 1+ \| \bu_i \|^2 + 2  \frac{\eta}{n}  \bs_i^T \bV \sum_j \bw_j  \right) ^{\frac{1}{2}} } 
\approx (\bs_i + \bu_i) \left(1- \frac{\| \bu_i \|^2}{2} -   \frac{\eta}{n}  \bs_i^T \bV \sum_j \bw_j \right)  + h.o.t. \\
&= \bs_i + \bu_i -   \frac{\eta}{n}  \bs_i^T \bV \sum_j \bw_j \bs_i  + h.o.t.
= \bs_i + \bw_i + \frac{\eta}{n} (I - \bs_i \bs_i^T) \bV \sum_j \bw_j + h.o.t.
\end{split}
\]
The linearized equation is then 
\[
\bw_i^+ = \bw_i + \frac{\eta}{n} (I - \bs_i \bs_i^T) \bV \sum_j \bw_j .
\]
Expanding all $ \bw_i $ and $ \bs_i $ in the orthonormal basis $ \bv_1, \ldots, \bv_d $, and denoting $ \bw_i = \sum_\ell \xi^i_\ell \bv_\ell $ and $ \bs_i = \sum_\ell \zeta^i_\ell \bv_\ell $, 
\[
\begin{split}
\bw_i^+ = &\sum_\ell (\xi^i_\ell)^+ \bv_\ell 
=  \sum_\ell \xi^i_\ell \bv_\ell + \frac{\eta}{n} \left( I - \sum_h \zeta^i_h \bv_h \sum_\ell \zeta^i_\ell \bv_\ell^T \right)  \sum_{j, \ell} \lambda_\ell \xi^j_\ell  \bv_\ell \\
= & \sum_\ell \xi^i_\ell \bv_\ell + \frac{\eta}{n} \left( I - \sum_h \zeta^i_h \bv_h \sum_\ell \zeta^i_\ell  \lambda_\ell \sum_j  \xi^j_\ell  \right).
\end{split}
\]
Recall that $ \bV$ has always at least one positive eigenvalue $\lambda_1 >0$. We can always choose $ \bm{w}$ s.t. each $ \bm{w}_j $ has a nonzero component in the direction of the associated eigenvector $ \bv_1$: $ \xi^j_1\neq 0$.
Projecting along $ \bv_1 $
\[
(\xi^i_1)^+ = \xi^i_1 + \frac{\eta}{n} \left( 1-{\zeta_1^i}  \sum_\ell \zeta^i_\ell  \lambda_\ell \sum_j  \xi^j_\ell \right).
\]
If the perturbation is such that $ \bw_i $ has only a component along $ \bv_1 $, $ \xi_1^i\neq 0$, $ \xi_j^i=0$ $ \forall \, j \neq i$, then
\[
(\xi^i_1)^+ = \xi^i_1 + \frac{\eta}{n} ( 1-{\zeta_1^i}^2) \lambda_1 \sum_j \xi^j_1 .
\]
If $ \bm{\xi}_1 :=  \begin{bmatrix} \xi_1^1 &\ldots &\xi_1^n\end{bmatrix}^T$ and $  \bm{\zeta}_1 = \begin{bmatrix} \zeta^1_1 & \ldots & \zeta^n_1 \end{bmatrix}^T$, then in vector form
\[
\bm{\xi}_1^+ =  \bm{\xi}_1 + \frac{\eta}{n} \lambda_1 \left( I- Z_1^2 \right)\mathds{1} \mathds{1}^T \bm{\xi}_1 = \left( I+ \frac{\eta}{n} \lambda_1 \left( I- Z_1^2 \right)\mathds{1} \mathds{1}^T \right) \bm{\xi}_1 .
\]
where $ Z_1 = \diag(\bm{\zeta}_1)$. 
Since $ \bm{s} $ is not an eigenvector of $\bV$, it is $ |\zeta^i_1 |< 1$. From $ \lambda_1 >0$, it is then $\rho(I +\eta \lambda_1(I-Z_1^2) \mathds{1} \mathds{1}^T)>1 $, meaning that when $ \bm{\xi}_1 = \epsilon \mathds{1}$ for some scalar coefficient $ \epsilon$ the system diverges, hence the polygonal equilibrium point $ \bs$ is unstable. 
\qed 

\begin{remark}
    In both Lemma~\ref{lem:moja-bip-consensus-stab} and Lemma~\ref{lem:moja-polygonal-stab} the perturbation $ \bw_i$ used in the proof is ``intrinsic'', i.e., it belongs to the tangent space $ T_{\bv_k}\mathbb{S}^{d-1} $ resp. $ T_{\bs_i}\mathbb{S}^{d-1} $ at the equilibrium point.
\end{remark}
For the multiagent Oja system \eqref{eq:update-moja} it is also possible to show that the trajectories are generically converging to an equilibrium point. 

\begin{lemma}
\label{lem:layp-moja}
The function $ W(\bx) = -\frac{1}{2 n^2} \left( \sum_{j=1}^n \bx_j^T \bV \sum_{j=1}^n \bx_j  \right)$ is a Lyapunov function for  \eqref{eq:update-moja}. When $  \eta\leq \frac{1}{2 \|\bV\|}  $ it guarantees that  \eqref{eq:update-moja} converges to one of the equilibria determined in Lemma~\ref{lem:moja-equil1}.
\end{lemma}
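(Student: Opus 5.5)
The plan is to show that $W$ is non-increasing along \eqref{eq:update-moja}, with strict decrease off the set where $\Pi_{\bx_i}\by = 0$ for all $i$. Then I would apply LaSalle's invariance principle for maps on the compact set $(\mathbb{S}^{d-1})^n$. By the proof of Lemma~\ref{lem:moja-equil1}, the set $\{\Pi_{\bx_i}\by=0 \ \forall i\}$ is exactly the set of consensus, bipartite consensus and polygonal equilibria.

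Write $\bm{m}=\frac1n\sum_j\bx_j$, $\by=\bV\bm{m}$, so that $W=-\frac12\bm{m}^T\bV\bm{m}$. Let $\delta_i=\bx_i^+-\bx_i$ and $\delta=\bm{m}^+-\bm{m}=\frac1n\sum_i\delta_i$. Since $\bV$ is symmetric,
\[
W(\bx)-W(\bx^+)=\by^T\delta+\tfrac12\delta^T\bV\delta=\tfrac1n\sum_i \by^T\delta_i+\tfrac12\delta^T\bV\delta .
\]
I would bound the linear and quadratic terms separately, in terms of the tangential residuals $\|\Pi_{\bx_i}\by\|^2$.

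\emph{Linear term.} Set $a_i=\bx_i^T\by$ and $N_i=\|\bx_i+\eta\by\|$. Then $\by^T\bx_i^+=(a_i+\eta\|\by\|^2)/N_i$. Geometrically, $\bx_i^+$ lies on the great circle through $\bx_i$ and $\by/\|\by\|$, rotated from $\bx_i$ toward $\by$. Hence the cosine of the angle with $\by$ does not decrease, which gives $\by^T\delta_i\ge 0$. To make this quantitative, I would use $N_i^2=1+2\eta a_i+\eta^2\|\by\|^2$ and $\|\by\|^2=a_i^2+\|\Pi_{\bx_i}\by\|^2$. A direct computation of $(a_i+\eta\|\by\|^2)^2-a_i^2N_i^2$ should give
\[
\by^T\delta_i\ \ge\ c\,\eta\,\|\Pi_{\bx_i}\by\|^2 .
\]
Here $c>0$ is a constant that depends only on the bound $\eta\|\by\|\le\eta\|\bV\|\le\frac12$. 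This bound holds because $\eta\le\frac1{2\|\bV\|}$ and $\|\bm{m}\|\le 1$, and it also yields $N_i\in[\frac12,\frac32]$.

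\emph{Quadratic term.} The increment $\delta_i$ is a chord of the rotation described above. Its length is controlled by the rotation angle, which is at most $\eta\|\Pi_{\bx_i}\by\|/(1-\eta\|\by\|)\le 2\eta\|\Pi_{\bx_i}\by\|$. Hence $\|\delta\|^2\le\frac1n\sum_i\|\delta_i\|^2\le\frac{4\eta^2}{n}\sum_i\|\Pi_{\bx_i}\by\|^2$, and
\[
\tfrac12|\delta^T\bV\delta|\le 2\eta^2\|\bV\|\tfrac1n\sum_i\|\Pi_{\bx_i}\by\|^2 .
\]
Combining the two bounds gives $W(\bx)-W(\bx^+)\ge\frac{\eta}{n}(c-2\eta\|\bV\|)\sum_i\|\Pi_{\bx_i}\by\|^2$. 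With $\eta\le\frac1{2\|\bV\|}$ this is nonnegative provided $c\ge 1$ in the previous step. Therefore the main obstacle is obtaining a sharp enough constant $c$ in the linear estimate. If the crude geometry does not give it, I would instead expand exactly in the $2$-dimensional plane spanned by $\bx_i$ and $\Pi_{\bx_i}\by$. In that plane both $\by^T\delta_i$ and $\|\delta_i\|$ are explicit functions of the rotation angle $\theta_i$, with $\tan\theta_i=\eta\|\Pi_{\bx_i}\by\|/(1+\eta a_i)$. I would then compare them using $\sin\theta_i$ and $1-\cos\theta_i\le\frac12\sin^2\theta_i/\cos\theta_i$, while keeping track of the sign of $a_i$.

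Finally, $W$ is continuous and bounded on the compact invariant set $(\mathbb{S}^{d-1})^n$, and it is non-increasing along trajectories. Hence $W(\bx(t))$ converges, so $\sum_i\|\Pi_{\bx_i(t)}\by(t)\|^2\to 0$. By LaSalle's principle for continuous maps, the $\omega$-limit set is contained in $\{\Pi_{\bx_i}\by=0\ \forall i\}$, that is, in the equilibria of Lemma~\ref{lem:moja-equil1}. Moreover $\|\bx(t+1)-\bx(t)\|\to0$, so the $\omega$-limit set is connected. Connected subsets of the consensus and bipartite classes are single points, which gives convergence to one equilibrium. For the polygonal class, the claim should be read as convergence to the equilibrium set.
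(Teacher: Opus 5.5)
Your exact identity $W(\bx)-W(\bx^+)=\frac1n\sum_i\by^T\delta_i+\frac12\delta^T\bV\delta$ is correct. It is also a more rigorous starting point than the paper's proof, which expands $\bx_i^+$ to second order in $\eta$, discards the $O(\eta^3)$ remainder, and bounds the $\eta^2$ coefficient by $\frac32\|\bV\|+\frac12\|\bV\|$.

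\textbf{The closing step fails.} You need $c\ge 1$, because $2\eta\|\bV\|$ may equal $1$, but no such uniform $c$ exists. Write $a_i=\bx_i^T\by$ and $g_i=\|\Pi_{\bx_i}\by\|$. Any configuration with $a_i=0$ and $g_i>0$ gives $\by^T\delta_i=\eta g_i^2/\sqrt{1+\eta^2g_i^2}<\eta g_i^2$. More generally $\by^T\delta_i=\eta g_i^2\bigl(1-\tfrac32\eta a_i\bigr)+O(\eta^3)$, which is exactly the paper's $\frac32\sum_i r_i\|g_i\|^2$ term. So at $\eta=\frac1{2\|\bV\|}$ your combined lower bound has a negative coefficient, and no sharpening of the linear estimate alone can fix it.

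The real loss is in decoupling the two terms. The linear term is weakest when $a_i>0$, while the chord $\|\delta_i\|$ is longest when $a_i<0$. That is the source of your factor $2$, which costs a factor $4$ in $\|\delta_i\|^2$. Bounding each term by its own worst case only works for a strictly smaller step size.

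\textbf{The fix is to compare the two terms token by token.} Your 2D-plane computation delivers this if you use it for the comparison rather than for a sharper $c$. Since $\|\delta\|^2\le\frac1n\sum_i\|\delta_i\|^2$, it suffices to bound $\by^T\delta_i-\frac12\|\bV\|\|\delta_i\|^2$ from below. In that plane $\by^T\delta_i=g_i\sin\theta_i-a_i(1-\cos\theta_i)$ and $\|\delta_i\|^2=2(1-\cos\theta_i)$. Hence this quantity equals $g_i\sin\theta_i-(a_i+\|\bV\|)(1-\cos\theta_i)$.

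Now apply your inequality $1-\cos\theta_i\le\sin^2\theta_i/(2\cos\theta_i)$, together with $a_i+\|\bV\|\ge 0$ and $\tan\theta_i=\eta g_i/(1+\eta a_i)$. The quantity is then at least $g_i\sin\theta_i\,\frac{2+\eta a_i-\eta\|\bV\|}{2(1+\eta a_i)}$. Using $\sin\theta_i=\eta g_i/\|\bx_i+\eta\by\|$, this gives
\[
W(\bx)-W(\bx^+)\ \ge\ \frac{\eta}{n}\,\frac{1-\eta\|\bV\|}{(1+\eta\|\bV\|)^2}\sum_{i=1}^n\|\Pi_{\bx_i}\by\|^2 .
\]
This is a strict, summable decrease for every $\eta\|\bV\|<1$. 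In particular it holds at $\eta=\frac1{2\|\bV\|}$, where the paper's truncated bound degenerates to $\le 0$ and does not by itself give summability. With this estimate the rest of your argument goes through: $\|\Pi_{\bx_i(t)}\by(t)\|\to 0$, and the $\omega$-limit set is a connected subset of the equilibrium set.

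\textbf{A further correction on the limit set.} Connected subsets of the consensus and bipartite classes need not be single points. Assumption~\ref{ass:V-sym} only makes $\lambda_1$ simple, so a repeated $\lambda_k$ with $k\ge 2$ gives a whole sphere of consensus equilibria. Balanced bipartite configurations ($n_1=n_2$) have $\bm{m}=0$, so they lie inside the polygonal continuum. Your argument therefore gives convergence to a single point only when the limit set contains an isolated equilibrium, such as the consensus points $\pm\bv_1$. Otherwise it gives convergence to the equilibrium set, which is also all the paper's proof establishes.
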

\proof
% see my notes page t101
Since the state space is compact,  all functions arising in the expansion of the Lyapunov function $ W(\bx)  $ (and hence also of its increment) are uniformly bounded. For the second order truncation of the increment $ W(\bx^+) - W(\bx) $, it is possible to compute an explicit bound on $ \eta $ for which the increment is guaranteed to be nonincreasing.
From \eqref{eq:update-moja2},  using the binomial series expansion $ (1 + \xi)^{-\frac{1}{2}}\approx 1 - \frac{\xi}{2}  + \frac{3}{8} \xi^2 + O(\xi^3) $  
\[
\begin{split}
 \bx_i^+ & = \frac{\tilde{\bx}_i}{\| \tilde{\bx}_i \| } = (\bx_i + \eta \by )  \left(1- \eta \bx_i^T \by  + \frac{\eta^2}{2} \left( 3 (\bx_i^T \by )^2 - \| \by\|^2   \right) + O(\eta^3)  \right) \\
 & = \bx_i +\eta g_i + \eta^2 \left( \frac{3}{2} r_i^2 \bx_i  - \frac{1}{2} \| \by\|^2 r_i - r_i \by \right) + O(\eta^3)  
 \end{split}
 \]
 where we have indicated $ r_i := \bx_i^T \by $ and $ g_i= g(\bx_i ) := (I - \bx_i \bx_i^T) \by $.
 The increment of $ W$ up to second order in $ \eta $ is then 
 \[
 \begin{split}
 W(\bx^+)& - W(\bx) =  -\frac{1}{2 n^2} \left( \sum_{j=1}^n (\bx_j^+)^T \bV \sum_{j=1}^n \bx_j^+  - \sum_{j=1}^n \bx_j^T \bV \sum_{j=1}^n \bx_j  \right) \\
 = & - \frac{\eta}{n} \sum_j g_j^T \by - \frac{\eta^2}{2 n^2} \left( 2 n \by^T \sum_j \left( \frac{3}{2} r_i^2 \bx_i - \frac{1}{2} \| \by\|^2 \bx_i - r_i \by \right) + \sum_j g_j^T \bV \sum_j g_j \right) + O(\eta^3) \\
 = & - \frac{\eta}{n} \sum_j \| g_j \|^2 - \frac{\eta^2}{ n} \left(  \sum_j  \left( \frac{3}{2} r_j^2 \by^T \bx_j - \frac{1}{2} \| \by\|^2 \by^T \bx_j - r_j \by^T \by \right) + \frac{1}{2n}  \sum_j g_j^T \bV \sum_j g_j \right) + O(\eta^3)  \\
 & = - \frac{\eta}{n} \sum_j \| g_j \|^2 - \frac{\eta^2}{ n} \left( \frac{3}{2} \sum_j r_i  (r_i^2 - \| \by \|^2 )+ \frac{1}{2n}  \sum_j g_j^T \bV \sum_j g_j  \right) + O(\eta^3)  \\
 & = - \frac{\eta}{n} \sum_j \| g_j \|^2 + \frac{\eta^2}{ n} \left( \frac{3}{2} \sum_j r_i  \| g_j \|^2 - \frac{1}{2n}  \sum_j g_j^T \bV \sum_j g_j  \right) + O(\eta^3)  \\
 \end{split}
 \]
 where we have used idempotency of the projection $ (I - \bx_i \bx_i^T)$ to get
 \[
 \begin{split}
 g_i^T \by & = \by^T (I - \bx_i \bx_i^T) \by = \by^T (I - \bx_i \bx_i^T)(I - \bx_i \bx_i^T) \by = \| g_i\|^ 2\\
 \|\by\|^2 - r_i^2 & = \by^T \by - \by^T \bx_i \bx_i^T \by = \by^T (I - \bx_i \bx_i^T) \by = \by^T (I - \bx_i \bx_i^T) (I - \bx_i \bx_i^T) \by = g_i^T g_i = \| g_i \|^2 .
 \end{split}
 \]
 Since $ \| \sum_j g_j \|^2 \leq n \sum_j \| g_j \|^2 $, it is
 \[
 - n \| \bV \| \sum_j \| g_j \|^2 \leq \sum_j g_j^T \bV \sum_j g_j \leq n \| \bV \| \sum_j \| g_j \|^2 .
 \]
  Furthermore
 \[
 - \| \bV \| \leq - \| \by \| \leq r_i \leq  \| \by \| \leq \| \bV \| .
 \]
 These lead to the inequality
 \[
 W(\bx^+) - W(\bx) \leq  - \frac{\eta}{n} \sum_j \| g_j \|^2 +\frac{\eta^2}{n} \left( \frac{3}{2} \| \bV \| \sum_j \| g_j \|^2 + \frac{1}{2} \| \bV \| \sum_j \| g_j \|^2 \right) 
 \]
 or
 \[
 W(\bx^+) - W(\bx) \leq \frac{\eta}{n}  \left(  - 1 + 2 \eta \| \bV \| \right) \sum_j \| g_j \|^2.
 \]
 Hence when $ \eta\leq \frac{1}{2 \|\bV\|} $ we have $  W(\bx^+) - W(\bx) \leq 0$. 
 Since the sequence $ W(\bx (k) ) $ (where $k$ is the time index) is nonincreasing and bounded below, it must converge. Moreover, summability of the increments implies $ \sum_{k=0}^\infty \sum_{i=1}^n \| g(\bx_i(k)) \|^2 < \infty $, hence $  \| g(\bx_i(k)) \| \to 0 $ for each $i$. 
 At an equilibrium point $ \bx $ it must then be $ g(\bx_i) = (I - \bx_i \bx_i) \by =0 $ for all $ i$, i.e., $ \by $ must be collinear with $ \bx_i $. Recalling Lemma~\ref{lem:moja-equil1}, this implies that $ \bx $ must be a consensus or a bipartite consensus equilibrium point. 
 \qed

\noindent {\bf Proof of Theorem~\ref{thm:moja-DT}}
The proof of the theorem is simply the composition of the results of Lemmas~\ref{lem:moja-F-eigenvalues},~\ref{lem:moja-consensus-stab},~\ref{lem:moja-bip-consensus-stab},~\ref{lem:moja-polygonal-stab} and~\ref{lem:layp-moja}.
Lemmas~\ref{lem:moja-consensus-stab},~\ref{lem:moja-bip-consensus-stab},~\ref{lem:moja-polygonal-stab} say that only the consensus point $ \bv_1 $ corresponding to the principal eigenvalue of $\bV$ is locally asymptotically stable, while all other equilibria are unstable. 
Since all trajectories of \eqref{eq:update-moja} have a limit point, generically such limit point must be $ \bv_1$. 
\qed

\begin{remark}
% The condition $ \eta< \min \left( \frac{2}{\lambda_1 } , \, \frac{2}{ |\lambda_1 + \lambda_d |}, \, \frac{2n}{(3n+1) \max \{  \lambda_1 , | \lambda_d | \} }  \right) $ is sufficient but not necessary. 
% %However, the spirit of the model is that the step size $ \eta $ is a ``small'' parameter. 
% For larger values of $ \eta$ the truncated binomial series expansion used in the construction of the Jacobian (and hence of the eigenvalues that are used to determine stability) may no longer be a valid approximation.
The condition $ \eta< \min \left(\frac{2}{ |\lambda_1 + \lambda_d |}, \, \frac{1}{2 \max \{  \lambda_1 , | \lambda_d | \} }  \right) $ is sufficient but not necessary. 
Recall from Lemma~\ref{lem:moja-consensus-stab} that the condition $\eta< \frac{2}{ |\lambda_1 + \lambda_d |} $ is needed only when $ | \lambda_d|> \lambda_1$. 
\end{remark}

\subsection{Single-head self-attention dynamics}
\label{sec:self-att}

Let us now consider the self-attention system \eqref{eq:update}, which includes the attention matrix $ A(\bx)$. For this model, an ``individual'' weighted average with state dependent weights given by the attention coefficients, $ \bm{m}_i =\bm{m}_i(\bx)  =   \sum_{j=1}^n  A_{ij}(\bx) \bx_j $ replaces the simple average used in the multiagent Oja model. 
Consequently, the total action of all agents on agent $i$ varies from agent to agent:  $\by_i = \by_i (\bx) =  \bV\sum_{j=1}^n A_{ij}(\bx)   \bm{\bx}_j $. 

\subsubsection{Equilibria}

In addition to the three classes of equilibria already obtained for the multiagent Oja flow, in the self-attention dynamics we have an extra class, due to the fact that  $\by_i $ now differs from agent to agent.

\begin{lemma}
\label{lem:self-equil1}
The system \eqref{eq:update} has the following classes of equilibria
\benu
\item consensus: $ \bx_i = \bv_k $ $ \forall \, i$, and $ k=1, \ldots, d$;
\item bipartite consensus: $ \bx_i = \pm \bv_k $ $ \forall \, i$ and $ k=1, \ldots, d$;
\item polygonal equilibria: $ \{ \bx_i \in \mathbb{S}^{d-1} \; \text{s.t.} \;  \bV \sum_{j=1}^n A_{ij}(\bx) \bx_j =0\} $, $ i=1, \ldots, n$;
\item clustering equilibria: $  \{ \bx_i \in \mathbb{S}^{d-1} \; \text{s.t.} \;  \varphi_i \bx_i  = \bV \sum_{j=1}^n A_{ij}(\bx) \bx_j   \}$ for some scalars $ \varphi_i$, $ i=1, \ldots, n$, $ k=1, \ldots, d$.
\eenu

\end{lemma}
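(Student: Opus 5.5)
The plan follows the proof of Lemma~\ref{lem:moja-equil1}, with the common influence $\by$ replaced by the token-dependent influence $\by_i(\bx) = \bV \sum_j A_{ij}(\bx)\bx_j$. Write \eqref{eq:update} as $\tilde{\bx}_i = \bx_i + \eta \by_i$ and $\bx_i^+ = \tilde{\bx}_i/\|\tilde{\bx}_i\|$.

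At an equilibrium, $\bx_i^+ = \bx_i$ forces $\tilde{\bx}_i$ to be collinear with $\bx_i$. Since $\tilde{\bx}_i = \bx_i + \eta\by_i$, this holds iff $\Pi_{\bx_i}\by_i = 0$ for every $i$. Equivalently, for each $i$ there is a scalar $\varphi_i$ (possibly zero) with $\by_i = \varphi_i \bx_i$, i.e.
\[
\bV \sum_{j=1}^n A_{ij}(\bx)\bx_j = \varphi_i \bx_i , \qquad i=1,\ldots,n .
\]
We also need $1+\eta\varphi_i > 0$, since otherwise the normalization maps to $-\bx_i$; this is a side condition on $\eta$ rather than a new class. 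The condition above is, by definition, the clustering class (item 4), so the whole equilibrium set is characterized by it. The remaining work is to identify the special subclasses listed as items 1--3.

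First, if $\varphi_i = 0$ for all $i$, we get $\bV\sum_j A_{ij}(\bx)\bx_j = 0$ for all $i$, which is the polygonal class.

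Second, suppose all tokens coincide, $\bx_i = \bx$. Then $A_{ij} = 1/n$, because each row of the softmax is constant when all keys are equal. Hence $\by_i = \bV\bx$ and the condition becomes $\bV \bx = \varphi \bx$. So $\bx$ is a unit eigenvector $\pm\bv_k$, giving consensus.

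Third, suppose $\bx_i \in \{\bv_k, -\bv_k\}$. Then every weighted average $\sum_j A_{ij}\bx_j = c_i \bv_k$ for some scalar $c_i$, so $\by_i = c_i\lambda_k \bv_k$, which is collinear with $\bx_i$. The condition therefore holds automatically, with $\varphi_i = \pm c_i \lambda_k$, and bipartite consensus equilibria exist.

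These three subclasses sit inside the general class 4, so the list is exhaustive. Unlike in Lemma~\ref{lem:moja-equil1}, no further reduction is possible, because the $\by_i$ differ across $i$. The main subtle point is the consensus step: one has to check that uniform attention at coinciding tokens really reduces the condition to the eigenvector equation. The other point needing care is the sign condition on $1+\eta\varphi_i$, which is excluded by taking $\eta$ small. Everything else is direct.
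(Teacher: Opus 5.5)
Your proposal is correct and follows essentially the same route as the paper. The fixed-point condition reduces to $\Pi_{\bx_i}\by_i=0$, i.e.\ $\by_i=\varphi_i\bx_i$, which is exactly the clustering class, and consensus, bipartite consensus and polygonal equilibria are recovered as special cases (the paper just defers these to the proof of Lemma~\ref{lem:moja-equil1}), while your explicit check of the subclasses and your caveat that $1+\eta\varphi_i>0$ is also needed (otherwise, e.g., consensus at $\bv_k$ with $1+\eta\lambda_k<0$ is a period-two orbit rather than a fixed point) are more careful than the paper, which leaves this sign condition implicit.
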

\proof
Once we replace $ \by $ with $\by_i  =  \sum_{j=1}^n A_{ij}(\bx)  \bV \bm{\bx}_j $, the proof of the first three cases is identical to that of Lemma~\ref{lem:moja-equil1}. 
The clustering equilibria correspond to $ \by_i \in \ker(I - \bx_i \bx_i^T) $, or, equivalently, $ \by_i $ aligned with $ \bx_i $, $ i=1, \ldots, n$, but not necessarily aligned with an eigenvector $ \bv_k$, i.e., $ \by_i = \varphi_i \bx_i $ for some scalar $ \varphi_i$, but possibly $ \by_i \nparallel  \bv_k $ $ \forall \, k$.  
\qed

\begin{remark}
Clustering equilibria own their name to the fact that typically multiple agents are found at the same value. In particular we say that $ \bx_1, \ldots, \bx_n$ are at an $ m$-clustering equilibrium point if $ \bx_i \in \{ \bm{w}_1, \ldots, \bm{w}_m \}$ with $ 1 \leq m \leq n$, i.e., if $ \bx_1, \ldots, \bx_n$ cluster at the $m$ vectors $ \bm{w}_1, \ldots, \bm{w}_m$. Notice that some $ \bm{w}_i $ can be eigenvectors of $\bV$.
\end{remark}

\begin{remark}
A clustering equilibrium can be a consensus, i.e, $ \bx_i  = \bx_j $ for all $ i,j$, but with $ \bx_i $ which is not aligned with any eigenvector of $\bV$, i.e., $ \bx_i \nparallel  \bv_k $ $ \forall \, k$. We refer to these as 1-clustering, while the ``consensus'' characterization is reserved for the case $ \bx_i \parallel \bv_k $ for some $k$. A 2-clustering instead typically is s.t. $ \bx_i \in \{ \bm{w}_1, \bm{w}_ 2\}$ $ \forall \, i$, with $ \bm{w}_1 \neq - \bm{w}_2 $.
\end{remark}

\begin{proposition}
\label{prop:clustering-eq}
The system  \eqref{eq:update} has a clustering equilibrium $ \bx \in (\mathbf{S}^{d-1})^n $ iff $ \exists $ scalars $ \theta_1, \ldots, \theta_n $ s.t. the matrix $I_n \otimes I_d  -(\Theta \otimes I_d) (A(\bx) \otimes I_d)(I_n\otimes \bV)$ is singular, where $ \Theta=\diag(\theta_1, \ldots, \theta_n )$. 
\end{proposition}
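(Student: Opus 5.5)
The plan is to rewrite the clustering condition of Lemma~\ref{lem:self-equil1} in stacked vector form and read it as a null-space condition. A clustering equilibrium requires $\varphi_i \bx_i = \bV \sum_j A_{ij}(\bx)\bx_j$ for every $i$. With the stacked vector $\bx = [\bx_1^T \ldots \bx_n^T]^T \in \mathbb{R}^{nd}$, the right-hand sides stack into $(A(\bx)\otimes \bV)\bx$. By the mixed-product property this equals $(A(\bx)\otimes I_d)(I_n \otimes \bV)\bx$, since $I_n$ and $\bV$ commute in the required sense. The left-hand sides stack into $(\Phi\otimes I_d)\bx$ with $\Phi = \diag(\varphi_1,\ldots,\varphi_n)$. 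So the equilibrium condition reads $(\Phi\otimes I_d)\bx = (A(\bx)\otimes I_d)(I_n\otimes\bV)\bx$.

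\emph{Necessity.} Set $\theta_i = 1/\varphi_i$, which requires $\varphi_i \neq 0$. Multiplying on the left by $\Theta\otimes I_d$ gives
\[
\bigl(I_n\otimes I_d - (\Theta\otimes I_d)(A(\bx)\otimes I_d)(I_n\otimes \bV)\bigr)\bx = 0 .
\]
Since each $\bx_i$ has unit norm, $\bx \neq 0$, and so the matrix is singular. The case $\varphi_i = 0$ for some $i$ has to be handled separately. Under Assumption~\ref{ass:V-sym}, $\bV$ is nonsingular and $A$ is full, so $\by_i = 0$ is the polygonal case of Lemma~\ref{lem:self-equil1}. I would read the clustering class as the one with all $\varphi_i \neq 0$, and state this convention explicitly.

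\emph{Sufficiency.} Suppose that for some $\theta_i$ the matrix $M = I - (\Theta A(\bx)\otimes I_d)(I_n\otimes\bV)$, evaluated at the configuration $\bx$ in question, is singular. The step I expect to be the main obstacle is extracting an actual equilibrium from this. A null vector $\bz = [\bz_1^T\ldots\bz_n^T]^T$ of $M$ gives $\bz_i = \theta_i \bV\sum_j A_{ij}(\bx)\bz_j$, but three things must still hold. First, the null vector must coincide with the configuration $\bx$ at which $A$ is evaluated, which makes this a fixed-point condition. Second, it must lie in $(\mathbb{S}^{d-1})^n$. Third, it must satisfy $\theta_i \neq 0$.

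My plan is to interpret the statement as concerning the matrix evaluated at $\bx$ with $\bx$ itself in the kernel, which I would make explicit. Under that reading, the $i$-th block gives $\bx_i = \theta_i \by_i$. This forces $\theta_i \neq 0$, because $\bx_i \neq 0$. Setting $\varphi_i = 1/\theta_i$ then yields $\varphi_i\bx_i = \by_i$. Finally, $\bx_i = \tilde\bx_i/\|\tilde\bx_i\|$ holds with $\tilde\bx_i = (1+\eta\varphi_i)\bx_i$, which requires $1 + \eta\varphi_i > 0$. This sign condition is needed to obtain a fixed point of \eqref{eq:update} rather than an antipodal flip. I would either restrict the $\theta_i$ so that it holds, or note that a negative sign gives a period-two point. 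This completes the equivalence.
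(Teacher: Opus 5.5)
Your route is the paper's own. You stack $\varphi_i\bx_i=\bV\sum_j A_{ij}(\bx)\bx_j$ into $\bx=(\Theta\otimes I_d)(A(\bx)\otimes I_d)(I_n\otimes\bV)\bx$ with $\theta_i=1/\varphi_i$, then read singularity off the nonzero vector $\bx$. Your necessity direction coincides with the paper's.

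Your reinterpretation of the sufficiency direction is forced, not optional. Read literally (some $\Theta$ makes the matrix evaluated at $\bx$ singular), the ``if'' part is false. The matrix equals $I_{nd}-(\Theta A(\bx))\otimes\bV$, and $A(\bx)\mathds{1}=\mathds{1}$. So the choice $\Theta=\lambda_1^{-1}I_n$ gives $\bigl(I_{nd}-\lambda_1^{-1}A(\bx)\otimes\bV\bigr)(\mathds{1}\otimes\bv_1)=0$ for every configuration $\bx$, and every point of $(\mathbb{S}^{d-1})^n$ would qualify.

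The paper's one-line proof slides over exactly this point: it passes from ``the linear system has a nontrivial solution'' to ``$\bx$ solves it''. Your version is what that argument actually establishes: $\bx$ itself in the kernel of the matrix at $\bx$, all $\varphi_i\neq0$, and $1+\eta\varphi_i>0$. The sign condition is also missing from the paper, whose equilibrium lemmas only impose collinearity of $\tilde{\bx}_i$ and $\bx_i$.

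One small correction. A negative $1+\eta\varphi_i$ gives a period-two orbit only if it is negative for every $i$, since then $A(-\bx)=A(\bx)$. If only some tokens flip, $A$ changes and no such conclusion follows. So imposing $1+\eta\varphi_i>0$ is the right option.
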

\proof 
In vector form, the clustering equilibrium condition $ \bx_i =  \theta_i  \bV \sum_{j=1}^n A_{ij}(\bx)  \bm{\bx}_j $ becomes $ \bx = (\Theta \otimes I_d) (A(\bx) \otimes I_d)(I_n\otimes \bV)  \bx$, where $ \bx^T = [ \bx_1^T \, \ldots\, \bx_n^T ]$. 
Rewriting it as $  (I_n \otimes I_d -(\Theta \otimes I_d) (A(\bx) \otimes I_d)(I_n\otimes \bV)) \bx =0 $, the algebraic equation has nontrivial solutions if and only if the matrix $I_n \otimes I_d  -(\Theta \otimes I_d) (A(\bx) \otimes I_d)(I_n\otimes \bV)$ is singular.
\qed

\begin{remark}
From the proof of Proposition~\ref{prop:clustering-eq}, it is $ \bx = (\Theta \otimes I_d) (A(\bx) \otimes I_d)(I_n\otimes \bV)  \bx$, where $ \bx^T = [ \bx_1^T \, \ldots\, \bx_n^T ]$. It follows that if $ \bx_1, \ldots, \bx_n$ form an $m$-clustering equilibrium point, then also the antipodal point  $ -\bx_1, \ldots, -\bx_n $ is an $ m$-clustering equilibrium point because $ A(\bx) = A(-\bx)$. If in a clustering equilibrium only some (but not all) $ \bx_i $ flip sign, then the new state is typically no longer an equilibrium point.
\end{remark}

While the attention matrix $ A(\bx) $ is a function of the state even at the equilibrium point (with the exception of a consensus state), its rank is however fixed for various classes of equilibria. 

\begin{lemma}
\label{lem:cons-Aij}
For a consensus state $ \bx_i = \bv_k $ $ \forall \, i$ we have $ A_{ij}(\bx) = \frac{1}{n} $, i.e., the attention matrix $ A(\bx) $ is the rank-1 uniform matrix $ A(\bx)  =\frac{1}{n} \mathds{1} \mathds{1}^T $. For a bipartite consensus state $ \bx_i  = \pm \bv_k $ $ \forall \, i$, the attention matrix $A(\bx)$ has rank 2.
For an $m$-clustering equilibrium $ \bx$ the rank of $ A(\bx)$ is $m$. \end{lemma}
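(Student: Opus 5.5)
The plan is to write the attention matrix as a row-normalized Gram-type matrix and read off its rank from the number of distinct token positions. Put $G_{ij} = \langle \bQ \bx_i, \bK \bx_j\rangle = \bx_i^T \bQ^T \bK \bx_j$. Then $A(\bx) = D^{-1} E$, where $E_{ij} = e^{G_{ij}}$ and $D = \diag(E\mathds{1})$ is positive definite. Hence $\rank A(\bx) = \rank E$. Every statement then reduces to computing the rank of the entrywise exponential $E$ when the tokens take only a few distinct values.

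\emph{Consensus.} If $\bx_i = \bv_k$ for all $i$, then $G_{ij} = \bv_k^T\bQ^T\bK\bv_k =: c$ for all $i,j$. So $E = e^{c}\mathds{1}\mathds{1}^T$ and $D = n e^c I$, which gives $A = \frac{1}{n}\mathds{1}\mathds{1}^T$ of rank 1.

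\emph{Bipartite consensus.} Set $c = \bv_k^T \bQ^T\bK \bv_k$. For $i,j$ in the same group the entry is $G_{ij} = c$, and across groups it is $G_{ij} = -c$. After permuting indices so that $\mathcal{V}_1$ comes first, $E$ has constant blocks:
\[
E = \begin{bmatrix} e^{c}\mathds{1}_{n_1}\mathds{1}_{n_1}^T & e^{-c}\mathds{1}_{n_1}\mathds{1}_{n_2}^T \\ e^{-c}\mathds{1}_{n_2}\mathds{1}_{n_1}^T & e^{c}\mathds{1}_{n_2}\mathds{1}_{n_2}^T\end{bmatrix}
= (P\otimes\text{blocks}),
\]
that is, $E = U M U^T$. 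Here $U\in\mathbb{R}^{n\times 2}$ is the group-indicator matrix, which has full column rank when $n_1,n_2\ge1$, and $M = \begin{bmatrix} e^c & e^{-c}\\ e^{-c} & e^{c}\end{bmatrix}$. Since $\det M = e^{2c}-e^{-2c}$, the rank of $E$ is 2 exactly when $c\neq 0$. This is the point needing care: the statement holds generically, namely when $\bv_k^T\bQ^T\bK\bv_k \ne 0$. If $c=0$ the rank drops to 1, so I would state that nondegeneracy assumption explicitly.

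\emph{$m$-clustering.} The same factorization applies. If the tokens sit at $m$ distinct points $\bw_1,\ldots,\bw_m$, let $U\in\{0,1\}^{n\times m}$ be the cluster-membership matrix, which has rank $m$. Then $E = U M U^T$ with $M_{ab} = e^{\bw_a^T\bQ^T\bK\bw_b}$, so $\rank A = \rank M \le m$, with equality iff the $m\times m$ kernel matrix $M$ is nonsingular. The main obstacle is establishing $\det M\neq 0$. For symmetric positive definite $\bQ^T\bK$ and distinct $\bw_a$, I would argue that $M$ is a Gram matrix of the exponential kernel $e^{\langle \bw_a, \bw_b\rangle_{\bQ^T\bK}}$. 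This kernel is strictly positive definite on distinct points of the sphere because its Taylor expansion has all positive coefficients; this is the standard universality of the exponential/Gaussian kernel. Hence $M\succ 0$. For general $\bQ,\bK$ the nonsingularity holds only generically, since $\det M$ is a nonzero real-analytic function of the cluster points. I would therefore present the rank-$m$ claim as holding under this genericity, with $\rank A(\bx)\le m$ always true.
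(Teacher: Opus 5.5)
Your proof is correct, and it takes a different route from the paper's.

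\textbf{How the two proofs differ.} The paper reads the rank directly off the entries of $A(\bx)$. At consensus all entries equal $\frac{1}{n}$. At a bipartite state it lists the four values $\alpha_1^k/\beta_1^k$, $\alpha_2^k/\beta_1^k$, $\alpha_2^k/\beta_2^k$, $\alpha_1^k/\beta_2^k$, displays the two-block matrix, and calls rank 2 ``obvious''. For an $m$-clustering it notes that each row has $m$ distinct entries and appeals to a ``block counting argument''. Your factorization $A(\bx)=D^{-1}UMU^T$, with $U$ the full-column-rank membership matrix, gives $\rank A(\bx)=\rank M\le m$ at once. It also reduces exactness to the nonsingularity of the $m\times m$ kernel matrix $M$.

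\textbf{What your route buys.} It exposes exactly the caveat you raise, which the paper's argument misses. If $\bv_k^T\bQ^T\bK\bv_k=0$ (for instance, for every $k$ when $\bQ^T\bK$ is skew-symmetric), then $\alpha_1^k=\alpha_2^k$ and $\beta_1^k=\beta_2^k=n$. The bipartite attention matrix is then $\frac{1}{n}\mathds{1}\mathds{1}^T$, of rank 1. The state is nonetheless still an equilibrium, because $\bV\sum_j A_{ij}(\bx)\bx_j$ stays collinear with $\bv_k$. Likewise, $m$ distinct entries per row do not force rank $m$. Take $\bQ^T\bK=\bm{a}\bm{b}^T$ and two cluster points with $\bw_1^T\bm{a}=\bw_2^T\bm{a}$: two rows of $M$ then coincide, even though each row can have $m$ distinct entries.

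So your formulation is the correct version of the lemma: the rank is at most $m$ always, and equals $m$ iff $\det M\neq 0$. In the bipartite case this condition reads $\bv_k^T\bQ^T\bK\bv_k\neq 0$.

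\textbf{One refinement.} State $\det M\neq 0$ as a hypothesis rather than deriving it from $\det M$ being a nonzero analytic function of the cluster points. Clustering equilibria form a thin set determined by the weights, so genericity over freely chosen cluster positions does not automatically transfer to them. Also, $\det M\equiv 0$ for $m\ge 2$ when $\bQ^T\bK=0$.
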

\proof
At a consensus equilibrium $ \bx_i = \bv_k $ for all $ i $, and $ A_{ij}(\bx) $ is composed of all equal terms
\[ 
A_{ij}(\bx) = \frac{e^{  \bv_k^T \bQ^T\bK\bv_k  } }{\sum_{\ell=1}^n e^{\bv_k^T \bQ^T\bK \bv_k } }= \frac{1}{n}.
\]
At a bipartite consensus point $ \bx_i = \pm \bv_k $, split the $n$ tokens into two sets $ \mathcal{V}_1 $ and $ \mathcal{V}_2 $, $ \mathcal{V}_1\cup \mathcal{V}_2 = \{ 1, \ldots, n \}$,  according to whether $ \bx_i =\bv_k $ or $ \bx_i =- \bv_k$ at equilibrium. Assume that $ n_1 = | \mathcal{V}_1|  $ tokens are equal to $ \bv_k $ and $ n_2 = | \mathcal{V}_2 | $ equal to $ -\bv_k$, with $ n_1 + n_2 =n$.

Four different entries appear in $ A_{ij}(\bx)$, two due to the denominator (depending on whether $ \bx_i =\bv_k $ or $ \bx_i =- \bv_k $), and two due to the numerator (depending on whether $ \bx_i$ and $ \bx_j $ have the same sign). 
%\[ 
%A_{ij}(\bx) = \frac{ e^{ \pm \bv_k^T \bQ^T\bK \bv_k } }{n_1 e^{\pm \bv_k^T \bQ^T\bK \bv_k} + n_2 e^{\mp \bv_k^T \bQ^T\bK \bv_k } } 
%\]
More specifically, denoting  $ \alpha_1^k = e^{  \bv_k^T \bQ^T\bK \bv_k }  $ and $ \alpha_2^k = e^{ - \bv_k^T \bQ^T\bK \bv_k } $, then the entries of the attention matrix are 
\[ 
A_{ij}(\bx) 
= \begin{cases} 
\frac{\alpha_1^k}{n_1 \alpha_1^k + n_2 \alpha_2^k} & \text{if}\; i \in \mathcal{V}_1, j \in \mathcal{V}_1 \\ 
\frac{\alpha_2^k}{n_1 \alpha_1^k + n_2 \alpha_2^k}  & \text{if}\; i \in \mathcal{V}_1, j \in \mathcal{V}_2 \\ 
\frac{\alpha_2^k}{n_1 \alpha_2^k + n_2 \alpha_1^k}   & \text{if}\; i \in \mathcal{V}_2, j \in \mathcal{V}_1 \\ 
\frac{\alpha_1^k}{n_1 \alpha_2^k + n_2 \alpha_1^k}   & \text{if}\; i \in \mathcal{V}_2, j \in \mathcal{V}_2  .
   \end{cases}
\]
Letting $ \beta_1^k = n_1 \alpha_1^k + n_2 \alpha_2^k$, and $ \beta_2^k = n_1 \alpha_2^k + n_2 \alpha_1^k$ and assuming w.l.o.g. that the first $ n_1 $ agents are in $ \mathcal{V}_1 $ and the last $ n_2 $ in $ \mathcal{V}_2 $, the attention matrix is 
\beq
A(\bx) = \begin{bmatrix} \frac{1}{\beta_1^k} ( \alpha_1^k & \ldots & \ldots & \alpha_1^k & \alpha_2^k & \ldots & \alpha_2^k) \\
\vdots & & & \vdots & \vdots & & \vdots \\
\frac{1}{\beta_1^k} ( \alpha_1^k & \ldots & \ldots & \alpha_1^k & \alpha_2^k & \ldots & \alpha_2^k) \\
\frac{1}{\beta_2^k} ( \alpha_2^k &  \ldots &  \ldots & \alpha_2^k & \alpha_1^k & \ldots &\alpha_1^k) \\
\vdots & & & \vdots  & \vdots & &  \vdots \\
\frac{1}{\beta_2^k} ( \alpha_2^k & \ldots &  \ldots & \alpha_2^k & \alpha_1^k & \ldots & \alpha_1^k) 
\end{bmatrix} ,
\label{eq:A-bipart}
\eeq
from which it is obvious that the rank must be 2. 

As for an $m$-clustering equilibrium point with vectors $ \bm{w}_1, \ldots, \bm{w}_m$, following a similar procedure it is easy to realize that since $ \bm{w}_i \neq \bm{w}_j $ and $ \bQ^T \bK$ is not symmetric,  there are $ m^2 $ different entries in the numerators of $A_{ij}$, and only $m$ in the denominators. Rearranging the entries as in \eqref{eq:A-bipart}, each row of $A$ has exactly $m$ different terms, and a block counting argument leads to $ \rank(A)=m$.
\qed

\subsubsection{Stability analysis}

As in the proof of Lemma~\ref{lem:cons-Aij}, let $ \alpha_1^k = e^{  \bv_k^T \bQ^T\bK \bv_k }  $, $ \alpha_2^k = e^{ - \bv_k^T \bQ^T\bK \bv_k } $ and $ \beta_1^k = n_1 \alpha_1^k + n_2 \alpha_2^k$, $ \beta_2^k = n_1 \alpha_2^k + n_2 \alpha_1^k$. Denote also $ \gamma_1^k = \frac{n_1 \alpha_1^k - n_2 \alpha_2^k }{\beta_1^k} $ and $ \gamma_2^k =  \frac{n_2 \alpha_1^k - n_1 \alpha_2^k }{\beta_2^k} $.

\begin{theorem}
\label{thm:stab-main-self}
For the self-attention dynamics~\eqref{eq:update}, under Assumption~\ref{ass:V-sym}, and, if $ -\lambda_d > \lambda_1 $, for $ \eta<  \frac{2}{ |\lambda_1 + \lambda_d |} $, the consensus equilibrium associated to the principal eigenvector $ \bv_1 $ of $\bV$ is always locally asymptotically stable, while the other consensus equilibria $ \bv_k$, $ k=2, \ldots, d $, are all unstable. 
A bipartite consensus equilibrium $ \bx_i = \pm \bv_k$ $ \forall \, i$ is asymptotically stable if $ \eta< \min\left( \frac{1}{\lambda_1 } , \, \frac{1}{ |\lambda_d |} \right) $ and the following inequalities are all satisfied
\benu
\item  
$ \frac{ |1 - (1-\eta \lambda_k \gamma_i^k)  \eta  \lambda_k\gamma_i^k | } {1+ \eta \lambda_k \gamma_i^k} <1 $, $ i =1,2 $;  
\item  $  a_j^k d_j^k - b_j^k c_j^k < 1 $ and $ a_j^k + d_j^k <1+ a_j^k d_j^k - b_j^k c_j^k $, $\, \forall \, j=1, \ldots, k-1, k+1,\ldots,  d $, where
 \beq
 \begin{split}
 a_j^k & = \frac{ 1+ \eta n_ 1 \frac{\alpha_1^k}{\beta_1^k}  \lambda_j  }{1+ \eta \lambda_k \gamma_1^k} , \\
 b_j^k  &= \frac{ \eta n_2 \frac{\alpha_2^k}{\beta_1^k}  \lambda_j  }{1+ \eta \lambda_k \gamma_1^k}, \\
 c_j^k & = \frac{ \eta n_1 \frac{\alpha_2^k}{\beta_2^k}   \lambda_j  }{1+ \eta \lambda_k \gamma_2^k} , \\
 d_j^k  & =\frac{  1+  \eta n_ 2 \frac{\alpha_1^k}{\beta_2^k}   \lambda_j  }{1+ \eta \lambda_k \gamma_2^k  }
 \end{split}
 \label{eq:a-d} 
 \eeq
 \item $  \frac{1}{1+ \eta \lambda_k \gamma_i^k} <1$, $ i=1,2$.
 \eenu
%and $ c_j^k b_j^k > a_j^k d_j^k $ $ \forall \, j=1, \ldots, k-1, k+1,\ldots,  d $.
All polygonal equilibria are unstable.
\end{theorem}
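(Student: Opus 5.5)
The plan is Jacobian linearization of \eqref{eq:update} at each class of equilibria, mirroring Lemmas~\ref{lem:moja-F-eigenvalues}--\ref{lem:moja-polygonal-stab}. The only new ingredient is the derivative of the attention coefficients. I will use intrinsic perturbations throughout, $\bx_i = \bx_i^\ast + \bw_i$ with $\bw_i \in T_{\bx_i^\ast}\mathbb{S}^{d-1}$, as in Remark~\ref{rem:two-classes}, and expand to first order.

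\emph{Consensus at $\bv_k$.} By Lemma~\ref{lem:cons-Aij}, $A(\bx^\ast)=\frac{1}{n}\mathds{1}\mathds{1}^T$. The first-order variation of $\by_i$ is then
\[
\frac{1}{n}\bV\sum_j \bw_j + \bV\sum_j \delta A_{ij}\, \bv_k .
\]
The second term is a multiple of $\bv_k$, since $\bV\bv_k=\lambda_k\bv_k$. Moreover $\sum_j\delta A_{ij}=0$ because the rows of $A$ remain stochastic, so this term vanishes identically. Even if it did not, it would be killed by the tangent projector $(I-\bv_k\bv_k^T)$ appearing in the linearized normalization. Hence the intrinsic linearization coincides with that of the multiagent Oja system \eqref{eq:update-moja}, and the spectrum is given by classes 2 and 3 of Lemma~\ref{lem:moja-F-eigenvalues}. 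Lemma~\ref{lem:moja-consensus-stab} then gives local asymptotic stability of $\bv_1$ under the stated condition on $\eta$, and instability of $\bv_k$ for $k\ge 2$.

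\emph{Bipartite consensus.} Take $A$ as in \eqref{eq:A-bipart}. At the equilibrium, $\by_i = \lambda_k\gamma_1^k\bv_k$ for $i\in\mathcal{V}_1$ and $\by_i=-\lambda_k\gamma_2^k\bv_k$ for $i\in\mathcal{V}_2$. The normalizing factors are therefore $1+\eta\lambda_k\gamma_i^k$, which must be positive; I expect this is where $\eta<\min(1/\lambda_1,1/|\lambda_d|)$ is used. Variations of $A$ again only multiply $\pm\bv_k$, so they drop out after projection.

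Writing $\bw_i=\sum_\ell \xi^i_\ell\bv_\ell$, the linearized dynamics decouple per eigendirection $\ell\neq k$. Within each direction, tokens in the same group are exchangeable. Splitting into within-group zero-sum modes and group-average modes gives two kinds of eigenvalues:
\begin{itemize}
\item[(i)] for zero-sum modes, $1/(1+\eta\lambda_k\gamma_i^k)$, which yields condition 3;
\item[(ii)] for the group averages in direction $j$, the $2\times2$ matrix $\begin{bmatrix} a_j^k & b_j^k\\ c_j^k & d_j^k\end{bmatrix}$ of \eqref{eq:a-d}.
\end{itemize}
The Jury/Schur conditions for a $2\times 2$ matrix, $\det<1$ and $\mathrm{tr}<1+\det$ (together with $-\mathrm{tr}<1+\det$, which should follow from positivity of the entries), give condition 2.

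Condition 1 should come from the extrinsic $\bv_k$-direction eigenvalue, computed analogously to class 1 in Lemma~\ref{lem:moja-F-eigenvalues} with $\lambda_k$ replaced by $\lambda_k\gamma_i^k$. I will include it as stated, since the theorem lists it.

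\emph{Polygonal equilibria.} I will repeat the argument of Lemma~\ref{lem:moja-polygonal-stab} with $A(\bs)$ in place of $\frac{1}{n}\mathds{1}\mathds{1}^T$. Perturbing only along $\bv_1$ gives the update
\[
\bm{\xi}_1^+ = \bigl(I+\eta\lambda_1(I-Z_1^2)A(\bs)\bigr)\bm{\xi}_1 + (\text{terms from }\delta A).
\]
The $\delta A$ terms multiply $\sum_j A_{ij}\bV\bs_j$, and I would argue these vanish at first order because $\by_i=0$. Since $(I-Z_1^2)A$ is a nonnegative matrix, Perron--Frobenius gives it a positive eigenvalue, so the spectral radius exceeds $1$ and the equilibrium is unstable.

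\emph{Main obstacle.} The hard step is the bipartite case. One must check that the attention derivative terms truly disappear after projection, and the mode bookkeeping must reproduce exactly the $2\times2$ matrices \eqref{eq:a-d}. The polygonal claim also needs care, because there $\delta A$ multiplies nonzero vectors $\bV\bs_j$.
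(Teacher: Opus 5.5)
Your consensus and bipartite arguments are correct and follow the paper's route (Lemmas~\ref{lem:lin-stab-self1}, \ref{lemm:simplif-eta} and~\ref{lem:lin-stab-self2}). Working intrinsically makes two things transparent: the attention variations are multiples of $\bv_k$ and are removed by $I-\bv_k\bv_k^T$, and your zero-sum and group-average modes reproduce exactly the third and second eigenvalue classes of Lemma~\ref{lem:lin-stab-self2}. Condition~1 is the extrinsic class, so assuming it is harmless. One small correction: the Schur--Cohen inequality $1+\mathrm{tr}+\det>0$ (omitted in the paper) does hold, but not because the entries are positive, since $b_j^k,c_j^k$ have the sign of $\lambda_j$. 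Rather, the $2\times 2$ matrix equals $\diag(1+\eta\lambda_k\gamma_1^k,\,1+\eta\lambda_k\gamma_2^k)^{-1}(I+\eta\lambda_j N)$, where $N_{11}=n_1\alpha_1^k/\beta_1^k$, $N_{12}=n_2\alpha_2^k/\beta_1^k$, $N_{21}=n_1\alpha_2^k/\beta_2^k$, $N_{22}=n_2\alpha_1^k/\beta_2^k$. This $N$ is positive and row-stochastic, so for $\eta|\lambda_j|<1$ both the trace and the determinant are positive.

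The genuine gap is in the polygonal case. It is not true that the $\delta A$ terms multiply $\sum_j A_{ij}\bV\bs_j$. Differentiating the softmax gives $\delta A_{ij}=A_{ij}\bigl(\delta s_{ij}-\sum_\ell A_{i\ell}\,\delta s_{i\ell}\bigr)$ with $\delta s_{ij}=\bw_i^T\bQ^T\bK\bs_j+\bs_i^T\bQ^T\bK\bw_j$, hence
\[
\bV\sum_j \delta A_{ij}\,\bs_j=\bV\sum_j A_{ij}\,\delta s_{ij}\,\bs_j-\Bigl(\sum_\ell A_{i\ell}\,\delta s_{i\ell}\Bigr)\bV\sum_j A_{ij}\bs_j ,
\]
and only the second piece vanishes at a polygonal point. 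The first piece is linear in $\bw$, of the same order as the term $\eta\bV\sum_j A_{ij}\bw_j$ you keep, and it survives the projector. For example, take $n=2$, $\bs_2=-\bs_1$, $\bs_1^T\bQ^T\bK\bs_1=0$, which is a polygonal point with $A=\frac12\mathds{1}\mathds{1}^T$. There it adds $\frac{\eta}{2}\bigl(2\bw_1^T\bQ^T\bK\bs_1+\bs_1^T\bQ^T\bK(\bw_1-\bw_2)\bigr)(I-\bs_1\bs_1^T)\bV\bs_1$ to $\bw_1^+$, which is generically nonzero. The paper retains this contribution (it is $\Psi\bm{\zeta}_1$ in \eqref{eq:ode-eta-self}). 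You must do the same and then show it cannot cancel the unstable Perron direction of $I+\eta\lambda_1(I-Z_1^2)A(\bs)$; merely bounding it will not suffice, precisely because it scales with $\bw$.

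The same step has a second issue, which is also present in Lemma~\ref{lem:polyg-self}. Perturbations $\bw_i=\xi^i\bv_1$ are not tangent at $\bs_i$ unless $\bs_i\perp\bv_1$, contrary to your intrinsic setup. They also do not span an invariant subspace of the linearization, since $I-\bs_i\bs_i^T$ maps $\bv_1$ to $\bv_1-(\bv_1^T\bs_i)\bs_i$. So a spectral radius larger than $1$ for the reduced $n\times n$ matrix does not by itself yield an eigenvalue of the linearized map outside the unit disk.
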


The proof, based on Lyapunov indirect method, is broken down into various lemmas. 
As in Section~\ref{sec:multi-oja}, let $ f_i(\bx)   $ be the right-hand side of \eqref{eq:update} and $ f(\bx)  $ its vectorization. The (extrinsic, see Remark~\ref{rem:two-classes}) Jacobian of $ f(\bx)  $  is as follows.

\begin{lemma}
\label{lem:Jacob-self}
The Jacobian of  \eqref{eq:update}, $ J(\bx) = \pde{f(\bx)}{\bx}= \begin{bmatrix}\pde{f_i(\bx)}{\bx_h} \end{bmatrix} $, has the following blocks:
 \beq
 \label{eq:jacon-atten1}
 \begin{split}
 \pde{f_i(\bx)}{\bx_h} = &  
 \frac{I \delta_{ih} + \eta \bV \left( \sum_j  \bx_j \pde{A_{ij} (\bx) }{\bx_h}   + A_{ih}(\bx) I\right) }{\|  \bx_i  +  \eta \bV \sum_j A_{ij}(\bx) \bx_j \| } \\
&  -  \frac{  \left(  \bx_i  + \eta \bV \sum_j A_{ij}(\bx) \bx_j \right) \pde{}{\bx_h} \|  \bx_i  +  \eta \bV \sum_j A_{ij}(\bx) \bx_j \| } {\|  \bx_i  +  \eta \bV \sum_j A_{ij}(\bx) \bx_j \|^2 }
\end{split}
 \eeq
 where
 \beq
 \pde{A_{ij} (\bx) }{\bx_h} 
  =  \left(-  \bx_i^T \bQ^T \bK A_{ih}(\bx)  +  \bx_i^T \bQ^T \bK  \delta_{jh} + \left( \bx_j^T \bK^T \bQ  - \sum_\ell \bx_\ell^T \bK^T \bQ A_{i\ell} (\bx) \right) \delta_{ih} \right) A_{ij} (\bx)
 \label{eq:partial-A}
 \eeq
and
\beq
\label{eq:Jac-attent1}
\begin{split}
&\pde{\| \bx_i  +  \eta \bV \sum_j A_{ij}(\bx) \bx_j\|}{\bx_h } = \\
&  = \left( \eta \bx_i^T \bV  + \eta^2 \sum_j \bx_j^T A_{ij}(\bx) \bV^T (I - \bx_i \bx_i^T)  \bV   \right)  \left( \sum_j\bx_j  \pde{A_{ij}(\bx)}{\bx_h} + A_{ih}(\bx) I\right)  \\
& 
+ \left( \left(  \eta - \eta^2    \bx_i^T  \bV \sum_j A_{ij}(\bx) \bx_j \right)  \sum_j A_{ij}(\bx) \bx_j^T \bV^T  \right) \delta_{ih}
\end{split}
\eeq
\end{lemma}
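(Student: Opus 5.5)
The statement is a direct differentiation: write $f_i(\bx)=\tilde{\bx}_i/\|\tilde{\bx}_i\|$ with $\tilde{\bx}_i=\bx_i+\eta\bV\by_i$, $\by_i:=\sum_j A_{ij}(\bx)\bx_j$ (here $\by_i$ is the attention-weighted mean, without $\bV$). Apply the quotient rule
\[
\pde{f_i}{\bx_h}=\frac{\pde{\tilde{\bx}_i}{\bx_h}}{\|\tilde{\bx}_i\|}-\frac{\tilde{\bx}_i\,\pde{\|\tilde{\bx}_i\|}{\bx_h}}{\|\tilde{\bx}_i\|^2}.
\]
Then compute three pieces in order: the derivative of $\tilde{\bx}_i$, the derivative of the softmax entries, and the derivative of the norm.

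\textbf{Derivative of $\tilde{\bx}_i$.} By the product rule on $\sum_j A_{ij}(\bx)\bx_j$,
\[
\pde{\tilde{\bx}_i}{\bx_h}=I\delta_{ih}+\eta\bV\Big(\sum_j\bx_j\pde{A_{ij}}{\bx_h}+A_{ih}I\Big).
\]
Inserted in the first term of the quotient rule, this gives the first line of \eqref{eq:jacon-atten1}.

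\textbf{Derivative of the softmax.} Write $A_{ij}=e^{s_{ij}}/\sum_\ell e^{s_{i\ell}}$ with $s_{ij}=\bx_i^T\bQ^T\bK\bx_j$. Then
\[
\pde{A_{ij}}{\bx_h}=A_{ij}\Big(\pde{s_{ij}}{\bx_h}-\sum_\ell A_{i\ell}\pde{s_{i\ell}}{\bx_h}\Big).
\]
The score derivative splits by which index equals $h$:
\[
\pde{s_{ij}}{\bx_h}=\bx_i^T\bQ^T\bK\,\delta_{jh}+\bx_j^T\bK^T\bQ\,\delta_{ih}.
\]
Substituting and collecting, the $\delta_{jh}$ part yields $\bx_i^T\bQ^T\bK\delta_{jh}-\bx_i^T\bQ^T\bK A_{ih}$. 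The $\delta_{ih}$ part yields $\big(\bx_j^T\bK^T\bQ-\sum_\ell A_{i\ell}\bx_\ell^T\bK^T\bQ\big)\delta_{ih}$. Multiplying by $A_{ij}$ gives \eqref{eq:partial-A}. When $h=i=j$ both contributions appear simultaneously, and the Kronecker deltas handle this case automatically.

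\textbf{Derivative of the norm.} Follow exactly the route of Lemma~\ref{lem:Jac-moja}. Expand
\[
\|\tilde{\bx}_i\|=(1+2\eta\bx_i^T\bV\by_i+\eta^2\|\bV\by_i\|^2)^{1/2}
\]
with the binomial series up to $O(\eta^2)$, which gives
\[
1+\eta\bx_i^T\bV\by_i+\tfrac{\eta^2}{2}\,\by_i^T\bV^T(I-\bx_i\bx_i^T)\bV\by_i+O(\eta^3).
\]
Differentiate with the chain rule in two channels. The first channel is through $\by_i$, whose derivative is $\sum_j\bx_j\pde{A_{ij}}{\bx_h}+A_{ih}I$. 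It contributes the row vector $\eta\bx_i^T\bV+\eta^2\by_i^T\bV^T(I-\bx_i\bx_i^T)\bV$ times that matrix, using symmetry of the projector. The second channel is the explicit dependence on $\bx_i$, present only when $h=i$. It gives $\eta\by_i^T\bV^T$ from the linear term and $-\eta^2(\bx_i^T\bV\by_i)\by_i^T\bV^T$ from the projector term. This reproduces \eqref{eq:Jac-attent1}.

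\textbf{Main obstacle.} The only delicate point is bookkeeping, in two respects. First, one must keep row and column (transpose) conventions consistent when combining the softmax row vectors with the column vectors $\bx_j$. Second, one must state clearly that the norm derivative is the second-order truncation in $\eta$, as in Lemma~\ref{lem:Jac-moja}, rather than the exact expression. I would check the result by setting $A\equiv\frac1n\mathds{1}\mathds{1}^T$, so that $\pde{A}{\bx}=0$, and verifying that it collapses to \eqref{eq:Jacob-moja}.
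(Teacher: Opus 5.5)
Your proposal is correct and follows the paper's proof essentially step by step: differentiate the softmax to get \eqref{eq:partial-A}, expand the norm to second order in $\eta$ by the binomial series as in Lemma~\ref{lem:Jac-moja}, differentiate that expansion both through the attention-weighted mean and through the explicit $\bx_i$ (when $h=i$), and substitute into the quotient rule. As you note, \eqref{eq:Jac-attent1} is only the $O(\eta^2)$ truncation; also, like the paper, you set $\|\bx_i\|=1$ before differentiating, which is why the ambient term $\bx_i^T\delta_{ih}/\|\tilde{\bx}_i\|$ is absent, and this is harmless because that term acts only along the normal direction $\bx_i$.
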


\proof
The partial derivatives of $ A_{ij}(\bx) $ are 
 \[
 \begin{split}
 \pde{A_{ij} (\bx) }{\bx_h} 
  = & \pde{}{\bx_h} \frac{e^{\langle \bQ \bx_i, \bK \bx_j \rangle } }{\sum_{\ell=1}^n e^{\langle \bQ \bx_i, \bK \bx_\ell \rangle } }\\
 =  & -  \bx_i^T \bQ^T \bK A_{ij} (\bx) A_{ih}(\bx) +  \bx_i^T \bQ^T \bK A_{ij} (\bx)\delta_{jh} \\
&  + \bx_j^T \bK^T \bQ A_{ij} (\bx)\delta_{ih}  - \sum_\ell \bx_\ell^T \bK^T \bQ A_{i\ell} (\bx)A_{ij}(\bx) \delta_{ih} 
 \end{split}
 \]
 or \eqref{eq:partial-A} after regrouping.
Similarly to the proof of Lemma~\ref{lem:Jac-moja},  the series expansion of the normed term is
\[
\begin{split}
\| & \bx_i  +  \eta \bV \sum_j A_{ij}(\bx) \bx_j \| =  \left(  (  \bx_i +  \eta \by_i )^T (  \bx_i + \eta \by_i) \right)^{\frac{1}{2}}  
% = \left( 1 + 2  \eta \bx_i ^T \by_i + \eta^2 \| \by_i \|^2   \right)^{\frac{1}{2}} 
\\
& \approx  1 + \frac{1}{2} \left( 2  \eta \bx_i ^T \by_i + \eta^2 \| \by_i \|^2  \right) - \frac{1}{2}  \eta^2 ( \bx_i^T \by_i ) ^2 + O(\eta^3) \\
& =  1 + \eta \bx_i^T \bV \sum_j A_{ij}(\bx) \bx_j + \frac{\eta^2}{2} \left( \sum_j A_{ij}(\bx) \bx_j^T \bV^T \bV \sum_j A_{ij}(\bx) \bx_j - ( \bx_i^T \bV \sum_j A_{ij}(\bx) \bx_j)^2 \right)  \\
&+ O(\eta^3)
\end{split}
\]
Differentiating it
\[
\begin{split}
& \pde{\| \bx_i   +  \eta \bV \sum_j A_{ij}(\bx) \bx_j\|}{\bx_h } = \\
& \qquad =  \,  \eta \,  \bx_i^T \bV \left(  \sum_j \bx_j \pde{A_{ij}(\bx)}{\bx_h} + A_{ih}(\bx) I \right) \\
& \qquad \qquad + \eta^2 \left(  \sum_j \bx_j^T A_{ij}(\bx) \bV^T \bV  \left(  \sum_j \bx_j\pde{A_{ij}(\bx)}{\bx_h}  + A_{ih}(\bx)I \right) 
 \right. \\
& \qquad \qquad \left. -  \left( \bx_i^T \bV \sum_j A_{ij}(\bx) \bx_j \right) \bx_i^T \bV \left(  \sum_j\bx_j \pde{A_{ij}(\bx)}{\bx_h}  + A_{ih}(\bx) I\right) \right) \\
& \qquad \qquad + \left( \eta  \sum_j \bx_j^T A_{ij}(\bx) \bV^T - \eta^2  \left( \bx_i^T \bV \sum_j A_{ij}(\bx) \bx_j \right)\sum_j \bx_j^T A_{ij}(\bx) \bV^T \right) \delta_{ih}
\end{split}
\]
or \eqref{eq:Jac-attent1} after regrouping.
One gets the result inserting these quantities into \eqref{eq:jacon-atten1}.
% \[
% \pde{f_i(\bx)}{\bx_h} = 
% \frac{I \delta_{ih} + \eta \bV \left( \sum_j \pde{A_{ij}(\bx)}{\bx_h} \bx_j + A_{ih}(\bx) \right) }{\|  \bx_i  +  \eta \bV \sum_j A_{ij}(\bx) \bx_j \| } 
% -  \frac{  \left(  \bx_i  + \eta \bV \sum_j A_{ij}(\bx) \bx_j \right) \pde{}{\bx_h} \|  \bx_i  +  \eta \bV \sum_j A_{ij}(\bx) \bx_j \| } {\|  \bx_i  +  \eta \bV \sum_j A_{ij}(\bx) \bx_j \|^2 }
% \]
% one gets the result.
 \qed

The following lemma states that the behavior of the self-attention model \eqref{eq:update} at a consensus point is locally identical to that of the multiagent Oja model \eqref{eq:update-moja}.
\begin{lemma}
\label{lem:lin-stab-self1}
At a consensus point $ \bv_k$, the eigenvalues of the Jacobian matrix $ J(\bv_k) $ are those indicated in Lemma~\ref{lem:moja-F-eigenvalues}.
In particular, , under Assumption~\ref{ass:V-sym}, and, if $ -\lambda_d > \lambda_1 $, for $ \eta<  \frac{2}{ |\lambda_1 + \lambda_d |} $, the consensus point $ \bx_i =\bv_1$ $ \forall\, i$ is locally asymptotically stable for \eqref{eq:update}, while the remaining consensus equilibria $ \bx_i= \bv_k$ $ \forall \, i $, with $ k= 2, \ldots, d $, are all unstable. 
\end{lemma}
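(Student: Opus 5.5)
The plan is to evaluate the Jacobian of Lemma~\ref{lem:Jacob-self} at the consensus point $\bx_i=\bv_k$ for all $i$. We will show that it coincides with the Jacobian \eqref{eq:JacobF-moja} of the multiagent Oja system \eqref{eq:update-moja}. Once this is established, the eigenvalues are those listed in Lemma~\ref{lem:moja-F-eigenvalues}. The stability claims then follow verbatim from the argument of Lemma~\ref{lem:moja-consensus-stab}, using Remark~\ref{rem:two-classes} to discard the first (extrinsic) class of eigenvalues.

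The first step uses Lemma~\ref{lem:cons-Aij}: at consensus $A_{ij}(\bx)=\frac1n$ for all $i,j$, so every term in \eqref{eq:jacon-atten1} and \eqref{eq:Jac-attent1} in which $A$ appears undifferentiated reduces to the corresponding Oja term with $\frac{1}{n}$. The second and key step is to show that the terms containing $\pde{A_{ij}}{\bx_h}$ drop out. These enter only through the combination $\sum_j \bx_j \pde{A_{ij}}{\bx_h}$. Since $\bx_j=\bv_k$ for every $j$, this combination equals $\bv_k \sum_j \pde{A_{ij}}{\bx_h}$. The softmax row sums satisfy $\sum_j A_{ij}\equiv 1$ identically in $\bx$, hence $\sum_j \pde{A_{ij}}{\bx_h}=0$. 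The same cancellation can be checked directly on \eqref{eq:partial-A}: summing over $j$ with $A_{ij}=\frac1n$, the first two terms give $-\frac1n\bx_i^T\bQ^T\bK+\frac1n\bx_i^T\bQ^T\bK=0$. The $\delta_{ih}$ terms give $\frac1n\big(\sum_j \bx_j^T - n\cdot\frac1n\sum_\ell \bx_\ell^T\big)\bK^T\bQ=0$. Consequently every derivative-of-attention contribution vanishes, and $J(\bv_k)$ reduces block by block to \eqref{eq:Jacob-moja} evaluated at $\bv_k$, i.e., to \eqref{eq:JacobF-moja}.

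With the Jacobians identified, Lemma~\ref{lem:moja-F-eigenvalues} gives the spectrum. The tangent eigenvalues at $\bv_1$ are $\frac{1+\eta\lambda_h}{1+\eta\lambda_1}$ and $\frac{1}{1+\eta\lambda_1}$. These lie in the open unit disk by Assumption~\ref{ass:V-sym}, with the condition $\eta<\frac{2}{|\lambda_1+\lambda_d|}$ needed only when $-\lambda_d>\lambda_1$. At $\bv_k$, $k\ge2$, the eigenvalue $\frac{1+\eta\lambda_1}{|1+\eta\lambda_k|}>1$ gives instability, exactly as in Lemma~\ref{lem:moja-consensus-stab}.

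The main obstacle is bookkeeping rather than conceptual. The Jacobian expressions \eqref{eq:jacon-atten1}--\eqref{eq:Jac-attent1} come from a second-order truncation in $\eta$, and one must make sure that the terms multiplying $\pde{A}{\bx_h}$ always appear in the factored form $\big(\sum_j\bx_j\pde{A_{ij}}{\bx_h}\big)$, so that the row-sum cancellation applies to all of them, including the $\eta^2$ term in \eqref{eq:Jac-attent1}. Inspecting \eqref{eq:Jac-attent1} shows this factoring holds, so the cancellation is uniform. A second point to check is that the normalizing factor $\|\bv_k+\eta\lambda_k\bv_k\|=|1+\eta\lambda_k|$ produces the same sign factors $\sign(1+\eta\lambda_k)$ as in \eqref{eq:JacobF-moja}.
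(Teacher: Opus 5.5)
Your proposal is correct and follows the same route as the paper: you use Lemma~\ref{lem:cons-Aij} to get uniform attention at consensus, identify $J(\bv_k)$ with \eqref{eq:JacobF-moja}, and then invoke Lemmas~\ref{lem:moja-F-eigenvalues} and~\ref{lem:moja-consensus-stab}. You also check explicitly that the attention-derivative terms cancel, since they enter only through $\sum_j \bx_j \pde{A_{ij}}{\bx_h} = \bv_k \sum_j \pde{A_{ij}}{\bx_h} = 0$ by the softmax row-sum identity; the paper's one-line proof leaves this step implicit, and uniformity of $A$ at the point does not by itself remove its derivatives.
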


\proof
From Lemma~\ref{lem:cons-Aij}, in a consensus equilibrium the attention matrix is the uniform matrix $ A(\bm{\bv_k}) =\frac{1}{n} \mathds{1}\mathds{1}^T$.
It follows that the Jacobian at $ \bx_i = \bv_k $, $ J(\bv_k) $, is still given by \eqref{eq:JacobF-moja}, and hence, from Lemma~\ref{lem:moja-consensus-stab}, that $ \bv_1 $ is locally asymptotically stable while $ \bv_k $, $k=2, \ldots, d $, are all unstable.
\qed

We consider now a bipartite consensus equilibrium and assume w.l.o.g. that the first $ n_1 $ agents are in $ \mathcal{V}_1 $ and the last $ n_2 $ in $ \mathcal{V}_2 $.

\begin{lemma}
\label{lem:Jac-bip-cons}
At a bipartite consensus point $ \pm  \bv_k $, the Jacobian $ J(\pm\bv_k) $ is given by 
\[
\begin{split}
J(\pm\bv_k) & = 
 \begin{bmatrix} 
J_o^{11} & \ldots &  J_o^{11} &  J_o^{12}   & \ldots &  J_o^{12} \\ 
\vdots & & \vdots & \vdots & & \vdots \\
J_o^{11} & \ldots &  J_o^{11} &  J_o^{12}   & \ldots &  J_o^{12} \\ 
J_o^{21} & \ldots &  J_o^{21} &  J_o^{22}   & \ldots &  J_o^{22} \\ 
\vdots & & \vdots & \vdots & & \vdots \\
J_o^{21} & \ldots &  J_o^{21} &  J_o^{22}   & \ldots &  J_o^{22} \\ 
 \end{bmatrix} 
 +
  \begin{bmatrix} J_d^1 & \\ &  \ddots \\ & &  J_d^1 \\
& & &  J_d^2 \\ & & & & \ddots \\ & & & & &  J_d^2   \end{bmatrix} 
,
 \end{split}
\]
where
\beq\label{eq:J_bip}
 \begin{split}
 J_o^{11}& =  \frac{
  \eta  \frac{\alpha_1^k}{\beta_1^k}  \left(\sigma_1  \bV - \lambda_k \bv_k \bv_k^T  - ( 1-\sigma_1) (1-\gamma_1^k) \lambda_k \bv_k \bv_k^T \bQ^T \bK  \right) }
{1+ \eta \lambda_k \gamma_1^k} \\
 J_o^{12} & =  \frac{
  \eta  \frac{\alpha_2^k}{\beta_1^k}  \left( \sigma_1 \bV - \lambda_k \bv_k \bv_k^T + (1- \sigma_1) (1+\gamma_1^k) \lambda_k \bv_k \bv_k^T \bQ^T \bK  \right) }
{1+ \eta \lambda_k \gamma_1^k} 
 \\
  J_o^{21} &  =  \frac{
  \eta  \frac{\alpha_2^k}{\beta_2^k}  \left( \sigma_2 \bV - \lambda_k \bv_k \bv_k^T + (1-\sigma_2) (1+\gamma_2^k) \lambda_k \bv_k \bv_k^T \bQ^T \bK  \right) }
{1+ \eta \lambda_k \gamma_2^k} 
 \\
   J_o^{22} & =    \frac{
  \eta  \frac{\alpha_1^k}{\beta_2^k}  \left( \sigma_2 \bV - \lambda_k \bv_k \bv_k^T - (1-\sigma_2) (1-\gamma_2^k) \lambda_k \bv_k \bv_k^T \bQ^T \bK \right) }
{1+ \eta \lambda_k \gamma_2^k} 
 \\
 J_d^1 &  = \frac{\sigma_1 I - ( 1- \sigma_1) \eta (1-(\gamma_1^k)^2) \lambda_k \bv_k \bv_k^T \bQ^T \bK - (1-\eta \lambda_k \gamma_1^k)  \eta  \lambda_k \gamma_1^k  \bv_k \bv_k^T }
 {1+ \eta \lambda_k \gamma_1^k} \\
  J_d^2 &  = \frac{\sigma_2 I -(1-\sigma_2)  \eta (1-(\gamma_2^k)^2) \lambda_k \bv_k \bv_k^T \bQ^T \bK  - (1-\eta \lambda_k \gamma_2^k) \eta  \lambda_k \gamma_2^k  \bv_k \bv_k^T }
{1+ \eta \lambda_k \gamma_2^k} \\
 \end{split}
 \eeq
 with $ \sigma_1 = \sign({1+ \eta \lambda_k \gamma_1^k}) $ and $ \sigma_2 = \sign({1+ \eta \lambda_k \gamma_2^k}) $.
 
\end{lemma}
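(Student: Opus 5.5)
The plan is to evaluate the general Jacobian of Lemma~\ref{lem:Jacob-self} at the bipartite point $\bx_i=\bv_k$ for $i\in\mathcal{V}_1$ and $\bx_i=-\bv_k$ for $i\in\mathcal{V}_2$. The attention matrix there is given explicitly by \eqref{eq:A-bipart}. The computation splits according to whether $i\in\mathcal{V}_1$ or $\mathcal{V}_2$, and whether $h$ lies in $\mathcal{V}_1$ or $\mathcal{V}_2$. Since the whole expression depends only on these two memberships, the Jacobian has the displayed block-constant off-diagonal structure plus block-diagonal corrections $J_d^1,J_d^2$.

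The first step is to compute the basic scalar quantities. For $i\in\mathcal{V}_1$ the averaged vector is $\sum_j A_{ij}\bx_j=\frac{n_1\alpha_1^k-n_2\alpha_2^k}{\beta_1^k}\bv_k=\gamma_1^k\bv_k$. Hence $\by_i=\lambda_k\gamma_1^k\bv_k$ and $\tilde\bx_i=(1+\eta\lambda_k\gamma_1^k)\bv_k$, whose norm is $|1+\eta\lambda_k\gamma_1^k|$. This produces both the common denominator and the sign $\sigma_1$. The case $i\in\mathcal{V}_2$ is symmetric: $\sum_jA_{ij}\bx_j=-\gamma_2^k\bv_k$, and $\bx_i=-\bv_k$, so again $\tilde\bx_i=-(1+\eta\lambda_k\gamma_2^k)\bv_k$. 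Since $\bx_i^T\bV\sum_jA_{ij}\bx_j=\lambda_k\gamma_i^k$, the tangential factor $(I-\bx_i\bx_i^T)\bV\bv_k$ vanishes. This kills the $\eta^2\bV^T(I-\bx_i\bx_i^T)\bV$ term in \eqref{eq:Jac-attent1}.

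The second step is the attention derivative \eqref{eq:partial-A}. I would evaluate $\sum_j\bx_j\,\partial A_{ij}/\partial\bx_h$ using $\bx_j=\pm\bv_k$. For $h\neq i$ only the first two terms of \eqref{eq:partial-A} survive. They give $A_{ih}\bv_k\bv_k^T\bQ^T\bK$ multiplied by $(\pm1-\gamma_i^k)$, according to the sign of $\bx_h$ relative to the sum. This is the origin of the factors $(1\mp\gamma_1^k)$ in $J_o^{11},J_o^{12}$ and analogously in $J_o^{21},J_o^{22}$. For $h=i$ the extra $\delta_{ih}$ term contributes $(\bx_j^T-\sum_\ell A_{i\ell}\bx_\ell^T)\bK^T\bQ$. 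Summed against $\bx_jA_{ij}$, it yields a variance-like coefficient $1-(\gamma_i^k)^2$ times $\bv_k\bv_k^T\bQ^T\bK$ (up to the transpose convention), and this goes into $J_d^i$.

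The third step inserts these into \eqref{eq:jacon-atten1}. The numerator term and the derivative-of-norm term combine as $\frac{1}{|\alpha|}(I-\bv_k\bv_k^T)(\cdot)$ plus residual $\bv_k\bv_k^T$ pieces, as in the derivation of \eqref{eq:JacobF-moja}. The cancellation of the $\bQ^T\bK$ contributions when $\sigma_i=1$ explains the prefactors $(1-\sigma_i)$. Writing $1/|\alpha|=\sigma_i/\alpha$ then gives the stated forms. The diagonal $\bv_k\bv_k^T$ coefficient $-(1-\eta\lambda_k\gamma_i^k)\eta\lambda_k\gamma_i^k$ comes from the second-order truncation, exactly as in Lemma~\ref{lem:moja-F-eigenvalues}.

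The main obstacle is the bookkeeping in this last step: tracking signs when $\bx_i=-\bv_k$ and checking that the projector eliminates, or fails to eliminate, the $\bQ^T\bK$ terms depending on $\sigma_i$. I would first verify the result against the known consensus Jacobian by setting $n_2=0$, so that $\gamma_1^k=1$ and $\sigma_1=1$, and then do the general case.
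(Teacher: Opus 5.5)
Your plan follows the paper's proof step for step: you compute $\sum_j A_{ij}\bx_j$ ($=\gamma_1^k\bv_k$ on $\mathcal{V}_1$ and $=-\gamma_2^k\bv_k$ on $\mathcal{V}_2$) and the norms $|1+\eta\lambda_k\gamma_i^k|$, then evaluate $\sum_j \bx_j\,\partial A_{ij}/\partial \bx_h$ case by case, and finally insert everything, together with the second-order (sign-blind) norm derivative of Lemma~\ref{lem:Jacob-self}, into \eqref{eq:jacon-atten1}; the $(1-\sigma_i)$ factors come exactly from the mismatch between $1/|1+\eta\lambda_k\gamma_i^k|$ in the first term and $1/(1+\eta\lambda_k\gamma_i^k)$ in the second. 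Your transpose caveat is justified: the $\delta_{ih}$ part of \eqref{eq:partial-A} actually yields $(1-(\gamma_i^k)^2)\bv_k\bv_k^T\bK^T\bQ$, which the paper writes as $\bv_k\bv_k^T\bQ^T\bK$ in $J_d^1, J_d^2$; this slip is harmless because the term carries the factor $(1-\sigma_i)$, which vanishes under the step-size restriction of Lemma~\ref{lemm:simplif-eta} used afterwards.
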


\proof
Some tedious but elementary calculations yield:
 \[
 \left.  \bV \sum_j A_{ij}(\bx) \bx_j  \right|_{\bx_{i,j} = \pm \bv_k} = \begin{cases} 
 \gamma_1^k \lambda_k \bv_k &  \text{ if } \; \; i \in \mathcal{V}_1 \\
- \gamma_2^k \lambda_k \bv_k &  \text{ if } \; \; i \in \mathcal{V}_2 
\end{cases}
\]
\[
 \left. \bx_i^T  \bV \sum_j A_{ij}(\bx) \bx_j  \right|_{\bx_{i,j} = \pm \bv_k} =
 \begin{cases} \bv_k^T \bV \gamma_1^k \bv_k = \lambda_k \gamma_1^k & \text{ if } \; \; i \in \mathcal{V}_1 \\
  - \bv_k^T \bV (- \gamma_2^k) \bv_k =  \lambda_k \gamma_2^k & \text{ if } \; \; i \in \mathcal{V}_2 
  \end{cases}
  \]
  Since $ \bv_k^T (I - \bv_k \bv_k^T) =0 $, it is
\[
\left. \sum_j \bx_j^T A_{ij}(\bx) \bV^T (I - \bx_i \bx_i^T)  \bV \right|_{\bx_{i,j} = \pm \bv_k} =0 .
\]
For the derivatives of the attention matrix, we have
\[
\!\!\!
\pde{A_{ij}(\bx)}{\bx_h} = \begin{cases}
\left( - \frac{\alpha_1^k}{\beta_1^k} + \delta_{jh}\right) \frac{\alpha_1^k}{\beta_1^k} \bv_k^T \bQ^T \bK 
+ \left(( 1 -\gamma_1^k) \frac{\alpha_1^k}{\beta_1^k}   \bv_k^T \bK^T \bQ \right) \delta_{ih} & \text{if } \; \; i  \in \mathcal{V}_1, \; h \in \mathcal{V}_1, \;  j \in \mathcal{V}_1 \\
- \frac{\alpha_1^k}{\beta_1^k}  \frac{\alpha_2^k}{\beta_1^k}  \bv_k^T \bQ^T \bK 
-( 1 +\gamma_1^k) \frac{\alpha_2^k}{\beta_1^k}  \left( \bv_k^T \bK^T \bQ \right) \delta_{ih} & \text{if } \; \; i \in \mathcal{V}_1 ,\;  h \in \mathcal{V}_1 , \; j\in \mathcal{V}_2 \\
- \frac{\alpha_2^k}{\beta_1^k} \frac{\alpha_1^k}{\beta_1^k} \bv_k^T \bQ^T \bK 
& \text{if } \; \; i \in \mathcal{V}_1, \; h\in \mathcal{V}_2, \;  j \in \mathcal{V}_1 \\
\left( - \frac{\alpha_2^k}{\beta_1^k} + \delta_{jh}\right) \frac{\alpha_2^k}{\beta_1^k} \bv_k^T \bQ^T \bK 
& \text{if }  \; \; i \in \mathcal{V}_1, \; h\in \mathcal{V}_2 , \;  j \in \mathcal{V}_2  \\
\left(  \frac{\alpha_2^k}{\beta_2^k} - \delta_{jh}\right) \frac{\alpha_2^k}{\beta_2^k} \bv_k^T \bQ^T \bK 
& \text{if }  \; \; i \in \mathcal{V}_2, \; h \in \mathcal{V}_1, \; j \in \mathcal{V}_1  \\
  \frac{\alpha_2^k}{\beta_2^k}  \frac{\alpha_1^k}{\beta_2^k} \bv_k^T \bQ^T \bK 
& \text{if }  \; \; i \in \mathcal{V}_2, \; h \in \mathcal{V}_1 , \;  j \in \mathcal{V}_2  \\
 \frac{\alpha_1^k}{\beta_2^k}  \frac{\alpha_2^k}{\beta_2^k} \bv_k^T \bQ^T \bK 
+ \left(( 1 +\gamma_2^k) \frac{\alpha_2^k}{\beta_2^k}   \bv_k^T \bK^T \bQ \right) \delta_{ih} & \text{if } \; \; i  \in \mathcal{V}_2, \; h \in \mathcal{V}_2, \;  j \in \mathcal{V}_1  \\
\left(  \frac{\alpha_1^k}{\beta_2^k} - \delta_{jh} \right) \frac{\alpha_1^k}{\beta_2^k} \bv_k^T \bQ^T \bK  - \left(( 1 -\gamma_2^k) \frac{\alpha_1^k}{\beta_2^k}   \bv_k^T \bK^T \bQ \right) \delta_{ih} & \text{if } \; \; i  \in \mathcal{V}_2, \; h \in \mathcal{V}_2, \;  j \in \mathcal{V}_2 \\
\end{cases}
\]
and hence
\[
\sum_j \bx_j  \pde{A_{ij}(\bx)}{\bx_h}  = \begin{cases}
\left(  \frac{\alpha_1^k}{\beta_1^k} (1 - \gamma_1^k ) + (1-(\gamma_1^k)^2 ) \delta_{ih} \right) \bv_k  \bv_k^T \bQ^T \bK  & \text{if } \; \; i  \in \mathcal{V}_1, \; h \in \mathcal{V}_1\\
-  \frac{\alpha_2^k}{\beta_1^k} (1 + \gamma_1^k ) \bv_k \bv_k^T \bQ^T \bK   & \text{if } \; \; i  \in \mathcal{V}_1, \; h \in \mathcal{V}_2\\ 
-\frac{\alpha_2^k}{\beta_2^k} (1 + \gamma_2^k ) \bv_k  \bv_k^T \bQ^T \bK  & \text{if } \; \; i  \in \mathcal{V}_2, \; h \in \mathcal{V}_1\\
\left(  \frac{\alpha_1^k}{\beta_2^k} (1 - \gamma_2^k ) + (1-(\gamma_2^k)^2 )  \delta_{ih} \right)  \bv_k \bv_k^T \bQ^T \bK   & \text{if } \; \; i  \in \mathcal{V}_2, \; h \in \mathcal{V}_2\\
\end{cases}
\]
Concerning the normed terms, 
\[
\left. \| \bx_i  +  \eta \bV \sum_j A_{ij}(\bx) \bx_j\| \right|_{\bx_{i,j} = \pm \bv_k}= \begin{cases}
| 1+ \eta \lambda_k \gamma_1^k | & \text{if } \; \; i  \in \mathcal{V}_1 \\
| 1+ \eta \lambda_k \gamma_2^k | & \text{if } \; \; i  \in \mathcal{V}_2
\end{cases}
\]
and
\beq
\label{eq:Jac-attent1}
\begin{split}
&\left. \pde{\| \bx_i  +  \eta \bV \sum_j A_{ij}(\bx) \bx_j\|}{\bx_h }  \right|_{\bx_{i,h,j} = \pm \bv_k} =  \\
& = \begin{cases} 
\eta \lambda_k \bv_k^T   \left( \frac{\alpha_1^k}{\beta_1^k} \left(  1+ (1 - \gamma_1^k ) \bQ^T \bK \right)  \right.  & \\ \qquad   + \left[ (1-(\gamma_1^k)^2 ) \bQ^T \bK  +  (1-\eta \lambda_k \gamma_1^k ) \gamma_1^k  \right] \delta_{ih}  \Big) &  \text{if } \; \; i  \in \mathcal{V}_1, \; h \in \mathcal{V}_1\\
\eta \lambda_k \bv_k^T  \frac{\alpha_2^k}{\beta_1^k} \left( 1 - (1 + \gamma_1^k ) \bQ^T \bK  \right)    &  \text{if } \; \; i  \in \mathcal{V}_1, \; h \in \mathcal{V}_2\\
- \eta \lambda_k  \bv_k^T  \frac{\alpha_2^k}{\beta_2^k} \left( 1 - (1 + \gamma_2^k ) \bQ^T \bK  \right)    &  \text{if } \; \; i  \in \mathcal{V}_2, \; h \in \mathcal{V}_1\\
- \eta \lambda_k  \bv_k^T \left( \frac{\alpha_1^k}{\beta_2^k}  \left( 1 + (1 - \gamma_2^k )  \bQ^T \bK  \right)  \right. \\
\qquad  + \left[ (1-(\gamma_2^k)^2 ) \bQ^T \bK  + (1-\eta \lambda_k \gamma_2^k ) \gamma_2^k \right] \delta_{ih}  \Big) &  \text{if } \; \; i  \in \mathcal{V}_2, \; h \in \mathcal{V}_2.\\
\end{cases}
\end{split}
\eeq
Combining these terms together, we obtain the two components of $\pde{f_i(\bx)}{\bx_h} $
\[
\begin{split}
& \left.  \frac{I \delta_{ih} + \eta \bV \left( \sum_j  \bx_j  \pde{A_{ij} (\bx) }{\bx_h}  + A_{ih}(\bx) I \right) }{\|  \bx_i  +  \eta \bV \sum_j A_{ij}(\bx) \bx_j \| }\right|_{\bx_{i,j} = \pm \bv_k} = \\
& \qquad =
 \begin{cases}
 \frac{\eta \frac{\alpha_1^k}{\beta_1^k} \left( \bV +  (1-\gamma_1^k)\lambda_k \bv_k \bv_k^T \bQ^T \bK \right)  + \left[ I + \eta  (1-(\gamma_1^k)^2) \lambda_k \bv_k \bv_k^T \bQ^T \bK  \right]\delta_{ih} }
 {| 1+ \eta \lambda_k \gamma_1^k | } &  \text{if } \; \; i  \in \mathcal{V}_1, \; h \in \mathcal{V}_1\\
 \frac{\eta  \frac{\alpha_2^k}{\beta_1^k} \left(\bV - (1+\gamma_1^k) \lambda_k \bv_k \bv_k^T \bQ^T \bK  \right) }
 {| 1+ \eta \lambda_k \gamma_1^k | } &  \text{if } \; \; i  \in \mathcal{V}_1, \; h \in \mathcal{V}_2\\
  \frac{\eta  \frac{\alpha_2^k}{\beta_2^k} \left( \bV - (1+\gamma_2^k) \lambda_k \bv_k \bv_k^T \bQ^T \bK  \right) }
 {| 1+ \eta \lambda_k \gamma_2^k | } &  \text{if } \; \; i  \in \mathcal{V}_2, \; h \in \mathcal{V}_1\\
 \frac{\eta \frac{\alpha_1^k}{\beta_2^k} \left( \bV + (1-\gamma_2^k)\lambda_k \bv_k  \bv_k^T \bQ^T \bK \right) + \left[ I + \eta  (1-(\gamma_2^k)^2) \lambda_k \bv_k \bv_k^T \bQ^T \bK \right]\delta_{ih}}
 {| 1+ \eta \lambda_k \gamma_2^k | } &  \text{if } \; \; i  \in \mathcal{V}_2, \; h \in \mathcal{V}_2\\
 \end{cases}
 \end{split}
 \]
 \[
\begin{split}
 &\left.  \frac{  \left(  \bx_i  + \eta \bV \sum_j A_{ij}(\bx) \bx_j \right) \pde{}{\bx_h} \|  \bx_i  +  \eta \bV \sum_j A_{ij}(\bx) \bx_j \| } {\|  \bx_i  +  \eta \bV \sum_j A_{ij}(\bx) \bx_j \|^2 }\right|_{\bx_{i,j} = \pm \bv_k} = \\
 & \;\; 
 = \begin{cases}
 \frac{\eta \lambda_k \bv_k \bv_k^T \left( 
 \frac{\alpha_1^k}{\beta_1^k} \left( 1+ (1-\gamma_1^k)  \bQ^T \bK \right) 
 + \left[ (1-(\gamma_1^k)^2) \bQ^T \bK  +(1-\eta \lambda_k \gamma_1^k) \gamma_1^k  \right] \delta_{ih} \right) }
{1+ \eta \lambda_k \gamma_1^k} &  \text{if } \; \; i  \in \mathcal{V}_1, \; h \in \mathcal{V}_1
\\
 \frac{\eta \lambda_k \bv_k \bv_k^T \frac{\alpha_2^k}{\beta_1^k} \left( 1- (1+\gamma_1^k)  \bQ^T \bK \right) }
{1+ \eta \lambda_k \gamma_1^k} &  \text{if } \; \; i  \in \mathcal{V}_1, \; h \in \mathcal{V}_2
\\
 \frac{\eta \lambda_k \bv_k \bv_k^T \frac{\alpha_2^k}{\beta_2^k} \left( 1- (1+\gamma_2^k) \bQ^T \bK  \right) }
{1+ \eta \lambda_k \gamma_2^k} &  \text{if } \; \; i  \in \mathcal{V}_2, \; h \in \mathcal{V}_1
\\
\frac{\eta \lambda_k \bv_k \bv_k^T \left( \frac{\alpha_1^k}{\beta_2^k} \left( 1+ (1-\gamma_2^k) \bQ^T \bK \right) 
 + \left[  (1-(\gamma_2^k)^2) \bQ^T \bK  + (1-\eta \lambda_k \gamma_2^k) \gamma_2^k  \right] \delta_{ih} \right) }
{1+ \eta \lambda_k \gamma_2^k} &  \text{if } \; \; i  \in \mathcal{V}_2, \; h \in \mathcal{V}_2
\\
\end{cases}
\end{split}
\]
which lead to the expression for $ J(\pm \bv_k)$. 
\qed

To avoid dealing with absolute values in $ 1 + \eta \lambda_k \gamma_i  $, $ i=1,2$, in $ J(\pm \bv_k)$, it is convenient to restrict the step size $ \eta $ opportunely.
\begin{lemma}
\label{lemm:simplif-eta}
When $ \eta< \min\left( \frac{1}{\lambda_1 } , \, \frac{1}{ | \lambda_d |} \right) $ then $ 1+ \eta \lambda_k \gamma_i^k > 0$, $ i=1,2$. 
\end{lemma}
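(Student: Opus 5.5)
The plan is to bound $|\gamma_i^k|$ by $1$ and then use the step-size restriction to keep $\eta|\lambda_k\gamma_i^k|$ strictly below $1$. Everything needed comes from the definitions introduced before Theorem~\ref{thm:stab-main-self}: $\alpha_1^k = e^{\bv_k^T\bQ^T\bK\bv_k}>0$ and $\alpha_2^k = e^{-\bv_k^T\bQ^T\bK\bv_k}>0$, with
\[
\gamma_1^k = \frac{n_1\alpha_1^k - n_2\alpha_2^k}{n_1\alpha_1^k + n_2\alpha_2^k}, \qquad \gamma_2^k = \frac{n_2\alpha_1^k - n_1\alpha_2^k}{n_1\alpha_2^k + n_2\alpha_1^k}.
\]

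The first step is to show $|\gamma_i^k|\le 1$, $i=1,2$. Each $\gamma_i^k$ has the form $(a-b)/(a+b)$ with $a,b\ge 0$ and $a+b>0$. For $\gamma_1^k$ take $a=n_1\alpha_1^k$ and $b=n_2\alpha_2^k$; for $\gamma_2^k$ take $a=n_2\alpha_1^k$ and $b=n_1\alpha_2^k$. Since $|a-b|\le a+b$, the bound follows. It is strict whenever both $n_1,n_2\ge 1$, which holds at a genuine bipartite point; if one group is empty, $\gamma_i^k=\pm1$.

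The second step bounds $|\lambda_k|$ uniformly in $k$. Because $\bV$ is symmetric with ordered eigenvalues, $|\lambda_k|\le\max(\lambda_1,|\lambda_d|)$ for every $k$. The hypothesis $\eta<\min(1/\lambda_1,\,1/|\lambda_d|)$ is equivalent to $\eta\max(\lambda_1,|\lambda_d|)<1$, and Assumption~\ref{ass:V-sym} (with $\bV$ nonsingular) guarantees that both quantities are positive, so the minimum is well defined. Hence $\eta|\lambda_k|<1$ for all $k$.

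The final step combines the two bounds:
\[
1+\eta\lambda_k\gamma_i^k \;\ge\; 1-\eta|\lambda_k|\,|\gamma_i^k| \;\ge\; 1-\eta|\lambda_k| \;>\; 0 .
\]
In particular $\sigma_1=\sigma_2=1$ in Lemma~\ref{lem:Jac-bip-cons}. No step is a real obstacle; the only point requiring care is checking the degenerate case $|\gamma_i^k|=1$, and strictness there comes from $\eta|\lambda_k|<1$ rather than from $|\gamma_i^k|<1$.
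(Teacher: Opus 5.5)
Your proof is correct and follows the same route as the paper: bound $|\gamma_i^k|\le 1$, use $|\lambda_k|\le\max(\lambda_1,|\lambda_d|)$ so that the step-size condition gives $\eta|\lambda_k\gamma_i^k|<1$, and conclude positivity. Your version is slightly more careful than the paper's. You justify $|\gamma_i^k|\le 1$ explicitly and keep the inequality strict throughout. The paper instead passes through $\eta|\lambda_k\gamma_i^k|\le 1$ and ends by writing only $1+\eta\lambda_k\gamma_i^k\ge 0$.
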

\proof
Notice first that, since $ |\gamma_i^k |\leq 1 $, $  | \lambda_k| \geq | \lambda_k \gamma_i^k | $, hence $\min\left( \frac{1}{\lambda_1 } , \, \frac{1}{ | \lambda_d |} \right)  \leq \frac{1}{ | \lambda_k| } \leq \frac{1}{ | \lambda_k \gamma_i^k |}$. 
Consequently, $ \eta< \min\left( \frac{1}{\lambda_1 } , \, \frac{1}{ | \lambda_d |} \right) $ implies $ \eta |  \lambda_k \gamma_i^k | \leq 1 $, or $ 1- \eta |  \lambda_k \gamma_i^k | >0$, which in turn implies $ 1 + \eta \lambda_k \gamma_i^k \geq 0 $, $ i=1,2$. 
\qed

This limitation on $ \eta$ implies $ \sigma_1 =\sigma_2 =1 $, which leads to a considerable simplification in the Jacobian terms of \eqref{eq:J_bip}, and therefore in the calculation of its eigenvalues.

\begin{lemma}
\label{lem:lin-stab-self2}
When $ \eta< \min\left( \frac{1}{\lambda_1 } , \, \frac{1}{ | \lambda_d |} \right) $, at a bipartite consensus point $ \pm  \bv_k $, the eigenvalues of the Jacobian $ J(\pm\bv_k) $ are %\rednote{if  $ 1+ \eta \lambda_k \gamma_1^k >0 $ and $ 1+ \eta \lambda_k \gamma_2^k >0 $ only???}
 \benu
 \item 
  $ \frac{ 1 - (1-\eta \lambda_k \gamma_1^k)  \eta  \lambda_k\gamma_1^k  }
 {1+ \eta \lambda_k \gamma_1^k}\  $ of multiplicity $ n_1$, and  $ \frac{ 1 - (1-\eta \lambda_k \gamma_2^k)  \eta  \lambda_k\gamma_2^k  }
 {1+ \eta \lambda_k \gamma_2^k}\  $  of multiplicity $ n_2$.
 \item $ \mu_{j,\pm}^k =  \frac{1}{2} \Big( a_j^k+ d_j^k \pm \sqrt{ (a_j^k-d_j^k)^2 + 4 c_j^kb_j^k } \Big) $ for $ j=1, \ldots, k-1, k+1,\ldots, d $, where $ a_j^k$, $ b_j^k$, $ c_j^k $ and $ d_j^k$ are given by \eqref{eq:a-d}.

 \item   $  \frac{1}{1+ \eta \lambda_k \gamma_1^k} $ and $  \frac{1 }{1+ \eta \lambda_k \gamma_2^k} $, of total multiplicity $ nd -n -2d +2 $.
 \eenu

\end{lemma}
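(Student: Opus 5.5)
Under the step-size restriction, Lemma~\ref{lemm:simplif-eta} gives $\sigma_1=\sigma_2=1$. The blocks of $J(\pm\bv_k)$ in Lemma~\ref{lem:Jac-bip-cons} then lose their absolute values and sign factors, and we can mimic the eigenvector construction of Lemma~\ref{lem:moja-F-eigenvalues}. The structural fact that makes this work is that every $\bQ^T\bK$ term appears as $\bv_k\bv_k^T\bQ^T\bK$. Such a term has range in ${\rm span}(\bv_k)$, so it annihilates nothing in general but maps everything into the $\bv_k$ direction. Hence, in the basis $\{\bv_1,\ldots,\bv_d\}$ of each block, the Jacobian is block upper triangular with respect to the splitting $\mathbb{R}^d={\rm span}(\bv_k)\oplus\bv_k^\perp$ (taken agentwise). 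The spectrum is therefore the union of the spectra of the $\bv_k^\perp$ part, i.e.\ $J$ compressed to vectors with all components orthogonal to $\bv_k$, and of the quotient action on the $n$-dimensional space of vectors $[\phi_1\bv_k,\ldots,\phi_n\bv_k]$.

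\textbf{Step 1 (the $\bv_k$ directions, class 1).} On vectors $\bp=[\phi_1\bv_k\ \ldots\ \phi_n\bv_k]$, the off-diagonal blocks satisfy $(\bV-\lambda_k\bv_k\bv_k^T)\bv_k=0$, so only the $\bQ^T\bK$ terms survive, and these are in the quotient's ``upper'' part only when $\sigma_i=1$ kills their coefficient $(1-\sigma_i)$. Thus $J_o^{ab}\bv_k$ reduces to zero modulo nothing, and $J_d^i\bv_k=\frac{1-(1-\eta\lambda_k\gamma_i^k)\eta\lambda_k\gamma_i^k}{1+\eta\lambda_k\gamma_i^k}\bv_k$. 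Placing $\bv_k$ in a single slot of group $\mathcal V_1$ or $\mathcal V_2$ then gives the first class, with multiplicities $n_1$ and $n_2$.

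\textbf{Step 2 (group-constant directions along $\bv_j$, $j\ne k$, class 2).} Take $\bq=[\xi\bv_j,\ldots,\xi\bv_j,\zeta\bv_j,\ldots,\zeta\bv_j]$, with $\xi$ on the $n_1$ slots of $\mathcal V_1$ and $\zeta$ on the $n_2$ slots of $\mathcal V_2$. Since $\bv_k^T\bv_j=0$ and $\sigma_i=1$, the projector and $\bQ^T\bK$ terms drop, so that $J_o^{11}\bv_j=\eta\frac{\alpha_1^k}{\beta_1^k}\lambda_j\bv_j/(1+\eta\lambda_k\gamma_1^k)$, $J_d^1\bv_j=\bv_j/(1+\eta\lambda_k\gamma_1^k)$, and similarly for the other blocks. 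Summing the off-diagonal blocks over the $n_1$, resp.\ $n_2$, identical entries gives exactly the $2\times2$ map $(\xi,\zeta)\mapsto(a_j^k\xi+b_j^k\zeta,\ c_j^k\xi+d_j^k\zeta)$ with the coefficients \eqref{eq:a-d}. Its eigenvalues are $\mu_{j,\pm}^k$ from the quadratic formula. The main care needed here is the bookkeeping of which of $\alpha_1^k,\alpha_2^k$ and $\beta_1^k,\beta_2^k$ multiplies $n_1$ versus $n_2$, which is where I expect errors to creep in.

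\textbf{Step 3 (remaining directions, class 3).} Take vectors $\br=[\bz^1,\ldots,\bz^n]$ with $\bz^i\perp\bv_k$ for all $i$, and with zero group sums $\sum_{i\in\mathcal V_1}\bz^i=0$ and $\sum_{i\in\mathcal V_2}\bz^i=0$. The off-diagonal blocks are constant on each group, so they act only through these group sums and annihilate such $\br$. The diagonal blocks act on $\bz^i$ as multiplication by $1/(1+\eta\lambda_k\gamma_i^k)$. The $\bz^i\perp\bv_k$ constraints remove $n$ dimensions and the zero-sum conditions remove $2(d-1)$ more, which gives the count $nd-n-2d+2$. Together the three classes total $n+2(d-1)+nd-n-2d+2=nd$, so the spectrum is complete; completeness is justified by the triangular structure noted above rather than by diagonalizability. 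The main obstacle overall is verifying cleanly that the $\bQ^T\bK$ contributions never feed back into the $\bv_k^\perp$ components. This holds because their range is ${\rm span}(\bv_k)$, so they affect only the off-diagonal part of the triangular form and not the eigenvalues.
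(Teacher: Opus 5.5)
Your proposal is correct and follows essentially the same route as the paper. After invoking Lemma~\ref{lemm:simplif-eta} to get $\sigma_1=\sigma_2=1$, the paper exhibits exactly your three eigenvector families: $\bv_k$ in a single slot; group-constant vectors along $\bv_j$, reducing to the $2\times 2$ system with entries $a_j^k,b_j^k,c_j^k,d_j^k$; and group-wise zero-sum vectors orthogonal to $\bv_k$. Your invariant-subspace argument adds a completeness justification that the paper leaves to a dimension count.

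One clarification. Every $\bQ^T\bK$ term in \eqref{eq:J_bip} carries the factor $1-\sigma_i=0$, so these terms vanish identically. You in fact use this in Steps~1--2, and it makes $J(\pm\bv_k)$ block diagonal rather than merely triangular. The range argument by itself would not have sufficed: $\bv_k\bv_k^T\bQ^T\bK$ also maps the $\bv_k$-slot subspace into itself (so that action is a restriction, not a quotient), and it would have shifted the class-1 eigenvalues had it not vanished.
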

\proof
For $ J(\pm \bv_k) $ there are 3 classes of eigenvectors:
 \benu
 \item %The eigenvector of the first class depend on the sign of $ 1+ \eta \lambda_k \gamma_i$, $i=1, 2$. 
 If $ 1+ \eta \lambda_k \gamma_i^k >0$, $i=1, 2$, then choose $ \bm{p}^\ell = [ \,\underbrace{0 \, \ldots \, 0}_{\ell-1} \, \bv_k^T \, \underbrace{0 \, \ldots \, 0}_{n- \ell} \,]^T$. There are $ n$ such eigenvectors, and they are obviously all orthogonal to each other. For $\ell \leq n_1 $ it is 
  \[
 \begin{cases}
 J_o^{11} \bv_k & =  \frac{
  \eta  \frac{\alpha_1^k}{\beta_1^k} \left( \bV - \lambda_k \bv_k \bv_k^T \right)  
 }
{1+ \eta \lambda_k \gamma_1^k} \bv_k 
=  \frac{ \eta  \frac{\alpha_1^k}{\beta_1^k} \left( \lambda_k \bv_k  - \lambda_k \bv_k \right)}
{1+ \eta \lambda_k \gamma_1^k} =0
\\
   J_o^{21}  \bv_k &  =  \frac{
  \eta  \frac{\alpha_2^k}{\beta_2^k}  \left( \bV - \lambda_k \bv_k \bv_k^T \right)  
}
{1+ \eta \lambda_k \gamma_2^k} \bv_k = \frac{
  \eta  \frac{\alpha_2^k}{\beta_2^k}   \left( \lambda_k \bv_k  - \lambda_k \bv_k \right)  
}
{1+ \eta \lambda_k \gamma_2^k} =0
\\
 J_d^1 \bv_k &  = \frac{ I -(1-\eta \lambda_k \gamma_1^k)  \eta  \lambda_k \gamma_1^k \bv_k \bv_k^T }
 {1+ \eta \lambda_k \gamma_1^k} \bv_k 
 = \frac{ 1 -  (1-\eta \lambda_k \gamma_1^k) \eta  \lambda_k \gamma_1^k }
 {1+ \eta \lambda_k \gamma_1^k}\bv_k 
 \\
   \end{cases}
\]
meaning that 
\[
J(\pm \bv_k ) \bp =\frac{ 1 -(1-\eta \lambda_k \gamma_1^k) \eta \lambda_k  \gamma_1^k  }
 {1+ \eta \lambda_k \gamma_1^k}\ \bp
\]
i.e., $ \frac{ 1 - (1-\eta \lambda_k \gamma_1^k)  \eta  \lambda_k\gamma_1^k  }
 {1+ \eta \lambda_k \gamma_1^k}\  $ is an eigenvalue of $ J(\pm \bv_k)$ of multiplicity $ n_1$.
For $ \ell> n_1 $, similar calculations give
\[
 J_o^{12} \bv_k =0, \quad  J_o^{22} \bv_k =0, \quad J_d^2 \bv_k = \frac{ 1 -  (1-\eta \lambda_k \gamma_2^k) \eta  \lambda_k \gamma_2^k }
 {1+ \eta \lambda_k \gamma_2^k}\bv_k 
 \]
 i.e., $ \frac{ 1 - (1-\eta \lambda_k \gamma_2^k)  \eta  \lambda_k\gamma_2^k  }
 {1+ \eta \lambda_k \gamma_2^k}\  $ is an eigenvalue of $ J(\pm \bv_k)$ of multiplicity $ n_2$.

\item Second class: Consider $ d-1$  vectors  $ \bm{q}^h $ s.t. $ \bm{q}^h =[\, \underbrace{ (\bm{w}_1^h)^T \, \ldots  \,  (\bm{w}_1^h)^T}_{n_1 \; \text{times}} \; \underbrace{(\bm{w}_2^h)^T \, \ldots  \, (\bm{w}_2^h)^T}_{n_2 \; \text{times}} ]^T $, with 
$ \bm{w}_1^h = \sum_{j=1}^d \xi_j^{h,1} \bv_j $ and $ \bm{w}_2^h = \sum_{j=1}^d \xi_j^{h,2} \bv_j $.
Choose $ \bw_i^h $ s.t. $ \xi_k^{h,i}=0$, so that $ \bv_k^T \bw_i^h =0$.
We have 
\[
\begin{split} 
J_o^{11} \bm{w}_1^h & = \frac{\eta  \frac{\alpha_1^k}{\beta_1^k}   \sum_{j\neq k } \lambda_j \xi_j^{h,1} \bv_j }{1+ \eta \lambda_k \gamma_1^k} \\
J_o^{12} \bm{w}_2^h & = \frac{\eta  \frac{\alpha_2^k}{\beta_1^k}  \sum_{j\neq k } \lambda_j \xi_j^{h,2} \bv_j }{1+ \eta \lambda_k \gamma_1^k} \\
J_o^{21} \bm{w}_1^h & = \frac{\eta  \frac{\alpha_2^k}{\beta_2^k}  \sum_{j\neq k } \lambda_j \xi_j^{h,1} \bv_j }{1+ \eta \lambda_k \gamma_2^k} \\
J_o^{22} \bm{w}_2^h & = \frac{\eta  \frac{\alpha_1^k}{\beta_2^k}  \sum_{j\neq k } \lambda_j \xi_j^{h,2} \bv_j }{1+ \eta \lambda_k \gamma_2^k} \\
\end{split}
\]
and
\[
\begin{split} 
J_d^{1} \bm{w}_1^h & = \frac{ \bw_1^h }{1+ \eta \lambda_k \gamma_1^k} \\
J_d^{2} \bm{w}_2^h & = \frac{ \bw_2^h  }{1+ \eta \lambda_k \gamma_2^k}
\end{split}
\]
The $ \bm{q}^h $ are obtained solving the algebraic equation  $ J(\pm \bv_k) \bm{q}^h = \mu \bm{q}^h $ where also the eigenvalue $ \mu $ is an unknown.
Expanding we obtain a block of $ n_1 $ identical equations
\[
\begin{split}
 \frac{  \sum_{j\neq k }  \left(  \eta n_ 1 \frac{\alpha_1^k}{\beta_1^k} \xi_j^{h,1}  +  \eta  n_2 \frac{\alpha_2^k}{\beta_1^k}   \xi_j^{h,2} \right)  \lambda_j  \bv_j +  \sum_{j\neq k }  \xi_j^{h,1}   \bv_j }{1+ \eta \lambda_k \gamma_1^k} 
  = \mu  \sum_{j\neq k }  \xi_j^{h,1}   \bv_j 
 \end{split}
 \]
 and another of $ n_2 $ identical equations
 \[
 \begin{split}
 \frac{  \sum_{j\neq k }  \left( \eta n_ 1 \frac{\alpha_2^k}{\beta_2^k}  \xi_j^{h,1}  +  \eta  n_2 \frac{\alpha_1^k}{\beta_2^k}  \xi_j^{h,2} \right)  \lambda_j  \bv_j +  \sum_{j\neq k }  \xi_j^{h,2}   \bv_j }{1+ \eta \lambda_k \gamma_2^k} 
  = \mu  \sum_{j\neq k }  \xi_j^{h,2}   \bv_j  \end{split}
 \]
 whose solution provides both the desired eigenvector $ \bm{w}_i^h $ and the associated eigenvalues $ \mu$. 
Projecting these equations along the eigenvector $ \bv_j $ and rearranging
\[
\begin{split}
\Big(  \overbrace{ 
 \frac{ 1+ \eta n_ 1 \frac{\alpha_1^k}{\beta_1^k}  \lambda_j  }{1+ \eta \lambda_k \gamma_1^k} }^{=a_j^k} - \mu \Big)  \xi_j^{h,1} + 
\Big(  \overbrace{ 
 \frac{ \eta n_2 \frac{\alpha_2^k}{\beta_1^k}  \lambda_j  }{1+ \eta \lambda_k \gamma_1^k} }^{=b_j^k} \Big)  \xi_j^{h,2} =0 \\
\Big( \underbrace{ \frac{ \eta n_1 \frac{\alpha_2^k}{\beta_2^k}   \lambda_j  }{1+ \eta \lambda_k \gamma_2^k}  }_{=c_j^k} \Big) \xi^{h,1}_j  
+ \Big( \underbrace{ \frac{  1+  \eta n_ 2 \frac{\alpha_1^k}{\beta_2^k}   \lambda_j  }{1+ \eta \lambda_k \gamma_2^k  } }_{=d_j^k} - \mu  \Big)  \xi^{h,2}_j  & = 0.
\end{split}
\]
Notice that $ a_j^k$, $b_j^k $, $ c_j^k$ and $ d_j^k $ are independent of the index $h$, hence, to avoid repeated identical algebraic equations, we can take $ h=j$ and drop one index in the $ \xi_j^{h,i}$ variables: $ \xi_j^{h,i} = \xi_j^i $, obtaining
\[
\begin{split}
(a_j^k-\mu) \xi^1_j  +b_j^k  \xi^2_j & =0 \\
c_j^k \xi^1_j  + (d_j^k-\mu)   \xi^2_j  & =0 ,
\end{split}
\]
which leads to the formula for the eigenvalues $ \mu_{j, \pm}^k = \frac{1}{2} \Big( a_j^k+ d_j^k \pm \sqrt{ (a_j^k-d_j^k)^2 + 4 c_j^kb_j^k } \Big) $. 
%Since $ b_j^kc_j^k>0 $ \rednote{check!!! It may not be....}, the two solutions are always real. 
The relationship between the components of the two vectors $ \bm{w}_\ell^j $ is then $  \xi^2_j = -\frac{a_j^k- \mu_{j, \pm}^k}{b_j^k} \xi^1_j$.
Notice that for each $ \bm{q}^j $ there are two eigenvalues $ \mu_{j, \pm}^k$, for a total of $ 2 (d-1)$ eigenvalues in this class.

\item 
The third class contains the $ nd - n - 2d+2 $ remaining eigenvectors subdivided into two groups: $ \bm{r}_1 = \begin{bmatrix} \epsilon_1 \bv_h^T & \ldots &  \epsilon_{n_1} \bv_h^T & 0 & \ldots & 0 \end{bmatrix}^T$ and $ \bm{r}_2 = \begin{bmatrix} 0 & \ldots & 0  & \varepsilon_1 \bv_h^T & \ldots & \varepsilon_{n_2}  \bv_h^T \end{bmatrix}^T$ where $ \bv_h $ is an eigenvector of $\bV$, $h \neq k $, and $ \sum_{j=1}^{n_1} \epsilon_j =0$, $ \sum_{j=1}^{n_2} \varepsilon_j =0$.
By construction, $ \bv_k^T \bv_h =0 $ and $ (\bm{w}_\ell^j)^T \bv_h=0 $ for all  $ j=1, \ldots, k-1, k+1, \ldots, d $ and $ \ell =1,2$. 
Computing: 
\[
J(\pm \bv_k ) \bm{r}_1
= \begin{bmatrix}  
 \frac{\eta \frac{\alpha_1^k}{\beta_1^k} \lambda_h  \left(\sum_{j=1}^{n_1} \epsilon_j \right) \bv_h +  \epsilon_1 \bv_h}{ 1+ \eta \lambda_k \gamma_1^k} \\ 
\vdots \\ 
 \frac{\eta \frac{\alpha_1^k}{\beta_1^k} \lambda_h  \left(\sum_{j=1}^{n_1} \epsilon_j \right) \bv_h + \epsilon_{n_1} \bv_h}{ 1+ \eta \lambda_k \gamma_1^k} \\ 
\frac{\eta \frac{\alpha_2^k}{\beta_2^k} \lambda_h   \left(\sum_{j=1}^{n_1} \epsilon_j \right) \bv_h }{ 1+ \eta \lambda_k \gamma_2^k}  \\ 
\vdots \\
 \frac{\eta \frac{\alpha_2^k}{\beta_2^k} \lambda_h  \left(\sum_{j=1}^{n_1} \epsilon_j \right) \bv_h }{ 1+ \eta \lambda_k \gamma_2^k} 
 \end{bmatrix} = \frac{1}{1+ \eta \lambda_k \gamma_1^k}  \bm{r}_1
 \]  
and, similarly, $ J(\pm \bv_k ) \bm{r}_2 = \frac{1}{1+ \eta \lambda_k \gamma_2^k} \bm{r}_2$. 
\eenu
\qed

\begin{remark}
As for the multiagent Oja system, in Lemma~\ref{lem:lin-stab-self2} the first class of eigenvectors is ``extrinsic'' and the other two ``intrinsic'', i.e., they belong to the tangent space $ T_{\bx} (\mathbb{S}^{d-1})^n $. Only the last two classes count when deciding the local asumptotic stability properties of the bipartite equilibria. 
\end{remark}

\begin{lemma}
\label{lem:polyg-self}
The polygonal equilibria are all unstable for \eqref{eq:update}.
\end{lemma}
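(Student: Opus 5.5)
The plan is to mimic the proof of Lemma~\ref{lem:moja-polygonal-stab}, replacing the uniform average with the attention-weighted average. At a polygonal equilibrium $\bs=[\bs_1\,\ldots\,\bs_n]$ we have $\by_i(\bs)=\bV\sum_j A_{ij}(\bs)\bs_j=0$ for every $i$. Consequently $\bs_i^+=\bs_i$ with normalization factor exactly $1$. This is the key simplification: many terms of the Jacobian in Lemma~\ref{lem:Jacob-self} vanish at such a point.

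I would consider an intrinsic perturbation $\bx_i=\bs_i+\bw_i$ with $\bs_i^T\bw_i=0$ and expand to first order. The unnormalized iterate is
\[
\tilde\bx_i=\bs_i+\bw_i+\eta\bV\sum_j\Big(A_{ij}(\bs)\bw_j+\Big(\pde{A_{ij}}{\bx}\Big|_{\bs}\bw\Big)\bs_j\Big)+h.o.t.
\]
Writing $\delta\by_i$ for the first-order variation of $\by_i$, the norm is $1+\eta\bs_i^T\delta\by_i+h.o.t.$ The same computation as in Lemma~\ref{lem:moja-polygonal-stab} then gives the linearization
\[
\bw_i^+=\bw_i+\eta\,\Pi_{\bs_i}\,\delta\by_i,
\qquad
\delta\by_i=\bV\sum_j A_{ij}(\bs)\bw_j+\bV\sum_j\big(\delta A_{ij}\big)\bs_j,
\]
where $\delta A_{ij}$ is obtained from \eqref{eq:partial-A}. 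Thus the intrinsic Jacobian has the form $I+\eta\,\Pi\,\mathcal{L}$, with $\Pi=\diag(\Pi_{\bs_i})$ and $\mathcal{L}$ the linear map $\bw\mapsto\delta\by$.

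To show instability I need a direction in which this map has an eigenvalue of modulus $>1$. Following the earlier lemma, I would take $\bw_i=\epsilon_i\Pi_{\bs_i}\bv_1$ and track the components $\xi^i_1=\bv_1^T\bw_i$. The $A\otimes\bV$ part yields $\eta\lambda_1\,(I-Z_1^2)A(\bs)\,\bm{\xi}_1$, where $A(\bs)$ is row stochastic and $(I-Z_1^2)$ is diagonal with positive entries because $\bs_i\neq\pm\bv_1$ (if some $\bs_i$ equals $\pm\bv_1$, I would drop that index and treat it separately). The matrix $(I-Z_1^2)A(\bs)$ is entrywise positive, since the attention is full. By Perron--Frobenius it has a positive eigenvalue with a positive eigenvector, which would give a multiplier $1+\eta\lambda_1\rho>1$.

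The main obstacle is the extra term coming from $\delta A_{ij}$, which involves $\bQ^T\bK$ and has no counterpart in the Oja case. I would try to kill or bound it in two ways. First, I would note that $\sum_j(\delta A_{ij})\bs_j$ enters only through $\bV$ applied to combinations of the $\bs_j$; using $\sum_j A_{ij}\bs_j\in\ker\bV$ and $\sum_j\delta A_{ij}=0$, this term can be rewritten as $\bV\sum_j A_{ij}(\bs_j-\bm{m}_i)\,c_{ij}(\bw)$. Second, if that term does not vanish, I would argue more softly. The linearization is $I+\eta M$ with $M=\Pi\mathcal{L}$ a fixed map, and the trace of $M$ restricted to the tangent space (or its quadratic form along the Perron direction) is positive. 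That forces an eigenvalue with positive real part, hence an eigenvalue of $I+\eta M$ outside the unit disk for small $\eta$. Choosing the test vector as the Perron vector scaled along $\Pi_{\bs_i}\bv_1$ and computing the Rayleigh-type quotient is the step I expect to require the most care.
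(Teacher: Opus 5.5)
Your route is the paper's own route: linearize to $\bw_i^+=\bw_i+\eta\,\Pi_{\bs_i}\delta\by_i$, project on $\bv_1$, and apply Perron--Frobenius to $(I-Z_1^2)A(\bs)$. It breaks at exactly the two steps you flag, which are also the steps where the paper's proof is not valid.

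First, the $A\otimes\bV$ part does not reduce to $\eta\lambda_1(I-Z_1^2)A(\bs)\bm{\xi}_1$. For $\bw_j=\epsilon_j\Pi_{\bs_j}\bv_1$ one has $\bv_1^T\Pi_{\bs_i}\bV\bw_j=\lambda_1(1-(\zeta_1^i)^2)\xi_1^j+\epsilon_j\zeta^i_1\zeta^j_1\,\bs_i^T(\bV-\lambda_1\bv_1\bv_1^T)\bs_j$. You drop the second term, and that term carries the other eigenvalues of $\bV$, which may be negative. Moreover, for $d\ge 3$ the image in general leaves ${\rm span}\{\Pi_{\bs_i}\bv_1\}$, so the Perron root of $(I-Z_1^2)A(\bs)$ is not an eigenvalue of the linearization at all. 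The paper's proof makes the same move with $\bw_i\parallel\bv_1$, which is not even a tangent perturbation.

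Second, since $\bm{m}_i(\bs)=0$, the attention term is exactly $\bV\sum_j A_{ij}(\bs)\,\delta\ell_{ij}\,\bs_j$ with $\delta\ell_{ij}=\bw_i^T\bQ^T\bK\bs_j+\bs_i^T\bQ^T\bK\bw_j$. This is linear in $\bw$, so it is of the same order as the terms you keep. Bounded coefficients do not make it negligible; that is how the paper discards $\Psi\bm{\zeta}_1$, and it is not a valid step. Your rewriting leaves this term unchanged. The fallback fails as well: the tangent trace of $M$ has no sign, and a positive Rayleigh quotient of a non-normal $M$ does not force an eigenvalue into the right half-plane.

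These gaps cannot be closed, because the statement is false under Assumption~\ref{ass:V-sym}. Take $n=d=2$, $\bV=\diag(1,-1)$, $\bs=(\cos\theta,\sin\theta)^T$ and $\bm{t}=(-\sin\theta,\cos\theta)^T$ with $0<\theta<\pi/4$. Set $\bQ^T\bK=\bs\bm{t}^T$, for instance $\bQ=\bm{e}_1\bs^T$, $\bK=\bm{e}_1\bm{t}^T$.

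At $\bx_1=\bs$, $\bx_2=-\bs$ all logits vanish, so $A=\frac12\mathds{1}\mathds{1}^T$ and $\bm{m}_i=0$. This is therefore a polygonal equilibrium, and $\bs$ is not an eigenvector of $\bV$.

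Take tangent perturbations $\bw_i=a_i\bm{t}$. The formula above gives $\delta\by_{1,2}=\frac12\bV\big((a_1+a_2)\bm{t}\pm(a_1-a_2)\bs\big)$. Using $\bm{t}^T\bV\bm{t}=-\cos2\theta$ and $\bm{t}^T\bV\bs=-\sin 2\theta$, one obtains
\[
a_1^+ + a_2^+=(1-\eta\cos 2\theta)(a_1+a_2),\qquad a_1^+ - a_2^+=(1-\eta\sin 2\theta)(a_1-a_2).
\]
For $0<\eta<1$ both multipliers lie in $(0,1)$, so this polygonal equilibrium is locally asymptotically stable.

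Your reduced matrix $(I-Z_1^2)A(\bs)=\frac12\sin^2\theta\,\mathds{1}\mathds{1}^T$ would instead predict the multiplier $1+\eta\sin^2\theta$ on the mode $a_1=a_2$. The same mismatch already occurs for $\bQ^T\bK=0$, i.e.\ in the argument of Lemma~\ref{lem:moja-polygonal-stab}. On the other mode, it is precisely the attention term you could not control that turns $a_1=-a_2$ from neutral into contracting. A correct version of the lemma therefore needs additional hypotheses, not a sharper estimate.
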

\proof
The idea of the proof is similar to that used in Lemma~\ref{lem:moja-polygonal-stab} for the multiagent Oja flow. 
A polygonal equilibrium point $ \bx =\bm{s} =\begin{bmatrix} \bm{s}_1 & \ldots & \bm{s}_n \end{bmatrix} \in (\mathbb{S}^{d-1})^n$ is s.t. $ \bV  \sum_{j=1}^n A_{ij}(\bm{s}) \bm{s}_j  =0$. Let us compute the linearization of \eqref{eq:update} at $\bm{s}$, obtained perturbing $ \bm{s}$ with a perturbation $ \bw = [ \bw_1 \ldots \bw_n] $ with  $  \bs_i^T \bw_i =0$.  For the non-normalized next iterate it is
\[
\begin{split}
\tilde{\bx}_i = & \bs_i+\bw_i + \frac{\eta}{n} \bV\sum_j A_{ij}(\bs + \bw) (\bs_j + \bw_j)  \\
= & \bs_i + \eta \underbrace{\bV\sum_j A(\bs) \bs_j}_{=0} + \bw_i + \eta  \bV \sum_j A(\bs) \bw_j +  \eta  \bV \sum_j \sum_h \pde{A_{ij}(\bs)}{\bw_h} \bw_h \bs_j + h.o.t.
\end{split}
\]
where the vectors $ \pde{A_{ij}(\bm{s})}{\bw_h} $ are computed in \eqref{eq:partial-A}.
Denote $ \psi^{ij} (\bw) := \sum_h\pde{A_{ij}(\bm{s})}{\bw_h}  \bw_h$ and observe that $ \psi^{ij} (\bw) $ is a sum of bilinear forms $ \bs_\ell B_k \bw_h $ for some matrices $ B_k $ (see \eqref{eq:partial-A} for their specific expressions). 
Since $ \| \bm{s}_\ell \|=1$ and $  \bw_h $ is small enough, from the matrix Cauchy-Schwartz inequality, for each of these bilinear forms it holds $ - \| B_k\|_2 \leq \bm{s}_\ell B_k \bw_h \leq \| B_k \|_2 $, where $ \|B_k \|_2 $ is  independent of $ \bw_h$.

Denoting $ \bu_i := \bw_i + \eta \bV  \sum_j \left( \psi_{ij}(\bw) \bs_j + A_{ij}(\bs) \bw_j \right) $, the norm of $ \tilde{\bx}_i $ is
\[
\| \tilde{\bx}_i \| = \left( 1+ \| \bu_i \|^2 + 2 \eta \bs_i^T \bV \sum_j \left( \psi_{ij}(\bw) \bs_j + A_{ij}(\bs) \bw_j \right)  \right)^{\frac{1}{2}}
\]
where we have used $ \bs_i^T \bw_i =0 $.
Observe that $ \| \bu_i \|^2 $ is composed entirely of terms of order 2 or higher in $ \bw_i $, hence, using the binomial series expansion ($ (1 + \xi)^{-\frac{1}{2}}\approx 1 - \frac{\xi}{2}  + h.o.t. $), 
\[
\begin{split}
\bx_i^+ &= \frac{\tilde{\bx}_i}{\| \tilde{\bx}_i \|}
\approx (\bs_i + \bu_i) \left(1- \frac{\| \bu_i \|^2}{2} - \eta \bs_i^T \bV \sum_j \left( \psi_{ij}(\bw) \bs_j + A_{ij}(\bs) \bw_j \right) \right)  + h.o.t. \\
&\approx  \bs_i - \eta \bs_i^T \bV \sum_j \left( \psi_{ij}(\bw) \bs_j + A_{ij}(\bs) \bw_j \right) \bs_i  + \bu_i   + h.o.t.\\
& = \bs_i - \eta  \bs_i  \bs_i^T \bV \sum_j \left( \psi_{ij}(\bw) \bs_j + A_{ij}(\bs) \bw_j \right)+ \bw_i + \eta \bV \sum_j \left( \psi_{ij}(\bw) \bs_j + A_{ij}(\bs) \bw_j \right) + h.o.t.\\
& = \bs_i + \bw_i +\eta (I - \bs_i  \bs_i^T ) \bV \sum_j \left( \psi_{ij}(\bw) \bs_j + A_{ij}(\bs) \bw_j \right).
\end{split}
\]
The linearized equation is then 
\[
\bw_i^+ = \bw_i +\eta (I - \bs_i  \bs_i^T ) \bV \sum_j \left( \psi_{ij}(\bw) \bs_j + A_{ij}(\bs) \bw_j \right).
\]
Expanding all $ \bw_i $ and $ \bs_i $ in the basis of eigenvectors $ \bv_1, \ldots, \bv_d $, and denoting $ \bw_i = \sum_\ell \xi^i_\ell \bv_\ell $ and $ \bs_i = \sum_\ell \zeta^i_\ell \bv_\ell $, 
\[
\bw_i^+ = \sum_\ell (\xi^i_\ell)^+ \bv_\ell =  \sum_\ell \xi^i_\ell \bv_\ell +\eta \Big( I - \sum_k \zeta^i_k \bv_k \sum_\ell \zeta^i_\ell \bv_\ell^T \Big)  \sum_{j, \ell} \lambda_\ell (\psi_{ij}(\bw) \zeta^j_\ell + A_{ij}(\bs) \xi^j_\ell )  \bv_\ell .
\]

Assuming that the perturbation $ \bw$ is aligned with $ \bv_1$, i.e., for all $i$, $ \xi^i_1 \neq 0 $ and $ \xi^i_k=0$ for $ k=2, \ldots, d$, then, projecting along $ \bv_1$, yields
\[
{\xi^i_1}^+ =\xi^i_1 + \eta \left( 1- (\zeta^i_1)^2 \right) \lambda_1 \Big( \sum_j \psi_{ij}(\bw) \zeta^j_1 +  \sum_j A_{ij}(\bm{s}) \xi^j_1 \Big) .
\]
Denoting  $ \bm{\xi}_1 = \begin{bmatrix} \xi^1_1 & \ldots & \xi^n_1 \end{bmatrix}^T$
and $  \bm{\zeta}_1 = \begin{bmatrix} \zeta^1_1 & \ldots & \zeta^n_1 \end{bmatrix}^T$ the collection of the $ \xi_1^i $ and $ \zeta_1^i $ components of all agents, the previous ODEs can be expressed in vector form as
\beq
{\bm{\xi}_1}^+ =\bm{\xi}_1 + \eta  \lambda_1 \left( I- Z_1^2 \right) \left( \Psi \bm{\zeta}_1 + A(\bm{s} ) \bm{\xi}_1 \right)
\label{eq:ode-eta-self}
\eeq
where $ Z_1 = \diag(\bm{\zeta}_1)$, $ \Psi =[\psi_{ij}]$ is a matrix with lower and upper bounds independent of $ \bw$, and $ \bm{\zeta}_1 $ is fixed. $ A(\bm{s} ) $ is a row stochastic matrix,  $ \rho(A(\bm{s} )) =1 $ and $ A(\bm{s} ) \mathds{1} = \mathds{1}$. 
Since $ |\zeta^i_1 |< 1$ because  $ \bm{s} $ is not an eigenvector of $\bV$ and $ \lambda_1 >0$, it is $\rho(I +\eta \lambda_1(I-Z_1^2) A(\bs))>1 $, and the system \eqref{eq:ode-eta-self} diverges when $ \bm{\xi}_1 = \epsilon \mathds{1}$ for some scalar $ \epsilon$. Hence the polygonal equilibrium $ \bm{s}$ is unstable. 
\qed

\begin{remark}
As for the multiagent Oja system, the perturbation used in Lemma~\ref{lem:polyg-self} is``intrinsic'', i.e.,  $ \bw_i \in T_{\bs_i} \mathbb{S}^{d-1} $ for all $i$. 
\end{remark}

\noindent {\bf Proof of Theorem~\ref{thm:stab-main-self}.}
The proof is the direct combination of Lemmas~\ref{lem:lin-stab-self1},~\ref{lem:lin-stab-self2} and~\ref{lem:polyg-self}, with the only observation that in the second class of eigenvalues of the bipartite consensus case of Lemma~\ref{lem:lin-stab-self2}, using the Schur-Cohen formula for stability of a $ 2 \times 2 $ matrix in discrete time, the condition $ | \mu_{j, \pm}^k | <1$ becomes $  a_j^k d_j^k - b_j^k c_j^k < 1 $ and $ a_j^k + d_j^k <1+ a_j^k d_j^k - b_j^k c_j^k $, $\, \forall \, j=1, \ldots, k-1, k+1,\ldots,  d $.
\qed

\begin{remark}
\label{rem:bip-in-practice-singlehead}
Similarly to the continuous-time case analyzed e.g. in \cite{altafini2025multistability}, bipartite consensus equilibria can be locally stable for the single-head discrete-time transformer in \eqref{eq:update}. A simulation example is shown in Fig.~\ref{fig:3Dexamples-bip}.
Notice that under the assumption of Theorem~\ref{thm:stab-main-self}, from the third condition of the theorem ($ \frac{1}{1+ \eta \lambda_k \gamma_i^k} <1$), it is straightforward to deduce that if $ \lambda_k <0 $ is an eigenvalue of $ \bV $, then the associated bipartite equilibria $ \bx_i \pm \bv_k $ are unstable, because it is $ 1+ \eta \lambda_k \gamma_i^k <1 $ for all admissible values of $ \eta< \min\left( \frac{1}{\lambda_1 } , \, \frac{1}{ | \lambda_d |} \right) $.
\end{remark}

\subsection{Extensions}
\subsubsection{Multi-head case}
\label{sec:ext-multi-head}
The multi-head case corresponds to the time-invariant dynamical system \eqref{eq:update-multihead-auton} of the paper, repeated here for convenience:
\beq
\label{eq:update-multihead-auton-bis}
\bx_i(t+1) = \frac{\bx_i (t) +\eta F_{O} \sum_{h=1}^H \bar{F}_{V}^{(h)}  \sum_j A_{ij}^{(h)}(\bx(t)) \bx_j (t)  }{\| \bx_i (t) +\eta F_{O} \sum_{h=1}^H \bar{F}_{V}^{(h)}  \sum_j A_{ij}^{(h)}(\bx(t)) \bx_j  (t) \|}, \qquad \begin{array}{l} i=1, \ldots, n, \\ t=1, \ldots, T\end{array}
\end{equation}
Recall that in this case what matters is the ``layer matrix'' $ F= F_{O} \sum_{h=1}^H \bar{F}_{V}^{(h)} $. 
For the system \eqref{eq:update-multihead-auton-bis}, in the main paper we stated Theorem~\ref{thm:symm-case1}, which we can now prove as an easy consequence of Theorem~\ref{thm:stab-main-self}.
% \begin{theorem}
% \label{thm:symm-case1-bis}
% Consider the system~\eqref{eq:update-multihead-auton}. 
% Assume the attention matrices $ A^{(h)}$, $ h=1, \ldots, H$, are full and the matrix $ F $ is symmetric with $ \lambda_1 > \lambda_i$, $ i=2,\ldots, d$. Assume furhter that  $ \eta<  \frac{2}{ |\lambda_1 + \lambda_d |}  $ whenever $ - \lambda_d > \lambda_1 $. Then the consensus point $ \bx_i =\bv_1 $ $\forall \, i=1, \ldots, n$ (or $ \bx_i =-\bv_1 $ $\forall \, i=1, \ldots, n$) is a locally asymptotically stable equilibrium point for the system~\eqref{eq:update-multihead-auton} when $ T\to \infty$. 
% \end{theorem}

\noindent {\bf Proof of Theorem~\ref{thm:symm-case1} (in the paper).} Under the assumption of the theorem, the system~\eqref{eq:update-multihead-auton-bis} has the consensus points at the eigenvectors of $F$ as equilibria, as can be verified by an argument similar to Lemma~\ref{lem:self-equil1}. In particular, when $ \bx_i$ approaches a consensus point, the attention matrices of \eqref{eq:update-multihead-auton-bis} still approach a uniform matrix: $ A^{(h)}(\bx) \to \frac{1}{n}$, $ h=1, \ldots, H$. Therefore we can use the same argument used in Theorem~\ref{thm:stab-main-self}: at consensus, the system \eqref{eq:update-multihead-auton-bis} has the Jacobian $ J(\bv_k) $ of the multiagent Oja system \eqref{eq:update-moja}, whose eigenvalues were computed in Lemma~\ref{lem:moja-F-eigenvalues}. Consensus aligned with the principal eigenvector $ \bv_1 $ is locally asymptotically stable, while all other consensus points at the other eigenvectors of $F$ are unstable. 
\qed

While the picture for consensus is easily obtainable from that of the single-head case, what happens to the bipartite consensus points is less clear. 
We currently have no (analytical nor numerical) evidence of such points being stable equilibria for \eqref{eq:update-multihead-auton-bis}.

\subsubsection{Longer step size}
To check stability of the consensus equilibria a condition on the step size is needed only when $ \lambda_d < 0$, and  $ | \lambda_d |> \lambda_1 $, i.e., when the spectral radius of $F$ (or $ \bV$ for the single-head case) is $ | \lambda_d | $, not $ \lambda_1 $. 
Extra conditions on $ \eta $ are required only for checking the stability of the bipartite consensus points. 
These extra conditions are only sufficient, and meant to simplify the expression of the eigenvalues of the Jacobian $ J(\pm \bv_k)$, see Lemma~\ref{lemm:simplif-eta}.
As mentioned above, locally asymptotically stable bipartite consensus equilibria appear to be rare, even in the single-head case.

\subsubsection{Asymmetric value matrix}
As mentioned in Section~\ref{sec:shared-param} of the paper, what really matters to capture the asymptotic behavior of the self-attention dynamics (and to establish the analogy with the power method) is that the dominant eigenvalue of $F$, $ \lambda_1 $, is real, while $ \lambda_2, \ldots, \lambda_d $ can be any.
This together with the strict dominance of $ \lambda_1 $ is the essence of the Perron-Frobenius theorem extended beyond nonnegative matrices. 
In fact, when the dominance is strict (and the spectral radius is equal to $ \lambda_1 $) not much changes in the behavior of the single-head system~\eqref{eq:update} and neither of its multi-head version~\eqref{eq:update-multihead-auton} by considering an $ F$ which is not symmetric.
A numerical analysis of the asymmetric $F$ case is carried out in the main paper when dealing with the ALBERT transformer.  

\subsection{Small-scale numerical examples}
\label{sec:extra-examples}
In Fig.~\ref{fig:3Dexamples} we show a numerical simulation of the dynamics of the single-head self-attention system \eqref{eq:update} and of its multi-head version \eqref{eq:update-multihead-auton-bis} when $ d=3 $ and $ n=10$.
In this case $ \bx_i\in \mathbb{S}^2 $, i.e., we can represent trajectories of the self-attention dynamics on a sphere in $ \mathbb{R}^3$. 
Both trajectories in Fig.~\ref{fig:3Dexamples} converge to consensus, aligned with the principal eigenvector $ \bv_1 $ of $ F_V $ and $ F $ respectively, see Fig.~\ref{fig:3Dexamples}. 
The dynamical behavior of the two systems is nearly identical and very similar to that of the ALBERT transformer.

In Fig.~\ref{fig:3Dexamples-bip}, for the single-head case,  we show instead an example of a bipartite consensus equilibrium which is locally asymptotically stable, as checked analytically using Theorem~\ref{thm:stab-main-self}. In Fig.~\ref{fig:3Dexamples-bip}(a), a small perturbation added to a bipartite equilibrium point aligned with $ \bv_1 $ is reabsorbed. However, the basin of attraction is very small: as soon as a slightly larger perturbation is applied (Fig.~\ref{fig:3Dexamples-bip}(b)), the tokens on one of the two antipodal points migrate to the other point, and consensus is achieved (still aligned with $ \bv_1 $, as expected from Theorem~\ref{thm:stab-main-self}. 
The (very) small basin of attraction of the bipartite stable equilibria appears to be a constant characteristic of these equilibria in our simulations. Analytical calculations of the basin of attraction are prohibitive for this system.

\begin{figure*}[ht]
     \centering    
      \subfigure[single head]{
        \includegraphics[trim=2cm 0.5cm 2cm 1cm, clip=true,width=0.4\textwidth]{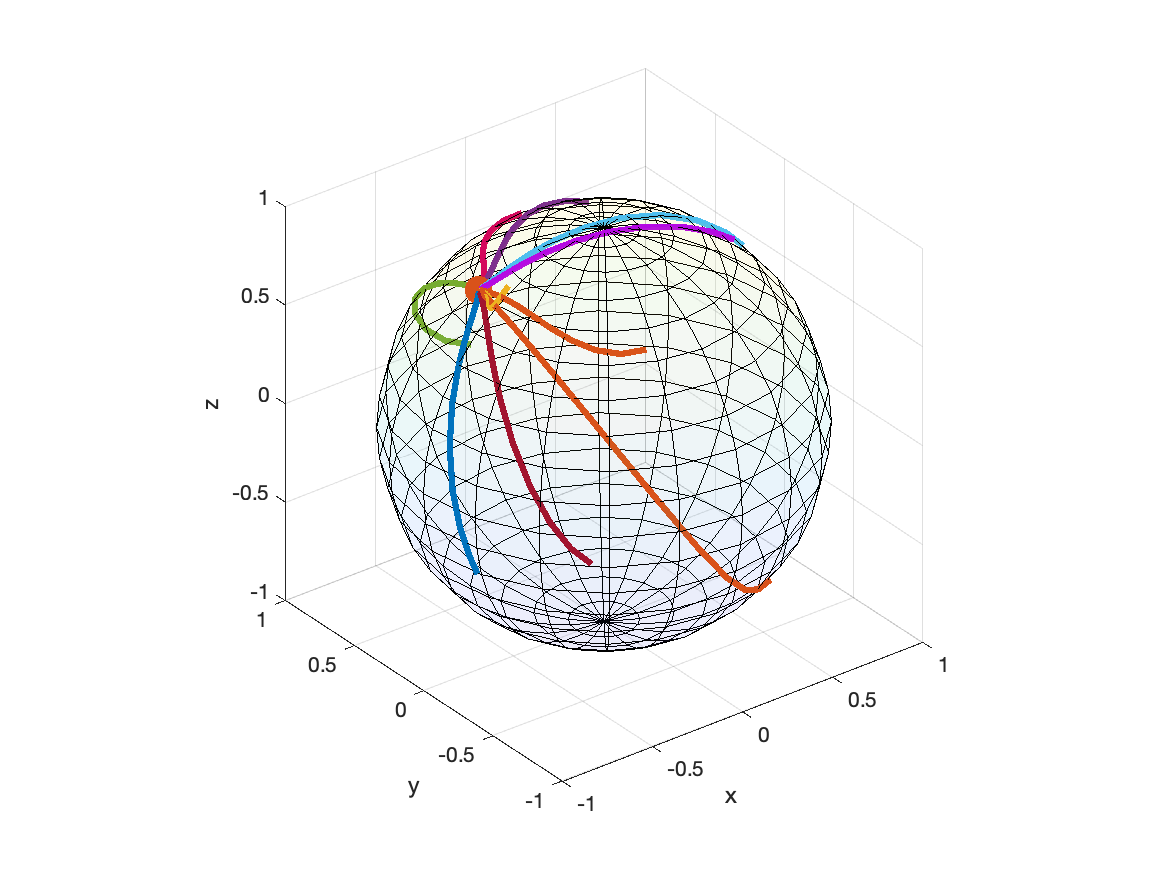}
        \includegraphics[trim=0cm 0cm 0cm 0cm, clip=true,width=0.4\textwidth]{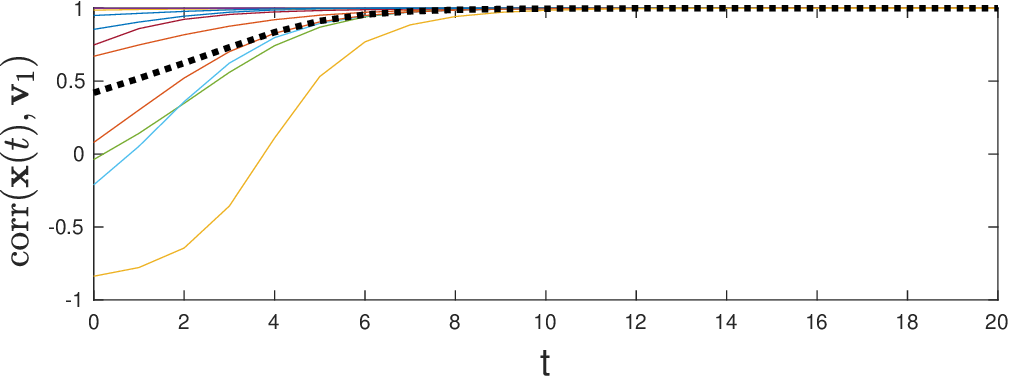}}
   \subfigure[multihead]{
        \includegraphics[trim=2cm 0.5cm 2cm 1cm, clip=true,width=0.4\textwidth]{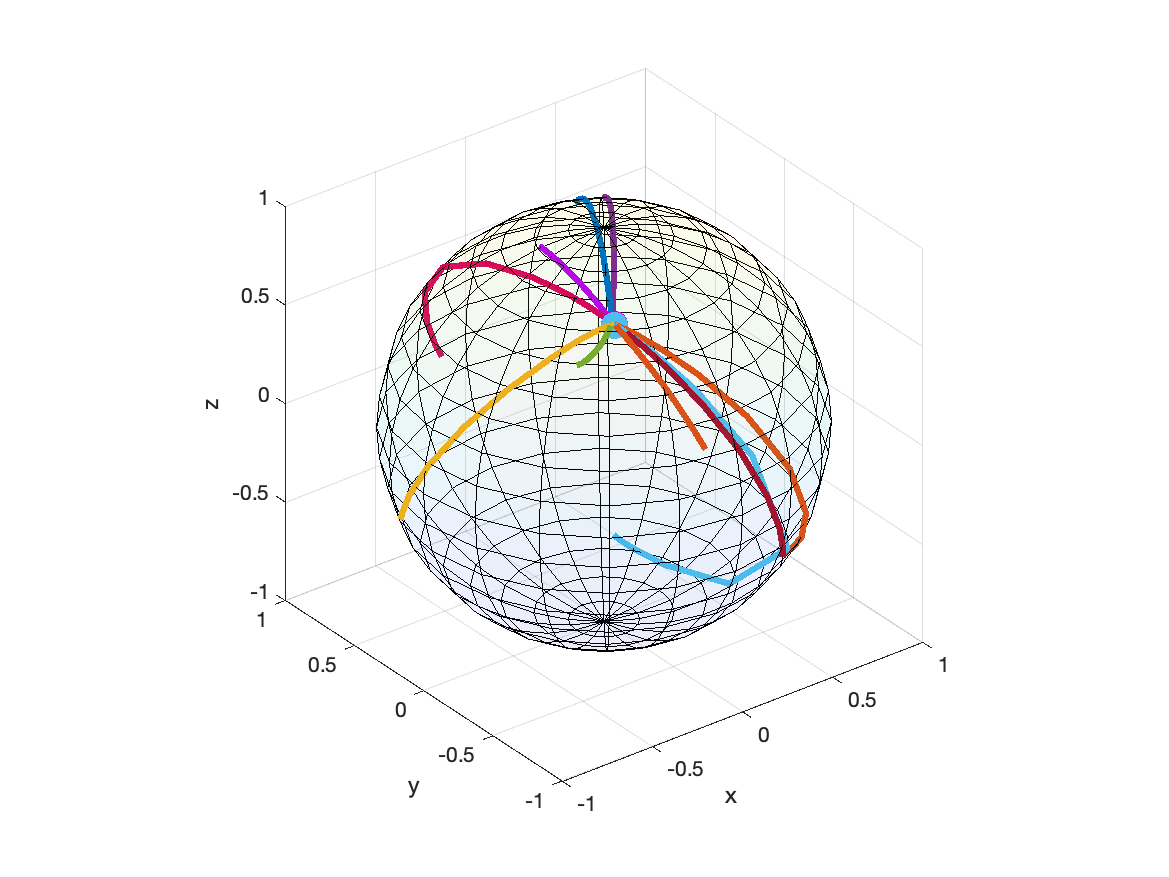}
      \includegraphics[trim=0cm 0cm 0cm 0cm, clip=true,width=0.4\textwidth]{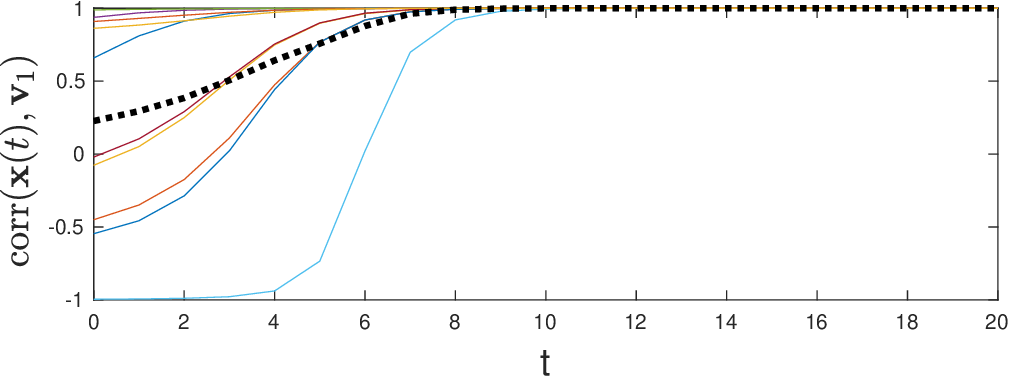}}
%   \subfigure[]{
%        \includegraphics[trim=0cm 0cm 0cm 0cm, clip=true,width=0.4\textwidth]{Figures/example3D_limitcycle1a.eps}}
        \caption{Example with $ d=3$ and $ n=10$. (a): single-head self-attention dynamics; (b): multi-head self attention dynamics (with 5 heads). Left panel: the trajectory of the 10 tokens is shown on the sphere $ \mathbb{S}^2 $. The solid dot is the endpoint of a trajectory, i.e., the consensus point reached by all tokens. Right panel: the correlation $ {\rm corr}(\bx_i, \bv_1) $ tends to 1 as consensus is achieved (all tokens become aligned with the principal eigenvector $ \bv_1 $). The black dotted line is the mean correlation over the 10 tokens.}
        \label{fig:3Dexamples}
\end{figure*}

\begin{figure*}[ht]
     \centering    
   \subfigure[perturbation preserves bipartition]{
        \includegraphics[trim=2cm 0.5cm 2cm 1cm, clip=true,width=0.4\textwidth]{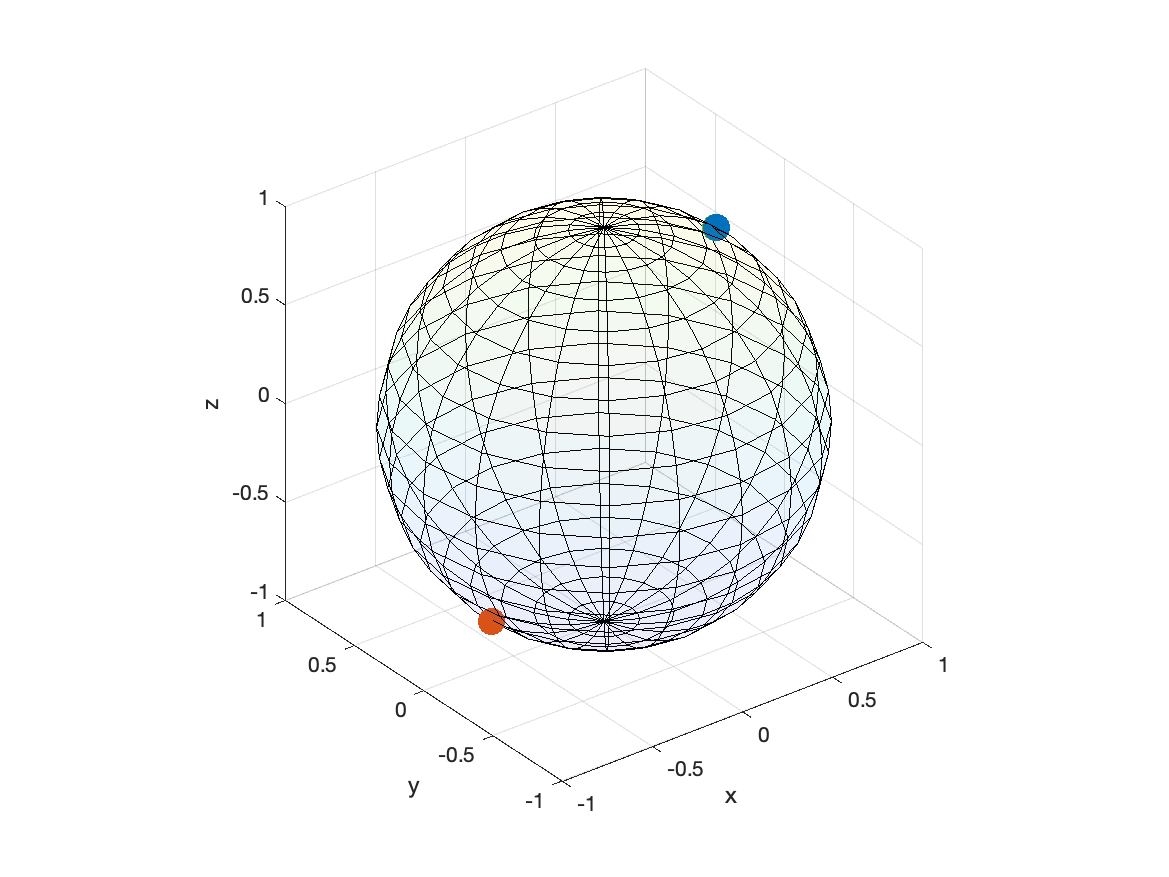}
      \includegraphics[trim=0cm 0cm 0cm 0cm, clip=true,width=0.4\textwidth]{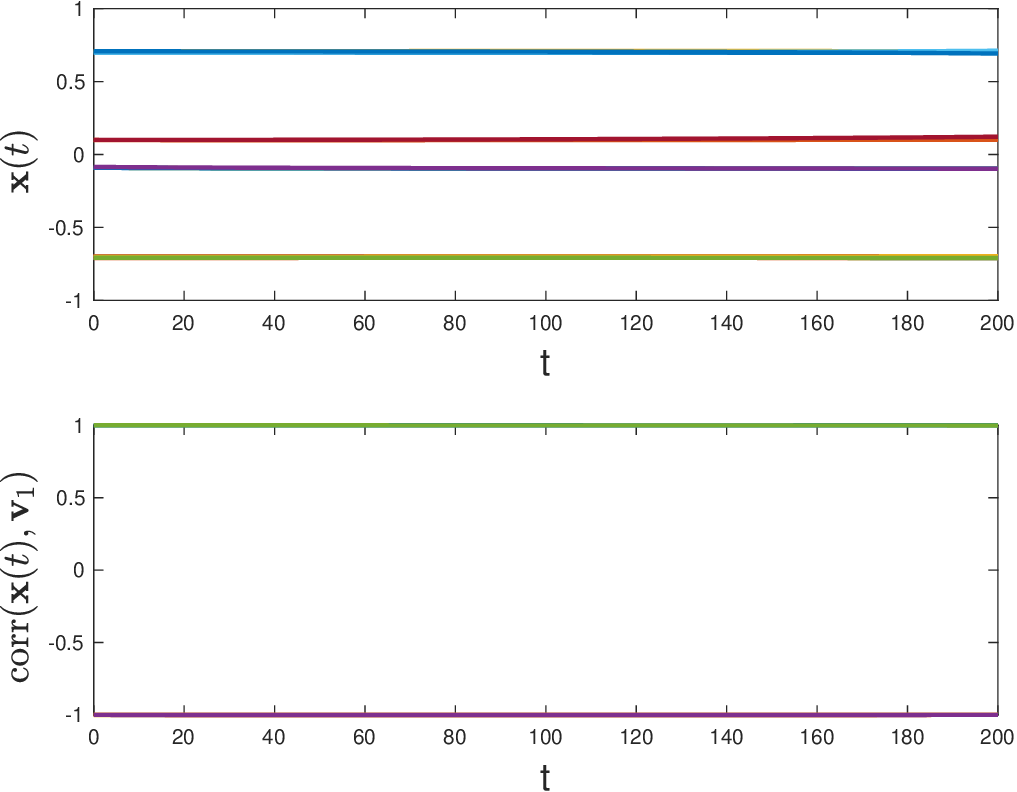}}
      \subfigure[perturbation leads to consensus]{
        \includegraphics[trim=2cm 0.5cm 2cm 1cm, clip=true,width=0.4\textwidth]{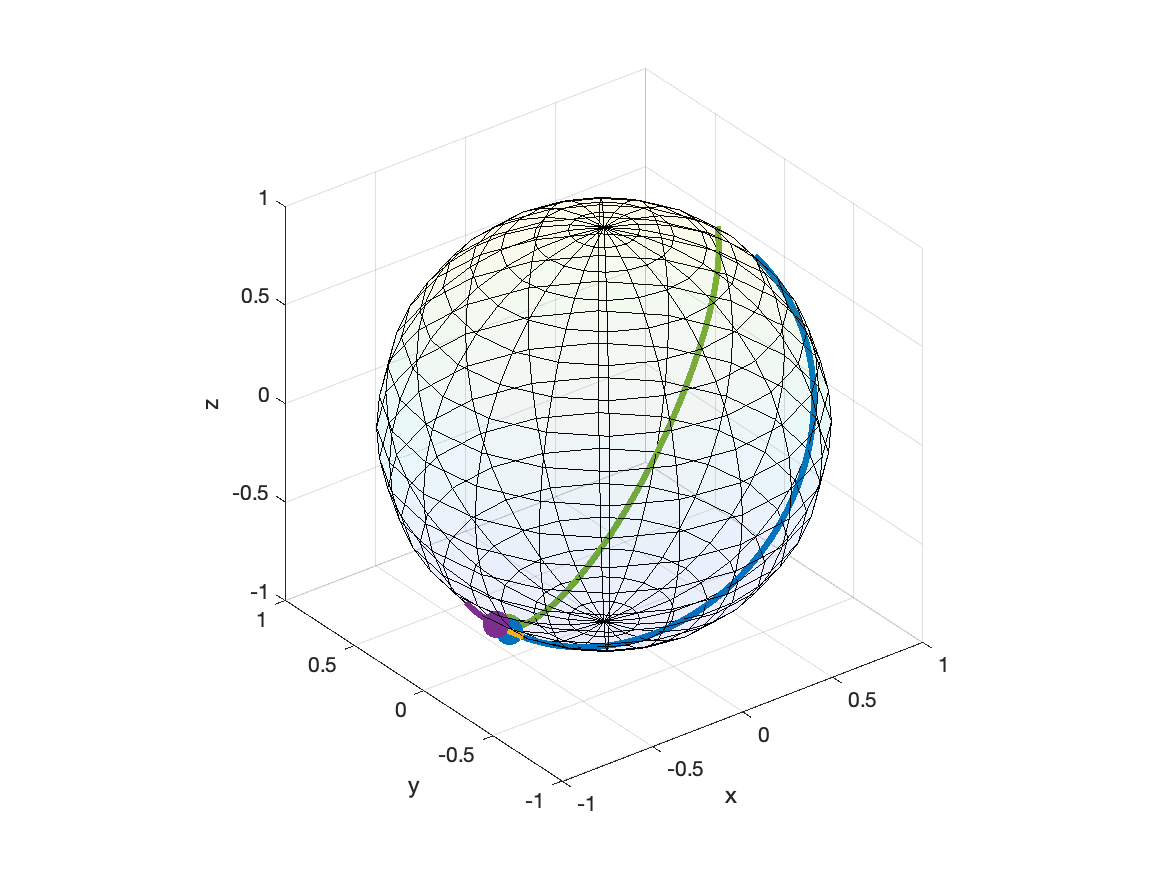}
        \includegraphics[trim=0cm 0cm 0cm 0cm, clip=true,width=0.4\textwidth]{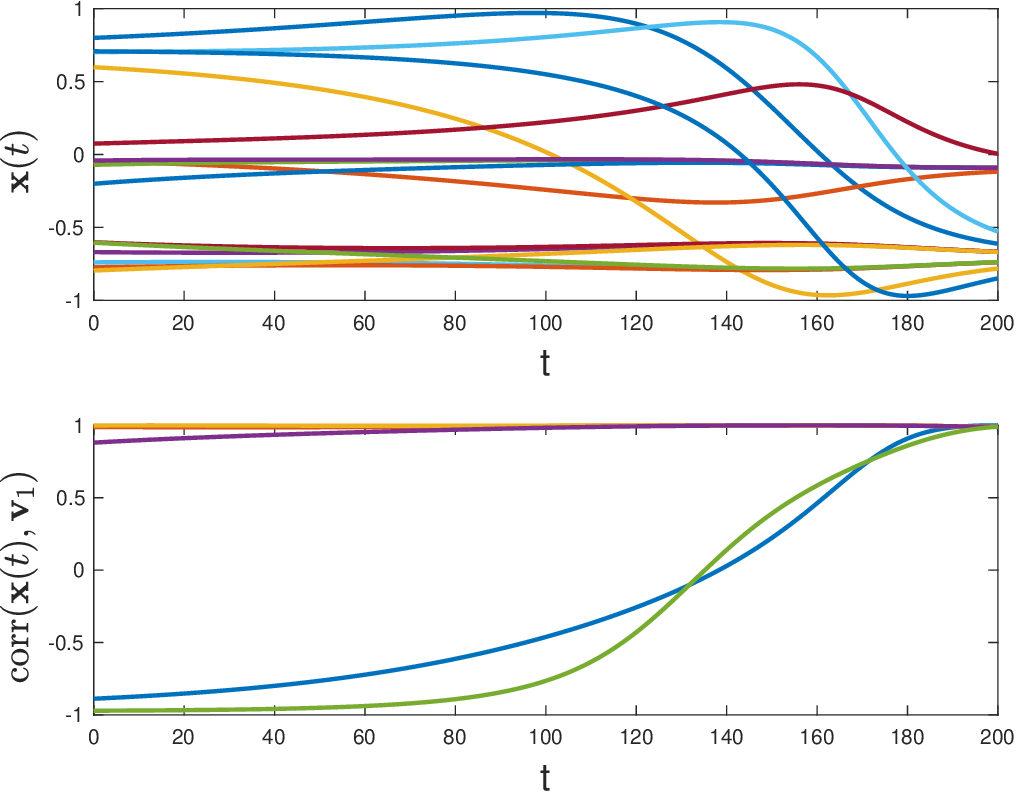}}
%   \subfigure[]{
%        \includegraphics[trim=0cm 0cm 0cm 0cm, clip=true,width=0.4\textwidth]{Figures/example3D_limitcycle1a.eps}}
        \caption{Example of bipartite consensus with $ d=3$ and $ n=10$. (a): when a very small perturbation is applied to a locally asymptotically stable bipartite consensus point (aligned with $ \bv_1 $), the system returns to the bipartite equilibrium. (b): when a larger perturbation is applied to the same bipartite consensus point, the trajectory escape towards one of the two the consensus points associated to the principal eigenvector $ \bv_1 $. The two tokens in blue and green ``migrate'' to the antipodal point, thereby achieving consensus.
        }
        \label{fig:3Dexamples-bip}
\end{figure*}

\clearpage

\end{document}